\documentclass{article} % For LaTeX2e
\usepackage{iclr2023_conference,times}

% Optional math commands from https://github.com/goodfeli/dlbook_notation.
%%%%% NEW MATH DEFINITIONS %%%%%

\usepackage{amsmath,amsfonts,bm}

% Mark sections of captions for referring to divisions of figures

% Highlight a newly defined term

% Figure reference, lower-case.

% Figure reference, capital. For start of sentence

% Section reference, lower-case.

% Section reference, capital.

% Reference to two sections.

% Reference to three sections.

% Reference to an equation, lower-case.
\def\eqref#1{equation~\ref{#1}}
% Reference to an equation, upper case

% A raw reference to an equation---avoid using if possible

% Reference to a chapter, lower-case.

% Reference to an equation, upper case.

% Reference to a range of chapters

% Reference to an algorithm, lower-case.

% Reference to an algorithm, upper case.

% Reference to a part, lower case

% Reference to a part, upper case

\def\1{\bm{1}}

% Random variables

% rm is already a command, just don't name any random variables m

% Random vectors

% Elements of random vectors

% Random matrices

% Elements of random matrices

% Vectors

% Elements of vectors

% Matrix

% Tensor
\DeclareMathAlphabet{\mathsfit}{\encodingdefault}{\sfdefault}{m}{sl}
\SetMathAlphabet{\mathsfit}{bold}{\encodingdefault}{\sfdefault}{bx}{n}

% Graph

% Sets

% Don't use a set called E, because this would be the same as our symbol
% for expectation.

% Entries of a matrix

% entries of a tensor
% Same font as tensor, without \bm wrapper

% The true underlying data generating distribution

% The empirical distribution defined by the training set

% The model distribution

% Stochastic autoencoder distributions

 % Laplace distribution

% Wolfram Mathworld says $L^2$ is for function spaces and $\ell^2$ is for vectors
% But then they seem to use $L^2$ for vectors throughout the site, and so does
% wikipedia.

 % See usage in notation.tex. Chosen to match Daphne's book.

\usepackage{hyperref}
\usepackage{url}
\usepackage{booktabs}       % professional-quality tables
\usepackage{amsfonts}       % blackboard math symbols
\usepackage{nicefrac}       % compact symbols for 1/2, etc.
\usepackage{microtype}      % microtypography
\usepackage{xcolor}         % colors
\usepackage{amsmath,amssymb}
\usepackage{amsthm}
\usepackage{autobreak}
\usepackage{algorithm}
\usepackage[noend]{algpseudocode}
\usepackage{algorithmicx}
\usepackage{bm}
\usepackage{paralist}
\usepackage{here}
\usepackage{multirow}
\usepackage{wrapfig}
\usepackage{subcaption}
\usepackage{fancyhdr}       % header
\usepackage{graphicx}       % graphics
\usepackage{tikz}
\usepackage{mathrsfs}
\usepackage{mathtools}
\usetikzlibrary{patterns}
\usepackage{changepage}
\usepackage[capitalise]{cleveref}

\makeatletter
\usepackage{xspace} 
\def\@onedot{\ifx\@let@token.\else.\null\fi\xspace}
\DeclareRobustCommand\onedot{\futurelet\@let@token\@onedot}
\DeclarePairedDelimiterX\setc[2]{\{}{\}}{\,#1 \;\delimsize\vert\; #2\,}

\def\eg{\emph{e.g}\onedot}

\def\ie{\emph{i.e}\onedot}

\def\aka{a.k.a\onedot}

\title{Neural Lagrangian Schr\"{o}dinger Bridge: Diffusion Modeling for Population Dynamics}

% Authors must not appear in the submitted version. They should be hidden
% as long as the \iclrfinalcopy macro remains commented out below.
% Non-anonymous submissions will be rejected without review.

\author{Takeshi Koshizuka, Issei Sato \\
The University of Tokyo \\
\texttt{\{koshizuka-takeshi938444, sato\}@g.ecc.u-tokyo.ac.jp} \\
}

% The \author macro works with any number of authors. There are two commands
% used to separate the names and addresses of multiple authors: \And and \AND.
%
% Using \And between authors leaves it to \LaTeX{} to determine where to break
% the lines. Using \AND forces a linebreak at that point. So, if \LaTeX{}
% puts 3 of 4 authors names on the first line, and the last on the second
% line, try using \AND instead of \And before the third author name.

\iclrfinalcopy % Uncomment for camera-ready version, but NOT for submission.

\begin{document}

\maketitle
\theoremstyle{plain}
\newtheorem{thm}{Theorem}[section]
\newtheorem*{thm*}{Theorem}

\theoremstyle{definition}
\newtheorem{dfn}[thm]{Definition}

\begin{abstract}
  Population dynamics is the study of temporal and spatial variation in the size of populations of organisms and is a major part of population ecology. One of the main difficulties in analyzing population dynamics is that we can only obtain observation data with coarse time intervals from fixed-point observations due to experimental costs or measurement constraints. Recently, modeling population dynamics by using continuous normalizing flows (CNFs) and dynamic optimal transport has been proposed to infer the sample trajectories from a fixed-point observed population. While the sample behavior in CNFs is deterministic, the actual sample in biological systems moves in an essentially random yet directional manner. Moreover, when a sample moves from point A to point B in dynamical systems, its trajectory typically follows \textit{the principle of least action} in which the corresponding action has the smallest possible value. 
  To satisfy these requirements of the sample trajectories, we formulate the Lagrangian Schr\"{o}dinger bridge (LSB) problem and propose to solve it approximately by modeling the advection-diffusion process with regularized neural SDE. We also develop a model architecture that enables faster computation of the loss function. Experimental results show that the proposed method can efficiently approximate the population-level dynamics even for high-dimensional data and that using the prior knowledge introduced by the Lagrangian enables us to estimate the sample-level dynamics with stochastic behavior.
\end{abstract}
\section{Introduction}
\label{sec: introduction}
\begin{wrapfigure}[12]{r}[0pt]{0.22\textwidth}
\centering
\includegraphics[width=\linewidth]{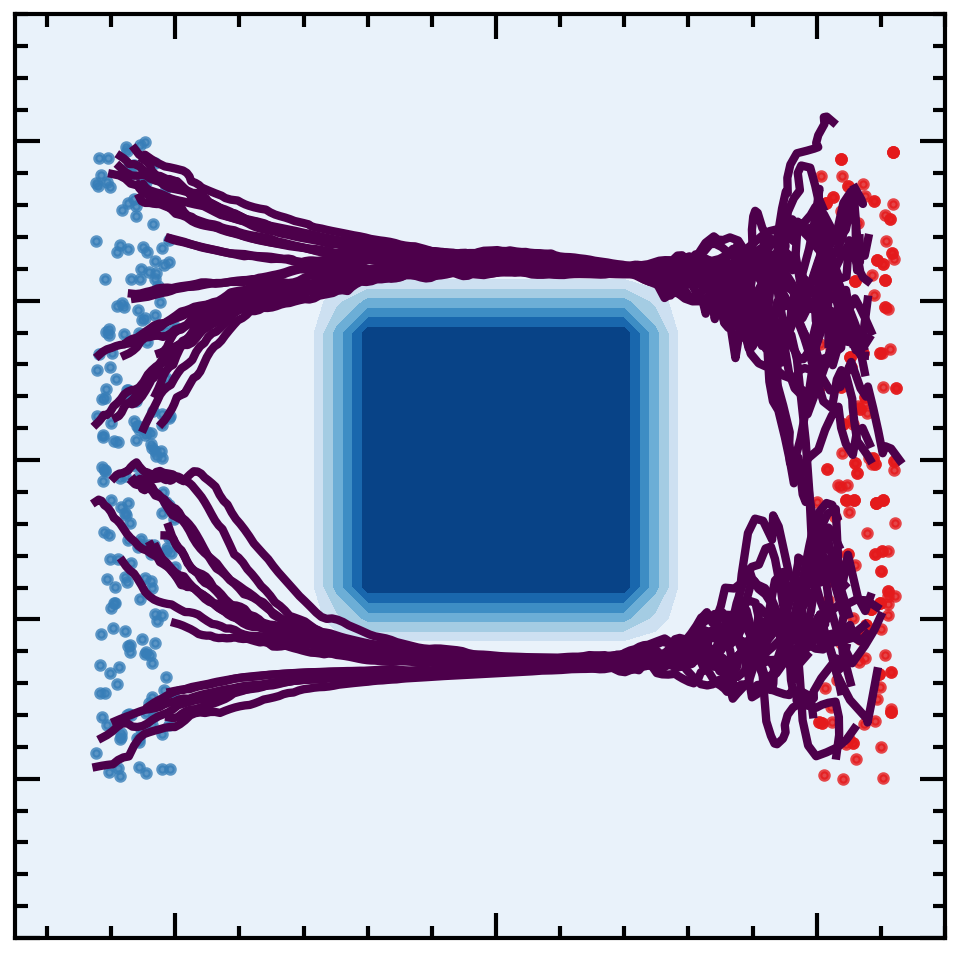}
\caption{Example of trajectories by NLSB.}
\label{fig: visualization of NLSB examples}
\end{wrapfigure}
% paragraph 1
The population dynamics of time-evolving individuals appears in various scientific fields, such as cell population in biology~\citep{schiebinger2019optimal,yang2018scalable}, air in meteorology~\citep{fisher2009data}, and healthcare statistics~\citep{manton2008cohort} in medicine. However, tracking individuals over a long period is often difficult due to experimental costs. Furthermore, it can sometimes be impossible to track the time evolution. For example, since single-cell RNA sequencing (scRNA-seq) destroys all measured cells, we cannot analyze the behavior of individual cells over time in cell transcriptome measurements. Instead, we only obtain individual samples from cross-sectional populations without alignment across time steps at a few distinct time points. Under these constraints on data measurements, our goal is to better understand the time evolution of samples in the populations.

% paragraph 2
Existing methods attempt to estimate population-level dynamics following the Wasserstein gradient flow using a recurrent neural network (RNN)~\citep{hashimoto2016learning} or the Jordan-Kinderlehrer-Otto (JKO) flow~\citep{bunne2021jkonet}. Recent studies have attempted to interpolate the trajectories of individual samples between cross-sectional populations at multiple time points by using optimal transport (OT)~\citep{schiebinger2019optimal, yang2018scalable}, or CNF~\citep{tong2020trajectorynet}. Using a CNF generates continuous-time non-linear sample trajectories from multiple time points. In addition, \citet{tong2020trajectorynet} proposed a regularization for CNF that encourages a straight trajectory on the basis of the OT theory. 
Since the probability distribution transformation based on ordinary differential equations (ODEs) is used in CNF, the behavior of each sample is described by its initial condition in a completely deterministic manner. However, samples in population are known to move stochastically and diffuse in nature, \eg , biological system~\citep{kolomgorov1937study}.

% paragraph 3
To handle the stochastic and complex behavior of individual samples, we propose to model the advection-diffusion processes by using SDEs to describe the time evolution of the sample. Furthermore, on the basis of \textit{the principle of least action}, we estimate the sample trajectories that minimize \textit{action}, defined by the time integral of the Lagrangian determined from the prior knowledge. We formulate this problem as the Lagrangian Schr\"{o}dinger bridge (LSB) problem, which is a special case of the stochastic optimal transport (SOT) problem, and propose an approximate solution method neural Lagrangian Schr\"{o}dinger bridge (NLSB). In NLSB, we train regularized neural SDE~\citep{li2020scalable, tzen2019neural, tzen2019theoretical} by minimizing the Wasserstein loss between the ground-truth and the predicted population.
The Lagrangian design defining regularization allows the sample-level dynamics to reflect various prior knowledge such as OT, manifold geometry, and local velocity arrows proposed by~\citet{tong2020trajectorynet}. In addition, we parameterize a potential function instead of the drift function. Adopting the model architecture of the potential function from OT-Flow~\citep{onken2021ot} will speed up the computation of the potential function's gradient and the regularization term. As a result, we capture the population-level dynamics as well as or better than conventional methods, and can more accurately predict the sample trajectories.

In short, our contributions are summarized as follows.
\begin{enumerate}
    \item We formulate the LSB problem to estimate the stochastic sample trajectory according to \textit{the principle of least action}.
    \item We propose NLSB to approximate the LSB problem practically by modeling the advection-diffusion process with regularized neural SDE on the basis of the prior knowledge introduced by the Lagrangian.
    \item We adopt the model architecture of the potential function from OT-Flow\citep{onken2021ot} to speed up the computation of the regularization term to minimize HJB-PDE loss. 
\end{enumerate}

\section{Background}
In~\cref{sec: flow by ot}, we introduce the method of combining CNF and the OT theory, the basis of our method. Then, we explain dynamics modeling techniques: the diffusion modeling using neural SDE (\cref{sec: neural sde and diffusion modeling}) and the SOT theory (\cref{sec: sot theory}).

\subsection{Flows Regularized by Optimal Transport}
\label{sec: flow by ot}
CNFs~\citep{chen2018neural} are a method for learning the continuous transformation between two distributions $p$ and $q$ by modeling the ordinary differential equation (ODE):
\begin{align*}
    \frac{\mathrm{d}\mathbf{x}(t)}{\mathrm{d}t} = \mathbf{f}_{\theta}(\mathbf{x}, t), \qquad \text{subject to} \quad \mathbf{x}(t_0) \sim p,\ \mathbf{x}(t_1) \sim q,
\end{align*}
where $\mathbf{f}_{\theta}$ is the velocity model with learnable parameters $\theta$.

Several regularizations of CNFs leading to straight trajectories have been proposed on the basis of the OT theory. The likelihood maximization problem of regularized CNF is derived by replacing the terminal constraint of the Brenier-Benamou formulation~\citep{benamou2000computational} with Kullback–Leibler (KL) divergence. RNODE~\citep{finlay2020train}, OT-Flow~\citep{onken2021ot}, and TrajectoryNet~\citep{tong2020trajectorynet} introduced a regularization  $\tilde{\mathcal{R}}_e$ in~\cref{eq:ot-loss-L}. Potential Flow~\citep{yang2020potential} and OT-Flow modeled the potential function $\Phi$ that satisfies $\mathbf{f} = - \nabla \Phi$ instead of modeling the velocity function $\mathbf{f}$. They also proposed an additional OT-based regularization $\tilde{\mathcal{R}}_h$ derived from the Hamilton–Jacobi–Bellman (HJB) equation~\citep{evans1983introduction} satisfied by the potential function shown in~\cref{eq:ot-loss-R}. These OT-based regularizations have resulted in faster CNFs~\citep{finlay2020train, onken2021ot} and improved the modeling of cellular dynamics~\citep{tong2020trajectorynet}.
\begin{align}
\label{eq:ot-loss-L}
\tilde{\mathcal{R}}_{e} &= \int_{t_0}^{t_1} \int_{\mathbb{R}^d} \frac{1}{2} \| \mathbf{f}_{\theta}(\mathbf{x}, t) \|^2 \mathrm{~d} \rho_t(\mathbf{x}) \mathrm{~d}t, \\
\label{eq:ot-loss-R}
\tilde{\mathcal{R}}_{h} &= \int_{t_0}^{t_1} \int_{\mathbb{R}^d} \left| \partial_t \Phi_{\theta}(\mathbf{x}, t) - \frac{1}{2} || \nabla_\mathbf{x} \Phi_{\theta}(\mathbf{x},t)||^2 \right| \mathrm{~d} \rho_t(\mathbf{x}) \mathrm{~d}t,
\end{align}
where $\rho_t$ is the law of the sample $\mathbf{x}$ at the time $t$.

\begin{figure}[t]
  \centering
  \includegraphics[width=\columnwidth]{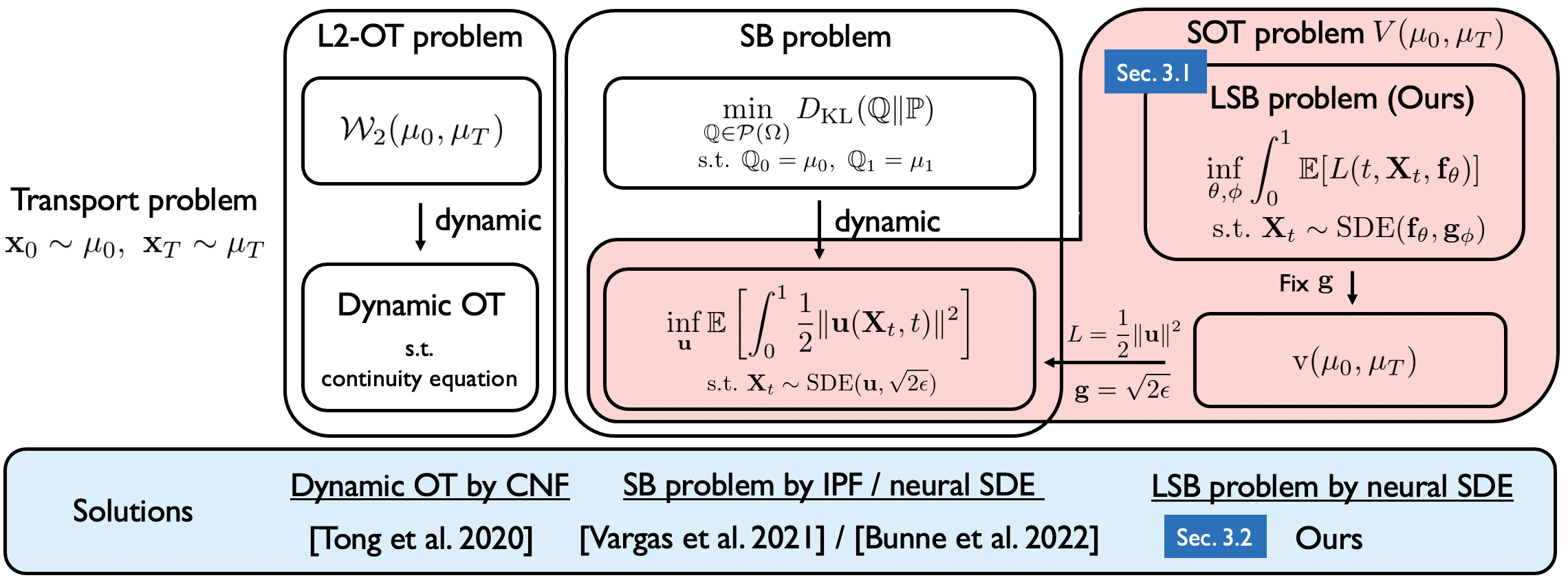}
\caption{Overview}
\label{fig: overview}
\end{figure}

\subsection{Neural SDE and Applications to Diffusion Modeling}
\label{sec: neural sde and diffusion modeling}
SDE has been used to model real-world random phenomena in a wide range of areas, such as chemistry, biology, mechanics, economics and finance~\citep{higham2001algorithmic}. As an extension of neural ODEs, neural SDE has been proposed to model drift and diffusion functions with neural networks (NNs), as follows.
\begin{gather}
    \label{eq:ito_sde}
    \mathrm{d} \mathbf{X}_t = \mathbf{f}_{\theta}(\mathbf{X}_t, t) \mathrm{~d}t + \mathbf{g}_{\phi}(\mathbf{X}_t, t) \mathrm{~d} \mathbf{W}_t,
\end{gather}
where $\{ \mathbf{X}_t \}_{t \in [t_0, t_{K-1}]}$ is a continuous $\mathbb{R}^d$-valued stochastic process, $\mathbf{f}_{\theta} : \mathbb{R}^d \times [t_0, t_{K-1}] \mapsto \mathbb{R}^d$ is a drift function, $\mathbf{g}_{\phi}: \mathbb{R}^d \times [t_0, t_{K-1}] \mapsto \mathbb{R}^{d \times m}$ is a diffusion function and $\{ \mathbf{W}_t \}_{t \in [t_0, t_{K-1}]}$ is an $m$-dimensional Wiener process. 
Score-based generative models (SGM)~\citep{song2020score}, which use score matching to learn the reverse diffusion process of generating images from noise as SDE, have demonstrated the ability to produce high-quality data. Other recent studies on diffusion modeling using SDE~\citep{de2021diffusion,wang2021deep,vargas2021solving, chen2021likelihood,zhang2022path} have proposed learning SDE solutions to the SB problem. \citet{vargas2021solving} and~\citet{de2021diffusion} solved the SB problem by combining iterative proportional fitting (IPF) with mean-matching regression of the SDE drift function using Gaussian process (GP) and NN, respectively. 

\subsection{Stochastic Optimal Transport}
\label{sec: sot theory}
\citet{mikami2008optimal} generalized the OT problem and defined the SOT problem as a random mechanics problem determined by \textit{the principle of least action}. The SOT problem with the endpoint marginals fixed to $\mu_0$ and $\mu_1$ is represented as
\begin{gather}
\label{SOT}
V(\mu_0, \mu_1):= \inf_{\mathbf{X} \in \mathscr{A}} \setc*{\mathbb{E}\left[\int_{t_0}^{t_1} L\left(t, \mathbf{X}_{t}; \mathbf{f}_{\mathbf{X}}(\mathbf{X}_t, t) \right) \mathrm{~d} t\right]}{\mathbf{X}_{t_0} \sim \mu_0,\ \mathbf{X}_{t_1} \sim \mu_1},
\end{gather}
where $L(t, \mathbf{x}, \mathbf{u})$ is continuous and convex in $\mathbf{u}$ and $\mathscr{A}$ is the set of all $\mathbb{R}^d$-valued, continuous semimartingales $\{ \mathbf{X}_t \}_{t_0 \leq t \leq t_1}$ on a complete filtered probability space such that there exists a Borel measurable drift function $\mathbf{f}_{\mathbf{X}}(\mathbf{X}_t, t)$ for which satisfies several conditions (see~\cref{app:sot2fixed}). The function $L$, called the Lagrangian, is the transport cost defined on the space-time of the system and allows us to describe phenomena consistently regardless of the choice of coordinate system.

\citet{mikami2008optimal} also introduced another variational version of the SOT problem for a flow of marginal distributions which satisfies the Fokker–Planck (FP) equation, which is a advection-diffusion equation describing the time evolution of the probability density function.
\begin{gather}
\label{SOT for Marginal Flows}
\mathrm{v}(\mu_0, \mu_1):= \inf_{\mathbf{f}} \setc*{\int_{t_0}^{t_1} \int_{\mathbb{R}^d} L\left(t, \mathbf{x}; \mathbf{f}( \mathbf{x}, t) \right) \mathrm{~d}\rho_t(\mathbf{x})\mathrm{d}t}{\rho_{t_k} = \mu_k, (\mathbf{f}, p_t) \ \text{satisfies the FP eq}},
\end{gather}
where $\rho_t$ and $p_t$ are the law and density of the random variable $\mathbf{X}_t$, respectively.

%Proposition 1 from~\citep{mikami2021stochastic} states the optimal values of~\cref{SOT,SOT for Marginal Flows} match, \ie $V(\mu_0, \mu_1) = \mathrm{v}(\mu_0, \mu_1)$. 
When $L(t, \mathbf{x}, \mathbf{u}) = \frac{1}{2}||\mathbf{u}||^2$, the SOT problem is regarded as the special case of Schr\"{o}dinger bridge (SB)~\citep{jamison1975markov,chen2021stochastic,dai1991stochastic, leonard2013survey,mikami1990variational} problem. See~\cref{app:sot2fixed} for details on the SOT theory.

\section{Proposed Method}
\label{sec: proposed method}
In this section, we first introduce Lagrangian-Schr\"{o}dinger bridge (LSB) problem. We next present an approximate solution for the LSB problem: neural Lagrangian Schr\"{o}dinger bridge (NLSB). Finally, we give specific examples of our method for specific use cases. Figure~\ref{fig: overview} illustrates the position of the LSB problem in the transport problem and the relationship between the NLSB and existing methods.

\subsection{Lagrangian Schr\"{o}dinger Bridge}
We consider the problem of estimating dynamics at both the population and individual sample levels by using probability distributions at two known end points. We make two realistic assumptions about the target system model. 
\begin{enumerate}
    \item The stochastic behavior of individual samples yields the population diffusion phenomenon.
    \item Individual samples are encouraged to move according to \textit{the principle of least action}.
\end{enumerate}
Samples in populations, such as a biological system~\citep{kolomgorov1937study}, are known to move stochastically and diffuse. The principle of least action is known as a fundamental principle in dynamical systems, which states that when a sample moves from point A to point B, its trajectory is the one that has least \textit{action}. We formulate this dynamics estimation problem on the target system as a special case of the SOT problem and call it \textit{the Lagrangian-Schr\"{o}dinger Bridge (LSB) problem}. The position of the LSB problem in the SOT problem is clarified in~\cref{app:sot2fixed,app:lsb}.
\begin{dfn}[LSB problem]
\begin{align}
\label{LSB problem}
\begin{aligned}
&\underset{\mathbf{f}, \mathbf{g}} {\text{minimize}} && \int_{t_0}^{t_1} \int_{\mathbb{R}^d} L(t, \mathbf{x}, \mathbf{f}(\mathbf{x}, t)) \mathrm{~d}\rho_t(\mathbf{x}; \mathbf{f}, \mathbf{g}) \mathrm{~d}t, \\
&\text{subject to} && \mathrm{d} \mathbf{X}_t = \mathbf{f}(\mathbf{X}_t, t) \mathrm{~d}t + \mathbf{g}(\mathbf{X}_t, t) \mathrm{~d} \mathbf{W}_t, && \\
&&& \mathbf{X}_0 \sim \rho_{t_0} = \mu_{0},\ \mathbf{X}_1 \sim \rho_{t_1} = \mu_{1},
\end{aligned}
\end{align}
where $\rho_t$ is the law of the random variable $\mathbf{X}_t$, depending on the functions $\mathbf{f}$ and $\mathbf{g}$.
\end{dfn}
The LSB problem is the problem of exploring the sample paths that minimize \textit{action} defined by the time integral of the Lagrangian, given the distributions at the two endpoints. The stochastic movement of samples and diffusion phenomena are explicitly modeled by using SDEs. The diffusion coefficient $\mathbf{g}$ is also optimized together with the drift $\mathbf{f}$ to reveal the effect of noise in real environments.

\subsection{Neural Lagrangian Schr\"{o}dinger Bridge}
\label{sec:nlsb}
In the setting of our paper, it is difficult to solve the LSB problem because the exact endpoint constraints $\mu_0$ and $\mu_1$ are unknown and only samples from them are available. 
Therefore, we propose a practical solution method for the LSB problem using neural SDEs (\cref{eq:ito_sde}) with regularization.
We propose to approximate the LSB problem by learning neural SDEs with the gradients of the loss (See~\cref{app:nlsb-otflow} for theoretical justification and connections to existing works.):
\begin{gather}
    \label{eq:theoretical loss}
    \underset{\theta, \phi} {\text{minimize}} \quad
    \mathbb{D}(\mu_1, \rho_{t_1})
    + \mathcal{R}_e(\theta; t_{0}, t_{1}) + \mathcal{R}_h(\theta, \phi; t_{0}, t_{1}), \\
    \label{sot_loss_L}
    \mathcal{R}_e(\theta; t_0, t_1) = \int_{t_0}^{t_1} \int_{\mathbb{R}^d} L(t, \mathbf{x}, \mathbf{f}_{\theta}(\mathbf{x}, t))  \mathrm{~d} \rho_t(\mathbf{x}) \mathrm{d}t, \\
    \begin{gathered}
    \label{sot_loss_R}
    \mathcal{R}_h(\theta, \phi; t_0, t_1) = \int_{t_0}^{t_1} \int_{\mathbb{R}^d} \left| \partial_t \Phi_{\theta}(\mathbf{x}, t) + \sum_{i,j=1}^d D_{i,j}(\mathbf{x}, t;\phi) \left[\nabla_{\mathbf{x}}^2 \Phi_{\theta} \right]_{i,j} - H^*_{\theta}(\mathbf{x}, t) \right| \mathrm{~d}\rho_t(\mathbf{x}) \mathrm{d}t, \\
    H^*_{\theta}(\mathbf{x}, t) := H(t, \mathbf{x}, -\nabla_{\mathbf{x}} \Phi_{\theta}(\mathbf{x}, t)) = \langle -\nabla_{\mathbf{x}} \Phi_{\theta}(\mathbf{x}, t), \mathbf{f}_{\theta}( \mathbf{x}, t) \rangle - L(t, \mathbf{x}, \mathbf{f}_{\theta}( \mathbf{x}, t)),
    \end{gathered}
\end{gather}
where $\mathbb{D}$ is the distribution discrepancy measure, $D_{i, j}(\cdot; \phi)$ is the entry in the $i$-th row and $j$-th column of the diffusion coefficient matrix $\mathbf{D}_{\phi} = \frac{1}{2}\mathbf{g}_{\phi} \mathbf{g}_{\phi}^\top$, $H$ is the Hamiltonian defined by $H(t, \mathbf{x}, \mathbf{z}) := \langle \mathbf{z}, \mathbf{f}_{\theta} \rangle - L(t, \mathbf{x}, \mathbf{f}_{\theta}) $, and $\Phi_{\theta}$ is the potential function satisfying $\mathbf{f}_{\theta} = \nabla_{\mathbf{z}} H(t, \mathbf{x}, -\nabla_{\mathbf{x}} \Phi_{\theta}(\mathbf{x}, t))$.

We briefly explain the action cost $\mathcal{R}_e$ and the HJB regularization $\mathcal{R}_h$. Equation~\ref{eq:theoretical loss} is actually computed by~\cref{eq:nlsb:empirical-loss}. First, we relax the constraint at time $t_1$, \ie $\mathcal{R}_e + \mathbb{D}(\mu_1, \rho_{t_1})$, by using Wasserstein distance~\citep{bunne2021jkonet, hashimoto2016learning}, KL-divergence~\citep{finlay2020train, onken2021ot,tong2020trajectorynet}, or a combination of these~\citep{lavenant2021towards} for the discrepancy measure $\mathbb{D}$. This objective function can be reinterpreted with $\mathbb{D}(\mu_1, \rho_{t_1})$ as the data-fitting term and the action cost $\mathcal{R}_e$ as the regularization. Second, we exploit SOT theory by incorporating further structure into the modeling. Similar to OT-Flow~\citep{onken2021ot}, we model the drift functions by using the potential function $\mathbf{f}_{\theta} = \nabla_{\mathbf{z}} H(t, \mathbf{x}, -\nabla_{\mathbf{x}} \Phi_{\theta}(\mathbf{x}, t))$ and encourage the potential function $\Phi_{\theta}$ to satisfy the (stochastic) HJB equation~\citep{yong1999stochastic} by minimizing the PDE loss $\mathcal{R}_h$. The relation between the drift and the potential function is an analogue of Hamilton’s equations of motion. The HJB equation represents Bellman's principle of optimality in continuous-time optimization. 

\subsection{Examples of Lagrangian in Neural Lagrangian Schr\"{o}dinger Bridge}
\label{sec:lagrangian}
In this section, we provide the three Lagrangian examples of the NLSB and their use cases. The examples demonstrate that the Lagrangian design allows a variety of prior knowledge to be reflected in the sample trajectories. See~\cref{app:opinion} for more examples.

\textbf{Potential-free system.} Without a specific external force on the individual sample, \textit{the principle of least action} typically indicates that when a sample moves from point A to point B, it tries to minimize energy by reducing the travel distance. This means that the drift is encouraged to be straight and the stochastic movement to be small, \ie , the Lagrangian is formulated by $L(t, \mathbf{x}, \mathbf{u}) = \frac{1}{2}||\mathbf{u}||^2$, and the drift function is given as $\mathbf{f}_{\theta} = - \nabla_{\mathbf{x}} \Phi_{\theta}(\mathbf{x}, t)$.

\textbf{Cellular system.} \citet{tong2020trajectorynet} proposed to introduce the prior knowledge of manifold geometry and local velocity arrows such as RNA-velocity for modeling cellular systems. In the NLSB, these prior knowledge can be handled consistently by designing the Lagrangian. First, we introduce a density-based penalty to constrain the sample trajectories on the data manifold. We estimate the density function $U(\mathbf{x}, t)$ from the data, \eg , by using Gaussian mixture models (GMM) and add it to the Lagrangian. Next, we redefine the velocity regularization, which is formulated as cosine similarity maximization by~\citet{tong2020trajectorynet}, as a squared error minimization and add it to the Lagrangian as well. Therefore, the Lagrangian for the cellular system is defined by
\begin{gather}
    \label{eq:Lagrangian for cellular system}
    L(t, \mathbf{x}, \mathbf{u}) = \underbrace{\frac{1}{2}||\mathbf{u}(\mathbf{x}, t)||^2}_{\text{Energy}} - \underbrace{U(\mathbf{x}, t)}_{\text{Density}} + \underbrace{\frac{1}{2}||\mathbf{u}(\mathbf{x}, t) - \mathbf{v}(\mathbf{x}, t)||^2}_{\text{Velocity}},
\end{gather}
where $\mathbf{v}(\mathbf{x}, t)$ is the reference velocity of the cell at the position $\mathbf{x}$ and the time $t$. The drift function is obtained by $\mathbf{f}_{\theta} = \frac{1}{2}( \mathbf{v}(\mathbf{x}, t) - \nabla_{\mathbf{x}} \Phi_{\theta}(\mathbf{x}, t))$.

\textbf{Random dynamical system.} Let $U(\mathbf{x})$ be the potential energy of the system, $\mathbf{R}$ be the mass matrix, which is symmetric and $L(t, \mathbf{x}, \mathbf{u}) = \frac{1}{2} \mathbf{u}^{\top}\mathbf{R}\mathbf{u} - U(\mathbf{x})$ be the Lagrangian in the random dynamical system. The drift function is given by $\mathbf{f}_{\theta} = -\mathbf{R}^{-1} \nabla_{\mathbf{x}}\Phi_{\theta}(\mathbf{x}, t)$. Then, the individual samples are encouraged to follow a stochastic analogue of \textit{the equations of motion} of the Newtonian mechanics. In practice, the potential function can be used to roughly incorporate information such as obstacles and regions where samples are not likely to exist. Practical examples are shown in~\cref{fig: visualization of NLSB examples,fig:crowd:vis,fig:crowd:LQR}. 

\section{Implementation of Neural Lagrangian Schr\"{o}dinger Bridge}
\subsection{Training for Neural Lagrangian Schr\"{o}dinger Bridge}
In this section, we describe a practical learning method for neural SDEs by minimizing loss in~\cref{eq:theoretical loss} on the training data, as described in Algorithm~\ref{alg:cap}. First, for the data, only individual samples from a cross-sectional population with no alignment across time steps at $K$ separate time points are available. Let $T = \{ t_0,\ \dots,\ t_{K-1} \}$ be a set of time points and denote the data set at time $t_i$ as $\mathcal{X}_{t_i}$. Next, we describe how we calculate the loss in~\cref{eq:theoretical loss} using the training data $\{ \mathcal{X}_{t_i} \}_{t_i \in T}$. We compute the distribution discrepancy using the L2-Sinkhorn divergence $\overline{\mathcal{W}}_{\epsilon}$ between the observed data and the predicted sample at all observation points in $T$ except the initial point at time $t_0$. The Sinkhorn divergence can efficiently approximate the Wasserstein distance and solve the entropic bias problem when using the Sinkhorn algorithm, \ie $\mu = \nu \Leftrightarrow \overline{\mathcal{W}}_{\epsilon}(\mu, \nu) = 0$. The prediction sample is obtained by numerically simulating neural SDE from the training data at one previous time point by a standard SDE solver such as the Euler-Maruyama method. The second and third terms, $\mathcal{R}_e$ and $\mathcal{R}_h$, are also approximated on the sample paths obtained from the numerical simulations of SDE. Therefore, the computational cost of empirical $\mathcal{R}_e$ and $\mathcal{R}_h$ is not high and can be further accelerated by the model architecture described in the next section. To summarize, we simultaneously solve $K-1$ approximated LSB problems by minimizing the subsequent loss for the model parameters $\theta,\ \phi$.
\begin{gather}
    \label{eq:nlsb:empirical-loss}
    \ell(\theta, \phi) 
    = \sum_{t_k \in T \setminus t_0} \overline{\mathcal{W}}_{\epsilon}(\mu_{k}, \rho^{\theta, \phi}_{t_{k}}) + \lambda_e(t_{k-1}, t_{k}) \mathcal{\hat{R}}^{\theta}_e(t_{k-1}, t_{k}) + \lambda_h(t_{k-1}, t_{k}) \mathcal{\hat{R}}^{\theta, \phi}_h(t_{k-1}, t_{k}),
\end{gather}
where $\mu_{k},\ \rho_{t_{k}}^{\theta, \phi}$ are the ground-truth and predicted probability measures at time $t_k$ expressed by data samples, respectively. $\mathcal{\hat{R}}^{\theta}_e(t_{k-1}, t_{k})$ and $\mathcal{\hat{R}}^{\theta, \phi}_h(t_{k-1}, t_{k})$ are empirical quantities computed on the simulated sample paths from $t_{k-1}$ to $t_{k}$ with data $\forall \mathbf{x}(t_{k-1}) \in \mathcal{X}_{t_{k-1}}$ as the initial value. The weight coefficients $\lambda_e(t_{k-1}, t_{k})$ and $\lambda_h(t_{k-1}, t_{k})$ are tuned for each interval $[t_{k-1}, t_k]$ respectively. 

\subsection{Model Architecture Selection for Speedup}
\label{sec:nlsb:architecture}
We adopt the model of the potential function $\Phi_{\theta}$ proposed by OT-Flow~\citep{onken2021ot} because it has two advantages in our framework. First, it can compute the gradient $ \nabla_{\mathbf{x}} \Phi$ explicitly, which enables us to calculate the drift function easily. Second, the model is designed for the fast and exact computation of the diagonal component of the potential function's Hessian. Moreover, when we assume that the diffusion model's output is a diagonal matrix, \ie $\mathbf{g}_{\phi}(\mathbf{x}, t) \in \mathbb{R}^{d \times d}$ and $\mathbf{g}_{i, j} = 0 \ (i \neq j)$, the function $\mathbf{D}_{\phi}$ is also a diagonal matrix, and the $\sum_{i, j}$ in the second term of $\mathcal{R}_h$ shown in~\cref{sot_loss_R} turns into the sum of the diagonal components only. Combining these two tricks enables speeding up the computation of $\mathcal{R}_h$, which requires the expensive computation of the Hessian of the potential function $\Phi$. 
While this model architecture trick was originally used for speeding up the computation of the Jacobian term in neural ODE maximum likelihood training, we propose for the first time to use it as a technique to speed up the computation of $\mathcal{R}_h$ in neural SDE. We use a two-layer fully connected NN for the diffusion function. We also adopt the device used by~\citet{kidger2021neural} in which the activation function $\operatorname{tanh}$ is used after the final layer to prevent the output of the diffusion function from becoming excessively large. See~\cref{appendix: Model architecture} for details on the model of $\Phi$.

\section{Experiments}
\label{sec: experiments}
We evaluated our methods on two datasets. First, we used artificial synthetic data generated from one-dimensional SDEs, where the predicted trajectory and uncertainty can be compared with the ground-truth and easily evaluated by visualization. We set the Lagrangian for the potential-free system,\ie $L(t, \mathbf{x}, \mathbf{u})=\frac{1}{2}||\mathbf{u}||^2$. Second, we used the evolution of single-cell populations obtained from a developing human embryo system. We used the Lagrangian for the cellular system and compared several combinations of the regularization terms. In~\cref{tab:rna:mdd,tab:rna:cdd}, ``E" is the energy term, ``D" is the density term, and ``V" is the velocity term. The density term $U(\mathbf{x}, t)$ is the log-likelihood function of the data estimated by GMM. We compared our methods against standard neural SDE, OT-Flow~\citep{onken2021ot}, TrajectoryNet~\citep{tong2020trajectorynet}, IPF with GP~\citep{vargas2021solving} and NN~\citep{de2021diffusion}, and SB-FBSDE~\citep{chen2021likelihood}. We trained the standard neural SDE using only the Sinkhorn divergence. The velocity model of TrajectoryNet includes the concatsquash layers used in~\citet{grathwohl2018ffjord}. The base models of OT-Flow and TrajectoryNet were trained with the standard neural ODEs scheme. +OT represents a model trained with the OT-based regularization defined by~\cref{eq:ot-loss-L,eq:ot-loss-R}. We used only $\tilde{\mathcal{R}}_e$ for TrajectoryNet; we used both $\tilde{\mathcal{R}}_e$ and $\tilde{\mathcal{R}}_h$ for OT-Flow. We set the interval-dependent coefficients $\tilde{\lambda}_e,\ \tilde{\lambda}_h$ for the OT-based regularization as well as~\cref{eq:nlsb:empirical-loss}. 
The drift model of IPF (GP) was changed to sparse GP from vanilla GP~\citep{vargas2021solving} to save computation cost. The drift model of IPF (NN) and SB-FBSDE are the same networks as the NLSB for a fair comparison. The diffusion coefficients of IPF and SB-FBSDE were tuned as hyperparameters. See~\cref{app:exp} for more details on hyperparameters. 

\subsection{Synthetic Population Dynamics: Ornstein–Uhlenbeck Process}
\label{sec:ou-sde}
\begin{figure}[tb]
\centering
\begin{subfigure}[t]{0.5\hsize}
  \includegraphics[width=\linewidth]{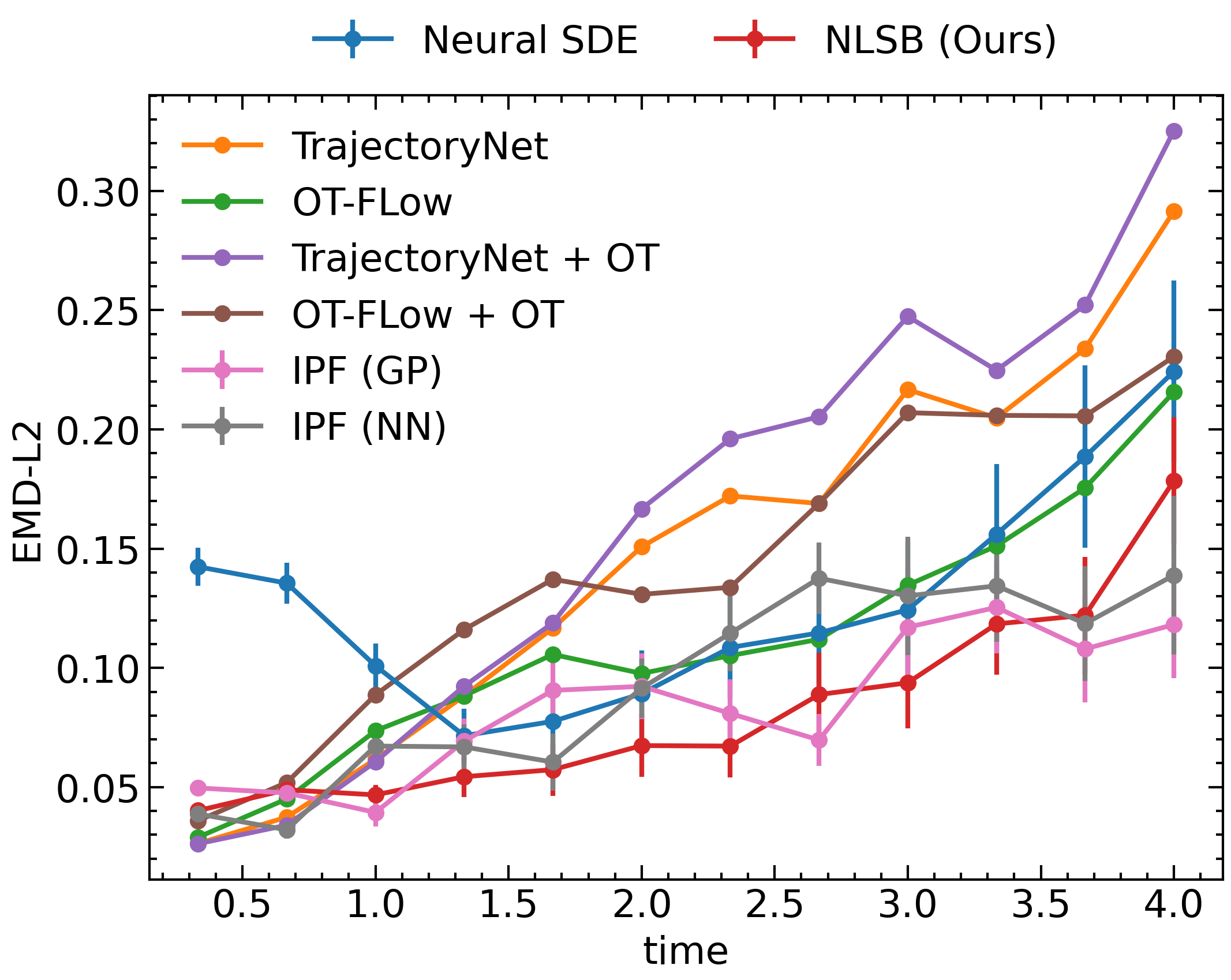}
  \caption{MDD at 12 time points on synthetic OU process data.}
  \label{fig:ou-sde:mdd}
\end{subfigure}\hfil
\begin{subfigure}[t]{0.5\hsize}
\includegraphics[width=\linewidth]{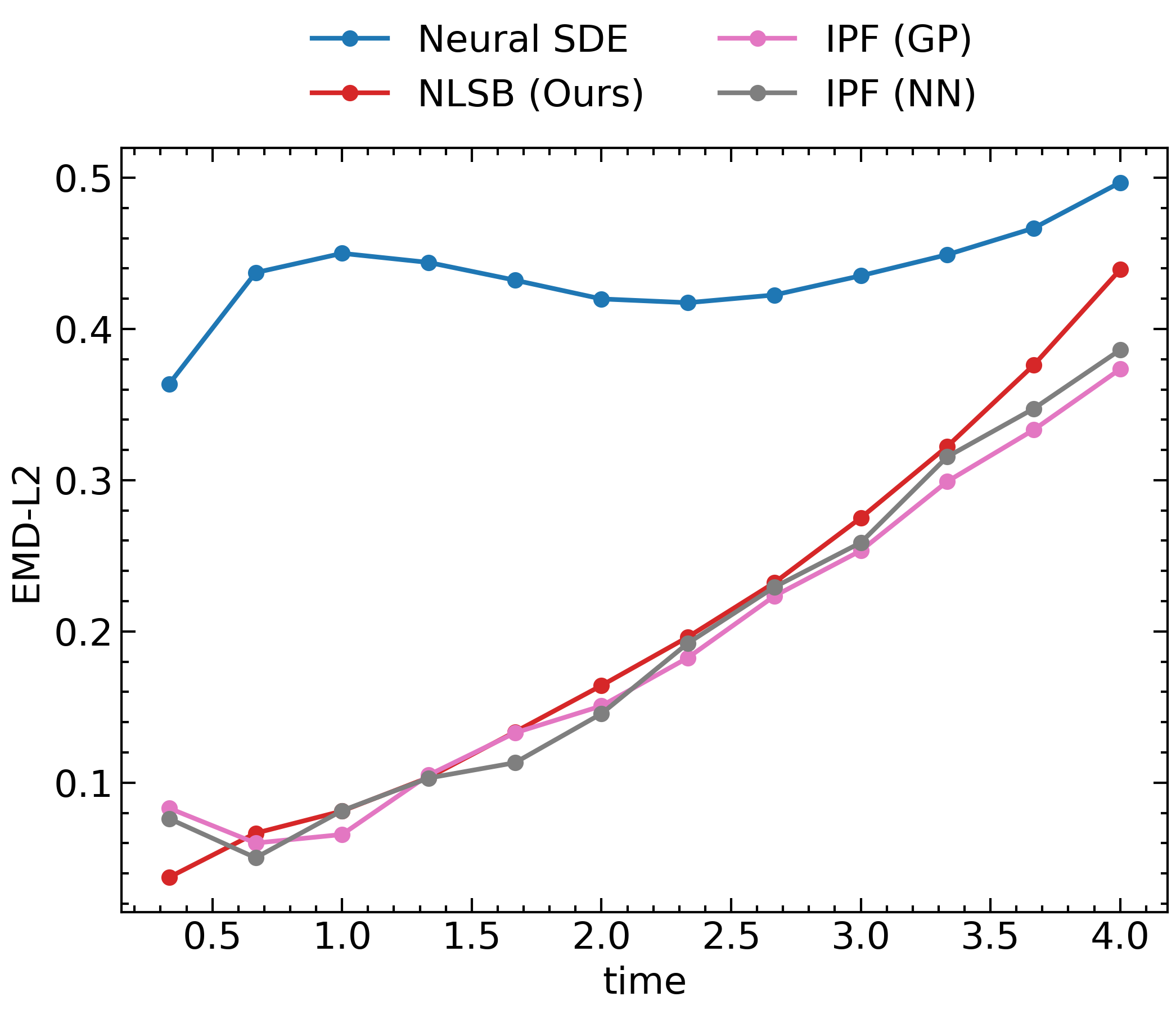}
  \caption{CDD at 12 time points on synthetic OU process data.}
  \label{fig:ou-sde:cdd}
\end{subfigure}\hfil
\caption{Numerical evaluation on synthetic OU process data. All MDD and CDD values were computed between the ground-truth and the estimated samples within generated trajectories all-step ahead from initial samples $\mathbf{x}(t_0)$.}
\label{fig:ou-sde:quantitative-eval}
\end{figure}

\begin{figure}[tb]
\centering
\begin{subfigure}[t]{0.25\hsize}
  \includegraphics[width=\linewidth]{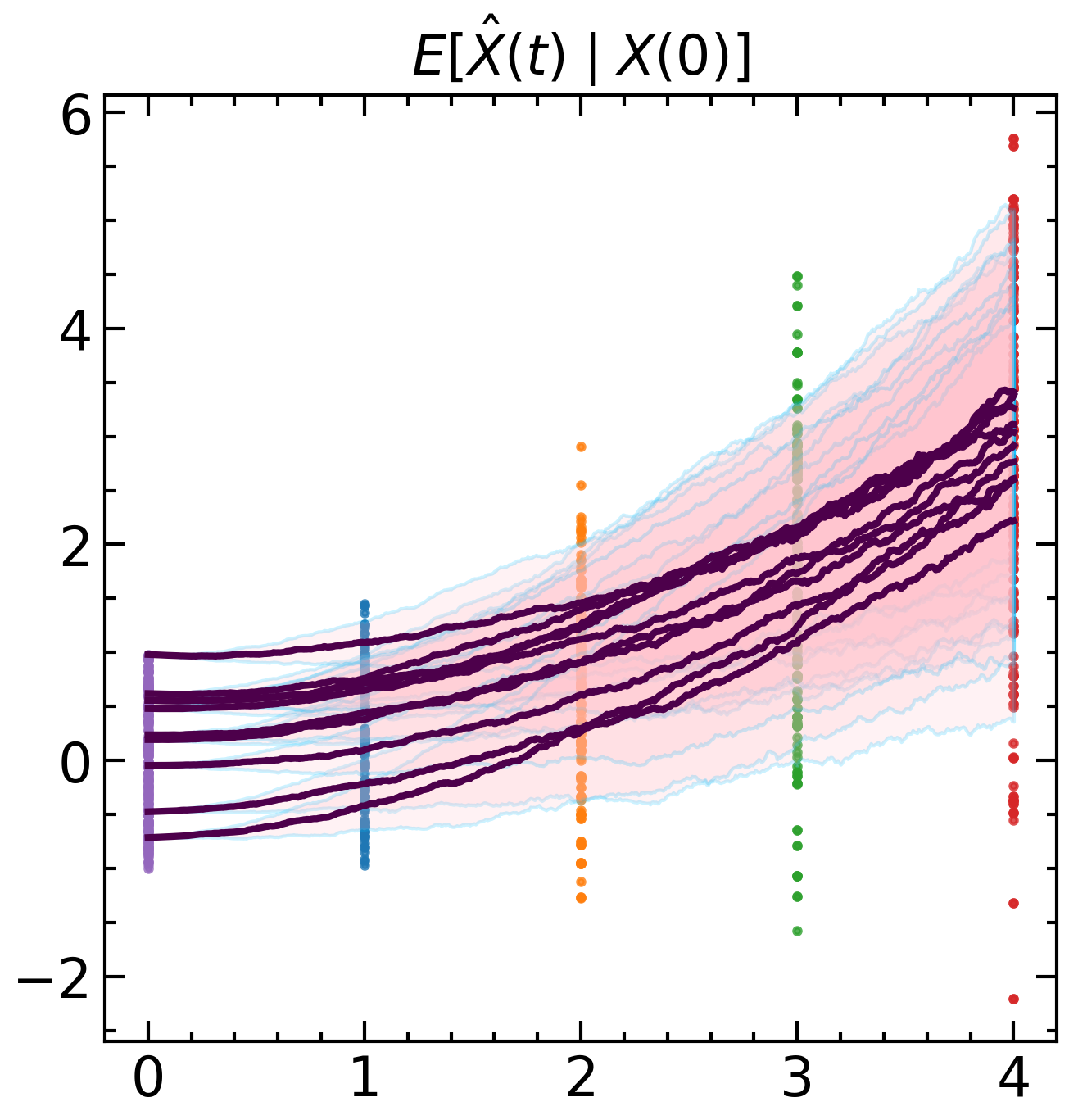}
  \caption{ground-truth SDE}
  \label{fig: OU process}
\end{subfigure}\hfil
\begin{subfigure}[t]{0.25\hsize}
  \includegraphics[width=\linewidth]{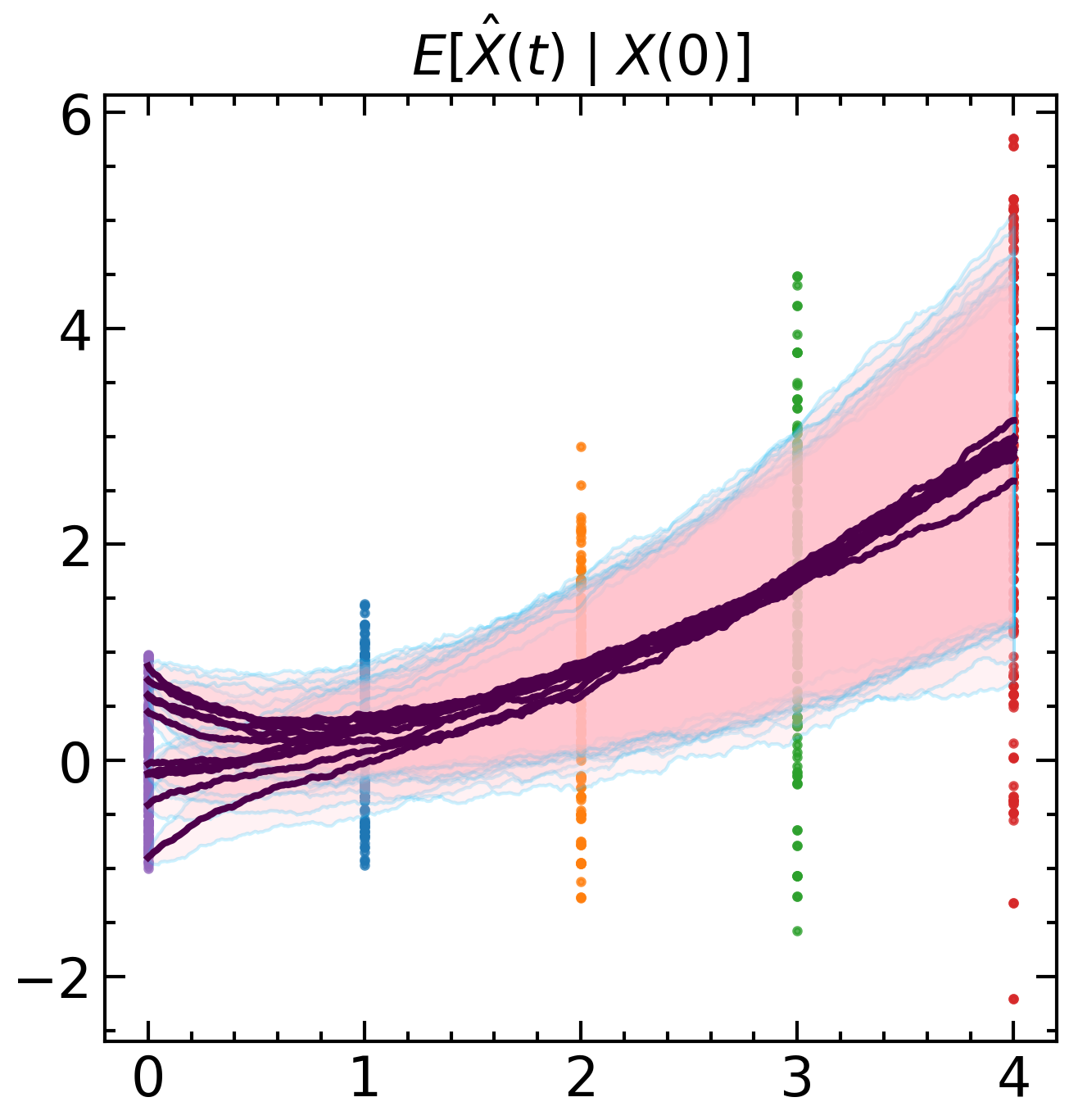}
  \caption{Neural SDE}
  \label{fig: neural SDE on OU process}
\end{subfigure}\hfil
\begin{subfigure}[t]{0.25\hsize}
  \includegraphics[width=\linewidth]{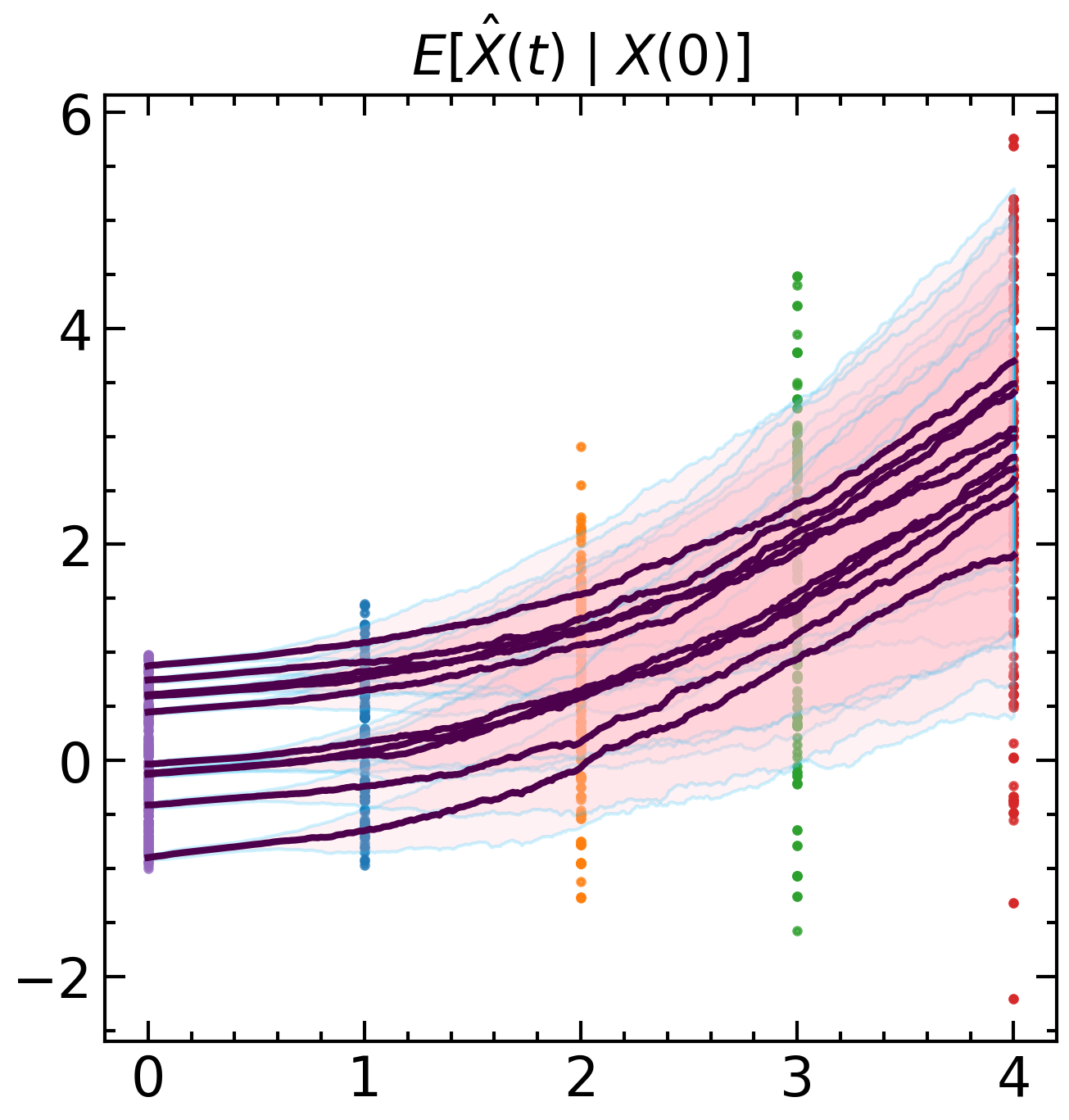}
  \caption{NLSB (Ours)}
  \label{fig: NLSB on OU process}
\end{subfigure}\hfil
\begin{subfigure}[t]{0.25\hsize}
  \includegraphics[width=\linewidth]{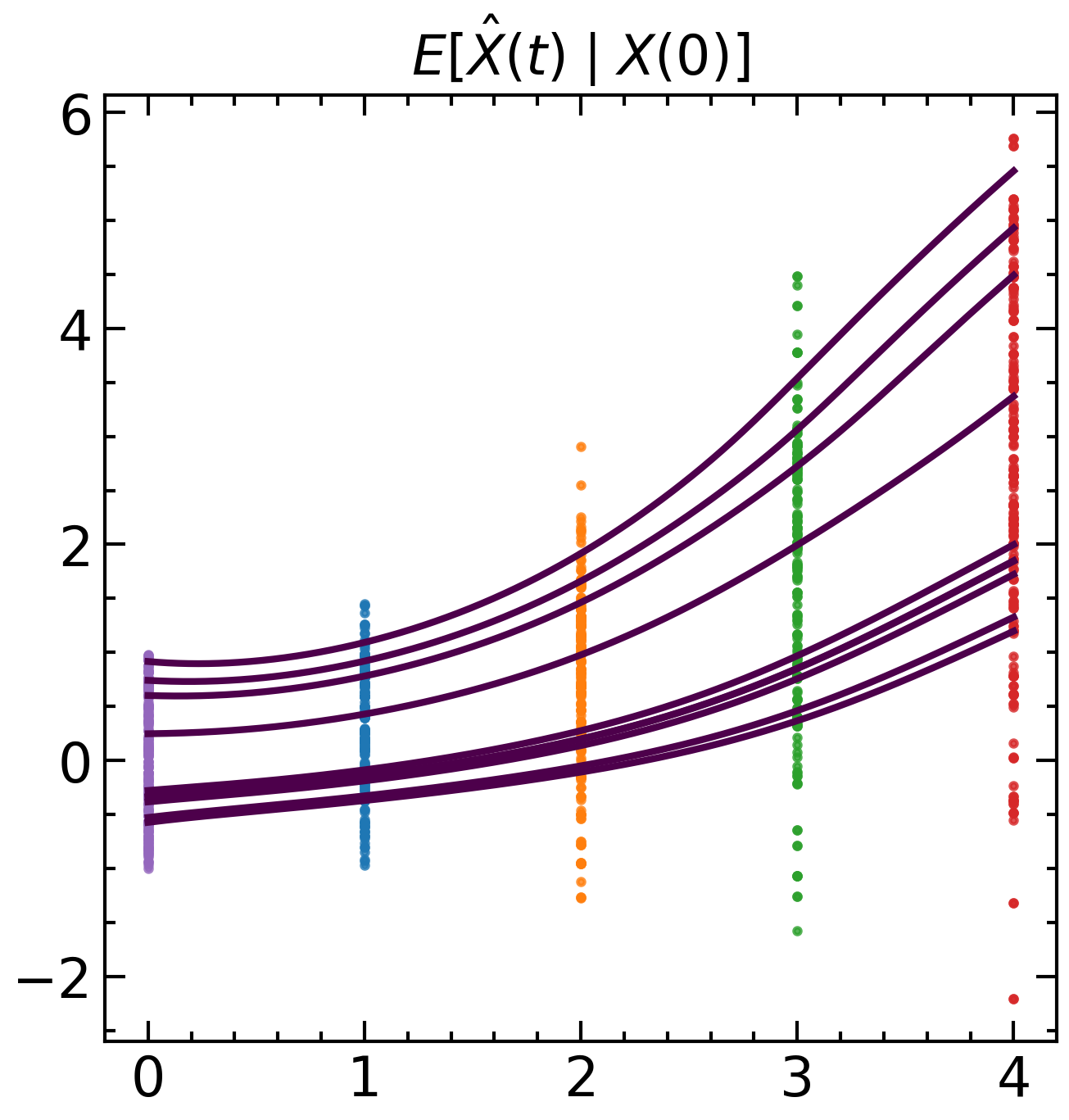}
  \caption{TrajectoryNet + OT}
  \label{fig: TrajectoryNet on OU process}
\end{subfigure}\hfil\textbf{}
\caption{1D OU process data and predictions. The five colored point clouds in the background are the ground-truth data given at each time point. The pink area and the light blue line are the one-sigma empirical confidence intervals and their boundaries for each trajectory, respectively. All trajectories were generated by all-step prediction from the initial samples at the time $t=0$.}
\label{fig:ou-sde:vis}
\end{figure}

{\textbf{Data}.} We used a time-dependent one-dimensional Ornstein–Uhlenbeck (OU) process defined by:
\begin{gather*}
    \mathrm{d}X_t = (\mu t - \theta X_t) \mathrm{~d}t + \left( \frac{2t\sigma}{t_{K-1}} \right) \mathrm{~d} W_t,
\end{gather*}
where $\mu=0.4,\ \theta=0.1,\ \sigma=0.8$, and $\ t_{K-1} = 4$.
First, we simulated the several trajectories from $t=0$ to $4$ and then extracted only the data at the time of $T=\{0, 1, 2, 3, 4\}$ as snapshots for training. We generated $2048$ and $512$ samples for each time point as training and validation data, respectively.

{\textbf{Performance metrics}.} We evaluated the estimation performance of the dynamics in the time interval $[t_0, t_{K-1}]$ by using two metrics on test data: marginal distribution discrepancy (MDD) and conditional distribution discrepancy (CDD).

A smaller MDD between $\mu_t$ and $\rho_t$ calculated with the Wasserstein-2 distance indicates better prediction of population-level dynamics at time $t$. We calculated MDD at $12$ equally spaced time points and the square root of the earth mover's distance with $L^2$ cost (EMD-L2) between $1000$ samples generated from the ground-truth SDE and predicted by the model. When evaluating the SDE-based method, we ran $100$ simulations from the same initial values, computing the MDD value each time and computing their mean and variance. 
A smaller CDD between $\mu_{\mathbf{x}(t) \mid \mathbf{x}(t_0)}$ and  $\rho_{\mathbf{x}(t) \mid \mathbf{x}(t_0)}$ using the Wasserstein-2 distance indicates better prediction of the time evolution of the initial sample $\mathbf{x}(t_0)$.
In short, it is a metric for evaluating the time evolution of at the individual sample level. 
In the actual CDD calculation, we first prepared $1000$ samples $\mathbf{x}(t_0)$ at the initial time point. We then generated $100$ trajectories from each initial sample by the trained model and the ground-truth SDE, and calculated the EMD-L2 for the samples from $\mu_{\mathbf{x}(t) \mid \mathbf{x}(t_0)}$ and $\rho_{\mathbf{x}(t) \mid \mathbf{x}(t_0)}$ at 12 equally spaced time points. 

{\textbf{Results}.} The evaluation results are shown in~\cref{fig:ou-sde:quantitative-eval}, and the visualization of trajectories is shown in~\cref{fig:ou-sde:vis}. Figure~\ref{fig:ou-sde:quantitative-eval} shows that NLSB and IPF outperform neural SDE and is comparable to other ODE-based methods in estimating populations with small variance. In contrast, the SDE-based methods outperform ODE-based methods when estimating populations with a large variance. That indicates that NLSB and IPF can estimate population-level dynamics even when the population variance is large or small. Furthermore, NLSB and IPF have a smaller CDD value than neural SDE. Figure~\ref{fig: neural SDE on OU process} shows that the average behavior of samples $\mathbb{E}[X(t) | X(0)]$ estimated by neural SDE is different from that of the ground-truth SDE (see \cref{fig: OU process}), especially in the interval $[0, 1]$. In contrast, the predictions by NLSB and IPF in~\cref{fig:ou-sde:NLSB,fig:ou-sde:IPF(GP),fig:ou-sde:IPF(NN)} are much closer to the ground-truth. These results show that the prior knowledge of the potential-free system helps to estimate the sample-level dynamics. See~\cref{app:ou-sde} for further results and analysis.

\subsection{Single-Cell Population Dynamics}
\label{sec:rna}
{\textbf{Data}.} We evaluated on embryoid body scRNA-seq data ~\citep{moon2019visualizing}. This data shows the differentiation of human embryonic stem cells from embryoid bodies into diverse cell lineages, including mesoderm, endoderm, neuroectoderm, and neural crest, over $27$ days. During this period, cells were collected at five different snapshots ($t_0$: day $0$ to $3$, $t_1$: day $6$ to $9$, $t_2$: day $12$ to $15$, $t_3$: day $18$ to $21$, $t_4$: day $24$ to $27$). The collected cells were then measured by scRNAseq, filtered at the quality control stage, and mapped to a low-dimensional feature space using a principal component analysis (PCA). For details, see Appendix E.2 in~\citep{tong2020trajectorynet}. 
We split the dataset into train, validation($\sim 8.5 \%$) and test data ($\sim 15\%$).

{\textbf{Performance metrics}.} Unlike the experiment described in~\cref{sec:ou-sde}, there are no ground-truth trajectories in the real data. 
Thus, MDD can be calculated only at the time of observation, and CDD between the ground truth and predicted trajectories cannot be calculated. To evaluate the sample-level dynamics, the model was trained on the full data and the data without only one intermediate snapshot, respectively. We then calculated CDD between the predicted trajectories by those models. Let $\mathcal{D}_{-t_k}$ be the training data without a snapshot at time $t_k$. Larger CDD between them indicates the prediction of the sample-level dynamics is not robust to the exclusion of intermediate snapshots, representing poorer performance in interpolating the sample-level dynamics. When evaluating the SDE-based methods, we calculated the mean and standard deviation of $100$ MDD scores. All performance metrics are calculated on the test data.

{\textbf{Results}.} Table~\ref{tab:rna:mdd} and Figure~\ref{fig: RNA MDD-time} show that the NLSB can predict population-level dynamics with better performance and can be trained in a shorter time against all existing ODE-based methods as the data become higher dimensional. In particular, the SDE-based methods significantly outperform the ODE-based ones in predicting the transitions where the samples from $t_1$ to $t_2$ and from $t_3$ to $t_4$ are highly diffuse, indicating that the explicit modeling of diffusion is effective. (see~\cref{fig: visualization of the predicted samples} in~\cref{app:exp}). 
Overall, the standard deviation of the MDD values is smaller for NLSB than for neural SDE, demonstrating less variation in the approximation accuracy of the marginal distribution. Table~\ref{tab:rna:cdd} shows that NLSB estimates the sample-level dynamics robustly with and without population at the intermediate time point than neural SDE with some exceptions. Especially, energy regularization is the most stable and effective. This result suggests that the LSB-based regularization helps interpolate the sample-level dynamics. See~\cref{app:rna} for further results and analysis.

\begin{table}[tb]
\caption{Evaluation results for population-level dynamics on five-dimensional (5D) PCA space at time of observation for scRNA-seq data. The MDD value at $t_k$ is computed between the ground-truth and the samples predicted from the previous ground-truth samples at $t_{k-1}$ for each $k = 1, 2, 3$ and $4$.}
\centering
\begin{tabular}{lcccc}
    \toprule
    MDD (EMD-L2) $\downarrow$  & $t_1$ & $t_2$ & $t_3$ & $t_4$ \\ \midrule
    NLSB (E) & $0.71 \pm  0.020$ & $0.86 \pm 0.027$ & $0.83 \pm 0.016$ & $\bm{0.79} \pm 0.012$ \\
    NLSB (D) & $0.67 \pm 0.017$ & $0.90 \pm 0.029$ & $0.87 \pm 0.018$ & $\bm{0.79} \pm 0.016$\\
    NLSB (V) & $0.70 \pm 0.023$ & $0.89 \pm 0.030$ & $0.83 \pm 0.022$ & $0.81 \pm 0.019$ \\
    NLSB (E+D+V) & $0.68 \pm 0.016$ & $0.84 \pm 0.030$ & $\bm{0.81} \pm 0.018$ & $\bm{0.79} \pm 0.017$ \\ \midrule
    Neural SDE & $0.69 \pm 0.020$ &  $0.91 \pm 0.029$ & $0.85 \pm 0.025$ & $0.81 \pm 0.017$ \\
    OT-Flow  & $0.83$ & $1.10$ & $1.07$ & $1.05$ \\
    OT-Flow + OT & $0.85$ & $1.05$ & $1.09$ & $1.00$ \\
    TrajectoryNet  & $0.73$ & $1.06$ & $0.90$ & $1.01$ \\
    TrajectoryNet + OT & $0.76$ & $1.05$ & $0.88$ & $1.10$ \\ 
    IPF (GP) & $0.70 \pm 0.015$ & $1.04 \pm 0.041$ & $0.94 \pm 0.029$ & $0.98 \pm 0.033$ \\
    IPF (NN) & $0.73 \pm 0.019$ & $0.89 \pm 0.030$ & $0.84 \pm 0.019$ & $0.83 \pm 0.020$ \\
    SB-FBSDE & $\bm{0.56} \pm 0.010$ & $\bm{0.80} \pm 0.017$ & $1.00 \pm 0.019$ & $1.00 \pm 0.010$ \\ \bottomrule
  \end{tabular}
 \label{tab:rna:mdd}
\end{table}
\begin{table}[tb]
\begin{minipage}{0.55\linewidth}
\caption{Mean value of CDD on 5D PCA space evaluated at $7$ equally spaced time points within time period $[t_{k-1}, t_{k+1}]$ around excluded time point $t_k$ for each $k = 1, 2$ and $3$. The CDD value at the time $t \in [t_{k-1}, t_{k+1}]$ was computed between the two groups of predicted samples generated from the samples at $t_{k-1}$ using the model trained on all data and the data $\mathcal{D}_{-t_k}$.}
 \centering
\begin{tabular}{lccc}
    \toprule
    Mean CDD $\downarrow$ &  $[t_0, t_2]$ &  $[t_1, t_3]$ & $[t_2, t_4]$ \\
    \midrule
    NLSB (E) & $0.88$ & $\bm{0.72}$ & $0.79$ \\
    NLSB (D) & $1.64$ & $0.76$ & $\bm{0.77}$ \\
    NLSB (V) & $1.15$ & $0.82$ & $0.86$ \\
    NLSB (E+D+V) & $0.96$ & $0.83$ & $0.84$ \\ \midrule
    Neural SDE & $1.36$ & $0.85$ & $0.87$ \\
    IPF (GP) & $0.97$ & $1.03$ & $1.06$ \\
    IPF (NN) & $0.90$ & $0.95$ & $0.97$ \\
    SB-FBSDE & $\bm{0.84}$ & $1.06$ & $1.49$ \\
    \bottomrule
  \end{tabular}
  \label{tab:rna:cdd}
  \end{minipage} \hfil
\begin{minipage}{0.43\linewidth}
  \centering
  \includegraphics[width=\columnwidth]{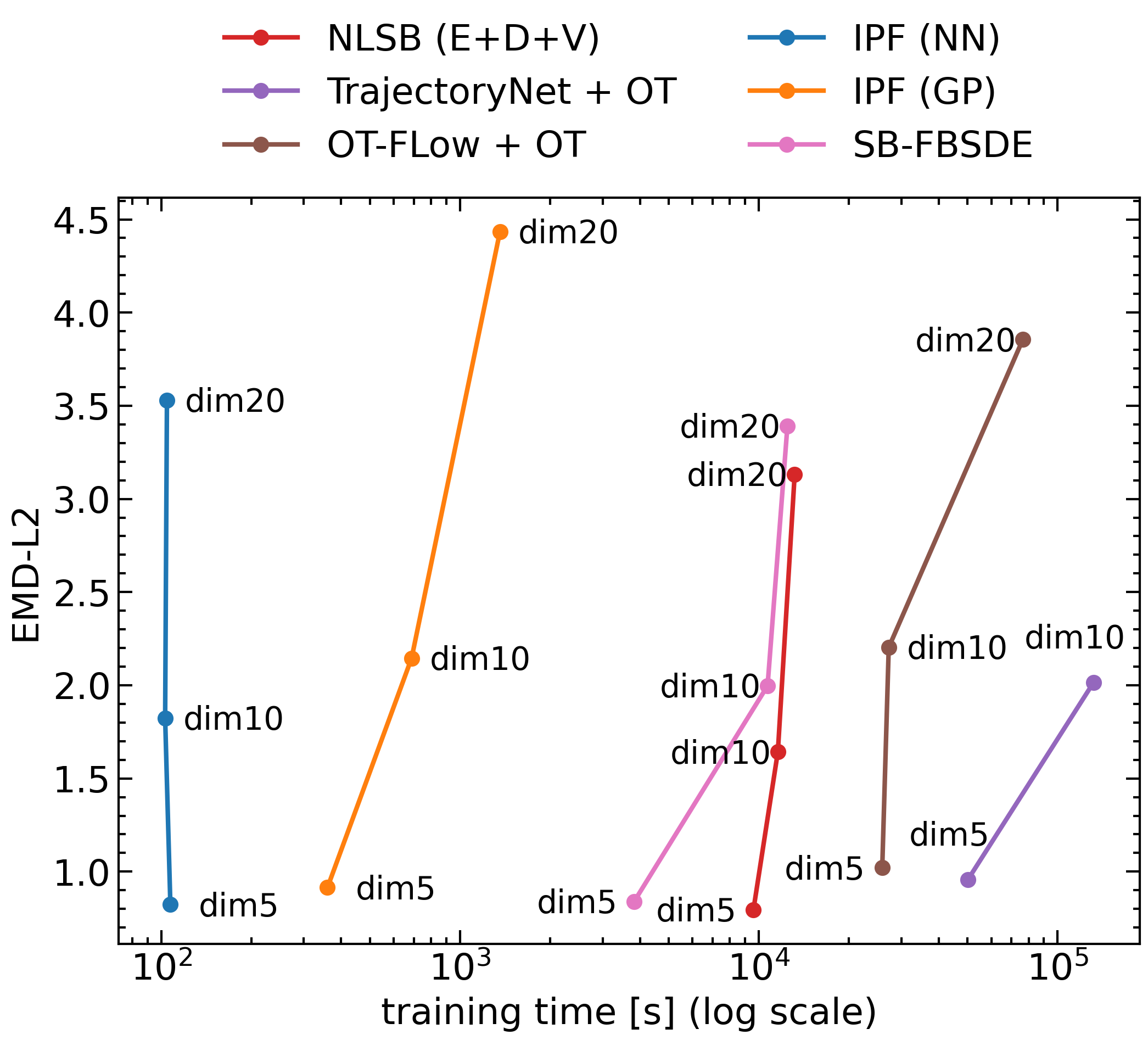}
\captionsetup{type=figure}
\caption{Relationship between data dimension, performance, and training time. The x-axis is the learning time until sufficient convergence. The y-axis is the mean value of MDD over time points $t_1$ to $t_4$, representing population-level performance.}
\label{fig: RNA MDD-time} 
\end{minipage}
\end{table}

\section{Discussion and Conclusion}
\label{sec: discussion and conclusion}
In this work, we proposed a novel framework for estimating population dynamics that reflect prior knowledge about the target system. Unlike existing methods with OT~\citep{schiebinger2019optimal, yang2018scalable}, or CNF~\citep{tong2020trajectorynet} for a biological system, we explicitly modeled the diffusion phenomena by using SDEs for the samples with stochastic behavior. This allowed us to handle the uncertainty of the trajectory (\cref{fig: visualization of the conditional distribution}) and to successfully capture the diffuse transitions (\cref{tab:rna:mdd,fig: visualization of the predicted samples}). In contrast, \citet{vargas2021solving} and~\citet{bunne2022recovering} estimated the sample trajectories of biological systems using the SDE solution to the SB problem. \citet{vargas2021solving} proposed GP-based IPF to solve the SB problem. \citet{bunne2022recovering} proposed GSB-Flow, in which two SB problems are solved sequentially. 
% The first SB problem with Gaussian marginals was solved in closed form, and the next SB problem using the closed-form solution as the reference path was solved by maximum likelihood training. 
Compared with these methods, our method handled a wider class of SDEs, and the diffusion function of the SDE was learned from the data using a backpropagation. In addition, designing the Lagrangian enables us to flexibly incorporate prior knowledge about the target system into the model. Our Lagrangian-based regularization also treated the biological constraints proposed by~\citet{tong2020trajectorynet} and \citet{maoutsa2021deterministic} in a unified manner. 
We demonstrated that NLSB can efficiently estimate the population-level dynamics with better performance than existing methods even for high-dimensional data and that the prior knowledge introduced by the Lagrangian is useful to estimate the sample-level dynamics. Our method is limited in that it cannot model reaction phenomena such as cell birth and death, and restriction to the diagonal diffusion matrix (\cref{sec:nlsb:architecture}) may cause the model to be less expressive. Future work includes developing methods that can handle reaction phenomena with advection and diffusion.

\section*{Acknowledgments}
We thank the lab members for their discussions and inspiration for our research. 

\bibliography{iclr2023_conference}
\bibliographystyle{iclr2023_conference}

\appendix

\newpage
\begin{algorithm}[H]
\caption{Training of NLSB}\label{alg:cap}
\begin{algorithmic}
\Require Dataset $\{ \mathcal{X}_t \}_{t \in T}$, potential function $\mathbf{\Phi}_{\theta}$, diffusion function $\mathbf{g}_{\phi}$, Lagrangian $L$, regularization coefficients $\lambda_e,\ \lambda_h$.
\While{$\theta,\ \phi$ have not converged}
\State{Set loss $\mathcal{L} = 0$.}
\For{$k \leftarrow 0$ to $K-2$}
\State{Sample mini-batch $\mathbf{x}^{(k)}$ from $\mathcal{X}_{t_k}$. Set $\mathbf{y}_{t_k} = \mathbf{x}^{(k)}$.}
\State{Numerically solve augmented SDE $\mathrm{d}\mathbf{s}_t = \tilde{\mathbf{f}}_{\theta}(\mathbf{s}_t, t) ~\mathrm{d}t + \tilde{\mathbf{g}}_{\phi}(\mathbf{s}_t, t) ~\mathrm{d}\mathbf{W}_t$ from $t_k$ to $t_{k+1}$.}
\begin{gather*}
    \begin{gathered}
    \tilde{\mathbf{f}}_{\theta}(\mathbf{s}_t, t) = \begin{bmatrix}
    \nabla_{\mathbf{z}} H(t, \mathbf{y}_t, -\nabla \Phi_{\theta}(\mathbf{y}_t, t)) \\
    L(t, \mathbf{y}_t, \nabla_{\mathbf{z}} H(t, \mathbf{y}_t, -\nabla \Phi_{\theta}(\mathbf{y}_t, t))) \\
    \left| \partial_t \Phi_{\theta}(\mathbf{y}_t, t) + \sum_{i, j} D_{i, j}(\mathbf{y}_t, t) \nabla^2 \Phi_{\theta}(\mathbf{y}_t, t) + H^*_{\theta}(\mathbf{y}_t, t)  \right|
    \end{bmatrix}, \\
    \tilde{\mathbf{g}}_{\phi}(\mathbf{s}_t, t) = \begin{bmatrix}
    \mathbf{g}_{\phi}(\mathbf{y}_t, t) & & & \\
    & 0 & & \\ 
    & & 0 & 
    \end{bmatrix},\ 
    \mathbf{s}_t = \begin{bmatrix}
    \mathbf{y}_t \\
    \hat{\mathcal{R}}_e(t_k, t)  \\
    \hat{\mathcal{R}}_h(t_k, t)
    \end{bmatrix},\ 
    \mathbf{s}_{t_k} = 
    \begin{bmatrix}
    \mathbf{y}_{t_k} \\
    0 \\
    0
    \end{bmatrix}.
    \end{gathered}
\end{gather*}
\State{Sample mini-batch $\mathbf{x}^{(k+1)}$ from $\mathcal{X}_{t_{k+1}}$ and calculate the loss function $\ell(t_k, t_{k+1})$.}
\begin{gather*}
    \begin{gathered}
    \ell(t_k, t_{k+1}) = 
    \overline{\mathcal{W}}_{\epsilon}(\mathbf{y}_{t_{k+1}}, \mathbf{x}^{(k+1)})
    + \lambda_e(t_k, t_{k+1}) \hat{\mathcal{R}}_e(t_k, t_{k+1}) 
    + \lambda_h(t_k, t_{k+1}) \hat{\mathcal{R}}_h(t_k, t_{k+1}), \\
    \mathbf{s}_{t_{k+1}} = \begin{bmatrix}
    \mathbf{y}_{t_{k+1}} \\
    \hat{\mathcal{R}}^{\theta}_e(t_k, t_{k+1}) \\
    \hat{\mathcal{R}}^{\theta, \phi}_h(t_k, t_{k+1})
    \end{bmatrix}
    \end{gathered}.
\end{gather*}
\State{Accumulate loss $\mathcal{L} \leftarrow \mathcal{L} + \ell(t_k, t_{k+1})$.}
\EndFor
\State{Update $\theta,\ \phi$ with gradient $\nabla_{\theta} \mathcal{L}$ and $\nabla_{\phi} \mathcal{L}$, respectively.}
\EndWhile
\Ensure $\Phi_{\theta}, \mathbf{g}_{\phi}$
\end{algorithmic}
\end{algorithm}

\section{Theory of Stochastic Optimal Transport}
\label{app:sot}
The SOT problem is the problem of finding a stochastic process that minimizes the expected cost under fixed marginal distributions at several time points. 
The SOT problem is a stochastic analog of the OT problem, especially related to the dynamic formulation of the OT problem introduced by \citet{benamou2000computational}. It is also considered as a stochastic optimal control (SOC) problem with additional terminal constraint. 
Furthermore, the classical Schr\"{o}dinger Bridge (SB) problem is a special case of the SOT problem and is related to Nelson's stochastic mechanics, a reformulation of quantum mechanics using diffusion processes. An overview of the relationships among these problems is illustrated in~\cref{fig: overview}.

We first explain the dynamic formulation of the OT problem in \cref{appendix: dynamic ot}, the SB problem in \cref{app:sb} and then describe the SOT problem given two fixed marginal distributions as a generalization of the SB problem in \cref{app:sot2fixed}. 
Next, we discuss the LSB problem, which is the main focus of our paper among the SOT problems in \cref{app:lsb}, and finally, we discuss the relationship between NLSB and OT-Flow in \cref{app:nlsb-otflow}. 

\subsection{Dynamic Formulation of Optimal Transport Problem}
\label{appendix: dynamic ot}
\citet{benamou2000computational} redefined the OT problem with a quadratic cost in a continuum mechanics framework. 
They introduced a time parameter $t \in [0, 1]$ and considered the transport process from $\mu$ to $\nu$ as the advection in the time interval $[0, 1]$. 
The OT problem was then reformulated as a minimization problem with respect to the time-varying velocity vector field $v_t$ and the time evolution of the distribution $p_t$ advected by $v_t$.
\begin{thm}[Brenier-Benamou formulation (Eularian formalism);~\citep{benamou2000computational}]
\begin{gather}
    \label{eq:brenier-benamou}
    \mathcal{W}_2(\mu, \nu)^2 =  \inf_{(\mathbf{v}, p_t)} \int_0^1 \int_{\mathbb{R}^d} \| \mathbf{v}(\mathbf{x}, t) \|^2 p_t(\mathbf{x})
    \mathrm{~d}\mathbf{x} \mathrm{d}t,\\
    \label{eq:continuity eq}
    \text{\normalfont subject to} \quad \partial_t p_t = - \operatorname{div}(p_t \mathbf{v}),\ p_0 = p,\ p_1 = q,
    % \mathscr{V} := \setc{(\mathbf{v}_t, p_t)}{\partial_t p_t = - \operatorname{div}(p_t \mathbf{v}_t),\ p_0 = p,\ p_1 = q},
\end{gather}
where $p$ and $q$ are the densities of probability measures $\mu$ and $\nu$, respectively. 
\end{thm}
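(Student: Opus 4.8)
The plan is to establish the identity by proving two matching inequalities between $\mathcal{W}_2(\mu,\nu)^2$ and the infimum of the kinetic-energy functional $\mathcal{J}(\mathbf{v}, p_t) := \int_0^1 \int_{\mathbb{R}^d} \|\mathbf{v}(\mathbf{x},t)\|^2 p_t(\mathbf{x}) \, \mathrm{d}\mathbf{x}\,\mathrm{d}t$, where the infimum ranges over pairs $(\mathbf{v}, p_t)$ solving the continuity equation with the prescribed endpoints. The first inequality, $\mathcal{W}_2^2 \leq \inf \mathcal{J}$, shows that no admissible curve can transport mass more cheaply than optimal transport; the second, $\mathcal{W}_2^2 \geq \inf \mathcal{J}$, produces an explicit competitor, the displacement interpolation, whose energy equals $\mathcal{W}_2^2$ and hence shows the infimum is attained.

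For the bound $\mathcal{W}_2^2 \leq \inf \mathcal{J}$, I would fix any admissible pair $(\mathbf{v}, p_t)$ and pass from the Eulerian description to a Lagrangian one. Assuming enough regularity, I consider the flow $\dot{\mathbf{X}}_t = \mathbf{v}(\mathbf{X}_t, t)$ with $\mathbf{X}_0$ distributed according to $\mu$. The continuity equation is precisely the statement that $(\mathbf{X}_t)_{\#}\mu = p_t\,\mathrm{d}\mathbf{x}$ for all $t$, so $\mathbf{X}_1$ is distributed according to $\nu$ and the map $\mathbf{X}_0 \mapsto \mathbf{X}_1$ is an admissible coupling. Writing $\mathbf{X}_1 - \mathbf{X}_0 = \int_0^1 \mathbf{v}(\mathbf{X}_t, t)\,\mathrm{d}t$ and applying the Cauchy--Schwarz (Jensen) inequality in the time variable gives $\|\mathbf{X}_1 - \mathbf{X}_0\|^2 \leq \int_0^1 \|\mathbf{v}(\mathbf{X}_t, t)\|^2 \, \mathrm{d}t$. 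Taking expectations over $\mu$ and using the push-forward identity turns the right-hand side into $\mathcal{J}(\mathbf{v}, p_t)$, while the left-hand side dominates $\mathcal{W}_2^2$, since it is the cost of a particular coupling. Taking the infimum over admissible pairs closes this direction.

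For the reverse bound $\mathcal{W}_2^2 \geq \inf \mathcal{J}$, I would invoke Brenier's theorem to obtain the optimal transport map $T$ from $\mu$ to $\nu$ (valid since $\mu$ admits the density $p$), and build the constant-speed geodesic $\mathbf{X}_t(\mathbf{x}) = (1-t)\mathbf{x} + t\,T(\mathbf{x})$. Setting $p_t := (\mathbf{X}_t)_{\#}\mu$ and defining the velocity field implicitly by $\mathbf{v}(\mathbf{X}_t(\mathbf{x}), t) = T(\mathbf{x}) - \mathbf{x}$, one checks directly that this pair solves the continuity equation and meets the endpoint constraints. Because the velocity is constant along each characteristic, the energy collapses to $\int_{\mathbb{R}^d} \|T(\mathbf{x}) - \mathbf{x}\|^2 \, \mathrm{d}\mu(\mathbf{x}) = \mathcal{W}_2^2$, exhibiting a competitor whose value is exactly $\mathcal{W}_2^2$ and hence bounding the infimum from above.

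The main obstacle is regularity, and it surfaces in both directions. In the lower-bound argument the velocity field need only be square-integrable against $p_t\,\mathrm{d}\mathbf{x}\,\mathrm{d}t$, so the flow $\dot{\mathbf{X}}_t = \mathbf{v}(\mathbf{X}_t, t)$ may not be classically well-posed; I would handle this with the DiPerna--Lions/Ambrosio theory of flows for non-smooth vector fields, or alternatively by mollifying $(\mathbf{v}, p_t)$ into smooth admissible pairs and passing to the limit, exploiting lower semicontinuity of the energy. In the upper-bound argument, the displacement map $\mathbf{X}_t$ need not be injective at the endpoint, so I would verify the continuity equation in the distributional sense and confirm that the induced $p_t$ is well defined as a curve of measures. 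Interpreting the continuity equation weakly and justifying the change of variables in the push-forward are the delicate points; once these are secured, the two inequalities combine to give the stated identity.
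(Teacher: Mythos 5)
Your proof is correct in outline, but note that the paper does not actually prove this statement: it is quoted as background with a citation to Benamou--Brenier, so the comparison is with the route taken in that reference (which the paper echoes later when it states the optimality conditions for the dynamic formulation). Your argument is the standard ``two inequalities'' proof: for any admissible pair $(\mathbf{v},p_t)$, represent the solution of the continuity equation by characteristics, apply Jensen in time to get $\|\mathbf{X}_1-\mathbf{X}_0\|^2\le\int_0^1\|\mathbf{v}(\mathbf{X}_t,t)\|^2\,\mathrm{d}t$, and observe that the law of $(\mathbf{X}_0,\mathbf{X}_1)$ is a coupling, giving $\mathcal{W}_2^2\le\inf\mathcal{J}$; then exhibit the displacement interpolation $(1-t)\mathrm{id}+tT$ built from Brenier's map as a competitor with energy exactly $\mathcal{W}_2^2$ (legitimate here since both endpoint measures have densities). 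By contrast, Benamou and Brenier derive the identity via convex duality: introducing a space--time Lagrange multiplier $\Phi$ for the continuity equation and passing to the saddle point, which is precisely what produces the Hamilton--Jacobi--Bellman characterization that this paper relies on downstream for its regularizers. The trade-off is real: your probabilistic route is more elementary and immediately yields attainment of the infimum and the geodesic structure, while the duality route yields the potential-function and HJB optimality conditions that the paper needs. The technical gaps you flag are the right ones; in particular, the passage from a merely $L^2(p_t\,\mathrm{d}\mathbf{x}\,\mathrm{d}t)$ velocity field to a representation by characteristics requires the Ambrosio superposition principle (the deterministic analogue of Trevisan's superposition theorem, which the paper itself invokes for the SDE setting), and the velocity field of the displacement interpolation is well defined for $t<1$ because $(1-t)\mathrm{id}+tT$ is the gradient of a strongly convex function there, with only the endpoint $t=1$ requiring a distributional reading of the continuity equation. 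With those standard repairs your proof is complete.
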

The first condition in~\cref{eq:continuity eq} is known as the continuity equation and represents the conservation of mass in time evolution. We can consider an equivalent formulation of \cref{eq:brenier-benamou} by introducing Lagrangian coordinates $\mathbf{X}(t, \mathbf{x})$. 
Lagrangian coordinates $\mathbf{X}(t, \mathbf{x})$ represent the position at time $t$ of the particle whose initial position is $\mathbf{x}$, \ie $\mathbf{X}(0, \mathbf{x}) = \mathbf{x}$. 
\begin{thm}[Brenier-Benamou formulation (Lagrangian formalism);~\citep{benamou2000computational}]
\begin{gather}
    \label{eq:brenier-benamou-ode}
    \mathcal{W}_2(\mu, \nu)^2 = \inf_{\mathbf{v}} \int_0^1 \int_{\mathbb{R}^d} \| \mathbf{v}(\mathbf{X}(t, \mathbf{x}), t) \|^2 p_0(\mathbf{x})
    \mathrm{~d}\mathbf{x} \mathrm{d}t,\\
    \label{eq:ODE constraint}
    \text{\normalfont subject to} \quad \frac{\mathrm{d}\mathbf{X}(t, \cdot)}{\mathrm{d}t} = \mathbf{v}(\mathbf{X}(t, \cdot), t),\
    p_0 = p,\ p_1 = q. \nonumber
\end{gather}
\end{thm}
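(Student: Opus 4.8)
The plan is to deduce this Lagrangian formulation directly from the Eulerian formulation (the preceding theorem), which I take as given, by exhibiting a cost-preserving correspondence between the two constraint sets through the flow map. The bridge between the two pictures is the classical \emph{method of characteristics}: a velocity field $\mathbf{v}$ together with its flow $\mathbf{X}(t,\cdot)$, solving $\frac{\mathrm{d}}{\mathrm{d}t}\mathbf{X}(t,\mathbf{x}) = \mathbf{v}(\mathbf{X}(t,\mathbf{x}),t)$ with $\mathbf{X}(0,\mathbf{x}) = \mathbf{x}$, transports the initial density $p_0$ to the pushforward $p_t := \mathbf{X}(t,\cdot)_{\#}\, p_0$, and this $p_t$ is precisely the solution of the continuity equation $\partial_t p_t = -\operatorname{div}(p_t \mathbf{v})$ driven by that same $\mathbf{v}$. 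Establishing this one-to-one dictionary between $(\mathbf{v}, p_t)$-pairs admissible for the Eulerian problem and $\mathbf{v}$-fields admissible for the Lagrangian problem is the crux of the argument.

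First I would verify the forward direction. Given any $\mathbf{v}$ admissible for the Lagrangian problem, i.e. whose flow satisfies $p_0 = p$ and $\mathbf{X}(1,\cdot)_{\#}\, p_0 = q$, I set $p_t := \mathbf{X}(t,\cdot)_{\#}\, p_0$ and check that $(\mathbf{v}, p_t)$ satisfies the continuity equation with the correct endpoints, so that it is Eulerian-admissible. The two cost functionals then coincide by a change of variables: by the very definition of the pushforward,
\begin{align*}
\int_{\mathbb{R}^d} \| \mathbf{v}(\mathbf{x}, t) \|^2\, p_t(\mathbf{x})\, \mathrm{d}\mathbf{x}
= \int_{\mathbb{R}^d} \| \mathbf{v}(\mathbf{X}(t, \mathbf{x}), t) \|^2\, p_0(\mathbf{x})\, \mathrm{d}\mathbf{x}
\end{align*}
for each fixed $t$ (no Jacobian enters, since this is the abstract transport-of-measure identity), and integrating in $t$ matches the Lagrangian cost to the Eulerian one. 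This yields $\inf_{\mathrm{Eul}} \le \inf_{\mathrm{Lag}}$.

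Next I would treat the converse. Given any Eulerian-admissible pair $(\mathbf{v}, p_t)$, I solve the characteristic ODE to obtain the flow $\mathbf{X}$ and invoke uniqueness for the continuity equation to conclude $\mathbf{X}(t,\cdot)_{\#}\, p_0 = p_t$; in particular $\mathbf{X}(1,\cdot)_{\#}\, p_0 = p_1 = q$, so $\mathbf{v}$ is Lagrangian-admissible, and the same change of variables equates the costs, giving $\inf_{\mathrm{Lag}} \le \inf_{\mathrm{Eul}}$. Combining the two inequalities shows the infima are equal, and since the Eulerian infimum equals $\mathcal{W}_2(\mu,\nu)^2$ by the preceding theorem, so does the Lagrangian one.

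The hardest part will be making the flow-map correspondence rigorous under minimal regularity. For smooth or Lipschitz $\mathbf{v}$ the flow exists, is unique, and is a diffeomorphism by Cauchy-Lipschitz, and Liouville's formula controls the Jacobian so that the identity $p_t = \mathbf{X}(t,\cdot)_{\#}\, p_0$ and its status as a continuity-equation solution are immediate; the subtlety is that a generic Eulerian-admissible field need not be this regular, in which case one must appeal to the DiPerna-Lions-Ambrosio theory of regular Lagrangian flows to define $\mathbf{X}$ and recover the pushforward identity almost everywhere. For the purpose at hand this difficulty is mitigated by Brenier's theorem, which guarantees that the optimizer is generated by the gradient of a convex potential and is therefore regular enough for the classical argument; restricting attention to such competitors suffices to pin down the common value of the two infima.
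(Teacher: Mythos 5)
The paper itself offers no proof of this statement: it is quoted as background from \citet{benamou2000computational} and followed only by interpretive remarks, so your argument can only be judged on its own terms. Your overall strategy --- the method-of-characteristics dictionary between Eulerian pairs $(\mathbf{v}, p_t)$ and Lagrangian velocity fields, with costs matched by the pushforward identity --- is the standard route, and the forward half is correct: the flow pushforward $p_t = \mathbf{X}(t,\cdot)_{\#}\,p_0$ solves the continuity equation distributionally, the transport-of-measure identity equates the costs, and this yields $\inf_{\mathrm{Eul}} \le \inf_{\mathrm{Lag}}$.

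The weak link is the converse as you first state it. ``Invoke uniqueness for the continuity equation'' is precisely what is unavailable for a generic Eulerian-admissible field: outside the Cauchy--Lipschitz or DiPerna--Lions--Ambrosio regimes, distributional solutions of the continuity equation need not be transported by any deterministic flow (the superposition principle only yields a measure on paths, which can branch), so a competitor-by-competitor map from Eulerian to Lagrangian objects cannot be constructed. Your Brenier mitigation does close the gap, but you should restructure it so the logic is explicit: you never need the two-sided dictionary. It suffices to combine (i) the easy inequality $\inf_{\mathrm{Eul}} \le \inf_{\mathrm{Lag}}$ from the forward half with (ii) a single explicit Lagrangian competitor attaining $\mathcal{W}_2(\mu,\nu)^2$, namely displacement interpolation $\mathbf{X}(t,\mathbf{x}) = (1-t)\mathbf{x} + t\nabla\varphi(\mathbf{x})$ along Brenier's map, whose velocity is constant in time along each trajectory (well defined as a field because $\mathbf{X}(t,\cdot)$ is the gradient of a strongly convex function, hence injective, for $t<1$) and whose cost is $\int \|\nabla\varphi(\mathbf{x})-\mathbf{x}\|^2 p(\mathbf{x})\,\mathrm{d}\mathbf{x} = \mathcal{W}_2(\mu,\nu)^2$; together with \cref{eq:brenier-benamou} this forces $\inf_{\mathrm{Lag}} = \inf_{\mathrm{Eul}} = \mathcal{W}_2(\mu,\nu)^2$. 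Note also that a fully self-contained proof bypasses the Eulerian theorem altogether: for any Lagrangian competitor, Jensen's inequality in $t$ gives $\int_0^1 \|\mathbf{v}(\mathbf{X}(t,\mathbf{x}),t)\|^2 \,\mathrm{d}t \ge \|\mathbf{X}(1,\mathbf{x})-\mathbf{x}\|^2$, and since $(\mathrm{id},\mathbf{X}(1,\cdot))_{\#}\,\mu$ is a coupling of $\mu$ and $\nu$, the total cost is at least $\mathcal{W}_2(\mu,\nu)^2$; displacement interpolation again supplies the matching upper bound. That version requires no flow-uniqueness or regularity theory at all, which is why it is the textbook argument for the Lagrangian form.
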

The right-hand side of \cref{eq:brenier-benamou-ode} can be viewed as a problem of finding the shortest path (\aka geodesic) for each particle in the sense of Euclidean space between probability distributions specified at times $t = 0, 1$.
In continuum mechanics, Lagrangian formalism (\cref{eq:brenier-benamou-ode}) describes the motion of each individual
particle, while Eulerian formalism (\cref{eq:brenier-benamou}) focuses on the global property of all particles. 

\citet{benamou2000computational} also showed the optimality conditions of the dynamic formulation. 
They introduced the Lagrangian multiplier of the constraint of \cref{eq:continuity eq} and obtained the saddle point conditions using the variational method. 
\begin{thm}[Optimality conditions for the dynamic formulation;~\citep{benamou2000computational}]
\label{thm: optimality condition of BB}
There exists a space-time dependent potential function $\Phi \colon \mathbb{R}^d \times [0, 1] \mapsto \mathbb{R}$ which satisfies:
\begin{gather}
    \label{eq:Hamilon equation of W2}
    \mathbf{v}^*(\mathbf{X}(t, \cdot), t) = - \nabla_{\mathbf{x}} \Phi(\mathbf{X}(t, \cdot), t), \\
    \label{eq:HJB equation of W2}
    \partial_t \Phi(\mathbf{x}, t) - \left\{ \langle -\nabla_{\mathbf{x}} \Phi(\mathbf{x}, t), \mathbf{v}^*(\mathbf{x}, t) \rangle - \frac{1}{2} \| \mathbf{v}^*(\mathbf{x}, t) \|^2 \right\} = 0, \\
    \label{eq:boundary condition of potential function}
    \Phi_0(\mathbf{x}) = f^*(\mathbf{x}),\ \Phi_1(\mathbf{y}) = -g^*(\mathbf{y}),
\end{gather}
where $f^*$ and $g^*$ are Kantorovich potentials. Equation~\ref{eq:Hamilon equation of W2} is Hamilton’s equation of motion with the Hamiltonian defined by $H(\mathbf{p}, \mathbf{x}) := \sup_{\mathbf{v}} \left\{ \langle \mathbf{p}, \mathbf{v}(\mathbf{x}, t) \rangle - \frac{1}{2} \| \mathbf{v}(\mathbf{x}, t) \|^2 \right\}$ and the momentum as $\mathbf{p} := -\nabla_{\mathbf{x}} \Phi$ and equation~\ref{eq:HJB equation of W2} is Hamilton-Jacob-Bellman (HJB) equation. 
\end{thm}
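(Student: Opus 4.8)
The plan is to recover the stated optimality conditions by the Lagrange-multiplier (saddle-point) calculus for the constrained minimization in \cref{eq:brenier-benamou} under the continuity-equation constraint \cref{eq:continuity eq}. First I would introduce a space-time multiplier $\Phi(\mathbf{x},t)$ for that constraint and form the Lagrangian (normalized to match the Hamiltonian $H$)
\begin{gather*}
\mathcal{L}(p,\mathbf{v},\Phi) = \int_0^1\!\!\int_{\mathbb{R}^d} \tfrac{1}{2}\|\mathbf{v}\|^2\, p_t \,\mathrm{d}\mathbf{x}\,\mathrm{d}t \;-\; \int_0^1\!\!\int_{\mathbb{R}^d} \Phi\,\big(\partial_t p_t + \operatorname{div}(p_t\mathbf{v})\big)\,\mathrm{d}\mathbf{x}\,\mathrm{d}t .
\end{gather*}
Integrating by parts in both $t$ and $\mathbf{x}$ (assuming enough decay that the spatial boundary terms vanish) transfers the derivatives onto $\Phi$, producing the temporal endpoint term $\big[\int_{\mathbb{R}^d}\Phi\, p_t\,\mathrm{d}\mathbf{x}\big]_0^1$ and leaving an integrand of the form $\tfrac{1}{2}\|\mathbf{v}\|^2 p_t + (\partial_t\Phi)\,p_t + \langle\nabla_{\mathbf{x}}\Phi,\mathbf{v}\rangle\, p_t$.

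Next I would compute the two first variations. Stationarity in $\mathbf{v}$ is pointwise on $\{p_t>0\}$ and gives $\mathbf{v}+\nabla_{\mathbf{x}}\Phi=0$, i.e. $\mathbf{v}^*=-\nabla_{\mathbf{x}}\Phi$, which is exactly \cref{eq:Hamilon equation of W2}; equivalently $\mathbf{v}^*$ is the maximizer in the Legendre transform defining the Hamiltonian, so $\mathbf{v}^*=\nabla_{\mathbf{p}}H(\mathbf{p},\mathbf{x})$ with $\mathbf{p}=-\nabla_{\mathbf{x}}\Phi$. Stationarity in $p_t$ yields $\partial_t\Phi + \langle\nabla_{\mathbf{x}}\Phi,\mathbf{v}\rangle + \tfrac{1}{2}\|\mathbf{v}\|^2 = 0$; substituting $\mathbf{v}^*=-\nabla_{\mathbf{x}}\Phi$ collapses the last two terms to $-\tfrac{1}{2}\|\nabla_{\mathbf{x}}\Phi\|^2$ and gives $\partial_t\Phi - \tfrac{1}{2}\|\nabla_{\mathbf{x}}\Phi\|^2 = 0$. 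This is \cref{eq:HJB equation of W2} once one recognizes the braced quantity as $H(-\nabla_{\mathbf{x}}\Phi,\mathbf{x})=\tfrac12\|\nabla_{\mathbf{x}}\Phi\|^2$, so that the condition reads $\partial_t\Phi - H(-\nabla_{\mathbf{x}}\Phi,\mathbf{x})=0$.

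Finally, for \cref{eq:boundary condition of potential function} I would exploit the surviving endpoint term $\int_{\mathbb{R}^d}\Phi_1\, q\,\mathrm{d}\mathbf{x} - \int_{\mathbb{R}^d}\Phi_0\, p\,\mathrm{d}\mathbf{x}$, which is precisely the objective of the static Kantorovich dual. The transversality (saddle-point) conditions then identify the terminal traces of $\Phi$ with the optimal Kantorovich potentials, $\Phi_0=f^*$ and $\Phi_1=-g^*$; concretely, one checks that the viscosity solution of the Hamilton--Jacobi equation with terminal data $-g^*$ is the Hopf--Lax interpolation whose initial trace is the $c$-transform $f^*$, so the two static potentials are mutually consistent. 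I expect this last step, together with the regularity it hides, to be the genuine obstacle: the variational computation above is only formal, since $p_t$ may vanish (making the pointwise $\mathbf{v}$-stationarity ambiguous off $\operatorname{supp} p_t$) and the Hamilton--Jacobi equation in general admits only viscosity rather than classical solutions. Making the identification of $\Phi_0,\Phi_1$ with $f^*,-g^*$ rigorous requires invoking the known smoothness theory for quadratic optimal transport (Brenier's theorem and the Hopf--Lax representation of the value function) rather than the bare Euler--Lagrange equations, whereas the interior relations \cref{eq:Hamilon equation of W2} and \cref{eq:HJB equation of W2} follow directly from the saddle-point calculus.
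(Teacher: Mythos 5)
Your proposal is correct and takes essentially the same route the paper ascribes to \citet{benamou2000computational}: introduce a space-time Lagrange multiplier $\Phi$ for the continuity-equation constraint, integrate by parts, and read off Hamilton's equation, the HJB equation, and the endpoint identification with the Kantorovich potentials from the saddle-point (variational) conditions. The paper does not write out this proof itself (it is a cited result, with the same calculation spelled out only for the stochastic LSB analogue in Theorem~\ref{thm: optimality conditions for the LSB problem}), and your formal derivation, together with the honest caveats about vanishing density and viscosity-versus-classical solutions, matches that treatment.
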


% Bellman optimality condition. 
Equations~\ref{eq:Hamilon equation of W2} and \ref{eq:boundary condition of potential function} indicate that the particle, whose initial position is $\mathbf{x}_0 \in \mathcal{X}$, moves straight ahead at the constant velocity $\mathbf{v}_0 = - \nabla_{\mathbf{x}} \Phi_0(\mathbf{x}_0) = - \nabla_{\mathbf{x}} f^*(\mathbf{x}_0)$ during $t \in [0, 1]$. 
Even without using the variational method, Hamilton's equation of motion (\cref{eq:Hamilon equation of W2}) and the HJB equation (\cref{eq:HJB equation of W2}) can be also derived from the Bellman's principle of optimality. The existence of a potential function satisfying the equation~\ref{eq:Hamilon equation of W2} is guaranteed from the Pontryagin Maximum Principle~\citep{evans1983introduction,evans2010partial}.

\subsection{Schr\"{o}dinger Bridge}
\label{app:sb}
% SB problem is hogehoge

\begin{dfn}[SB problem;~\citep{jamison1975markov}]
Let $\Omega = C([0, 1], \mathbb{R}^d)$ be the space of $\mathbb{R}^d$-valued continuous functions on time interval $[0, 1]$. Denote by $\mathcal{P}(\Omega)$ the probability measures space on the path pace $\Omega$. The SB problem is defined by
\begin{gather}
    \label{eq:sb}
    \min_{\mathbb{Q} \in \mathcal{P}(\Omega)} D_{\mathrm{KL}}(\mathbb{Q} \| \mathbb{P}), \qquad \text{subject to} \quad \mathbb{Q}_{0} \sim \mu_0,\ \mathbb{Q}_{1} \sim \mu_1,
\end{gather}
where $\mu_0, \nu_1 \in \mathcal{P}(\mathbb{R}^d)$ are the probability measures at the time $0$ and $1$, respectively, and the relative entropy $D_{\mathrm{KL}} = \int \log \left( \frac{\mathrm{d}\mathbb{Q}} {\mathrm{d}\mathbb{P}}\right) \mathrm{d}\mathbb{Q}$ if $\mathbb{Q} \ll \mathbb{P}$, and $D_{\mathrm{KL}}= \infty$ otherwise.
\end{dfn}

The case of no prior dynamics is considered classically, \ie , the reference path measure $\mathbb{P}$ is the Brownian diffusion SDE $\mathrm{d} \mathbf{X}_t = \sqrt{2\epsilon} \mathrm{~d} \mathbf{W}_t$, where $\mathbf{W}_t$ is the standard Wiener process. We refer to this setting as the classical SB problem, following the reference~\citep{caluya2021wasserstein}. \citet{chen2021stochastic} derived the optimality condition for problem~\cref{eq:sb}, which is characterized by the forward and backward time-harmonic equations. 
\begin{thm}
Let $\Psi(\mathbf{x}, t)$ and $\hat{\Psi}(\mathbf{x}, t)$ be the solutions to the following PDEs:
\begin{gather}
\label{eq:SB-optimality}
\left\{\begin{array}{l}
\frac{\partial \Psi}{\partial t}=-\epsilon \Delta \Psi\\
\frac{\partial \widehat{\Psi}}{\partial t}=\epsilon \Delta \widehat{\Psi}
\end{array} \quad \text { s.t. } \Psi(\cdot, 0) \widehat{\Psi}(\cdot, 0)=p_0,\ \Psi( \cdot, 1) \widehat{\Psi}(\cdot, 1)=p_1 \right., 
\end{gather}
where $p_0$ and $p_T$ are the probability density of $\mu_0$ and $\mu_T$, respectively. Then, the solution to the SB problem (\cref{eq:sb}) can be described by the following forward or backward SDE:
\begin{gather}
\label{eq:SDE solution of SB}
    \begin{array}{ll}
\mathrm{d} \mathbf{X}_{t}=2\epsilon\nabla_{\boldsymbol{x}} \log \Psi\left(\mathbf{X}_{t}, t\right) \mathrm{~d} t+ \sqrt{2\epsilon} \mathrm{~d} \mathbf{W}_{t}, & \mathbf{X}_{0} \sim \mu_0, \\
\mathrm{d} \mathbf{X}_{t}=-2\epsilon \nabla_{\boldsymbol{x}} \log \widehat{\Psi}\left(\mathbf{X}_{t}, t\right) \mathrm{~d} t+  \sqrt{2\epsilon}\mathrm{~d} \mathbf{W}_{t}, & \mathbf{X}_{1} \sim \mu_1.
\end{array}
\end{gather}
\end{thm}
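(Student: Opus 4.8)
The plan is to establish the result by the classical Hopf--Cole (logarithmic) transformation, recasting the entropy-minimization problem~\cref{eq:sb} as a stochastic optimal control problem and then linearizing the resulting Hamilton--Jacobi--Bellman (HJB) equation. First I would restrict attention to competitors $\mathbb{Q}$ that are laws of It\^o diffusions $\mathrm{d}\mathbf{X}_t = \mathbf{u}(\mathbf{X}_t, t)\,\mathrm{d}t + \sqrt{2\epsilon}\,\mathrm{d}\mathbf{W}_t$ sharing the reference diffusion coefficient, which is no loss of generality since any $\mathbb{Q}$ with $D_{\mathrm{KL}}(\mathbb{Q}\|\mathbb{P}) < \infty$ is of this form. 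By Girsanov's theorem the relative entropy decouples as $D_{\mathrm{KL}}(\mathbb{Q}\|\mathbb{P}) = D_{\mathrm{KL}}(\mu_0\|p_0^{\mathbb{P}}) + \tfrac{1}{4\epsilon}\,\mathbb{E}_{\mathbb{Q}}\!\int_0^1 \|\mathbf{u}(\mathbf{X}_t, t)\|^2\,\mathrm{d}t$, in which the first term is independent of the drift; hence~\cref{eq:sb} reduces to minimizing the control energy $\tfrac{1}{4\epsilon}\,\mathbb{E}\!\int_0^1\|\mathbf{u}\|^2\,\mathrm{d}t$ subject to the two marginal constraints $\mathbf{X}_0\sim\mu_0$ and $\mathbf{X}_1\sim\mu_1$.

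Next I would write the dynamic-programming HJB equation for the value function $V(\mathbf{x},t)$ of this control problem. Pointwise minimization of the Hamiltonian yields the optimal feedback $\mathbf{u}^\star = -2\epsilon\nabla_{\mathbf{x}} V$ together with the nonlinear equation $\partial_t V + \epsilon\Delta V - \epsilon\|\nabla_{\mathbf{x}} V\|^2 = 0$. Applying the Hopf--Cole substitution $\Psi := e^{-V}$ removes the quadratic gradient term and leaves exactly the backward heat equation $\partial_t\Psi = -\epsilon\Delta\Psi$ of the statement, while the optimal drift becomes $\mathbf{u}^\star = 2\epsilon\nabla_{\mathbf{x}}\log\Psi$, giving the forward SDE in~\cref{eq:SDE solution of SB}.

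To obtain $\widehat{\Psi}$ I would propagate the optimally controlled marginal density $\rho_t$ through its Fokker--Planck equation $\partial_t\rho = -\nabla\!\cdot(2\epsilon\,\rho\,\nabla\log\Psi) + \epsilon\Delta\rho$ and set $\widehat{\Psi} := \rho/\Psi$; a direct computation using $\partial_t\Psi = -\epsilon\Delta\Psi$ shows the cross terms cancel and $\widehat{\Psi}$ solves the forward heat equation $\partial_t\widehat{\Psi} = \epsilon\Delta\widehat{\Psi}$. The factorization $\rho_t = \Psi_t\widehat{\Psi}_t$ then produces the coupled boundary conditions $\Psi\widehat{\Psi} = p_0$ at $t=0$ and $\Psi\widehat{\Psi} = p_1$ at $t=1$. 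The backward (Nelson) drift, related to the forward one by $b^- = b^+ - 2\epsilon\nabla_{\mathbf{x}}\log\rho_t$, then becomes $-2\epsilon\nabla_{\mathbf{x}}\log\widehat{\Psi}$ upon substituting $b^+ = 2\epsilon\nabla\log\Psi$ and $\rho_t = \Psi_t\widehat{\Psi}_t$, which is precisely the backward SDE of~\cref{eq:SDE solution of SB}.

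Finally I would confirm global optimality rather than mere stationarity: since $\mathbb{Q}\mapsto D_{\mathrm{KL}}(\mathbb{Q}\|\mathbb{P})$ is strictly convex and the admissible set is convex, the candidate satisfying the HJB/Hopf--Cole conditions is the unique minimizer; equivalently, a completion-of-squares argument shows the energy of any admissible $\mathbf{u}$ exceeds that of $\mathbf{u}^\star$ by a nonnegative remainder. I expect the main obstacle to be the existence and uniqueness of a pair $(\Psi,\widehat{\Psi})$ of positive solutions simultaneously satisfying the two linear PDEs and the nonlinear product boundary conditions~\cref{eq:SB-optimality}: this is the Schr\"{o}dinger system, whose solvability is not elementary. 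The standard route is to reduce to the static entropic problem $\min_{\pi\in\Pi(\mu_0,\mu_1)} D_{\mathrm{KL}}(\pi\|R)$, where $R$ is the joint law of $(\mathbf{X}_0,\mathbf{X}_1)$ under the heat-kernel reference, and to invoke convergence of iterative proportional fitting (Sinkhorn) to the factorized optimizer $\pi^\star(x,y) = \widehat{\phi}(x)\,R(x,y)\,\phi(y)$; identifying $\Psi_1 = \phi$, $\widehat{\Psi}_0 = \widehat{\phi}$ and propagating by the heat semigroup recovers $(\Psi,\widehat{\Psi})$. Secondary technical points are the integrability conditions (e.g.\ Novikov) needed to justify Girsanov and the regularity and positivity of $\Psi,\widehat{\Psi}$ required to make the time-reversal and Hopf--Cole manipulations rigorous.
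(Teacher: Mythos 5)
The paper never proves this theorem: it appears in the appendix as cited background, with the optimality characterization attributed to \citet{chen2021stochastic}, so there is no in-paper proof to compare against line by line. Your sketch is, in substance, the standard argument from that cited literature (Dai Pra's stochastic-control formulation together with the Chen--Georgiou--Pavon treatment): Girsanov reduction of relative entropy to a control-energy problem with two marginal constraints, linearization of the HJB equation by the Hopf--Cole transform $\Psi = e^{-V}$, the factorization $\rho_t = \Psi_t\widehat{\Psi}_t$ with $\widehat{\Psi}$ solving the forward heat equation, Nelson/Anderson time reversal giving the backward drift, and reduction of existence to the static entropic problem solved by IPF/Sinkhorn. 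Your computations are correct: the feedback $\mathbf{u}^\star = -2\epsilon\nabla_{\mathbf{x}} V = 2\epsilon\nabla_{\mathbf{x}}\log\Psi$, the cancellation showing $\partial_t\widehat{\Psi} = \epsilon\Delta\widehat{\Psi}$, and $b^- = b^+ - 2\epsilon\nabla_{\mathbf{x}}\log\rho_t = -2\epsilon\nabla_{\mathbf{x}}\log\widehat{\Psi}$ all check out. One step you should tighten: with a terminal \emph{marginal} constraint there is no value function in the usual dynamic-programming sense, so you cannot literally ``write the HJB equation for this control problem''; the HJB equation and, crucially, its terminal condition only arise after dualizing the constraint, and the Lagrange multiplier is exactly what the Schr\"{o}dinger system determines. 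Your proposal survives this because you make existence of the pair $(\Psi,\widehat{\Psi})$ the load-bearing step (via the static problem and Sinkhorn convergence) and then certify optimality by strict convexity of $\mathbb{Q}\mapsto D_{\mathrm{KL}}(\mathbb{Q}\|\mathbb{P})$ on the convex admissible set, or equivalently by completion of squares --- which is precisely the order in which the rigorous proofs in the cited sources proceed, with the HJB derivation demoted to motivation for the ansatz.
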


%%%SOC
\citet{dai1991stochastic} considered a dynamic formulation of the classic SB problem (\cref{eq:sb}) by interpreting it as a SOC problem with the additional terminal constraint. 
\begin{thm}[Dynamic formulation; ~\citep{dai1991stochastic}]
Let $\mathbf{f}^{*}=2 \epsilon \nabla_{\boldsymbol{x}} \log \Psi$, where $\Psi$ satisfies the SB optimality (\cref{eq:SB-optimality}). Then, $\mathbf{f}^{*}$ is the minimizer of the following optimization problem:
\begin{gather}
\label{eq:SB-SOC-1}
V_{S, \epsilon}(\mu_0, \mu_{1}) := \inf_{\mathbf{f}} \mathbb{E}\left[\int_{0}^{1} \frac{1}{2}\left\|\mathbf{f}\left( \mathbf{X}_{t}, t\right)\right\|^{2} \mathrm{~d}t \right], \\
\text{\normalfont subject to} \quad \mathrm{d} \mathbf{X}_{t}= \mathbf{f}\left( \mathbf{X}_{t}, t\right) \mathrm{~d} t+\sqrt{2 \epsilon} \mathrm{~d} \mathbf{W}_{t},\ \mathbf{X}_{0} \sim \mu_0,\ \mathbf{X}_{1} \sim \mu_{1}. \nonumber
\end{gather}
\end{thm}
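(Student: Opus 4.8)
The plan is to prove this by reducing the stochastic control problem \cref{eq:SB-SOC-1} to the Schr\"odinger bridge problem \cref{eq:sb}, whose minimizer has already been characterized in \cref{eq:SB-optimality} and~\cref{eq:SDE solution of SB}, with Girsanov's theorem serving as the bridge between the two formulations. Concretely, let $\mathbb{P}$ be the reference path measure induced by the drift-free diffusion $\mathrm{d}\mathbf{X}_t = \sqrt{2\epsilon}\,\mathrm{d}\mathbf{W}_t$ started from $\mu_0$, and for an admissible feedback drift $\mathbf{f}$ let $\mathbb{Q}^{\mathbf{f}}$ denote the law on $\Omega = C([0,1],\mathbb{R}^d)$ of the controlled SDE $\mathrm{d}\mathbf{X}_t = \mathbf{f}(\mathbf{X}_t,t)\,\mathrm{d}t + \sqrt{2\epsilon}\,\mathrm{d}\mathbf{W}_t$ with $\mathbf{X}_0 \sim \mu_0$.

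First I would apply Girsanov's theorem to compute the relative entropy of $\mathbb{Q}^{\mathbf{f}}$ with respect to $\mathbb{P}$. Since the two measures share the initial marginal $\mu_0$ and differ only through the added drift, the Radon--Nikodym density yields $D_{\mathrm{KL}}(\mathbb{Q}^{\mathbf{f}}\,\|\,\mathbb{P}) = \frac{1}{4\epsilon}\,\mathbb{E}_{\mathbb{Q}^{\mathbf{f}}}\!\left[\int_0^1 \|\mathbf{f}(\mathbf{X}_t,t)\|^2\,\mathrm{d}t\right]$, which equals $\frac{1}{2\epsilon}\,\mathbb{E}_{\mathbb{Q}^{\mathbf{f}}}[\int_0^1 \frac{1}{2}\|\mathbf{f}\|^2\,\mathrm{d}t]$. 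Justifying this step requires an integrability condition (e.g.\ Novikov's) ensuring $\mathbb{Q}^{\mathbf{f}} \ll \mathbb{P}$ and that the stochastic-integral part of the log-density has zero $\mathbb{Q}^{\mathbf{f}}$-mean; I would therefore restrict attention to finite-energy drifts, for which the infimum is attained.

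Next I would match the constraint sets and the objectives. The marginal requirements $\mathbf{X}_0 \sim \mu_0$, $\mathbf{X}_1 \sim \mu_1$ in \cref{eq:SB-SOC-1} say exactly that $\mathbb{Q}^{\mathbf{f}}_0 \sim \mu_0$ and $\mathbb{Q}^{\mathbf{f}}_1 \sim \mu_1$, i.e.\ $\mathbb{Q}^{\mathbf{f}}$ is feasible for \cref{eq:sb}. Combined with the entropy--cost identity, this gives $V_{S,\epsilon}(\mu_0,\mu_1) = 2\epsilon \inf_{\mathbb{Q}} D_{\mathrm{KL}}(\mathbb{Q}\,\|\,\mathbb{P})$ over the common feasible set, so the two problems share the same minimizer. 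Invoking the optimality characterization already established --- the bridge minimizer is the law of the forward SDE in \cref{eq:SDE solution of SB} with drift $2\epsilon\nabla_{\mathbf{x}}\log\Psi$ --- I conclude $\mathbf{f}^{*} = 2\epsilon\nabla_{\mathbf{x}}\log\Psi$ solves \cref{eq:SB-SOC-1}.

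The main obstacle is the reverse direction of the correspondence: showing that restricting the path-measure optimization to \emph{Markovian} feedback drifts $\mathbf{f}(\mathbf{X}_t,t)$ loses no optimality. A general $\mathbb{Q} \ll \mathbb{P}$ with finite entropy corresponds via Girsanov to a possibly non-Markovian (merely non-anticipating) drift, and one must argue that replacing it by its conditional expectation given the current state $\mathbf{X}_t$ preserves every one-time marginal while not increasing $\mathbb{E}\int\|\mathbf{f}\|^2\,\mathrm{d}t$ (by Jensen and the tower property) --- equivalently, that the optimal bridge measure is Markov. Carrying out this projection rigorously, together with verifying the Girsanov integrability hypotheses, is the delicate part; alternatively, the Hopf--Cole linearization that turns the associated viscous Hamilton--Jacobi--Bellman equation $\partial_t\psi - \frac{1}{2}\|\nabla\psi\|^2 + \epsilon\Delta\psi = 0$ into the backward heat equation of \cref{eq:SB-optimality} (via $\psi = -2\epsilon\log\Psi$ and $\mathbf{f}^{*} = -\nabla\psi$) provides a verification-theorem route that sidesteps the measure-theoretic projection.
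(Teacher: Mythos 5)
The paper itself offers no proof of this theorem---it is stated purely as a citation to \citep{dai1991stochastic}---and your Girsanov-based reduction (the identity $\mathbb{E}\bigl[\int_0^1 \tfrac{1}{2}\|\mathbf{f}\|^2\,\mathrm{d}t\bigr] = 2\epsilon\, D_{\mathrm{KL}}(\mathbb{Q}^{\mathbf{f}}\,\|\,\mathbb{P})$, matching of the constraint sets, and the Markov projection via conditional expectation plus Jensen to dispose of non-Markovian competitors) is precisely the classical argument behind that citation. Your proposal is therefore correct and takes essentially the same approach as the cited proof; it also dovetails with the machinery the paper assembles elsewhere (the conditional-expectation characterization of the marginal-flow minimizer and Trevisan's superposition principle in \cref{app:sot2fixed}), so no gap needs to be flagged beyond the integrability caveats you already acknowledge.
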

\citet{leonard2013survey} formulated the SB problem in~\cref{eq:sb} into a variational SOC problem equivalent to the above problem (\cref{eq:SB-SOC-1}).
\begin{thm}[Dynamic formulation (Eularian formalism);~\citep{leonard2013survey}]
Let $\mathbf{f}^{*}(\mathbf{x}, t)= \nabla_{\boldsymbol{x}} \log \mathbb{E} \left[ \Psi(\mathbf{X}_T, T) \mid \mathbf{X}_t = \mathbf{x} \right]$, where $\Psi$ satisfies the SB optimality (\cref{eq:SB-optimality}). Then, $\mathbf{f}^{*}$ is the minimizer of the following optimization problem:
\begin{gather}
\label{eq:SB-SOC-2}
\mathrm{v}_{S, \epsilon}(\mu_0, \mu_{1}) := \inf_{(\mathbf{f}, \rho_t)} \int_{0}^{1} \int_{\mathbb{R}^d} \frac{1}{2}\left\|\mathbf{f}\left( \mathbf{x}, t \right)\right\|^{2} \mathrm{~d} \rho_t(\mathbf{x})\mathrm{d}t, \\
\label{eq:SB-SOC-2 cond}
\text{\normalfont subject to} \quad \partial_t p_t = - \operatorname{div}(p_t \mathbf{f}) + \epsilon \Delta p_t,\ \rho_{0} = \mu_0,\ \rho_{1} = \mu_1,
\end{gather}
where $p_t$ is the probability density of the probability measure $\rho_t$.
\end{thm}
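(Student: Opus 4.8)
The plan is to reduce the Eulerian problem in \cref{eq:SB-SOC-2} to the Lagrangian (SDE-constrained) problem in \cref{eq:SB-SOC-1}, for which the optimizer $\mathbf{f}^*=2\epsilon\nabla_{\mathbf{x}}\log\Psi$ is already identified, and then recast that optimizer in the claimed conditional-expectation form. The two objectives are literally the same number once $\rho_t$ is taken to be the law of $\mathbf{X}_t$, since $\mathbb{E}\left[\tfrac12\|\mathbf{f}(\mathbf{X}_t,t)\|^2\right]=\int_{\mathbb{R}^d}\tfrac12\|\mathbf{f}(\mathbf{x},t)\|^2\,\mathrm{d}\rho_t(\mathbf{x})$, so it suffices to show the two admissible sets coincide at the level of marginal flows. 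One direction is immediate: if $\mathbf{X}_t$ solves $\mathrm{d}\mathbf{X}_t=\mathbf{f}\,\mathrm{d}t+\sqrt{2\epsilon}\,\mathrm{d}\mathbf{W}_t$ then its density obeys the Kolmogorov forward equation, which is exactly the advection–diffusion constraint \cref{eq:SB-SOC-2 cond}. The converse — that any $(\mathbf{f},\rho_t)$ solving the Fokker–Planck constraint is realized as the time-marginals of an SDE with the same drift — is supplied by the superposition principle, under mild integrability of $\mathbf{f}$ against $\rho_t$. Combining the two directions, the infima agree and the minimizer of one is the minimizer of the other, so $\mathbf{f}^*=2\epsilon\nabla_{\mathbf{x}}\log\Psi$ solves \cref{eq:SB-SOC-2} as well.

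It remains to rewrite $2\epsilon\nabla_{\mathbf{x}}\log\Psi$ in the stated form. Here I would invoke the Feynman–Kac representation: $\Psi$ solves the backward equation $\partial_t\Psi=-\epsilon\Delta\Psi$ from \cref{eq:SB-optimality}, and $\epsilon\Delta$ is precisely the generator of the reference diffusion $\mathrm{d}\mathbf{X}_t=\sqrt{2\epsilon}\,\mathrm{d}\mathbf{W}_t$; hence $\Psi(\mathbf{x},t)=\mathbb{E}\!\left[\Psi(\mathbf{X}_T,T)\mid\mathbf{X}_t=\mathbf{x}\right]$ for the reference process. Substituting this identity into $\nabla_{\mathbf{x}}\log\Psi$ yields the drift written as the gradient of the log of a conditional expectation of $\Psi$, matching the form claimed in the statement up to the diffusion-scaling constant fixed by the chosen normalization of the reference dynamics.

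As a self-contained alternative that parallels \cref{thm: optimality condition of BB}, one can derive the optimizer directly. Introduce a costate $\lambda(\mathbf{x},t)$ for the constraint \cref{eq:SB-SOC-2 cond}, integrate by parts to move all derivatives onto $\lambda$, and read off the stationarity conditions: variation in $\mathbf{f}$ gives $\mathbf{f}^*=\nabla_{\mathbf{x}}\lambda$ wherever $p_t>0$, and variation in $p_t$ gives the viscous Hamilton–Jacobi–Bellman equation $\partial_t\lambda+\tfrac12\|\nabla_{\mathbf{x}}\lambda\|^2+\epsilon\Delta\lambda=0$. The Hopf–Cole change of variables $\lambda=2\epsilon\log\Psi$ then linearizes this HJB equation into the heat equation $\partial_t\Psi=-\epsilon\Delta\Psi$ of \cref{eq:SB-optimality}, recovering $\mathbf{f}^*=2\epsilon\nabla_{\mathbf{x}}\log\Psi$ and hence, via Feynman–Kac as above, the claimed conditional-expectation form. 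Crucially, rewriting the cost in the momentum variable $\mathbf{m}=p_t\mathbf{f}$ makes the objective $\int\!\int\|\mathbf{m}\|^2/(2p_t)$ jointly convex and the constraint linear, so the first-order conditions are not merely necessary but sufficient and the minimizer is unique.

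The main obstacle I expect is making the two soft steps rigorous rather than the algebra. On the reduction route, the superposition principle requires controlling the integrability of the optimal drift against $\rho_t$ and enough regularity of $\rho_t$ (absolute continuity, finite action) for a well-posed SDE realization; on the variational route, the integration by parts generates spatial and temporal boundary terms that must be shown to vanish, which needs decay of $p_t$, $\lambda$ and their gradients and uses the endpoint constraints $\rho_0=\mu_0$, $\rho_1=\mu_1$. Finally, the Feynman–Kac and Hopf–Cole steps presuppose that $\Psi$ is strictly positive and sufficiently smooth, which rests on the solvability of the Schrödinger system \cref{eq:SB-optimality} for the given marginals — the deepest ingredient, normally guaranteed under finite-entropy conditions on $\mu_0,\mu_1$ relative to the reference measure.
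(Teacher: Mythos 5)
Your proof is essentially sound, but note first that the paper never proves this theorem: it is quoted as background from the L\'eonard survey, and the paper's only gloss is the remark that the two dynamic formulations are equivalent, \ie $V_{S,\epsilon}=\mathrm{v}_{S,\epsilon}$. That said, both of your routes coincide with machinery the paper deploys elsewhere. Your reduction of the Eulerian problem to the SDE-constrained problem of \cref{eq:SB-SOC-1} (forward direction via the Kolmogorov forward equation, converse via the superposition principle) is exactly how the paper argues $V(\mu_0,\mu_1)=\mathrm{v}(\mu_0,\mu_1)$ for the general SOT problem using \cref{thm: travisan's superposition principle}; and your self-contained variational alternative --- costate, integration by parts, the momentum change of variables $\mathbf{m}=p_t\mathbf{f}$ for joint convexity, stationarity giving $\mathbf{f}^*=\nabla_{\mathbf{x}}\lambda$ and the viscous HJB equation --- is precisely the saddle-point argument the paper writes out in its proof of \cref{thm: optimality conditions for the LSB problem}, of which this statement is the special case $L=\frac{1}{2}\|\mathbf{u}\|^2$, $\mathbf{D}=\epsilon\mathbf{I}$. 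Your Hopf--Cole substitution $\lambda=2\epsilon\log\Psi$ is computed correctly and linearizes that HJB equation into the backward heat equation of \cref{eq:SB-optimality}. So the proposal is correct and consistent with the paper; it simply assembles into one proof the two pieces the paper keeps separate (and cites rather than proves).

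One point deserves sharpening beyond your phrase ``up to the diffusion-scaling constant'': Feynman--Kac gives $\Psi(\mathbf{x},t)=\mathbb{E}\left[\Psi(\mathbf{X}_T,T)\mid\mathbf{X}_t=\mathbf{x}\right]$ under the reference dynamics, hence $\nabla_{\mathbf{x}}\log\mathbb{E}\left[\Psi(\mathbf{X}_T,T)\mid\mathbf{X}_t=\mathbf{x}\right]=\nabla_{\mathbf{x}}\log\Psi(\mathbf{x},t)$, whereas the minimizer identified by the paper's preceding Dai Pra theorem is $2\epsilon\nabla_{\mathbf{x}}\log\Psi$. The statement as transcribed in the paper therefore drops the factor $2\epsilon$ (it retains $\epsilon$ in the constraint \cref{eq:SB-SOC-2 cond} but not in the drift formula), so what your derivation exposes is a constant inconsistency in the quoted theorem, not a defect of your argument. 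The technical gaps you flag --- integrability of the drift needed for the superposition principle, decay hypotheses killing the boundary terms in the integration by parts, and positivity and smoothness of $\Psi$ resting on solvability of the Schr\"odinger system --- are the right list, and they are likewise left implicit by the paper.
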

The first condition in~\cref{eq:SB-SOC-2 cond} is Fokker-Planck (FP) equation. The two versions of the dynamic formulation (\cref{eq:SB-SOC-1} and \cref{eq:SB-SOC-2}) are equivalent, \ie $V_{S, \epsilon}(\mu_0, \mu_{1}) = \mathrm{v}_{S, \epsilon}(\mu_0, \mu_{1})$.
Furthermore, the dynamic solution of the OT problem (\cref{eq:Hamilon equation of W2}) is obtained as the zero-noise limit ~\citep{mikami2004monge, leonard2012schrodinger} of the classical SB problem. 
\begin{thm}[Zero-noise limit of the classical SB problem; ~\citep{mikami2004monge}]
\label{thm: zero-noise limit}
Let $X^{\epsilon}(t)$ be the solution of the SB problem (\cref{eq:SDE solution of SB}).
Suppose that $\mu_0, \mu_1 \in \mathcal{P}(\mathbb{R}^d)$ have finite second moments and the density function $p_0(x):= \mu_0(\mathrm{d}x)/\mathrm{d}x$ exists. Then, the following holds
\begin{gather*}
    \lim_{\epsilon \to 0} \epsilon V_{S, \epsilon}(\mu_0, \mu_1) = \mathcal{W}_2(\mu_0, \mu_1)^2,
\end{gather*}
and there exists a convex function $\varphi$ satisfying
\begin{gather*}
    \lim_{\epsilon \to 0} \mathbb{E} \left[ \sup_{0 \leq t \leq 1} \left| \mathbf{X}^{\epsilon}(t) - (\mathbf{X}_0 + t (\nabla_{\mathbf{x}} \varphi(\mathbf{X}_0) - \mathbf{X}_0 )) \right|^2 \right] = 0.
\end{gather*}
The map $\nabla_{\mathbf{x}} \varphi$ is the optimal transport map of the OT problem with a quadratic cost. 
\end{thm}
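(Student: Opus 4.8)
The plan is to prove the two assertions separately: convergence of the \emph{value} $\epsilon V_{S,\epsilon}(\mu_0,\mu_1)\to\mathcal{W}_2(\mu_0,\mu_1)^2$, and convergence of the \emph{optimal process} $\mathbf{X}^\epsilon$ to the straight‑line (displacement) interpolation along Brenier's map. The convex potential $\varphi$ is supplied by Brenier's theorem, which applies precisely because the density $p_0$ of $\mu_0$ exists; this hypothesis also makes the optimal transport map $\nabla_{\mathbf{x}}\varphi$ and its coupling \emph{unique}, which is essential when identifying the limit of the trajectories.

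For the value, I would use the dynamic (Eulerian) formulation \cref{eq:SB-SOC-2} and compare it with Benamou–Brenier \cref{eq:brenier-benamou} via a drift/score decomposition. Writing $\epsilon\Delta p_t=\epsilon\operatorname{div}(p_t\nabla_{\mathbf{x}}\log p_t)$, any admissible $(\mathbf{f},\rho_t)$ for the Fokker–Planck constraint satisfies $\partial_t p_t=-\operatorname{div}(p_t\mathbf{v})$ with the effective velocity $\mathbf{v}:=\mathbf{f}-\epsilon\nabla_{\mathbf{x}}\log p_t$. Expanding $\tfrac12\|\mathbf{f}\|^2=\tfrac12\|\mathbf{v}\|^2+\epsilon\langle\mathbf{v},\nabla_{\mathbf{x}}\log p_t\rangle+\tfrac{\epsilon^2}{2}\|\nabla_{\mathbf{x}}\log p_t\|^2$ and integrating against $\rho_t$, the cross term integrates in time to the path‑independent boundary entropy difference $\epsilon(\mathrm{Ent}(\mu_1)-\mathrm{Ent}(\mu_0))$, while the Fisher term is nonnegative. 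Dropping it and minimizing the $\mathbf{v}$‑energy yields the lower bound $V_{S,\epsilon}\ge\tfrac12\mathcal{W}_2^2+\epsilon(\mathrm{Ent}(\mu_1)-\mathrm{Ent}(\mu_0))$ through \cref{eq:brenier-benamou}. The matching upper bound comes from inserting the displacement interpolation $\rho_t=((1-t)\mathrm{id}+t\nabla_{\mathbf{x}}\varphi)_\#\mu_0$, mollified by a small Gaussian so its Fisher information is finite, and correcting the drift by $\epsilon\nabla_{\mathbf{x}}\log p_t$ to meet the Fokker–Planck constraint; as $\epsilon\to0$ the correction costs $O(\epsilon)$. (The $\epsilon$ prefactor and missing factor $\tfrac12$ in the stated limit are normalization artifacts: via Girsanov the kinetic value of \cref{eq:SB-SOC-1} and the relative‑entropy value of \cref{eq:sb} differ by a factor $\propto1/\epsilon$, and I would carry this identity through so the constants match Mikami's convention.)

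For the trajectories I would use the reciprocal/Markov‑bridge structure of the optimal law implied by \cref{eq:SDE solution of SB}: conditionally on its endpoints $(\mathbf{X}_0^\epsilon,\mathbf{X}_1^\epsilon)$, $\mathbf{X}^\epsilon$ is a Brownian bridge with diffusion $2\epsilon$, so $\mathbf{X}^\epsilon_t=(1-t)\mathbf{X}_0^\epsilon+t\mathbf{X}_1^\epsilon+\sqrt{2\epsilon}\,\mathbf{B}_t$ with $\mathbf{B}$ a standard bridge pinned at $0$. By Doob's maximal inequality $\mathbb{E}[\sup_{t}\|\sqrt{2\epsilon}\,\mathbf{B}_t\|^2]=O(\epsilon)$. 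The endpoint coupling $\pi^\epsilon=\mathrm{law}(\mathbf{X}_0^\epsilon,\mathbf{X}_1^\epsilon)$ converges to the unique Brenier coupling $\pi^*=(\mathrm{id},\nabla_{\mathbf{x}}\varphi)_\#\mu_0$ in $\mathcal{W}_2$ — the standard corollary of the value convergence above (minimizers converge to the minimizer when the limit is unique). Realizing an optimal coupling between $\pi^\epsilon$ and $\pi^*$ gives $\mathbb{E}[\|\mathbf{X}_0^\epsilon-\mathbf{X}_0\|^2+\|\mathbf{X}_1^\epsilon-\nabla_{\mathbf{x}}\varphi(\mathbf{X}_0)\|^2]\to0$, and combining with the $O(\epsilon)$ bridge fluctuation yields the claimed $\mathbb{E}[\sup_{0\le t\le1}\|\mathbf{X}^\epsilon(t)-(\mathbf{X}_0+t(\nabla_{\mathbf{x}}\varphi(\mathbf{X}_0)-\mathbf{X}_0))\|^2]\to0$.

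I expect two main obstacles. First, in the value step, controlling the entropy and Fisher‑information corrections rigorously: the lower bound wants $\mathrm{Ent}(\mu_0),\mathrm{Ent}(\mu_1)$ finite (or a truncation/mollification argument otherwise), and the upper‑bound competitor must be regularized so that $\epsilon^2\!\int\!\!\int\|\nabla_{\mathbf{x}}\log p_t\|^2\rho_t$ and the boundary entropy vanish without perturbing the endpoint marginals. Second, and more delicate, is upgrading the endpoint/marginal convergence to the \emph{uniform‑in‑time} $\mathbb{E}[\sup_t]$ bound; this is exactly where the explicit Brownian‑bridge representation (rather than a soft weak‑convergence argument) is indispensable, and it in turn relies on invoking the bridge structure of the optimal process from the SB optimality conditions.
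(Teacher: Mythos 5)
You should first know that the paper contains no proof of this statement: it is imported verbatim (as a citation) from Mikami (2004), so your proposal can only be measured against the literature, not an in-paper argument. Your first half (value convergence) follows the standard modern route: rewrite the Fokker--Planck constraint of \cref{eq:SB-SOC-2} as a continuity equation for the effective velocity $\mathbf{v}=\mathbf{f}-\epsilon\nabla_{\mathbf{x}}\log p_t$, split the kinetic energy into the Benamou--Brenier part of \cref{eq:brenier-benamou}, a cross term that integrates to the boundary entropy difference, and a nonnegative Fisher term. That sketch is sound, and you correctly flag both the normalization mismatch between \cref{eq:SB-SOC-1} and the displayed limit (which is indeed an artifact of the paper's statement) and the entropy/Fisher integrability caveats.

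The genuine gap is in the trajectory half, at the endpoint step. The theorem compares $\mathbf{X}^\epsilon$ with the straight line anchored at the process's \emph{own} initial point, so what you must prove is graph concentration of the endpoint law:
\begin{gather*}
\int \left\| y-\nabla_{\mathbf{x}}\varphi(x)\right\|^2 \mathrm{~d}\pi^\epsilon(x,y)\;=\;\mathbb{E}\!\left[\left\|\mathbf{X}^\epsilon_1-\nabla_{\mathbf{x}}\varphi(\mathbf{X}^\epsilon_0)\right\|^2\right]\longrightarrow 0 .
\end{gather*}
Your argument --- couple $\pi^\epsilon$ optimally to $\pi^*$ and deduce $\mathbb{E}[\|\mathbf{X}_0^\epsilon-\mathbf{X}_0\|^2+\|\mathbf{X}_1^\epsilon-\nabla_{\mathbf{x}}\varphi(\mathbf{X}_0)\|^2]\to 0$ --- produces an \emph{auxiliary} pair $(\mathbf{X}_0,\nabla_{\mathbf{x}}\varphi(\mathbf{X}_0))$ in which $\mathbf{X}_0$ is a different random variable from $\mathbf{X}_0^\epsilon$. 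To pass from the line through $(\mathbf{X}_0,\nabla_{\mathbf{x}}\varphi(\mathbf{X}_0))$ to the line through $(\mathbf{X}_0^\epsilon,\nabla_{\mathbf{x}}\varphi(\mathbf{X}_0^\epsilon))$ you would need $\nabla_{\mathbf{x}}\varphi(\mathbf{X}_0^\epsilon)\approx\nabla_{\mathbf{x}}\varphi(\mathbf{X}_0)$, i.e.\ continuity of $\nabla_{\mathbf{x}}\varphi$, which Brenier's theorem does not provide under the stated hypotheses ($\nabla_{\mathbf{x}}\varphi$ is only defined $\mu_0$-a.e.\ and may be discontinuous). So $\mathcal{W}_2$-convergence of the couplings alone does not close the proof; this step, not the sup-in-time upgrade, is where your argument breaks.

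The repair is to derive graph concentration directly from near-optimality of $\pi^\epsilon$ via duality. The nonnegative gap $Z_\epsilon:=\varphi(\mathbf{X}_0^\epsilon)+\varphi^*(\mathbf{X}_1^\epsilon)-\langle \mathbf{X}_0^\epsilon,\mathbf{X}_1^\epsilon\rangle$ has expectation equal (up to the $O(\epsilon)$ It\^{o} correction relating path energy to $\mathbb{E}\|\mathbf{X}_1^\epsilon-\mathbf{X}_0^\epsilon\|^2$) to half the suboptimality of $\pi^\epsilon$ for the quadratic cost, hence $\mathbb{E}[Z_\epsilon]\to 0$ by your value result. For $\mu_0$-a.e.\ $x$ the convex function $y\mapsto\varphi(x)+\varphi^*(y)-\langle x,y\rangle$ vanishes \emph{only} at $y=\nabla_{\mathbf{x}}\varphi(x)$, and combining this with uniform integrability of $\|y\|^2$ (the second marginal of every $\pi^\epsilon$ is exactly $\mu_1$) upgrades $Z_\epsilon\to 0$ in $L^1$ to the required $L^2$ graph concentration. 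With that substitution the rest of your plan is correct: the Schr\"{o}dinger bridge lies in the reciprocal class of the reference measure in \cref{eq:sb}, so conditionally on its endpoints it is a $\sqrt{2\epsilon}$-Brownian bridge, and the maximal-inequality bound $\mathbb{E}[\sup_t\|\sqrt{2\epsilon}\mathbf{B}_t\|^2]=O(\epsilon)$ then yields the uniform-in-time statement.
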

The connection between the classical SB problem and the dynamic formulation of the OT problem with Fisher information regularization~\citep{chen2016relation} is also well-known. 
Finally, we introduce a simpler variant of the SB problem for which closed-form solution exists.
\begin{thm}[Gaussian SB problem;~\citep{bunne2022recovering}]
\label{thm:gsb}
Let $\mathbb{Q}^*$ be the solution of the Gaussian SB problem defined by 
\begin{gather*}
    \min_{\mathbb{Q} \in \mathcal{P}(\Omega)} D_{\mathrm{KL}}(\mathbb{Q} \| \mathbb{P}), \qquad \text{subject to} \quad \mathbb{Q}(0) \sim \mathcal{N}_0,\ \mathbb{Q}(1) \sim \mathcal{N}_1,
\end{gather*}
where $\mathcal{N}_0 = \mathcal{N}(\mu_0, \Sigma_0)$ and $\mathcal{N}_1 = \mathcal{N}(\mu_1, \Sigma_1)$ are Gaussian distributions. The reference path measure $\mathbb{P}$ is described by the linear SDE as follows.
\begin{gather*}
    \mathrm{d} \mathbf{X}_{t}= \left( c(t)\mathbf{X}_{t} + \mathbf{f}(t) \right) \mathrm{~d} t+ g(t) \mathrm{~d} \mathbf{W}_{t},\qquad \mathbf{X}_{0} \sim \mathcal{N}_0,
\end{gather*}
where $c \colon [0, 1] \mapsto \mathbb{R},\  \mathbf{f} \colon [0, 1] \mapsto \mathbb{R}^d,\ g\colon [0, 1] \mapsto \mathbb{R}_+$ are smooth functions. 
We define the following notation from~\citep{bunne2022recovering}:
\begin{gather*}
    \tau_t := \exp\left( \int_0^t c(s) \mathrm{~d}s \right) \\
    D_\sigma :=\left(4 \Sigma_0^{\frac{1}{2}} \Sigma_T \Sigma_0^{\frac{1}{2}}+\sigma^4 I\right)^{\frac{1}{2}},\quad C_\sigma :=\frac{1}{2}\left(\Sigma_0^{\frac{1}{2}} D_\sigma \Sigma_0^{-\frac{1}{2}}-\sigma^2 I\right) \\
     r_t:=\frac{\kappa(t, T)}{\kappa(T, T)}, \quad \bar{r}_t:=\tau_t-r_t \tau_T, \quad \sigma_{\star}:=\sqrt{\tau_T^{-1} \kappa(T, T)} \\
    \zeta(t):=\tau_t \int_0^t \tau_s^{-1} \alpha(s) \mathrm{d} s, \quad \rho_t:=\frac{\int_0^t \tau_s^{-2} g^2(s) \mathrm{d} s}{\int_0^T \tau_s^{-2} g^2(s) \mathrm{d} s} \\
    P_t:=\dot{r}_t\left(r_t \Sigma_T+\bar{r}_t C_{\sigma_{\star}}\right), \quad Q_t:=-\dot{\bar{r}}_t\left(\bar{r}_t \Sigma_0+r_t C_{\sigma_{\star}}\right) \\
    S_t:=P_t-Q_t^{\top}+\left[c(t) \kappa(t, t)\left(1-\rho_t\right)-g^2(t) \rho_t\right] I \\
     \mu_t^{*} :=\bar{r}_t \mu_0+r_t \mu_T+\zeta(t)-r_t \zeta(T) \\
    \Sigma_t^{*}: =\bar{r}_t^2 \Sigma_0+r_t^2 \Sigma_T+r_t \bar{r}_t\left(C_{\sigma_{\star}}+C_{\sigma_{\star}}^{\top}\right) +\kappa(t, t)\left(1-\rho_t\right) I 
\end{gather*}
Then, the solution $\mathbb{Q}^*$ is a Markov Gaussian process where the marginal $\mathbf{X}_t^* \sim \mathcal{N}(\mu_t^{\star}, \Sigma_t^{\star})$, and follows the following SDE:
\begin{gather*}
    \mathrm{d} \mathbf{X}_{t}^* = S_t^{\top} \Sigma_t^{*-1}(\mathbf{X}_{t}^* - \mu_t^*) \mathrm{~d} t+ g(t) \mathrm{~d} \mathbf{W}_{t}.
\end{gather*}
\end{thm}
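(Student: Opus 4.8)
The plan is to exploit the joint Gaussianity of the reference path measure together with the Gaussian endpoint constraints, which collapses the infinite-dimensional bridge problem to a finite-dimensional matrix optimization. Because the reference SDE is linear with a Gaussian initial law, $\mathbb{P}$ is a Gaussian process; integrating the SDE shows that its endpoint marginal $\mathbb{P}_{01}$ on $(\mathbf{X}_0,\mathbf{X}_1)$ is jointly Gaussian, with mean $(\mu_0,\ \tau_T\mu_0+\zeta(T))$, covariance blocks $\Sigma_0$ and $\tau_T^2\Sigma_0+\kappa(T,T)I$, and cross-block $\tau_T\Sigma_0$. First I would invoke the standard disintegration of relative entropy over endpoints,
\[
D_{\mathrm{KL}}(\mathbb{Q}\|\mathbb{P}) = D_{\mathrm{KL}}(\mathbb{Q}_{01}\|\mathbb{P}_{01}) + \mathbb{E}_{\mathbb{Q}_{01}}\!\left[D_{\mathrm{KL}}(\mathbb{Q}_{\mid 01}\|\mathbb{P}_{\mid 01})\right],
\]
in which only $\mathbb{Q}_{01}$ is constrained (its marginals must equal $\mathcal{N}_0$ and $\mathcal{N}_1$). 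The second, nonnegative term vanishes exactly when $\mathbb{Q}$ reuses the pinned reference bridges $\mathbb{P}_{\mid 01}$, so the optimizer is Markov and shares the reference conditional dynamics; the problem reduces to minimizing $D_{\mathrm{KL}}(\mathbb{Q}_{01}\|\mathbb{P}_{01})$ over couplings of $\mathcal{N}_0$ and $\mathcal{N}_1$.

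Second, I would solve this static Gaussian problem. Since $\mathbb{P}_{01}$ is Gaussian and the marginals to be coupled are Gaussian, the minimizing coupling $\pi^*$ is Gaussian and is pinned down entirely by its cross-covariance block. Minimizing the Gaussian relative entropy over that block is the Gaussian entropic-OT / Schr\"odinger computation, whose stationarity condition produces the closed form $C_{\sigma_\star}$, with the effective noise level $\sigma_\star=\sqrt{\tau_T^{-1}\kappa(T,T)}$ encoding the total accumulated diffusion of the reference; the matrix $D_\sigma=(4\Sigma_0^{1/2}\Sigma_T\Sigma_0^{1/2}+\sigma^4 I)^{1/2}$ and hence $C_\sigma$ arise as the positive-square-root solution of the resulting algebraic Riccati-type equation in $\Sigma_0$ and $\Sigma_T$.

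Third, I would reconstruct the time-marginals of $\mathbb{Q}^*$ by propagating $\pi^*$ through the Gaussian reference bridge. Writing $\mathbf{X}_t=\tau_t\mathbf{X}_0+\zeta(t)+M_t$ for the reference martingale noise $M_t$ with $\mathrm{Cov}(M_t,M_T)=\kappa(t,T)I$, conditioning on the endpoints gives $\mathbb{E}[\mathbf{X}_t\mid\mathbf{X}_0,\mathbf{X}_1]=\bar{r}_t\mathbf{X}_0+r_t\mathbf{X}_1+\zeta(t)-r_t\zeta(T)$ with regression coefficient $r_t=\kappa(t,T)/\kappa(T,T)$ and $\bar{r}_t=\tau_t-r_t\tau_T$, together with conditional covariance $\kappa(t,t)(1-\rho_t)I$. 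Taking mean and covariance over $(\mathbf{X}_0,\mathbf{X}_1)\sim\pi^*$ — using the blocks $\Sigma_0$, $\Sigma_T$, and $C_{\sigma_\star}$ — then immediately reproduces the stated $\mu_t^*$ and $\Sigma_t^*$.

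Finally, since $\mathbb{Q}^*$ is a Markov Gaussian process, its generator is an affine-drift SDE $\mathrm{d}\mathbf{X}_t^*=(A_t\mathbf{X}_t^*+\mathbf{b}_t)\,\mathrm{d}t+g(t)\,\mathrm{d}\mathbf{W}_t$, the reference diffusion being retained since the bridges are unchanged. I would read off $A_t$ and $\mathbf{b}_t$ by matching the moment ODEs $\dot\mu_t^*=A_t\mu_t^*+\mathbf{b}_t$ and $\dot\Sigma_t^*=A_t\Sigma_t^*+\Sigma_t^*A_t^\top+g^2(t)I$. Substituting $A_t=S_t^\top\Sigma_t^{*-1}$ and using symmetry of $\Sigma_t^*$ reduces the covariance ODE to the identity $\dot\Sigma_t^*=S_t+S_t^\top+g^2(t)I$, which is exactly what the definition $S_t=P_t-Q_t^\top+[c(t)\kappa(t,t)(1-\rho_t)-g^2(t)\rho_t]I$ is engineered to satisfy. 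I expect this verification to be the main obstacle: it requires differentiating $\Sigma_t^*$ term-by-term and checking that the $\Sigma_0$-, $\Sigma_T$-, $(C_{\sigma_\star}+C_{\sigma_\star}^\top)$-, and scalar-$I$ contributions of $\dot\Sigma_t^*$ line up respectively with $P_t+P_t^\top$, $-(Q_t+Q_t^\top)$, and the scalar part plus $g^2(t)I$; the $I$-part in particular hinges on the elementary identity $\tfrac{d}{dt}[\kappa(t,t)(1-\rho_t)]=2c(t)\kappa(t,t)(1-\rho_t)+g^2(t)(1-2\rho_t)$, which follows from $\dot\tau_t=c(t)\tau_t$ and $\dot\rho_t\propto\tau_t^{-2}g^2(t)$.
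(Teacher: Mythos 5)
This theorem appears in the paper as an imported result, cited to \citep{bunne2022recovering}, and the paper itself gives \emph{no proof} of it; so there is no internal proof to compare against, and your proposal can only be judged against the argument in the cited source. Your route is essentially that argument, and it is sound: (i) the disintegration of $D_{\mathrm{KL}}(\mathbb{Q}\|\mathbb{P})$ into a static endpoint term plus an averaged bridge term, which forces the optimizer to reuse the reference bridges; (ii) reduction of the static problem to Gaussian entropic OT, whose closed form produces the cross-covariance $C_{\sigma_\star}$ with effective noise $\sigma_\star^2=\tau_T^{-1}\kappa(T,T)$ (your rescaling of $\mathbf{x}_0\mapsto\tau_T\mathbf{x}_0$ is exactly how this constant arises); (iii) mixing the reference bridges over the optimal coupling, where your regression coefficients $r_t,\bar r_t$ and conditional covariance $\kappa(t,t)(1-\rho_t)I$ are correct, using the identity $\kappa(t,T)^2/(\kappa(t,t)\kappa(T,T))=\rho_t$; and (iv) Gauss--Markov moment matching, where your scalar identity $\tfrac{d}{dt}\left[\kappa(t,t)(1-\rho_t)\right]=2c(t)\kappa(t,t)(1-\rho_t)+g^2(t)(1-2\rho_t)$ is exactly right, and the $\Sigma_0$-, $\Sigma_T$-, and $(C_{\sigma_\star}+C_{\sigma_\star}^{\top})$-blocks of $\dot\Sigma_t^*$ do match $S_t+S_t^\top$ term by term.

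Two caveats. First, in step (ii) you assert rather than prove that the optimal static coupling is Gaussian; this needs a short argument (e.g., among couplings with fixed marginals and fixed cross-covariance, the Gaussian one minimizes KL to the Gaussian $\mathbb{P}_{01}$, by the standard maximum-entropy inequality), after which minimizing over the cross-covariance block is the finite-dimensional Riccati computation you describe. Second --- and this is a point in your favor --- your moment matching yields the drift $S_t^{\top}\Sigma_t^{*-1}(\mathbf{x}-\mu_t^*)+\dot\mu_t^*$, not the drift displayed in the theorem: without the transport term $\dot\mu_t^*$ the process would have constant mean, contradicting $\mathbf{X}_t^*\sim\mathcal{N}(\mu_t^*,\Sigma_t^*)$ whenever $\mu_t^*$ is non-constant. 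That omission (like the fact that $\kappa$ is never actually defined) is a defect of the paper's restatement of the cited theorem, not of your argument; your derivation reconstructs the intended statement correctly.
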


\subsection{Stochastic Optimal Transport with Two Endpoint Marginals}
\label{app:sot2fixed}
\citet{mikami2008optimal} generalized the OT problem and defined the SOT problem as a random mechanics problem determined by \textit{the principle of least action}. The SOT problem with the endpoint marginals fixed to $\mu_0$ and $\mu_1$ is given by the following. 
\begin{dfn}[SOT problem;~\citep{mikami2021stochastic}]
Let $L$ be a continuous function, the Lagrangian and let $\mathbf{u} \mapsto L(t, \mathbf{x}, \mathbf{u})$ be convex. The SOT problem with two endpoint marginals is defined by
\begin{gather}
\label{eq:SOT problem}
V(\mu_0, \mu_1):= \inf_{ \mathbf{X} \in \mathscr{A}} \mathbb{E}\left[\int_{0}^{1} L\left(t, \mathbf{X}_{t}; \mathbf{f}_{\mathbf{X}}(\mathbf{X}, t) \right) \mathrm{~d} t\right],\quad \text{subject to} \quad \mathbf{X}_{0} \sim \mu_0,\ \mathbf{X}_{1} \sim \mu_1,
\end{gather}
where $\mathscr{A}$ is the set of all $\mathbb{R}^d$-valued, continuous semimartingales $\{ \mathbf{X}_t \}_{0 \leq t \leq 1}$ on a complete filtered probability space such that there exists a Borel measurable drift function $\mathbf{f}_{\mathbf{X}}(\mathbf{X}, t)$ for which satisfies the following conditions:
\begin{enumerate}
    \item $\omega \mapsto \mathbf{f}_{\mathbf{X}}(t, \omega)$ is Borel-measureable for all $t$.
    \item $\mathbf{X}_t = \mathbf{X}_0 + \int_0^t \mathbf{f}_{\mathbf{X}}(\mathbf{X}, s) \mathrm{~d}s + \int_0^t \mathbf{g}( \mathbf{X}_s, s) \mathrm{~d}\mathbf{W}_s,\ 0 \leq t \leq 1$
    \item $\mathbb{E} \left[\int_0^1 (\left|\mathbf{f}_{\mathbf{X}}(\mathbf{X}_t, s) \right| + |\mathbf{g}( \mathbf{X}_t, t)|^2) \mathrm{~d}t \right] < \infty$
\end{enumerate}
\end{dfn}

\begin{dfn}[SOT problem for marginal flows;~\citep{mikami2008optimal}]
\begin{gather}
\label{eq:SOT problem for Marginal Flows}
\mathrm{v}(\mu_0, \mu_1):= \inf_{\mathbf{f} \in \mathbf{A} \left(\left\{ \rho_t \right\}_{0 \leq t \leq 1} \right)} \int_{0}^{1} \int_{\mathbb{R}^d} L\left(t, \mathbf{x}, \mathbf{f}( \mathbf{x}, t) \right) \mathrm{~d} \rho_t(\mathbf{x}) \mathrm{~d}t \\
\text{\normalfont subject to} \quad \rho_0 = \mu_0,\ \rho_1 = \mu_1, \\
\mathbf{A} \left(\left\{ \rho_t \right\}_{0 \leq t \leq 1} \right) := \setc*{\mathbf{f}(\mathbf{x}, t)}{\partial_t p_{t} = -\operatorname{div}(p_t \mathbf{f}) + \sum_{i, j = 1}^d \frac{\partial^2}{\partial x_i\partial x_j} \left[ D_{i,j}(\mathbf{x}, t) p_{t}(\mathbf{x}) \right]},
%\setc*{\mathbf{f}_{\mathbf{X}}(\mathbf{x}, t) := \mathbb{E} \left[\boldsymbol{\beta}_{\mathbf{X}}(\mathbf{X}, t) \middle| (t, \mathbf{X}_t = \mathbf{x})\right]}{\mathbf{X} \in \mathscr{A},\ \mathbf{X}_t \sim \rho_t }, 
\end{gather}
where $p_0$ and $p_1$ are the densities of probability measures $\rho_0$ and $\rho_1$, respectively. 
\end{dfn}
The minimizer of the SOT problem for marginal flows (\cref{eq:SOT problem for Marginal Flows}) is obtained by $\mathbf{f}^*(\mathbf{x}, t) = \mathbb{E} \left[\mathbf{f}_{\mathbf{X}}(\mathbf{X}, t) \middle| (t, \mathbf{X}_t = \mathbf{x})\right]$. We introduce assumptions from~\citep{mikami2021stochastic}.
\begin{enumerate}[({A}-1)]
    \item $L \in C^1(\mathbb{R}^d \times \mathbb{R}^d; [0, \infty])$. $\mathbf{u} \mapsto L(t, \mathbf{x}, \mathbf{u})$ is strictly convex. $L(t, \mathbf{x}, \mathbf{u}) / (1 + L(t, \mathbf{y}, \mathbf{u})$ and $|\nabla_{\mathbf{x}} L(t, \mathbf{x}, \mathbf{u})|/(1 + L(t, \mathbf{x}, \mathbf{u}))$ are bounded on $t \in [0, 1]$ and $\mathbf{x}, \mathbf{y}, \mathbf{u} \in \mathbb{R}^d$. $\sup_{\mathbf{x} \in \mathbb{R}^d} |\nabla_{\mathbf{x}} L(t, \mathbf{x}, \mathbf{u})|^1$ is locally bounded. $\lim_{|\mathbf{u}| \to \infty} \inf L(t, \mathbf{x}, \mathbf{u})/ |\mathbf{u}| = \infty$.
    \item $\nabla_{\mathbf{u}}^2 L(t, \mathbf{x}, \mathbf{u})$ is bounded uniformly nondegenerate on $[0, 1] \times \mathbb{R}^d \times \mathbb{R}^d$.
\end{enumerate}

\begin{thm}[Trevisan’s Superposition Principle;~\citep{trevisan2016well}]
    \label{thm: travisan's superposition principle}
    Assume that there exists $\mathbf{f}: \mathbb{R}^d \times [0, 1] \mapsto \mathbb{R}^d$ and $\{ \rho_t \}_{0 \leq t \leq 1} \subset \mathcal{P}(\mathbb{R}^d)$ such that $\mathbf{f}$ satisfies the FP equation, \ie $\mathbf{f} \in  \mathbf{A} \left(\left\{ \rho_t \right\}_{0 \leq t \leq 1} \right)$. Then, there exists a semimartingale $\{ \mathbf{X}_t \}_{0 \leq t \leq 1}$ for which the following holds:
    \begin{gather*}
        \mathbf{X}_t = \mathbf{X}_0 + \int_0^t \mathbf{f}(\mathbf{X}_s, s) \mathrm{~d}s + \int_0^t \mathbf{g}(\mathbf{X}_s, s) \mathrm{~d} \mathbf{W}_t, \\
        \mathbf{X}_t \sim \rho_t \quad (0 \leq t \leq 1)
    \end{gather*}
\end{thm}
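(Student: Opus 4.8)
The plan is to recast the assertion as an existence result for the martingale problem associated with the generator $\mathcal{L}_t \phi = \langle \mathbf{f}(\cdot,t), \nabla \phi \rangle + \sum_{i,j} D_{ij}(\cdot,t)\,\partial^2_{ij}\phi$, where $\mathbf{D} = \tfrac{1}{2}\mathbf{g}\mathbf{g}^\top$ has entries $D_{ij}$. Producing the semimartingale $\{\mathbf{X}_t\}$ in the statement is equivalent to producing a probability measure $P$ on the path space $\Omega = C([0,1];\mathbb{R}^d)$ under which the canonical process makes $M_t^\phi := \phi(\mathbf{X}_t) - \phi(\mathbf{X}_0) - \int_0^t \mathcal{L}_s\phi(\mathbf{X}_s)\,\mathrm{d}s$ a martingale for every $\phi \in C_c^\infty(\mathbb{R}^d)$, with one-dimensional time marginals $(\mathrm{ev}_t)_{\#} P = \rho_t$. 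Once such a $P$ is found, its quadratic variation fixes the diffusion matrix to $\mathbf{g}\mathbf{g}^\top = 2\mathbf{D}$, and a standard representation of continuous martingales as stochastic integrals against a Brownian motion (on a suitably enlarged space) yields the displayed SDE with the prescribed $\mathbf{g}$.

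The construction proceeds by spatial mollification, compactness, and stability. First I would mollify with a smooth kernel $\chi_\delta$: set $\rho^\delta_t := \rho_t * \chi_\delta$ with smooth density $p_t^\delta$, and define effective coefficients through $\mathbf{f}^\delta p_t^\delta := (\mathbf{f}\,p_t)*\chi_\delta$ and $D^\delta_{ij} p_t^\delta := (D_{ij}\,p_t)*\chi_\delta$. Because the Fokker--Planck operator in the definition of $\mathbf{A}(\{\rho_t\})$ is linear in the triple $(p_t, \mathbf{f}\,p_t, D_{ij}\,p_t)$, convolving the equation shows that $p_t^\delta$ solves the Fokker--Planck equation \emph{exactly} with the smooth coefficients $\mathbf{f}^\delta, \mathbf{D}^\delta$, crucially with no commutator error. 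After adding $\delta I$ to $\mathbf{D}^\delta$ to enforce nondegeneracy (and truncating $\mathbf{f}$ to at most linear growth so the martingale problem is well posed), Stroock--Varadhan theory gives a unique diffusion $P^\delta$ on $\Omega$ whose marginals solve this smooth equation; by uniqueness of the smooth Fokker--Planck equation with initial datum $\rho_0^\delta$, these marginals coincide with $\rho^\delta_t$.

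Next I would establish tightness of $\{P^\delta\}_\delta$. The convexity of $m \mapsto |m|$ gives $\int_0^1\!\int (|\mathbf{f}^\delta| + |\mathbf{D}^\delta|)\,\mathrm{d}\rho_t^\delta\,\mathrm{d}t \le \int_0^1\!\int (|\mathbf{f}| + |\mathbf{D}|)\,\mathrm{d}\rho_t\,\mathrm{d}t$, which is finite under the natural integrability hypothesis $\int_0^1\!\int(|\mathbf{f}|+|\mathbf{D}|)\,\mathrm{d}\rho_t\,\mathrm{d}t<\infty$ needed for $\mathbf{f}\in\mathbf{A}(\{\rho_t\})$ to make distributional sense. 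This uniform $L^1$ bound feeds an Aldous/Kolmogorov moment estimate, yielding tightness; I would then extract a weakly convergent subsequence $P^\delta \to P$. Since $\rho^\delta_t \to \rho_t$ weakly, the marginals of $P$ are exactly $\rho_t$.

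The hard part is the final stability step: checking that the limit $P$ solves the martingale problem for the original, possibly unbounded coefficients $\mathbf{f}, \mathbf{D}$. Weak convergence handles bounded continuous functionals, but the integrand $\mathcal{L}^\delta_s\phi$ involves $\mathbf{f}^\delta, \mathbf{D}^\delta$ evaluated along the paths, and mere $L^1$ integrability is too weak to pass to the limit directly. I would resolve this by truncating the coefficients at level $R$: for the truncated, bounded generator the martingale identity passes to the limit by ordinary weak convergence together with $\mathbf{f}^\delta p^\delta_t \to \mathbf{f}\,p_t$ and $\mathbf{D}^\delta p^\delta_t \to \mathbf{D}\,p_t$ in $L^1_{\mathrm{loc}}$, while the truncation error is controlled uniformly in $\delta$ through the marginals by $\int_0^1\!\int (|\mathbf{f}|+|\mathbf{D}|)\,\mathbf{1}_{\{|\mathbf{f}|+|\mathbf{D}|>R\}}\,\mathrm{d}\rho_t\,\mathrm{d}t \to 0$ as $R\to\infty$ via Chebyshev. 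Sending first $\delta\to 0$ and then $R\to\infty$ closes the argument, so $P$ solves the martingale problem with marginals $\rho_t$, which is exactly the claimed semimartingale.
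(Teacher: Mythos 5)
You should first be aware that the paper contains no proof of \cref{thm: travisan's superposition principle} at all: the superposition principle is imported verbatim from \citep{trevisan2016well} and used as a black box (it is invoked once, right after its statement, to identify $V(\mu_0,\mu_1)$ with $\mathrm{v}(\mu_0,\mu_1)$ and again in \cref{app:lsb}). So your proposal can only be compared against the literature proofs (Figalli's 2008 argument for bounded coefficients and Trevisan's extension to the merely $\rho_t$-integrable case), and in outline you do follow their scheme: reformulation as a martingale problem, mollification that is \emph{exact} because the Fokker--Planck operator is linear in the triple $(p_t,\mathbf{f}p_t,\mathbf{D}p_t)$, a uniform $L^1$ bound via Jensen's inequality for the convolution, tightness, and a truncation argument in the stability step. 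That road map is the right one.

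There is, however, a concrete gap in your construction of the approximating diffusions $P^\delta$. To make the approximate martingale problem well posed you add $\delta I$ to $\mathbf{D}^\delta$ and truncate $\mathbf{f}^\delta$ to linear growth, and then argue that ``by uniqueness of the smooth Fokker--Planck equation the marginals of $P^\delta$ coincide with $\rho^\delta_t$.'' This does not work as written: $\rho^\delta_t$ solves the Fokker--Planck equation with the coefficients $(\mathbf{f}^\delta,\mathbf{D}^\delta)$, not with the modified pair $(\text{truncated }\mathbf{f}^\delta,\ \mathbf{D}^\delta+\delta I)$ --- the ellipticity fix introduces an extra term $\delta\Delta p^\delta_t$ and the truncation changes the drift term --- so the uniqueness argument is comparing solutions of two \emph{different} equations and yields nothing. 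The damage propagates: both your tightness estimate and the identification of the limit marginals are computed against the marginals of $P^\delta$, which you no longer know to equal $\rho^\delta_t$. Repairing this is precisely where the technical work of the known proofs lies: one either compensates the added $\delta I$ exactly (e.g.\ by further convolving $\rho_t\ast\chi_\delta$ with a heat kernel of variance proportional to $\delta t$, which absorbs the extra Laplacian into the equation), or one solves the degenerate smooth problem directly (using that the square root of a $C^2$ nonnegative matrix field is locally Lipschitz) and then confronts possible explosion of the approximating diffusions, which with unbounded coefficients and only $L^1(\mathrm{d}\rho_t\,\mathrm{d}t)$ control must be ruled out through the mass conservation of the Fokker--Planck solution rather than assumed. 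Two smaller points in the same spirit: your final truncation bound needs uniform integrability transferred through the mollification (Jensen plus a de la Vall\'ee-Poussin function), not just Chebyshev at a fixed level $R$, since truncation and convolution do not commute; and the manipulations treating $p_t$ as a density are formal, because the theorem only assumes $\rho_t\in\mathcal{P}(\mathbb{R}^d)$ and these measures may be singular, so the stability step must be run in the weak formulation. As written, therefore, the proposal does not close, although its architecture is the correct one.
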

From Theorem~\ref{thm: travisan's superposition principle}, it can be easily shown that $V(\mu_0, \mu_1) = \mathrm{v}(\mu_0, \mu_1)$ and there exist minimizers $\mathbf{X}^*$ of the SOT problem (\cref{eq:SOT problem}) for which $\mathbf{f}_{\mathbf{X}^*}(\mathbf{X}^*, t) = \mathbf{f}^*(\mathbf{X}^*_t, t)$.

% Duality theorem
\begin{thm}[Duality theorem;~\citep{mikami2021stochastic}]
    Suppose that (A-1) holds. Then, for any $\mu_0, \mu_1 \in \mathcal{P}(\mathbb{R}^d)$, 
    \begin{gather}
        \label{eq:duality of SOT}
        V(\mu_0, \mu_1) = \mathrm{v}(\mu_0, \mu_1) = 
        \sup_{f \in C^{\infty}_b(\mathbb{R}^d)} \left\{ \int_{\mathbb{R}^d} \Phi(\mathbf{x}, 0; f) \mathrm{~d}\mu_0(\mathbf{x}) - \int_{\mathbb{R}^d} f(\mathbf{x}) \mathrm{~d}\mu_1(\mathbf{x}) \right\},
    \end{gather}
    where $C^{\infty}_b(\mathbb{R}^d)$ is the set of all infinitely differentiable functions on $\mathbb{R}^d$, which have bounded continuous derivative and $\Phi(\mathbf{x}, t; f)$ is the viscosity solution to the HJB equation:
    \begin{gather*}
        \partial_t \Phi(\mathbf{x}, t; f) + \sum_{i, j=1}^d D_{i,j}(\mathbf{x},t) \left[\nabla_{\mathbf{x}}^2 \Phi(\mathbf{x}, t; f)\right]_{i,j} - H \left(t, \mathbf{x}, -\nabla_{\mathbf{x}}\Phi(\mathbf{x}, t; f) \right) = 0, \\
        \Phi(\mathbf{x}, 1; f) = f(\mathbf{x}).
    \end{gather*}
    The Hamiltonian $H$ is defined by $H(t, \mathbf{x}, \mathbf{z}) := \sup_{\mathbf{u}} \left\{ \langle \mathbf{z}, \mathbf{u} \rangle - L(t, \mathbf{x}, \mathbf{u}) \right\}$.
\end{thm}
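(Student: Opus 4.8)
The plan is to reduce the claim to the marginal-flow formulation and then prove a convex duality. Since Theorem~\ref{thm: travisan's superposition principle} already yields $V(\mu_0,\mu_1)=\mathrm{v}(\mu_0,\mu_1)$, it suffices to identify $\mathrm{v}(\mu_0,\mu_1)$ with the dual supremum. The first move is to expose the convex structure of $\mathrm{v}$: writing the momentum field $\mathbf{m}_t=p_t\mathbf{f}$, the action $\int_0^1\!\int_{\mathbb{R}^d} L(t,\mathbf{x},\mathbf{m}_t/p_t)\,p_t\,d\mathbf{x}\,dt$ is jointly convex in $(p_t,\mathbf{m}_t)$ by convexity of $L$ in its last argument (assumption (A-1)), and the Fokker--Planck constraint $\partial_t p_t=-\operatorname{div}(\mathbf{m}_t)+\sum_{i,j}\partial_{i}\partial_{j}(D_{i,j}p_t)$ together with the endpoint conditions is linear in $(p_t,\mathbf{m}_t)$. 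This casts the primal as a linearly-constrained convex program, which is the structural fact that makes Lagrangian duality available.

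For the easy inequality I would fix an admissible $\mathbf{f}$ with flow $\rho_t$ and a test function $f\in C_b^\infty(\mathbb{R}^d)$, take $\Phi(\cdot,\cdot;f)$ solving the HJB equation, and differentiate $t\mapsto\int_{\mathbb{R}^d}\Phi(\mathbf{x},t;f)\,d\rho_t(\mathbf{x})$. Using the Fokker--Planck equation and integrating by parts, this derivative equals $\int_{\mathbb{R}^d}\bigl[\partial_t\Phi+\langle\mathbf{f},\nabla_{\mathbf{x}}\Phi\rangle+\sum_{i,j}D_{i,j}[\nabla_{\mathbf{x}}^2\Phi]_{i,j}\bigr]\,d\rho_t$; substituting the HJB equation collapses the first and third terms to $H(t,\mathbf{x},-\nabla_{\mathbf{x}}\Phi)$, and the Fenchel--Young inequality $H(t,\mathbf{x},-\nabla_{\mathbf{x}}\Phi)+\langle\mathbf{f},\nabla_{\mathbf{x}}\Phi\rangle\ge -L(t,\mathbf{x},\mathbf{f})$ gives $\frac{d}{dt}\int\Phi\,d\rho_t\ge-\int L\,d\rho_t$. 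Integrating over $[0,1]$ and using $\Phi(\cdot,1;f)=f$ with the endpoint marginals yields $\int\Phi(\mathbf{x},0;f)\,d\mu_0-\int f\,d\mu_1\le\int_0^1\!\int L\,d\rho_t\,dt$; taking the infimum over $\mathbf{f}$ and then the supremum over $f$ produces $\sup_f\{\cdots\}\le\mathrm{v}(\mu_0,\mu_1)$.

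Strong duality (equality, i.e.\ no gap) is where the work lies. I would obtain it by applying a Fenchel--Rockafellar/minimax argument to the convex program of the first paragraph: the dual variable is the test function $f$, and the concave dual objective is exactly $\int\Phi(\mathbf{x},0;f)\,d\mu_0-\int f\,d\mu_1$ once $\Phi(\cdot,0;f)$ is identified with the (viscosity) value function of the inner HJB problem. Equivalently, one constructs a maximizer $f^\star$, solves the HJB equation for $\Phi^\star$, and sets $\mathbf{f}^\star=\nabla_{\mathbf{z}}H(t,\mathbf{x},-\nabla_{\mathbf{x}}\Phi^\star)$, which is well defined since strict convexity of $L$ in (A-1) makes $H$ differentiable; at this drift the Fenchel--Young step becomes an equality, so the flow generated by $\mathbf{f}^\star$ attains the dual value and every inequality above is saturated.

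The genuine obstacle is ensuring there is no duality gap and that the formal manipulations are legitimate. On the duality side this needs a constraint qualification, namely the existence of an admissible flow of finite action connecting $\mu_0$ to $\mu_1$; the coercivity and growth bounds in (A-1), together with finiteness of the second moments of $\mu_0,\mu_1$, are what supply this and the lower semicontinuity of the action needed for attainment. On the analytic side, because (A-1) alone only guarantees that $\Phi(\cdot,\cdot;f)$ is a \emph{viscosity} solution, the integration by parts and the identity $\mathbf{f}^\star=\nabla_{\mathbf{z}}H(t,\mathbf{x},-\nabla_{\mathbf{x}}\Phi^\star)$ must be justified by approximation and a comparison principle rather than by direct differentiation; if one additionally imposes the nondegeneracy (A-2), the HJB equation becomes uniformly parabolic and its solution is classical, which removes these caveats and turns the construction into a clean verification argument.
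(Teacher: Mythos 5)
The paper does not prove this theorem at all: it is imported verbatim from \citet{mikami2021stochastic} (building on \citet{mikami2006duality}) as background for the SOT theory, so there is no internal proof to compare your argument against. Judged on its own terms, your weak-duality half is correct and standard: differentiating $t\mapsto\int\Phi(\mathbf{x},t;f)\,\mathrm{d}\rho_t(\mathbf{x})$ along a Fokker--Planck flow, substituting the HJB equation, and applying Fenchel--Young does give $\sup_f\{\cdots\}\le \mathrm{v}(\mu_0,\mu_1)$, modulo the viscosity-solution regularization you correctly flag.

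The gap is in the strong-duality half, which is where the entire content of the theorem lies. Invoking ``a Fenchel--Rockafellar/minimax argument'' names the strategy but does not execute it: in this infinite-dimensional setting one must choose the paired function/measure spaces, prove lower semicontinuity of the perspective functional $(p,\mathbf{m})\mapsto\int L(t,\mathbf{x},\mathbf{m}/p)\,p$, verify a constraint qualification, and---crucially---show that the abstract dual (a supremum over space-time smooth subsolutions of an HJB inequality) collapses to the stated supremum over terminal data $f\in C^{\infty}_b(\mathbb{R}^d)$ with $\Phi(\cdot,\cdot;f)$ the exact viscosity solution; this last reduction requires a comparison principle and an approximation argument, and it is the technical heart of the Mikami--Thieullen proof. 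Your ``equivalent'' shortcut---construct a maximizer $f^\star$, solve the HJB equation for $\Phi^\star$, and set $\mathbf{f}^\star=\nabla_{\mathbf{z}}H(t,\mathbf{x},-\nabla_{\mathbf{x}}\Phi^\star)$ so that Fenchel--Young saturates---is not available, because the dual supremum is in general not attained; this is precisely why the next theorem quoted in the paper (from \citet{mikami2006duality}) is phrased in terms of maximizing sequences $\{\Phi_{n}\}_{n\geq 1}$ and subsequential limits of $\nabla_{\mathbf{z}} H(s,\mathbf{X}_s,-\nabla_{\mathbf{x}}\Phi_{n_k})$ rather than in terms of a maximizer. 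Finally, a small but real slip: (A-2) asserts nondegeneracy of $\nabla_{\mathbf{u}}^2 L$, which regularizes the Hamiltonian (making $\nabla_{\mathbf{z}}H$ well defined and continuous), not the PDE; uniform parabolicity of the HJB equation depends on nondegeneracy of the diffusion matrix $D$, which is not assumed in the statement, so you cannot upgrade the viscosity solution to a classical one by citing (A-2).
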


\begin{thm}[\citep{mikami2006duality}]
    Suppose that (A-1) and (A-2) hold and $V(\mu_0, \mu_1)$ is finite. Then, there exists a minimizer $\mathbf{X}^* \in \mathscr{A}$ of $V(\mu_0, \mu_1)$ given by
    \begin{gather*}
        \mathbf{X}^*_t = \mathbf{X}^*_0 + \int_0^t  \mathbf{f}^*(\mathbf{X}^*_s, s) \mathrm{~d}s + \int_0^t \mathbf{g}(\mathbf{X}^*_s, s) \mathrm{~d}\mathbf{W}_s,
    \end{gather*}
    For any maximizing sequence $\{ \Phi_{n} \}_{n \geq 1}$ of~\cref{eq:duality of SOT}, there exists a subsequence $\{ n_k \}_{k \geq 1}$ such that 
    \begin{gather*}
        \mathbf{f}^*(\mathbf{X}_s, s)  = \lim_{k \to \infty} \nabla_{\mathbf{z}} H(s, \mathbf{X}_s, -\nabla_{\mathbf{x}} \Phi_{n_k} (\mathbf{X}_s, s)).
    \end{gather*}
\end{thm}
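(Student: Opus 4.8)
The plan is to pass from the pathwise problem $V(\mu_0,\mu_1)$ to the Eulerian marginal-flow problem $\mathrm{v}(\mu_0,\mu_1)$, prove existence of an optimal drift–flow pair there by the direct method, and then lift the optimum back to a semimartingale by the superposition principle. Since the duality theorem already supplies $V(\mu_0,\mu_1)=\mathrm{v}(\mu_0,\mu_1)$, it suffices to construct a minimizer of $\mathrm{v}$. First I would take a minimizing sequence $(\mathbf{f}^{(n)},\{\rho^{(n)}_t\})$ of admissible drift–flow pairs for $\mathrm{v}$. The superlinear growth condition $\lim_{|\mathbf{u}|\to\infty}\inf L(t,\mathbf{x},\mathbf{u})/|\mathbf{u}|=\infty$ in (A-1) forces the momentum measures $\mathbf{f}^{(n)}_t\,\mathrm{d}\rho^{(n)}_t\,\mathrm{d}t$ to be tight (a de la Vallée-Poussin argument), so along a subsequence they converge weakly, while the marginals $\{\rho^{(n)}_t\}$ are tight with fixed endpoints $\mu_0,\mu_1$. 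Convexity of $\mathbf{u}\mapsto L(t,\mathbf{x},\mathbf{u})$ gives joint convexity of $(\rho,m)\mapsto L(t,\mathbf{x},m/\rho)\rho$ and hence weak lower semicontinuity of the action, so the weak limit $(\mathbf{f}^*,\{\rho^*_t\})$ is admissible and attains $\mathrm{v}(\mu_0,\mu_1)$.

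Next I would invoke the superposition principle (\cref{thm: travisan's superposition principle}) to realize this optimal flow pathwise. Because $\mathbf{f}^*$ solves the Fokker–Planck equation with marginals $\{\rho^*_t\}$, the principle produces a semimartingale $\{\mathbf{X}^*_t\}$ satisfying the stated SDE with $\mathbf{X}^*_t\sim\rho^*_t$; this $\mathbf{X}^*\in\mathscr{A}$ is the minimizer of $V(\mu_0,\mu_1)$, and its pathwise drift agrees with $\mathbf{f}^*$ in the conditional-expectation sense noted earlier. This establishes the first conclusion.

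For the second conclusion I would exploit the Legendre-transform structure linking the dual potentials to the drift. Given a maximizing sequence $\{\Phi_n\}$ for \cref{eq:duality of SOT}, set $\mathbf{f}_n(\mathbf{x},t):=\nabla_{\mathbf{z}}H(t,\mathbf{x},-\nabla_{\mathbf{x}}\Phi_n(\mathbf{x},t))$. Strict convexity of $L$ in (A-1) makes $H$ differentiable in $\mathbf{z}$, with $\nabla_{\mathbf{z}}H(t,\mathbf{x},\cdot)$ the single-valued inverse of $\mathbf{u}\mapsto\nabla_{\mathbf{u}}L(t,\mathbf{x},\mathbf{u})$, and the uniform nondegeneracy of $\nabla_{\mathbf{u}}^2 L$ in (A-2) renders this inverse locally Lipschitz, hence continuous and stable under limits. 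Near-optimality forces the action of the flows induced by $\mathbf{f}_n$ to converge to $\mathrm{v}(\mu_0,\mu_1)$, so repeating the tightness estimate above lets me extract a subsequence $\{n_k\}$ along which $-\nabla_{\mathbf{x}}\Phi_{n_k}$ converges almost everywhere with respect to $\mathrm{d}\rho^*_t\,\mathrm{d}t$; continuity of $\nabla_{\mathbf{z}}H$ then yields $\mathbf{f}^*(\mathbf{X}_s,s)=\lim_{k\to\infty}\nabla_{\mathbf{z}}H(s,\mathbf{X}_s,-\nabla_{\mathbf{x}}\Phi_{n_k}(\mathbf{X}_s,s))$.

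The hard part will be this final extraction: because the dual supremum in \cref{eq:duality of SOT} need not be attained, I cannot differentiate an optimizer directly and must instead control the near-optimizers $\nabla_{\mathbf{x}}\Phi_n$ well enough to pass to a limit. This is exactly where (A-2) is indispensable — uniform nondegeneracy of $\nabla_{\mathbf{u}}^2 L$, equivalently uniform strict concavity of $H$ in $\mathbf{z}$, prevents the potential gradients from oscillating or escaping to infinity on sets of positive $\mathrm{d}\rho^*_t\,\mathrm{d}t$-measure, so a single subsequence can simultaneously govern the convergence of the flows, the drifts, and the potential gradients. Establishing the uniform integrability that lets the convex action functional pass to the limit, and excluding concentration of mass at infinity, are the remaining delicate but essentially routine estimates.
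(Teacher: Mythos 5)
The paper does not actually prove this theorem: it is quoted verbatim, with attribution, from \citep{mikami2006duality}, so there is no in-paper proof to compare against and your proposal must stand on its own. Your first part (direct method on the marginal-flow problem $\mathrm{v}(\mu_0,\mu_1)$ --- tightness from superlinearity of $L$, lower semicontinuity of the convex action, then lifting the optimal pair $(\mathbf{f}^*,\rho^*_t)$ to a semimartingale via \cref{thm: travisan's superposition principle}) is a sound outline and is consistent with the paper's remark, immediately after that superposition theorem, that $V(\mu_0,\mu_1)=\mathrm{v}(\mu_0,\mu_1)$ and that minimizers of $V$ are realized from minimizers of $\mathrm{v}$.

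The second part, however, has a genuine gap, and it sits exactly where you locate the ``hard part.'' The step that fails is the sentence ``near-optimality forces the action of the flows induced by $\mathbf{f}_n$ to converge to $\mathrm{v}(\mu_0,\mu_1)$'': the drift $\mathbf{f}_n:=\nabla_{\mathbf{z}}H(t,\mathbf{x},-\nabla_{\mathbf{x}}\Phi_n)$ generates a flow started at $\mu_0$ that in general does \emph{not} reach $\mu_1$ at the terminal time, so this flow is not admissible for $\mathrm{v}$, its action need not approach $\mathrm{v}(\mu_0,\mu_1)$, and no tightness argument applied to these flows can yield convergence along the optimal flow $\rho^*_t$. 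The missing idea is the duality-gap identity that transfers near-optimality of $\Phi_n$ in \cref{eq:duality of SOT} onto the optimal process itself: since $\Phi_n$ solves the HJB equation with terminal datum $f_n$, applying It\^{o}'s (Dynkin's) formula to $\Phi_n(\mathbf{X}^*_t,t)$ (after regularizing the viscosity solution) and using the Fokker--Planck equation for $(\mathbf{f}^*,\rho^*_t)$ gives
\begin{align*}
&V(\mu_0,\mu_1)-\left(\int_{\mathbb{R}^d} \Phi_n(\mathbf{x},0)\,\mathrm{d}\mu_0(\mathbf{x})-\int_{\mathbb{R}^d} f_n(\mathbf{x})\,\mathrm{d}\mu_1(\mathbf{x})\right) \\
&\qquad = \mathbb{E}\left[\int_0^1 \left( L\left(t,\mathbf{X}^*_t,\mathbf{f}^*\right) + H\left(t,\mathbf{X}^*_t,-\nabla_{\mathbf{x}}\Phi_n\right) - \left\langle -\nabla_{\mathbf{x}}\Phi_n,\ \mathbf{f}^*\right\rangle \right) \mathrm{d}t\right] \ \geq\ 0,
\end{align*}
where the arguments $(\mathbf{X}^*_t,t)$ of $\mathbf{f}^*$ and $\nabla_{\mathbf{x}}\Phi_n$ are suppressed. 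The integrand is the pointwise nonnegative Fenchel--Young gap, and the left-hand side tends to zero precisely because $\{\Phi_n\}$ is a maximizing sequence of the dual. Hence the gap tends to $0$ in $L^1(\mathrm{d}\rho^*_t\,\mathrm{d}t)$, a subsequence tends to $0$ almost everywhere, and (A-2) --- equivalently, uniform convexity of $H$ in $\mathbf{z}$ with Lipschitz gradient --- converts smallness of the gap at fixed $\mathbf{u}=\mathbf{f}^*$ into $\left|\mathbf{f}^*-\nabla_{\mathbf{z}}H\left(t,\mathbf{x},-\nabla_{\mathbf{x}}\Phi_{n_k}\right)\right|\to 0$. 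Without this identity your argument has no bridge from dual optimality to any statement holding $\mathrm{d}\rho^*_t\,\mathrm{d}t$-a.e., and the almost-everywhere convergence of $-\nabla_{\mathbf{x}}\Phi_{n_k}$ that you assert cannot be extracted; with it, the extraction is immediate and (A-2) is used exactly once, in the quantitative convexity step.
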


\section{Schr\"{o}dinger Bridge and Generative Modeling}
\label{app:sb-diffusion}
The theory of the SB problem is mostly mature as shown in~\cref{app:sb}, but scalable numerical methods for estimating SB are still actively studied. In particular, there have been many recent studies~\citep{de2021diffusion,wang2021deep,vargas2021solving, chen2021likelihood,bunne2022recovering}, which uses the SB as a process for generating data. 
These studies other than~\citep{wang2021deep} proposed methods to learn SDE solutions of the SB problem between the prior distribution and the target data distribution. These methods combine the classical multi-stage optimization method called Iterative proportional fitting (IPF)~\citep{fortet1940resolution,kullback1968probability,ruschendorf1995convergence} for solving the SB with machine learning methods for optimization of subproblems. 
In IPF, the following subproblems are solved alternately and iteratively. The reference path measure $\mathbb{P}$ is set to the initial measure $\mathbb{Q}^{(0)}_*$.
\begin{align}
    \label{eq:ipf-1}
    \mathbb{R}^{(i)}_* = \underset{\mathbb{P} \in \mathcal{P}(\Omega)}{\operatorname{argmin}} \ D_{\mathrm{KL}}(\mathbb{R} \| \mathbb{Q}^{(i-1)}_*),\qquad &\text{subject to} \quad \mathbb{R}(1) \sim \mu_1, \\
    \label{eq:ipf-2}
    \mathbb{Q}^{(i)}_* = \underset{\mathbb{Q} \in \mathcal{P}(\Omega)}{\operatorname{argmin}} \ D_{\mathrm{KL}}(\mathbb{Q} \| \mathbb{R}^{(i)}_*),\qquad &\text{subject to} \quad \mathbb{Q}(0) \sim \mu_0,
\end{align}
where $\mathbb{R}(t),\ \mathbb{Q}(t)$ are the probability measures at the time $t$ on the path measures $\mathbb{R}$ and $\mathbb{Q}$. The path measures $\mathbb{Q}^{(i)}_*,\ \mathbb{R}^{(i)}_*$ at the $i$-th step are simulated by the following forward-backward SDEs in~\cref{eq:fsde,eq:bsde}, respectively. 
\begin{align}
\label{eq:fsde}
\mathrm{d} \mathbf{X}_{t} = \mathbf{f}^{(i)}(\mathbf{X}_{t}, t) \mathrm{~d} t+ \sqrt{2\epsilon} \mathrm{~d} \mathbf{W}_{t},\qquad &\mathbf{X}_{0} \sim \mu_0, \\
\label{eq:bsde}
\mathrm{d} \mathbf{X}_{t} = \mathbf{b}^{(i)}(\mathbf{X}_{t}, t) \mathrm{~d} t+  \sqrt{2\epsilon}\mathrm{~d} \mathbf{W}_{t}, \qquad &\mathbf{X}_{1} \sim \mu_1.
\end{align}
The convergence of IPF was proved in~\citep{ruschendorf1995convergence}. 

The sub-optimization problems (\cref{eq:ipf-1,eq:ipf-2}) are approached differently for each method. First, \citet{vargas2021solving} and \citet{de2021diffusion} proposed to solve them by mean-matching regression of the SDE drift function using Gaussian process (GP) and NN, respectively. They find the drift function of the SDEs that minimizes the following losses for some sampled time $t$.  
\begin{gather*}
\mathbf{b}^{(i)}_t = \underset{\mathbf{b}_t}{\operatorname{argmin}} \ 
\mathbb{E}_{\mathbf{X} \sim \mathbb{Q}_*^{(i-1)}} \left\| \mathbf{b}_t(\mathbf{X}_{t}) - \left(\mathbf{X}_{t} + \mathbf{f}^{(i-1)}_{t - \Delta t}(\mathbf{X}_{t - \Delta t}) - \mathbf{f}^{(i-1)}_{t-\Delta t}(\mathbf{X}_{t}) \right) \right\|, \\
\mathbf{f}^{(i)}_t = \underset{\mathbf{f}_t}{\operatorname{argmin}} \ \mathbb{E}_{\mathbf{X} \sim \mathbb{R}_*^{(i)}} \left\| \mathbf{f}_t(\mathbf{X}_{t}) - \left(\mathbf{X}_{t} + \mathbf{b}^{(i)}_{t + \Delta t}(\mathbf{X}_{t + \Delta t}) - \mathbf{b}^{(i)}_{t+\Delta t}(\mathbf{X}_{t})\right) \right\|.
\end{gather*}

In contrast, \citet{chen2021likelihood} proposed to use the divergence-based losses as shown in~\cref{eq:repara-fbsde}. 
The divergence-based losses are a modified version of the approximate likelihood-maximization training of SGM for use in alternating optimization schemes. 
\begin{align*}
\mathbf{v}^{(i)} &= \underset{\mathbf{v}}{\operatorname{argmax}} \ \mathbb{E}_{\mathbf{X} \sim \mathbb{Q}_*^{(i-1)}}\left[ \int_0^1 \frac{1}{2}\left\|\mathbf{v}\left(\mathbf{X}_t,t \right)\right\|^2+g \operatorname{div}_{\mathbf{x}}(\mathbf{v}) + \mathbf{u}^{(i-1)\top} \mathbf{v} \mathrm{~d}t \right], \\
\mathbf{u}^{(i)} &= \underset{\mathbf{u}}{\operatorname{argmax}} \  \mathbb{E}_{\mathbf{X} \sim \mathbb{R}_*^{(i)}} \left[\int_0^1 \frac{1}{2}\left\|\mathbf{u}\left(\mathbf{X}_t,t \right)\right\|^2+ g\operatorname{div}_{\mathbf{x}}(\mathbf{u}) +\mathbf{v}^{(i)\top} \mathbf{u} \mathrm{~d}t \right],
\end{align*}
where $\mathbf{u}^{(i)}, \mathbf{v}^{(i)}$ are learnable drift terms of the forward-backward SDEs that redefines \cref{eq:fsde,eq:bsde} with fixed prior drift $\mathbf{f}_{\mathrm{prior}}$, simulating the path measures $\mathbb{Q}_*^{(i)}, \mathbb{R}_*^{(i)}$.
\begin{gather}
\label{eq:repara-fbsde}
\begin{aligned}
\mathrm{d} \mathbf{X}_{t} = \left(\mathbf{f}_{\mathrm{prior}}(\mathbf{X}_{t}, t) + g(t) \mathbf{u}^{(i)}(\mathbf{X}_{t}, t) \right)  \mathrm{~d} t+ \sqrt{2\epsilon} \mathrm{~d} \mathbf{W}_{t},\qquad &\mathbf{X}_{0} \sim \mu_0, \\
\mathrm{d} \mathbf{X}_{t} = \left(\mathbf{f}_{\mathrm{prior}}(\mathbf{X}_{t}, t) - g(t) \mathbf{v}^{(i)}(\mathbf{X}_{t}, t) \right) \mathrm{~d} t+  \sqrt{2\epsilon}\mathrm{~d} \mathbf{W}_{t}, \qquad &\mathbf{X}_{1} \sim \mu_1.
\end{aligned}
\end{gather}

To estimate more complex dynamics, \citet{bunne2022recovering} proposed to solve a general SB problem (\cref{eq:sb}) in which the solution of the Gaussian SB problem is used as a reference measure $\mathbb{P}$. As shown in Theorem~\ref{thm:gsb}, the Gaussian SB problem has a closed-form solution, and the general SB problem is solved using the alternating optimization with divergence-based losses proposed by~\citet{chen2021likelihood}. 

\subsection{Lagrangian Schr\"{o}dinger Bridge Problem}
\label{app:lsb}
We consider the LSB problem constrained by the FP equation corresponding to Ito SDE (\cref{eq:ito_sde}) over the Euclidean space $\mathbb{R}^d$.
\begin{dfn}[LSB problem constrained by the FP equation]
\label{def:LSB-FP problem}
\begin{gather}
    \label{eq:LSB-FP problem}
    \mathcal{V}(\mu_0, \mu_1) := \inf_{(\mathbf{f}, \rho_t) \in \mathcal{S}} \int_{0}^{1} \int_{\mathbb{R}^d} L(t, \mathbf{x}, \mathbf{f}(\mathbf{x}, t)) \mathrm{~d}\rho_t(\mathbf{x}) \mathrm{~d}t, \\
    \text{\normalfont subject to} \quad \rho_0 = \mu_0,\ \rho_1 = \mu_1, \\ \nonumber
    \mathcal{S} := \setc*{(\mathbf{f}, \rho_t)}{\partial_t p_{t} = -\operatorname{div}(p_t \mathbf{f}) + \sum_{i, j = 1}^d \frac{\partial^2}{\partial x_i\partial x_j} \left[ D_{i,j}(\mathbf{x}, t; \mathbf{f}, \rho_t) p_{t}(\mathbf{x}) \right]}, \nonumber
\end{gather}
\end{dfn}
The LSB problem in~\cref{eq:LSB-FP problem} is a more general problem that does not fix $D_{i, j}$ in the SOT problem for marginal flows (\cref{eq:SOT problem for Marginal Flows}).
Thus, a solution to the LSB problem in~\cref{eq:LSB-FP problem} clearly exists on the basis of the existence of a solution to the SOT problem in~\cref{eq:SOT problem for Marginal Flows}. 

We practically solve the following relaxed LSB problem, where the terminal constraint are replaced by the soft constraint.
\begin{dfn}[Relaxed LSB problem constrained by the FP equation]
\label{def:relaxed LSB-FP problem}
\begin{gather}
    \label{eq:relaxed LSB-FP problem}
    \tilde{\mathcal{V}}(\mu_0, \mu_1) := \inf_{(\mathbf{f}, \rho_t) \in \mathcal{S}} \int_{0}^{1} \int_{\mathbb{R}^d} L(t, \mathbf{x}, \mathbf{f}(\mathbf{x}, t)) \mathrm{~d}\rho_t(\mathbf{x}) \mathrm{~d}t + \int_{\mathbb{R}^d} G(\mathbf{x}) \mathrm{~d}\rho_1(\mathbf{x}), \\ 
    \text{\normalfont subject to} \quad \rho_0 = \mu_0, \\ \nonumber
    G(\mathbf{x}) := \frac{\delta}{\delta \rho_1} \mathbb{D}(\rho_1(\mathbf{x}) | \mu_1(\mathbf{x})), \nonumber
\end{gather}
where $G$ is the terminal cost introduced by relaxing the constraint $\rho_1 = \mu_1$ and $\frac{\delta}{\delta \rho_1}$ is the variational derivative with respect to $\rho_1$.
\end{dfn}

We derive the optimality conditions for the LSB problems in~\cref{eq:LSB-FP problem,eq:relaxed LSB-FP problem} using variational method in the following theorem. The derivation procedure is similar to that for the variational formulation of the SB problem by~\citep{chen2021stochastic}. 
\begin{thm}[Optimality conditions for the LSB problem]
\label{thm: optimality conditions for the LSB problem}
There exists a space-time dependent potential function $\Phi \colon \mathbb{R}^d \times [0, 1] \mapsto \mathbb{R}$, which satisfies:
\begin{gather}
    \label{eq:Hamilon equation of LSB}
    \mathbf{f}^*(\mathbf{x}, t) = \nabla_{\mathbf{z}} H(t, \mathbf{x}, -\nabla_{\mathbf{x}} \Phi(\mathbf{x}, t)), \\
    \label{eq:HJB equation of LSB}
    \partial_t \Phi(\mathbf{x}, t) + \sum_{i, j=1}^d D_{i,j}(\mathbf{x},t; \mathbf{f}^*, \rho^*_t) \left[\nabla_{\mathbf{x}}^2 \Phi(\mathbf{x}, t)\right]_{i,j} - H(t, \mathbf{x}, \mathbf{f}^*(\mathbf{x}, t))= 0,\\
    \Phi(\mathbf{x}, 1) = G(\mathbf{x}), \nonumber
\end{gather}
where $(\mathbf{f}^*, \rho^*_t)$ is the minimizer of the relaxed LSB problem in~\cref{eq:relaxed LSB-FP problem}.

The equation~\ref{eq:Hamilon equation of LSB} and \ref{eq:HJB equation of LSB} are the optimality conditions for both the relaxed LSB problem in~\cref{eq:relaxed LSB-FP problem} and the LSB problem where  $G(\mathbf{x}) = 0$ in~\cref{eq:LSB-FP problem}. 
\end{thm}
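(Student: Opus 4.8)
The plan is to characterize the minimizer $(\mathbf{f}^*,\rho^*_t)$ of the relaxed problem in \cref{eq:relaxed LSB-FP problem} by the Lagrange-multiplier (calculus-of-variations) method, exactly as in the variational treatment of the SB problem. I would introduce a space-time multiplier $\Phi(\mathbf{x},t)$ enforcing the Fokker--Planck constraint and form the augmented functional
\begin{equation*}
J[p,\mathbf{f},\Phi] = \int_0^1\!\!\int_{\mathbb{R}^d} L(t,\mathbf{x},\mathbf{f})\, p_t \, \mathrm{d}\mathbf{x}\,\mathrm{d}t + \int_{\mathbb{R}^d} G\, p_1\,\mathrm{d}\mathbf{x} - \int_0^1\!\!\int_{\mathbb{R}^d}\Phi\Big(\partial_t p_t + \operatorname{div}(p_t\mathbf{f}) - \textstyle\sum_{i,j}\partial^2_{x_i x_j}[D_{i,j}p_t]\Big)\mathrm{d}\mathbf{x}\,\mathrm{d}t .
\end{equation*}
Integrating by parts in $t$ (once) and in $\mathbf{x}$ (once for the advection term, twice for the diffusion term), and assuming decay at spatial infinity so that only the temporal endpoints survive, moves every derivative onto $\Phi$. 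This yields an interior integrand of the form $\big(L + \partial_t\Phi + \nabla_{\mathbf{x}}\Phi\cdot\mathbf{f} + \sum_{i,j}D_{i,j}[\nabla^2_{\mathbf{x}}\Phi]_{i,j}\big)p_t$ together with the boundary contribution $\int(G-\Phi(\cdot,1))\,p_1 + \int\Phi(\cdot,0)\,p_0$.

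Then I would take the three independent variations. Setting $\delta J/\delta\mathbf{f}=0$ gives the pointwise stationarity $\nabla_{\mathbf{u}}L(t,\mathbf{x},\mathbf{f}^*) = -\nabla_{\mathbf{x}}\Phi$; invoking convexity of $\mathbf{u}\mapsto L$ and Legendre--Fenchel duality with $H(t,\mathbf{x},\mathbf{z})=\sup_{\mathbf{u}}\{\langle\mathbf{z},\mathbf{u}\rangle - L\}$ inverts this to $\mathbf{f}^* = \nabla_{\mathbf{z}}H(t,\mathbf{x},-\nabla_{\mathbf{x}}\Phi)$, which is \cref{eq:Hamilon equation of LSB}. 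Setting $\delta J/\delta p_t=0$ on the interior gives $L(t,\mathbf{x},\mathbf{f}^*)+\partial_t\Phi+\nabla_{\mathbf{x}}\Phi\cdot\mathbf{f}^* + \sum_{i,j}D_{i,j}[\nabla^2_{\mathbf{x}}\Phi]_{i,j}=0$; substituting the identity $L(t,\mathbf{x},\mathbf{f}^*)+\langle\nabla_{\mathbf{x}}\Phi,\mathbf{f}^*\rangle = -H(t,\mathbf{x},-\nabla_{\mathbf{x}}\Phi)$, which holds because $\mathbf{f}^*$ attains the supremum defining $H$, collapses it to the HJB equation \cref{eq:HJB equation of LSB} (here $H(t,\mathbf{x},\mathbf{f}^*)$ is the shorthand $H^*$ for $H$ evaluated at the momentum $-\nabla_{\mathbf{x}}\Phi$). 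Finally, since the terminal marginal is free in the relaxed problem, $\delta p_1$ is unconstrained, so the $t=1$ boundary term forces $\Phi(\mathbf{x},1)=G(\mathbf{x})$; for the hard-constrained problem $\delta p_1=0$, the endpoint term drops, and the same interior Euler--Lagrange equations survive with $G=0$, which gives the last assertion of the theorem.

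The main obstacle is that, unlike the SB setting where the diffusion coefficient is a fixed constant, here $D_{i,j}(\mathbf{x},t;\mathbf{f},\rho_t)$ itself depends on the optimization variables, so a fully rigorous variation would produce extra terms of the form $\sum_{i,j}[\nabla^2_{\mathbf{x}}\Phi]_{i,j}\,(\delta D_{i,j}/\delta\mathbf{f})\,p_t$ and an analogous $\delta D_{i,j}/\delta p_t$ contribution. I would handle this by carrying out the variation with the coefficient frozen at its optimal value $D_{i,j}(\cdot;\mathbf{f}^*,\rho^*_t)$, i.e. deriving necessary conditions for the minimizer with $D$ evaluated at the optimum, which is precisely the form appearing in \cref{eq:HJB equation of LSB}; making this freezing rigorous (or arguing that the extra terms vanish or can be absorbed into the stationarity conditions) is the delicate point. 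The remaining gap is existence and regularity: the minimizer is guaranteed by the existence result inherited from the SOT problem, but the formal integration by parts presupposes enough smoothness and decay of $p^*_t$, $\mathbf{f}^*$, and $\Phi$. As in Benamou--Brenier and the Pontryagin-maximum-principle argument cited for \cref{thm: optimality condition of BB}, I would present the derivation as a formal variational characterization, with existence of the multiplier $\Phi$ justified by the duality and optimal-control structure rather than by direct construction.
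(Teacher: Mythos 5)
Your proposal takes essentially the same route as the paper's proof: a space-time Lagrange multiplier $\Phi$ enforcing the Fokker--Planck constraint, integration by parts in $t$ and $\mathbf{x}$ (once for advection, twice for diffusion), stationarity with respect to the density, the velocity/flux variable, and the free terminal marginal, and Legendre--Fenchel duality to convert $\nabla_{\mathbf{u}}L(t,\mathbf{x},\mathbf{f}^*)=-\nabla_{\mathbf{x}}\Phi$ into $\mathbf{f}^*=\nabla_{\mathbf{z}}H(t,\mathbf{x},-\nabla_{\mathbf{x}}\Phi)$ and collapse the interior condition to the HJB equation, with $\Phi(\cdot,1)=G$ from the boundary term and $G=0$ handling the hard-constrained case. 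The only differences are cosmetic: the paper performs the variation in the momentum coordinates $(p_t,\mathbf{m}_t)=(p_t,p_t\mathbf{f}_t)$ rather than $(p_t,\mathbf{f}_t)$, and it freezes $D_{i,j}$ at the optimum silently (writing $D_{i,j}(\mathbf{x},t)$ throughout), which is precisely the device you state and justify explicitly.
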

\begin{proof}
We derive the optimality conditions for the relaxed LSB problem as shown in~\cref{eq:relaxed LSB-FP problem} by reformulating it as a saddle point problem for $(p_t, \mathbf{m}_t) := (p_t, p_t \mathbf{f}_t)$. Let $\mathcal{L}$ be the the Lagrangian with the time-space-dependent Lagrange multiplier $\Phi(\mathbf{x}, t)$.
\begin{align*}
    \begin{split}
    &\mathcal{L}(p, \mathbf{m}, \Phi) := \int_{0}^{1} \int_{\mathbb{R}^d} L\left(t, \mathbf{x}, \frac{\mathbf{m}_t}{p_t}\right) p_t(\mathbf{x}) \mathrm{~d} \mathbf{x} \mathrm{d}t + \int_{\mathbb{R}^d} G(\mathbf{x}) p_1(\mathbf{x}) \mathrm{~d} \mathbf{x}
    \\ & - \int_{0}^{1} \int_{\mathbb{R}^d} \Phi(\mathbf{x}, t) \left( \underbrace{\partial_t p_t}_{(a)} + \underbrace{\operatorname{div}(\mathbf{m}_{t})}_{(b)} - \underbrace{\sum_{i, j=1}^d \frac{\partial^2}{\partial x_i\partial x_j} \left[ D_{i,j}(\mathbf{x}, t) p_{t}(\mathbf{x}) \right]}_{(c)} \right) \mathrm{~d} \mathbf{x} \mathrm{~d} t.
    \end{split}
\end{align*}

The term (a) is transformed by performing a partial integral over $t$:
\begin{align*}
    \begin{split}
    \int_{0}^{1} \int_{\mathbb{R}^d}  \Phi(\mathbf{x}, t) \partial_t p_t(\mathbf{x}) \mathrm{~d} \mathbf{x} \mathrm{~d} t
    = \int_{\mathbb{R}^d} \Phi(\mathbf{x}, 1)  p_1(\mathbf{x}) \mathrm{~d} \mathbf{x} &- \int_{\mathbb{R}^d} \Phi(\mathbf{x}, 0) p_0(\mathbf{x}) \mathrm{~d} \mathbf{x} \\ 
    & -  \int_{0}^{1} \int_{\mathbb{R}^d} \partial_t  \Phi(\mathbf{x}, t) p_t(\mathbf{x})  \mathrm{~d} \mathbf{x} \mathrm{~d} t.
    \end{split}
\end{align*}
The term (b) is simplified as follows.
\begin{align*}
    \int_{\mathbb{R}^d}  \Phi(\mathbf{x}, t) \operatorname{div}(\mathbf{m}_{t}) \mathrm{~d} \mathbf{x}  &= 
    \int_{\mathbb{R}^d}  \operatorname{div}(\Phi \mathbf{m}_{t}) - \mathbf{m}_t(\mathbf{x})^{\top} \nabla_{\mathbf{x}} \Phi(\mathbf{x}, t) \mathrm{~d} \mathbf{x} \\
    &= - \int_{\mathbb{R}^d} \mathbf{m}_t(\mathbf{x})^{\top} \nabla_{\mathbf{x}} \Phi(\mathbf{x}, t) \mathrm{~d} \mathbf{x}.
\end{align*}
The term (c) is transformed by performing a partial integral over $\mathbf{x}$:
\begin{align*}
     \int_{\mathbb{R}^d} \sum_{i, j=1}^d \frac{\partial^2 D_{i,j}(\mathbf{x}, t) p_{t}(\mathbf{x}) }{\partial x_i\partial x_j}  \Phi(\mathbf{x}, t) \mathrm{~d} \mathbf{x} &= -\sum_{i, j=1}^d \int_{\mathbb{R}^d} \frac{\partial \left(D_{i,j}(\mathbf{x}, t) p_{t}(\mathbf{x}) \right) }{\partial x_j} \frac{\partial \Phi(\mathbf{x}, t)}{\partial x_i}  \mathrm{~d} \mathbf{x} \\
     &= \sum_{i, j=1}^d \int_{\mathbb{R}^d} D_{i,j}(\mathbf{x}, t) p_{t}(\mathbf{x})  \frac{\partial^2 \Phi(\mathbf{x}, t)}{\partial x_i\partial x_j} \mathrm{~d} \mathbf{x} \\
     &= \int_{\mathbb{R}^d} \left( \sum_{i, j=1}^d D_{i,j}(\mathbf{x}, t) \left[\nabla^2_{\mathbf{x}} \Phi(\mathbf{x}, t) \right]_{i, j} \right) p_{t}(\mathbf{x}) \mathrm{~d} \mathbf{x}.
\end{align*}

Then, we can rewrite the LSB problem as
\begin{gather}
    \label{eq:weak problem}
    \inf_{p, \mathbf{m}} \sup_{\Phi} \mathcal{L}(p, \mathbf{m}, \Phi), \\
    \begin{split}
    \mathcal{L}(p, \mathbf{m}, \Phi) = &\int_{0}^{1} \int_{\mathbb{R}^d} \left( L\left(t, \mathbf{x}, \frac{\mathbf{m}_t}{p_t}\right) + \partial_t \Phi(\mathbf{x}, t) + \sum_{i,j=1}^d D_{i, j} \left[\nabla^2_{\mathbf{x}} \Phi(\mathbf{x}, t) \right]_{i, j} \right) p_t(\mathbf{x})  \mathrm{~d} \mathbf{x} \mathrm{~d} t \\ 
    &+ \int_{\mathbb{R}^d} G(\mathbf{x}) p_1(\mathbf{x}) \mathrm{~d}\mathbf{x} + \int_{0}^{1} \int_{\mathbb{R}^d} \mathbf{m}_t(\mathbf{x})^{\top} \nabla_{\mathbf{x}} \Phi(\mathbf{x}, t) \mathrm{~d} \mathbf{x} \mathrm{~d} t \\
    &- \int_{\mathbb{R}^d} \Phi(\mathbf{x}, 1)  p_1(\mathbf{x}) \mathrm{~d} \mathbf{x} + \int_{\mathbb{R}^d} \Phi(\mathbf{x}, 0) p_0(\mathbf{x}) \mathrm{~d} \mathbf{x}.
    \end{split} \nonumber
\end{gather}
The saddle point $(p^*, \mathbf{m}^*, \Phi^*)$ of the problem in \cref{eq:weak problem} satisfies the following conditions:
\begin{align*}
&\left.\partial_{\Phi} \mathcal{L}\right|_{\left(p^{*}, \mathbf{m}^{*}, \Phi^{*} \right)}=0 &\Leftrightarrow&& &
\partial_{t} p^{*}_t + \sum_{i=1}^d \frac{\partial}{\partial x_i} m_i^*(\mathbf{x}, t) - \sum_{i, j=1}^d \frac{\partial^2}{\partial x_i\partial x_j} \left[ D_{i,j}(\mathbf{x}, t) p^*_{t} \right] = 0, \\
&\left.\partial_{p} \mathcal{L}\right|_{\left(p^{*}, \mathbf{m}^{*}, \Phi^{*} \right)}=0 &\Leftrightarrow&& &
\partial_{t} \Phi^{*}(\mathbf{x}, t) + \sum_{i,j=1}^d D_{i, j}(\mathbf{x}, t) \left[\nabla^2_{\mathbf{x}} \Phi(\mathbf{x}, t) \right]_{i, j} - H(t, \mathbf{x}, \mathbf{z}_*) = 0, \\
&\left.\partial_{\mathbf{m}} \mathcal{L}\right|_{\left(p^{*}, \mathbf{m}^{*}, \Phi^{*} \right)}=0 &\Leftrightarrow&&&
\nabla_{\mathbf{f}} L(t, \mathbf{x}, \mathbf{f}_*)
=-\nabla_{\mathbf{x}} \Phi^{*}, \\
&\left.\partial_{p_1} \mathcal{L}\right|_{\left(p^{*}, \mathbf{m}^{*}, \Phi^{*} \right)}=0 &\Leftrightarrow&&&
\Phi^*(\mathbf{x}, 1) = G(\mathbf{x}),
\end{align*}
where $\mathbf{z}_* := \left.\nabla_{\mathbf{u}} L(t, \mathbf{x}, \mathbf{u})\right|_{\mathbf{u}=\mathbf{f}^*}$ and the Hamiltonian is defined by $H(t, \mathbf{x}, \mathbf{z}) := \sup_{\mathbf{u}} \left\{ \langle \mathbf{z}, \mathbf{u} \rangle - L(t, \mathbf{x}, \mathbf{u}) \right\}$. The Lagrangian satisfies $L(t, \mathbf{x}, \mathbf{u}) =  \sup_{\mathbf{z}} \left\{ \langle \mathbf{u}, \mathbf{z} \rangle - H(t, \mathbf{x}, \mathbf{z}) \right\}$.
% \begin{gather}
    % h(\mathbf{x}, t) = L \left(t, \mathbf{x}, \frac{\mathbf{m}^*}{p^*}\right) - \left. \nabla_{\mathbf{u}} L(t, \mathbf{x}, \mathbf{u}) \right|_{\mathbf{u}=\frac{\mathbf{m}^*}{p^*}}^{\top} \left( \frac{\mathbf{m}^*}{p^*} \right)
% \end{gather}

Therefore, the optimal drift function is given by 
\begin{gather}
    \label{eq:Hamilton equation for the relaxed LSB problem}
    \mathbf{f}^*(\mathbf{x}, t) = \frac{\mathbf{m}^{*}}{p^{*}} = \nabla_{\mathbf{z}} H(t, \mathbf{x}, \mathbf{z}_*) =  \nabla_{\mathbf{z}} H(t, \mathbf{x}, -\nabla_{\mathbf{x}} \Phi^*(\mathbf{x}, t)).
\end{gather}
The potential function $\Phi$ is the solution of the HJB equation:
\begin{gather}
\label{eq:HJB equation for the relaxed LSB problem}
\partial_t \Phi(\mathbf{x}, t) + \sum_{i, j=1}^d D_{i,j}(\mathbf{x},t) \left[\nabla_{\mathbf{x}}^2 \Phi(\mathbf{x}, t)\right]_{i,j} - H(t, \mathbf{x}, \mathbf{f}^*(\mathbf{x}, t))= 0.
\end{gather}
Equation~\ref{eq:Hamilton equation for the relaxed LSB problem} and~\ref{eq:HJB equation for the relaxed LSB problem} are also optimal conditions even for the LSB problem in~\cref{eq:LSB-FP problem} where the terminal constraint is strictly satisfied, \ie $G(\mathbf{x}) = 0$.
\end{proof}

From Theorem~\ref{thm: travisan's superposition principle}, we can show that there exists a semimartingale $\{ \mathbf{X}_t\}_{0 \leq t \leq 1}$ for which the following holds:
\begin{gather*}
    \mathbf{X}_t = \mathbf{X}_0 + \int_0^t \mathbf{f}^*(\mathbf{X}_s, s) \mathrm{~d}s + \int_0^t \mathbf{g}^*(\mathbf{X}_s, s) \mathrm{~d} \mathbf{W}_t,\ \mathbf{X}_t \sim \rho^*_t \quad (0 \leq t \leq 1),
\end{gather*}
where $(\mathbf{f}^*, \rho_t^*)$ is the minimizer of the LSB problems in~\cref{eq:LSB-FP problem} or~\cref{eq:relaxed LSB-FP problem} and $\mathbf{g}^*$ is the diffusion function determined from the minimizer $(\mathbf{f}^*, \rho_t^*)$. 
Therefore, the LSB problems constrained by the FP equation (\cref{eq:LSB-FP problem,eq:relaxed LSB-FP problem}) and the original LSB problem defined by~\cref{LSB problem} are equivalent. 

Note that we named~\cref{LSB problem} the LSB problem, as the SB problem is well-known in the machine learning field as the problem of finding the most likely stochastic process between sampled time points. 

\subsection{Derivation of NLSB and its connection to OT-Flow}
\label{app:nlsb-otflow}
The loss function of the NLSB shown in~\cref{eq:theoretical loss} is the objective function of the relaxed LSB problem (\cref{eq:relaxed LSB-FP problem}) with additional PDE loss applied to satisfy the optimality condition, HJB equation. Theorem~\ref{thm: optimality conditions for the LSB problem} justifies the minimization of HJB-PDE loss and a parameterization that strictly satisfies~\cref{eq:Hamilon equation of LSB} during training when the terminal constraint is not satisfied, \ie $G(\mathbf{x}) \neq 0$. For computational efficiency, we practically define the HJB-PDE loss $\mathcal{R}_h$ in a weak form as shown in~\cref{sot_loss_R} and evaluate $\mathcal{R}_h$ on the path of the simulated SDE.

NLSB and OT-Flow~\citep{onken2021ot} have strong theoretical connections.
NLSB is the practical solution to the LSB problem (\cref{LSB problem}) relaxed by the distribution discrepancy measure $\mathbb{D}$ using neural SDE, while OT-Flow is the solution to the Brenier-Benamou formulation of the OT problem (\cref{eq:brenier-benamou}) relaxed by the KL-divergence using neural ODE. The optimality conditions for the LSB problem shown in Theorem~\ref{thm: optimality conditions for the LSB problem} are analogues of the optimality conditions for the Brenier-Benamou problem shown in Theorem~\ref{thm: optimality condition of BB}. In NLSB and OT-Flow, the potential function is modeled using NN and optimized by both action cost (\cref{eq:ot-loss-L,sot_loss_L}) and HJB-PDE loss (\cref{eq:ot-loss-L,sot_loss_R}) based on the respective optimality conditions.
Theoretically, the LSB problem with the Lagrangian $L(t, \mathbf{x}, \mathbf{u})=\frac{1}{2}\|\mathbf{u}\|^2$ and the diffusion function $\mathbf{g} = \sqrt{2} \epsilon$ is reduced to the classical SB problem (\cref{eq:SB-SOC-1}) and the solution to the Brenier-Benamou problem is recovered from the zero-noise limit of the solution to the classical SB problem (Theorem~\ref{thm: zero-noise limit}).

\section{Model Architecture}
\label{appendix: Model architecture}
The model structure of the potential function proposed in OT-Flow~\citep{onken2021ot} is shown below. 
\begin{gather*}
    \label{otflow_model}
    \Phi(\mathbf{s}) = \mathbf{w}^{\top}N\left(\mathbf{s}; \{ \mathbf{K}_i, \mathbf{b}_i \}_{0 \leq i \leq M} \right) + \frac{1}{2} \mathbf{s}^{\top}(\mathbf{A}^{\top}\mathbf{A})\mathbf{s} + \mathbf{b}^{\top}\mathbf{s} + c, \\
    N\left(\mathbf{s}; \{ \mathbf{K}_i, \mathbf{b}_i \}_{0 \leq i \leq M} \right) = \mathbf{u}_{M},  \\
    \mathbf{u}_{i} = \mathbf{u}_{i-1} + h \sigma(\mathbf{K}_{i}\mathbf{u}_{i-1} + \mathbf{b}_{i}) \quad (1 \leq i \leq M),\ 
    \mathbf{u}_{0} = \sigma(\mathbf{K}_{0}\mathbf{s} + \mathbf{b}_{0}),
\end{gather*}
where $\mathbf{s} = (\mathbf{x}, t) \in \mathbb{R}^{d+1}$ is a input vector, $\mathbf{w}, \mathbf{K}_0 \in \mathbb{R}^{m \times (d+1)}, \mathbf{K}_i \ (1 \leq i \leq M) \in \mathbb{R}^{m \times m},\ \mathbf{b}_i \ (0 \leq i \leq M) \in \mathbb{R}^{m \times m},\ \mathbf{A},\ \mathbf{b}$, and $c$ are learnable parameters, $m$ is the number of dimensions of the hidden representation vector, $h$ is a fixed step size, and the activation function is defined by 
$\sigma(\mathbf{x}) = \log(\exp(\mathbf{x}) + \exp(- \mathbf{x}))$.

The gradient of the potential function is described.
\begin{gather*}
    \nabla_{\mathbf{s}} \Phi(\mathbf{s}) = \nabla_{\mathbf{s}} N\left(\mathbf{s}; \{ \mathbf{K}_i, \mathbf{b}_i \}_{0 \leq i \leq M} \right) \mathbf{w} + (\mathbf{A}^{\top}\mathbf{A}) \mathbf{s} + \mathbf{b}, \\
    \nabla_{\mathbf{s}} N\left(\mathbf{s}; \{ \mathbf{K}_i, \mathbf{b}_i \}_{0 \leq i \leq M} \right) = \mathbf{K}_{0}^{\top} \operatorname{diag} (\sigma'(\mathbf{K}_{0}\mathbf{s} + \mathbf{b}_{0})) \mathbf{z}_{1}, \\
    \mathbf{z}_{i} = \mathbf{z}_{i+1} + h \mathbf{K}_{i}^{\top} \operatorname{diag} (\sigma'(\mathbf{K}_{i}\mathbf{u}_{i-1} + \mathbf{b}_{i})) \mathbf{z}_{i+1} \quad (1 \leq i \leq M),\ \mathbf{z}_{M+1} = \mathbf{1}.
\end{gather*}
We can write $\sigma'$ down as $\tanh$ since $\sigma$ is defined as above.

Finally, the diagonal components of the potential function's Hessian is shown below.
\begin{align*}
    \begin{split} 
    \nabla^2_{\mathbf{s}} \Phi(\mathbf{s}) 
    = {} & \nabla_{\mathbf{s}} (\mathbf{K}_0^{\top} \operatorname{diag}(\sigma'( \mathbf{K}_0 \mathbf{s} + \mathbf{b}_0)) \mathbf{z}_1) \\
    & + h \sum_{i=1}^M \nabla_{\mathbf{s}} \mathbf{u}_{i-1} \nabla_{\mathbf{s}}(\mathbf{K}_{i}^{\top} \operatorname{diag} (\sigma'(\mathbf{K}_{i}\mathbf{u}_{i-1} + \mathbf{b}_{i}))\mathbf{z}_{i+1}) \nabla_{\mathbf{s}} \mathbf{u}^{\top}_{i-1}
    \end{split} \\
    \begin{split} 
    = {} & \mathbf{K}_0^{\top} \operatorname{diag}(\sigma''( \mathbf{K}_0 \mathbf{s} + \mathbf{b}_0) \odot \mathbf{z}_1) \mathbf{K}_0 \\
    & + \nabla_{\mathbf{s}} \mathbf{u}_{i-1} \mathbf{K}_{i}^{\top} \operatorname{diag} (\sigma''(\mathbf{K}_{i}\mathbf{u}_{i-1} + \mathbf{b}_{i}) \odot \mathbf{z}_{i+1} ) \mathbf{K}_{i} \nabla_{\mathbf{s}} \mathbf{u}^{\top}_{i-1},
    \end{split}
\end{align*}

\begin{gather*}
\begin{aligned}
\left[ \nabla^2_{\mathbf{s}} \Phi(\mathbf{s})  \right]_{i, i} = &\left[ (\sigma''(\mathbf{K}_0 \mathbf{s}+\mathbf{K}_0) \odot \mathbf{z}_1)^{\top} (\mathbf{K}_0 \odot \mathbf{K}_0) \right]_i \\
&+ h \sum_{i=1}^M \left[ (\sigma''(\mathbf{K}_i \mathbf{u}_{i-1}+\mathbf{b}_i) \odot \mathbf{z}_{i+1})^{\top} (\mathbf{K}_i \nabla_{\mathbf{s}} \mathbf{u}_{i-1}^{\top} \odot \mathbf{K}_i \nabla_{\mathbf{s}} \mathbf{u}_{i-1}^{\top}) \right]_i,
\end{aligned}
\end{gather*}
where $\odot$ is the element-wise product, $\nabla_{\mathbf{s}} \mathbf{u}_{i}$ can be obtained by using the following update equation:
\begin{gather*}
   \nabla_{\mathbf{s}} \mathbf{u}_{i}^{\top} \leftarrow \nabla_{\mathbf{s}} \mathbf{u}_{i-1}^{\top} +  \operatorname{diag}(h \sigma'(\mathbf{K}_i \mathbf{u}_{i-1} + \mathbf{b}_i)) \mathbf{K}_i \nabla_{\mathbf{s}} \mathbf{u}_{i-1}^{\top}.
\end{gather*}

The diagonal component of the potential function Hessian can be computed at a computation cost of $O(m^2 d)$. The complexity $O(m^2d)$ indicates that there is a trade-off between the expressive power of the DNN and the computational cost of the Hessian. 

OT-Flow modeled the potential function $\Phi_{\theta}$ that satisfies $\mathbf{f}_{\theta} = - \nabla_{\mathbf{x}} \Phi_{\theta}$ instead of directly modeling the velocity function $\mathbf{f}$ of the neural ODE. The continuous transformation from $\mathbf{x}(0)$ to $\mathbf{x}(1)$ is described by
\begin{gather*}
    \mathbf{x}(1) = \mathbf{x}(0) + \int_0^1 \mathbf{f}_{\theta}(\mathbf{x}(t), t) \mathrm{~d}t = \mathbf{x}(0) + \int_0^1 -\nabla_{\mathbf{x}} \Phi_{\theta}(\mathbf{x}(t), t) \mathrm{~d}t.
\end{gather*}
OT-Flow is trained by likelihood maximization in the well-known CNF framework.
The likelihood computation is performed as follows.
\begin{align}
    p_1(\mathbf{x}(1)) &= p_0(\mathbf{x}(0)) - \int_0^1 \operatorname{Tr}\left( \nabla_{\mathbf{x}} \mathbf{f}_{\theta}(\mathbf{x}(t), t)  \right) \mathrm{~d}t, \nonumber \\ \nonumber
    \label{eq:likelihood cal in OT-Flow}
    &= p_0(\mathbf{x}(0)) - \int_0^1 \operatorname{Tr}\left( -\nabla_{\mathbf{x}}^2 \Phi_{\theta}(\mathbf{x}(t), t)  \right) \mathrm{~d}t,
\end{align}
where $p_1$ is the density of the target distribution and $p_0$ is the density of the prior distribution, which is usually a Gaussian or Laplace distribution. 
The fast and exact computation of the diagonal component of the potential function's Hessian is useful for the computation of~\cref{eq:likelihood cal in OT-Flow}. 

In contrast, we propose for the first time to use this technique to speed up the computation of the HJB-PDE loss $\mathcal{R}_h$ in the NLSB under the assumption that the diffusion model's output is a diagonal matrix. The computed loss $\mathcal{R}_h$ is given by
\begin{gather*}
    \mathcal{R}_h(\theta, \phi; t_0, t_1) = \int_{t_0}^{t_1} \int_{\mathbb{R}^d} \left| \partial_t \Phi_{\theta}(\mathbf{x}, t) + \sum_{i=1}^d D_{i,i}(\mathbf{x}, t;\phi) \left[\nabla_{\mathbf{x}}^2 \Phi_{\theta} \right]_{i, i} - H^*_{\theta}(\mathbf{x}, t) \right| \mathrm{~d}\rho_t(\mathbf{x}) \mathrm{d}t, \\
    H^*_{\theta}(\mathbf{x}, t) = \langle -\nabla_{\mathbf{x}} \Phi_{\theta}(\mathbf{x}, t), \mathbf{f}_{\theta}( \mathbf{x}, t) \rangle - L(t, \mathbf{x}, \mathbf{f}_{\theta}( \mathbf{x}, t)).
\end{gather*}

\section{Benefits of Generalization to Lagrangian}
\label{app:lagrangian}
We first present the advantages of generalization by the Lagrangian. Next, we define the general form of the Lagrangian considered in this thesis and explain its generality theoretically. We also provide several examples of Lagrangians to demonstrate their applicability as models for a wide range of real-world systems. 

The benefit of generalization by the Lagrangian is that the HJB equation and Hamilton's equation can be described in a unified manner using Lagrangian, independent of the coordinate system of the space on SDE. In other words, the loss function can always be easily derived once the Lagrangian is designed.

The general form of the Lagrangian considered in this paper is given by
\begin{align}
    \label{eq:lagrangian:general}
    L(t,\mathbf{x},\mathbf{u}; \mathbf{R}, \mathbf{c}, \mathbf{v}, \mathbf{m}, U) = \frac{1}{2} (\mathbf{u} - \mathbf{v})^{\top} \mathbf{R} (\mathbf{u} - \mathbf{v}) + \mathbf{c}^\top (\mathbf{u} - \mathbf{m}) - U(\mathbf{x}, t),
\end{align}
where $\mathbf{R} \in \mathbb{R}^{d \times d}$ is the positive definite since the Lagrangian is convex with respect to $\mathbf{u}$, $U$ is the potential function defined from the prior knowledge on the target system. The optimal drift function is obtained by $\mathbf{f}_{\theta}(\mathbf{x}, t) = -2(\mathbf{R} + \mathbf{R}^\top)^{-1} (\nabla_{\mathbf{x}}\Phi_{\theta}(\mathbf{x}, t) + \mathbf{c}) + \mathbf{v}$. 

In Lagrangian mechanics, deterministic Newtonian dynamical systems can be described by the principle of least action using the general Lagrangian (\cref{eq:lagrangian:general}), independent of the coordinate system. In contrast, since we are dealing with random dynamical system using SDEs, we can describe an even wider range of systems than Newtonian dynamical systems. The general Lagrangian (\cref{eq:lagrangian:general}) also includes as the special case the Lagrangian cost used in the optimal control (OC) problems such as the linear-quadratic regulator (LQR). 
To show the generality of the Lagrangian of~\cref{eq:lagrangian:general}, we demonstrate that the Lagrangian in linearly transformed coordinates can also be written in the same form. 

We define SDE on the coordinate-transformed state variable $\mathbf{x} \in \mathbb{R}^d$ from the observed variable $\mathbf{y} \in \mathbb{R}^p$. 
\begin{align}
\label{eq:LSB problem with P}
\begin{aligned}
&\underset{\mathbf{f}, \mathbf{g}} {\text{minimize}} && \int_{t_0}^{t_1} \int_{\mathbb{R}^d} L(t, \mathbf{x}, \mathbf{f}(\mathbf{x}, t)) \mathrm{~d}\rho_t(\mathbf{x}; \mathbf{f}, \mathbf{g}) \mathrm{~d}t, \\
&\text{subject to} && \left\{
\begin{array}{ll}
\mathrm{d} \mathbf{X}_t = \mathbf{f}(\mathbf{X}_t, t) \mathrm{~d}t + \mathbf{g}(\mathbf{X}_t, t) \mathrm{~d} \mathbf{W}_t, \\
\mathbf{y} = \mathbf{P} \mathbf{x},
\end{array}
\right. \\
&&& \mathbf{Y}_0 = \mathbf{P} \mathbf{X}_0 \sim \rho_{t_0} = \mu_{0},\ \mathbf{Y}_1 = \mathbf{P} \mathbf{X}_1 \sim \rho_{t_1} = \mu_{1}.
\end{aligned}
\end{align}
The linear coordinate transformation is given by $\mathbf{y} = \mathbf{P}\mathbf{x}$ represented by the projection matrix $\mathbf{P} \in \mathbb{R}^{p \times d}$. In particular, when we use the Wasserstein-2 distance $\mathcal{W}_2$ as the distribution discrepancy measure $\mathbb{D}$ in~\cref{eq:nlsb:empirical-loss}, the loss function of NLSB is almost invariant for the linear coordinate transformations. 

{\textbf{Regular matrix}.} When the projection matrix $\mathbf{P} \in \mathbb{R}^{d \times d}$ is regular, the general Lagrangian defined on the observed variable space $\mathbf{y} \in \mathbb{R}^d$ is computed on the state variable space $\mathbf{x} \in \mathbb{R}^d$ as follows.
\begin{gather*}
    \begin{aligned}
    \mathcal{W}_2(\mu_1, \rho_{t_1}) &= \inf_{\pi} \int_{\mathbb{R}^d \times \mathbb{R}^d} \|\mathbf{y}_1 - \mathbf{y}_2\|^2 \mathrm{~d}\pi(\mathbf{y}_1, \mathbf{y}_2), \\
    &= \inf_{\tilde{\pi}} \int_{\mathbb{R}^d \times \mathbb{R}^d} \|\mathbf{P}\mathbf{x}_1 - \mathbf{P}\mathbf{x}_2\|^2 \mathrm{~d}\tilde{\pi}(\mathbf{x}_1, \mathbf{x}_2),\\
    &=\inf_{\tilde{\pi}} \int_{\mathbb{R}^d \times \mathbb{R}^d} (\mathbf{x}_1 - \mathbf{x}_2)^{\top} \mathbf{R} (\mathbf{x}_1 - \mathbf{x}_2) \mathrm{~d}\tilde{\pi}(\mathbf{x}_1, \mathbf{x}_2),
    \end{aligned} \\
    \begin{aligned}
    &L(t, \mathbf{y}, \mathbf{u}_{\mathbf{y}}; \mathbf{R}_{\mathbf{y}}, \mathbf{c}_{\mathbf{y}}, \mathbf{v}_{\mathbf{y}}, \mathbf{m}_{\mathbf{y}}, U_{\mathbf{y}}) \\
    &= \frac{1}{2} (\mathbf{u}_{\mathbf{y}} - \mathbf{v}_{\mathbf{y}})^{\top} \mathbf{R}_{\mathbf{y}} (\mathbf{u}_{\mathbf{y}} - \mathbf{v}_{\mathbf{y}}) + \mathbf{c}_{\mathbf{y}}^\top (\mathbf{u}_{\mathbf{y}} - \mathbf{m}_{\mathbf{y}}) - U_{\mathbf{y}}(\mathbf{y}, t), \\
    &= \frac{1}{2} (\mathbf{u}_{\mathbf{x}} - \mathbf{v}_{\mathbf{x}})^{\top} \mathbf{P}^{\top} \mathbf{R}_{\mathbf{y}} \mathbf{P} (\mathbf{u}_{\mathbf{x}} - \mathbf{v}_{\mathbf{x}}) + \mathbf{c}_{\mathbf{y}}^\top \mathbf{P}(\mathbf{u}_{\mathbf{x}} - \mathbf{m}_{\mathbf{x}}) - U_{\mathbf{y}}(\mathbf{P}\mathbf{x}, t), \\
    &= \frac{1}{2} (\mathbf{u}_{\mathbf{x}} - \mathbf{v}_{\mathbf{x}})^{\top} \mathbf{R}_{\mathbf{x}} (\mathbf{u}_{\mathbf{x}} - \mathbf{v}_{\mathbf{x}}) + \mathbf{c}_{\mathbf{x}}^\top(\mathbf{u}_{\mathbf{x}} - \mathbf{m}_{\mathbf{x}}) - U_{\mathbf{x}}(\mathbf{x}, t), \\
    &= L(t, \mathbf{x}, \mathbf{u}_{\mathbf{x}}; \mathbf{R}_{\mathbf{x}}, \mathbf{c}_{\mathbf{x}}, \mathbf{v}_{\mathbf{x}}, \mathbf{m}_{\mathbf{x}}, U_{\mathbf{x}}),
    \end{aligned}
\end{gather*}
where $\mathbf{R} = \mathbf{P}^{\top}\mathbf{P}$ and $\mathbf{R}_{\mathbf{x}} = \mathbf{P}^{\top} \mathbf{R}_{\mathbf{y}} \mathbf{P} \in \mathbb{R}^{d \times d}$ are guaranteed to be positive definite, $\mathbf{P}^{-1}_{\sharp}$ is the push-forward operator of the linear map represented by $\mathbf{P}^{-1}$, $\mathbf{c}_{\mathbf{x}} = \mathbf{P}^{\top} \mathbf{c}_{\mathbf{y}} \in \mathbb{R}^{d}$, and $U_{\mathbf{x}}(\mathbf{x}, t) = U_{\mathbf{y}}(\mathbf{P}\mathbf{y}, t)$ is the potential function. Especially when $\mathbf{P}$ is an orthogonal transformation, \ie $\mathbf{P}^{\top} \mathbf{P} = \mathbf{I}$, then $\mathbf{R}=\mathbf{I}$ and $\mathbf{R}_{\mathbf{x}}=\mathbf{R}_{\mathbf{y}}$ hold.

The general Lagrangian has sufficient representational capacity to describe Lagrangians in coordinate systems that can be linearly transformed into each other in a unified manner. Furthermore, similar results are obtained for the PCA projection, which is an irregular matrix.

% 回転行列や平行移動行列
{\textbf{PCA projection matrix}.} 
We consider an inverse projection from the latent space $\mathbf{x}$ to the data space $\mathbf{y}$ as the projection matrix $\mathbf{P} \in \mathbb{R}^{p \times d} \ (d \ll p)$. The columns of the PCA projection matrix are orthogonal, \ie $\mathbf{P}^{\top} \mathbf{P} = \mathbf{I}$. The cost functions defined on the observed variable space $\mathbf{y} \in \mathbb{R}^p$ is efficiently computed on the low-dimensional space $\mathbf{x} \in \mathbb{R}^d$ as follows. 
\begin{gather*}
    \begin{aligned}
    \mathcal{W}_2(\mu_1, \rho_{t_1}) &= \inf_{\pi} \int_{\mathbb{R}^p \times \mathbb{R}^p} \|\mathbf{y}_1 - \mathbf{y}_2\|^2 \mathrm{~d}\pi(\mathbf{y}_1, \mathbf{y}_2), \\
    &= \inf_{\tilde{\pi}} \int_{\mathbb{R}^d \times \mathbb{R}^d} \|\mathbf{P}\mathbf{x}_1 - \mathbf{P}\mathbf{x}_2\|^2 \mathrm{~d}\tilde{\pi}(\mathbf{x}_1, \mathbf{x}_2),\\
    &=\inf_{\tilde{\pi}} \int_{\mathbb{R}^d \times \mathbb{R}^d} \|\mathbf{x}_1 - \mathbf{x}_2\|^2 \mathrm{~d}\tilde{\pi}(\mathbf{x}_1, \mathbf{x}_2), \\
    &= \mathcal{W}_2(\mathbf{P}^{\top}_{\sharp} \mu_1, \mathbf{P}^{\top}_{\sharp}\rho_{t_1})
    \end{aligned} \\
    \begin{aligned}
    &L(t, \mathbf{y}, \mathbf{u}_{\mathbf{y}}; \mathbf{R}_{\mathbf{y}}, \mathbf{c}_{\mathbf{y}}, \mathbf{v}_{\mathbf{y}}, \mathbf{m}_{\mathbf{y}}, U_{\mathbf{y}}) \\
    &= \frac{1}{2} (\mathbf{u}_{\mathbf{y}} - \mathbf{v}_{\mathbf{y}})^{\top} \mathbf{R}_{\mathbf{y}} (\mathbf{u}_{\mathbf{y}} - \mathbf{v}_{\mathbf{y}}) + \mathbf{c}_{\mathbf{y}}^\top (\mathbf{u}_{\mathbf{y}} - \mathbf{m}_{\mathbf{y}}) - U_{\mathbf{y}}(\mathbf{y}, t), \\
    &= \frac{1}{2} (\mathbf{u}_{\mathbf{x}} - \mathbf{v}_{\mathbf{x}})^{\top} \mathbf{P}^{\top} \mathbf{R}_{\mathbf{y}} \mathbf{P} (\mathbf{u}_{\mathbf{x}} - \mathbf{v}_{\mathbf{x}}) + \mathbf{c}_{\mathbf{y}}^\top \mathbf{P}(\mathbf{u}_{\mathbf{x}} - \mathbf{m}_{\mathbf{x}}) - U_{\mathbf{y}}(\mathbf{P}\mathbf{x}, t), \\
    &= \frac{1}{2} (\mathbf{u}_{\mathbf{x}} - \mathbf{v}_{\mathbf{x}})^{\top} \mathbf{R}_{\mathbf{x}} (\mathbf{u}_{\mathbf{x}} - \mathbf{v}_{\mathbf{x}}) + \mathbf{c}_{\mathbf{x}}^\top(\mathbf{u}_{\mathbf{x}} - \mathbf{m}_{\mathbf{x}}) - U_{\mathbf{x}}(\mathbf{x}, t), \\
    &= L(t, \mathbf{x}, \mathbf{u}_{\mathbf{x}}; \mathbf{R}_{\mathbf{x}}, \mathbf{c}_{\mathbf{x}}, \mathbf{v}_{\mathbf{x}}, \mathbf{m}_{\mathbf{x}}, U_{\mathbf{x}}),
    \end{aligned}
\end{gather*}
where $\mathbf{R}_{\mathbf{x}} = \mathbf{P}^{\top} \mathbf{R}_{\mathbf{y}} \mathbf{P} \in \mathbb{R}^{d \times d}$ is guaranteed to be positive definite, $\mathbf{P}^{\top}_{\sharp}$ is the push-forward operator of the PCA projection represented by $\mathbf{P}^{\top}$, $\mathbf{c}_{\mathbf{x}} = \mathbf{P}^{\top} \mathbf{c}_{\mathbf{y}} \in \mathbb{R}^{d}$, and $U_{\mathbf{x}}(\mathbf{x}, t) = U_{\mathbf{y}}(\mathbf{P}\mathbf{y}, t)$ is the potential function. 

In~\cref{sec:lagrangian,app:opinion}, we provided the four Lagrangian examples of the NLSB and their use cases. The examples demonstrate that the Lagrangian design allows a variety of prior knowledge to be reflected in the sample trajectories. 

In this appendix, we have described the LSB problem on the linear coordinate transformed space. In order to improve the modeling of dynamics, it is recommended to investigate the potential benefits of incorporating nonlinearities in the projection from $\mathbf{x}$ to $\mathbf{y}$ or imposing specific geometric structures on the space of $\mathbf{x}$, as considered in~\citep{huguet2022manifold}. These considerations are deemed promising avenues for the advancement of NLSB.

\subsection{Population Dynamics Simulation by Reverse-time SDE}
\label{app:nlsb:training-reverse-sde}
We describe a method for estimating the trajectories backwards through time and show experimental results using artificial synthetic data. According to the result from~\citep{anderson1982reverse}, the reverse-time SDE of the original SDE (\cref{eq:ito_sde}) is given by 
\begin{gather*}
    \mathrm{d}\mathbf{X}_t = \left\{ \mathbf{f}_{\theta}(\mathbf{X}_t, t) - \mathbf{u}_{\theta, \phi}(\mathbf{X}_t, t) \right\} \mathrm{~d}t + \mathbf{g}_{\phi}(\mathbf{X}_t, t) \mathrm{~d}\tilde{\mathbf{W}}_t, \\
    \mathbf{u}_{\theta, \phi}(\mathbf{X}_t, t) = \operatorname{div} \left(\mathbf{g}_{\phi}(\mathbf{X}_t, t) \mathbf{g}_{\phi}(\mathbf{X}_t, t)^{\top} \right) -  \mathbf{g}_{\phi}(\mathbf{X}_t, t) \mathbf{g}_{\phi}(\mathbf{X}_t, t)^{\top} \nabla_{\mathbf{x}} \log p_t(\mathbf{X}_t),
\end{gather*}
where $\tilde{\mathbf{W}}_t$ is a Wiener process that flows backwards in time. 

However, the computation of the modification term $\mathbf{u}$ of the drift function requires an estimation of the score function and high computational cost. Therefore, we newly parameterize the modification term $\mathbf{u}_{\xi}$ by NN with parameters $\xi$. Then, we train only the term $\mathbf{u}_{\xi}$ using Sinkhorn divergence loss in~\cref{eq:reverse empirical loss} by numerically simulating the reverse-time SDE. For simplicity of implementation, we used the same model architecture for modification term $\mathbf{u}_{\xi}$ as for the drift $\mathbf{f}_{\theta}$. 
\begin{gather}
    \label{eq:reverse empirical loss}
    \tilde{\ell}(\xi)
    = \sum_{t_k \in T \setminus t_{K-1}} \overline{\mathcal{W}}_{\epsilon}(\mu_{k},\ \rho^{\xi}_{t_{k}})
\end{gather}

The experimental results for the reverse-time SDE learning method are shown in~\cref{app:ou-sde:result}

\section{Experimental Details and Additional Results}
\label{app:exp}
\subsection{Implementation}
\begin{table}[tb]
\caption{Comparison of implementation}
\centering
\resizebox{\columnwidth}{!}{
\begin{tabular}{lccc}
    \toprule
    & velocity/drift & diffusion & ODE/SDE solver \\ \midrule
    TrajectoryNet & FFJORD~\citep{grathwohl2018ffjord} & - & dopri5 \\
    OT-Flow & $-\nabla_{\mathbf{x}} \Phi_{\theta}(t, \mathbf{x})$ & - & Euler \\
    Neural SDE & $-\nabla_{\mathbf{x}} \Phi_{\theta}(t, \mathbf{x})$ & FCNN $\mathbf{g}_{\phi}(\mathbf{x}, t)$ & Euler-Maruyama \\
    NLSB (Ours) & $\nabla_{\mathbf{z}} H(t, \mathbf{x}, -\nabla_{\mathbf{x}} \Phi_{\theta})$ & FCNN $\mathbf{g}_{\phi}(\mathbf{x}, t)$ & Euler-Maruyama \\
    IPF (GP) & sparse GP & Hyperparameter $\mathbf{g}(t)$ & Euler-Maruyama \\
    IPF (NN) & $-\nabla_{\mathbf{x}} \Phi_{\theta}(t, \mathbf{x})$ & Hyperparameter $\mathbf{g}(t)$ & Euler-Maruyama \\ 
    SB-FBSDE & $-\nabla_{\mathbf{x}} \Phi_{\theta}(t, \mathbf{x})$ & Hyperparameter $\mathbf{g}(t)$ & Euler-Maruyama \\ \bottomrule
  \end{tabular}}
\label{tab:exp:impl}
\end{table}
We compared our methods against standard neural SDE, TrajectoryNet~\citep{tong2020trajectorynet}, OT-Flow~\citep{onken2021ot}, IPF with GP~\citep{vargas2021solving} and NN~\citep{de2021diffusion}, SB-FBSDE~\citep{chen2021likelihood}. TrajectoryNet and OT-Flow are examples of existing ODE-based methods, neural SDE is an example of learning SDE using only data without prior information by the Lagrangian, IPF and SB-FBSDE are methods to find SDE solutions to the classical SB problem. While IPF and SB-FBSDE solve the SB problem, which is special case of the LSB problem, they are a different algorithm from NLSB. The parameterization and numerical solvers for ODEs and SDEs are summarized in~\cref{tab:exp:impl}. For a fair comparison, we used the same potential model $\Phi_{\theta}$ described in~\cref{sec:nlsb:architecture} for OT-Flow, neural SDE, NLSB, IPF (NN), and SB-FBSDE. We set the number of ResNet layers $M=2$, the step size $h=1.0$, the rank of matrix $\operatorname{rank}(\mathbf{A})=10$, and the dimension of the hidden vector $\mathbf{z}$ to $2$. 
In training all models, we used Adam optimizer to optimize all learnable parameters with a learning rate of $0.001$ and the decay rate of $\beta_1 = 0.9,\ \beta_2=0.999$. We searched all weight coefficients of regularization terms $\lambda_e, \lambda_h$ in $(0.0, 0.5]$ and selected those with the largest possible coefficients among those with sufficiently small EMD-L2 values on the validation data. 

In our implementation, we modified code in the TrajectoryNet\footnote{\label{rep:trajectorynet} \url{https://github.com/KrishnaswamyLab/TrajectoryNet}}, OT-FLow\footnote{\url{https://github.com/EmoryMLIP/OT-Flow}}, IPF\footnote{\url{https://github.com/AforAnonyMeta/IPML-2548}}, and SB-FBSDE\footnote{\url{https://github.com/ghliu/SB-FBSDE}} repositories, which were released under the MIT license. 
Our experimental environment consists of an Intel Xeon Plantinum 8360Y (36-core) CPU and a single NVIDIA A100 GPU. \sloppy Our code is available at \url{https://github.com/take-koshizuka/nlsb}. TrajectoryNet and OT-Flow were trained using the torchdiffeq library\footnote{\url{https://github.com/rtqichen/torchdiffeq}}, and neural SDE and NLSB with the torchsde library\footnote{\url{https://github.com/google-research/torchsde}}. The settings common to all experiments for each method are described below. 

\subsubsection*{Neural SDE and NLSB}
We trained the drift and diffusion models of the standard neural SDE using only the Sinkhorn divergence $\overline{\mathcal{W}}_{\epsilon}$. We used the Euler-Maruyama method with the constant step size of $0.01$ as an SDE solver. Backpropagation was performed without using the adjoint method. 

\subsubsection*{TrajectoryNet}
The velocity model of TrajectoryNet includes three concatsquash layers with hyperbolic tangent activations. A concatsquash layer $\operatorname{cs}$ was defined in the released code of FFJORD~\citep{grathwohl2018ffjord} by:
\begin{align*}
    \operatorname{cs}(\mathbf{x}, t) = (\mathbf{W}_x \mathbf{x} + \mathbf{b}_x) \sigma (\mathbf{W}_t t + \mathbf{b}_t) + (\mathbf{W}_b t + \mathbf{b}_b t),
\end{align*}
where $\sigma$ is the sigmoid function, and $\mathbf{W}_x,\ \mathbf{W}_t,\ \mathbf{W}_b,\ \mathbf{b}_x,\ \mathbf{b}_t,\ \mathbf{b}_b$ are all learnable parameters.

The base models of TrajectoryNet was trained using only the negative log-likelihood loss in standard CNF scheme. +OT represents a model trained with the OT-based regularization $\tilde{\mathcal{R}}_e$ defined by~\cref{eq:ot-loss-L}. We set the interval-dependent coefficients $\tilde{\lambda}_e$ for the OT-based regularization as well as~\cref{eq:nlsb:empirical-loss}. For the ODE solver, the dopri5 solver with both absolute and relative tolerances set to $10^{-5}$ was used. 

\subsubsection*{OT-Flow}
The base models of OT-Flow was also trained with CNF scheme. We used both $\tilde{\mathcal{R}}_e$ and $\tilde{\mathcal{R}}_h$ with the interval-dependent coefficients $\tilde{\lambda}_e,\ \tilde{\lambda}_h$ in OT-Flow + OT. We used the Euler method as the ODE solver with a constant step size of $0.01$ and both absolute and relative tolerances set to $10^{-5}$.

\subsubsection*{IPF}
The drift model of IPF (GP) was changed to sparse GP with the exponential kernel from vanilla GP~\citep{vargas2021solving} to save computation cost. We selected $100$ inducing points using the K-means algorithm in IPF (GP). The SB problem was solved by using IPF algorithm with $15$ iterations. We used the Euler-Maruyama method as the SDE solver. The diffusion coefficients of IPF were tuned as hyperparameters. 

\subsubsection*{SB-FBSDE}
We employed alternating training and solved the classical SB problem specifying Brownian motion as the prior stochastic process and did not use collectors.

\subsection{Synthetic Population Dynamics: Time-Dependent Ornstein–Uhlenbeck Process}
\label{app:ou-sde}
We validated NLSB on artificial synthetic data generated from one-dimensional SDEs, the time-dependent Ornstein–Uhlenbeck process used in~\citep{kidger2021neural}. In this experiment, the predicted trajectory and uncertainty can be compared with the ground-truth and easily evaluated by visualization. The purpose of this experiment is to confirm that the NLSB works well on a simple linear SDE. 

\subsubsection{Details of Experimental Setup}
\label{app:ou-sde:impl}
We trained all models with a batch size of $512$ for each time point. The tuned weight coefficients are shown in~\cref{tab:ou-sde:weight}. 

\subsubsection*{Potential model in OT-Flow, neural SDE, NLSB, and IPF (NN)}
For the potential model $\Phi_{\theta}(\mathbf{x}, t)$, we set the number of ResNet layers $M=2$, the step size $h=1.0$, the rank of matrix $\operatorname{rank}(\mathbf{A})=10$, and the dimension of the hidden vector $\mathbf{z}$ to $2$. 

\subsubsection*{Neural SDE and NLSB}
We used a two-layer FCNN of a hidden dimension $16$ for the diffusion function $\mathbf{g}_{\phi}(\mathbf{x}, t)$. The activations functions were LipSwish. In NLSB, we used the Lagrangian for the potential-free system $L(t, \mathbf{x}, \mathbf{u}) = \frac{1}{2}||\mathbf{u}||^2$. 

\subsubsection*{TrajectoryNet}
We used the concatsquash layers of a hidden dimension $16$.

\subsubsection*{IPF}
We set the diffusion coefficients as follows.
\begin{align*}
\mathbf{g}(t) = \left\{
        \begin{array}{cc}
            0.2 & t \in [0.0, 1.0) \\
            0.6 & t \in [1.0, 2.0) \\
            1.0 & t \in [2.0, 3.0) \\
            1.4 & t \in [3.0, 4.0]
        \end{array}.
    \right.
\end{align*}
Other experimental settings have not been changed from~\citep{vargas2021solving}.

\begin{table}[tb]
\caption{Weight coefficients for regularization terms in experiments on synthetic data}
\centering
\resizebox{\columnwidth}{!}{
\begin{tabular}{lcccc}
    \toprule
    & $[t_0, t_1]$ & $[t_1, t_2]$ & $[t_2, t_3]$ & $[t_3, t_4]$ \\ \midrule
    $\lambda_e$, $\lambda_h$ for NLSB & $0.3$, $0.2$ & $0.1$, $0.01$ & $0.01$, $0.0001$ & $0.01$, $0.0001$  \\
    $\tilde{\lambda}_e$ for TrajectoryNet + OT & $0.1$ & $0.1$ & $0.001$ & $0.001$ \\
    $\tilde{\lambda}_e$, $\tilde{\lambda}_h$ for OT-Flow + OT & $0.1, 0.01$ & $0.1, 0.01$ & $0.001, 0.001$ & $0.001, 0.001$ \\
    \bottomrule
  \end{tabular}}
 \label{tab:ou-sde:weight}
\end{table}

\subsubsection{Results}
\label{app:ou-sde:result}
\begin{figure}[tb]
\centering
\begin{subfigure}[t]{0.49\hsize}
  \includegraphics[width=\linewidth]{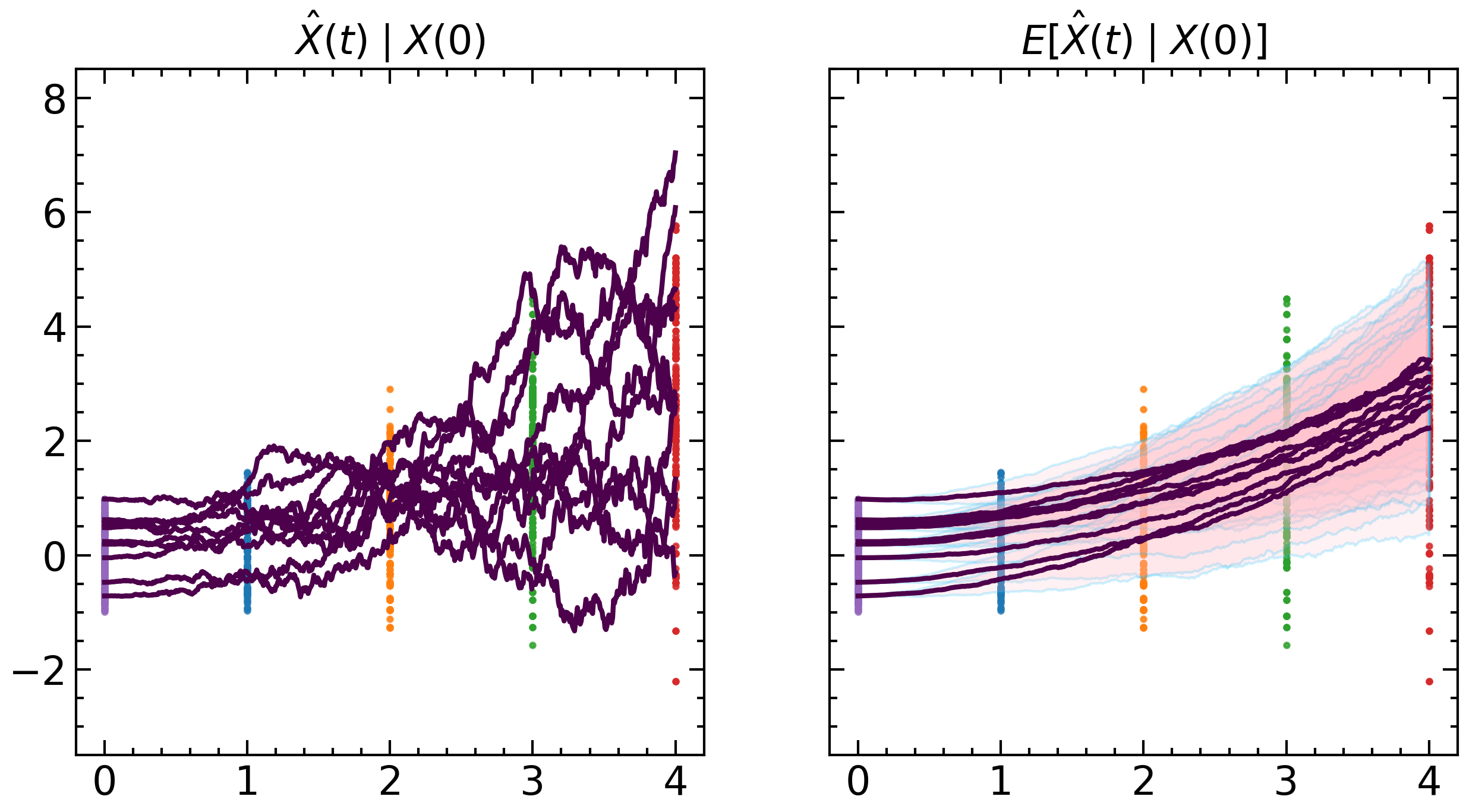}
  \caption{ground-truth SDE}
  \label{fig:ou-sde:ground-truth}
\end{subfigure}\hfil
\begin{subfigure}[t]{0.49\hsize}
  \includegraphics[width=\linewidth]{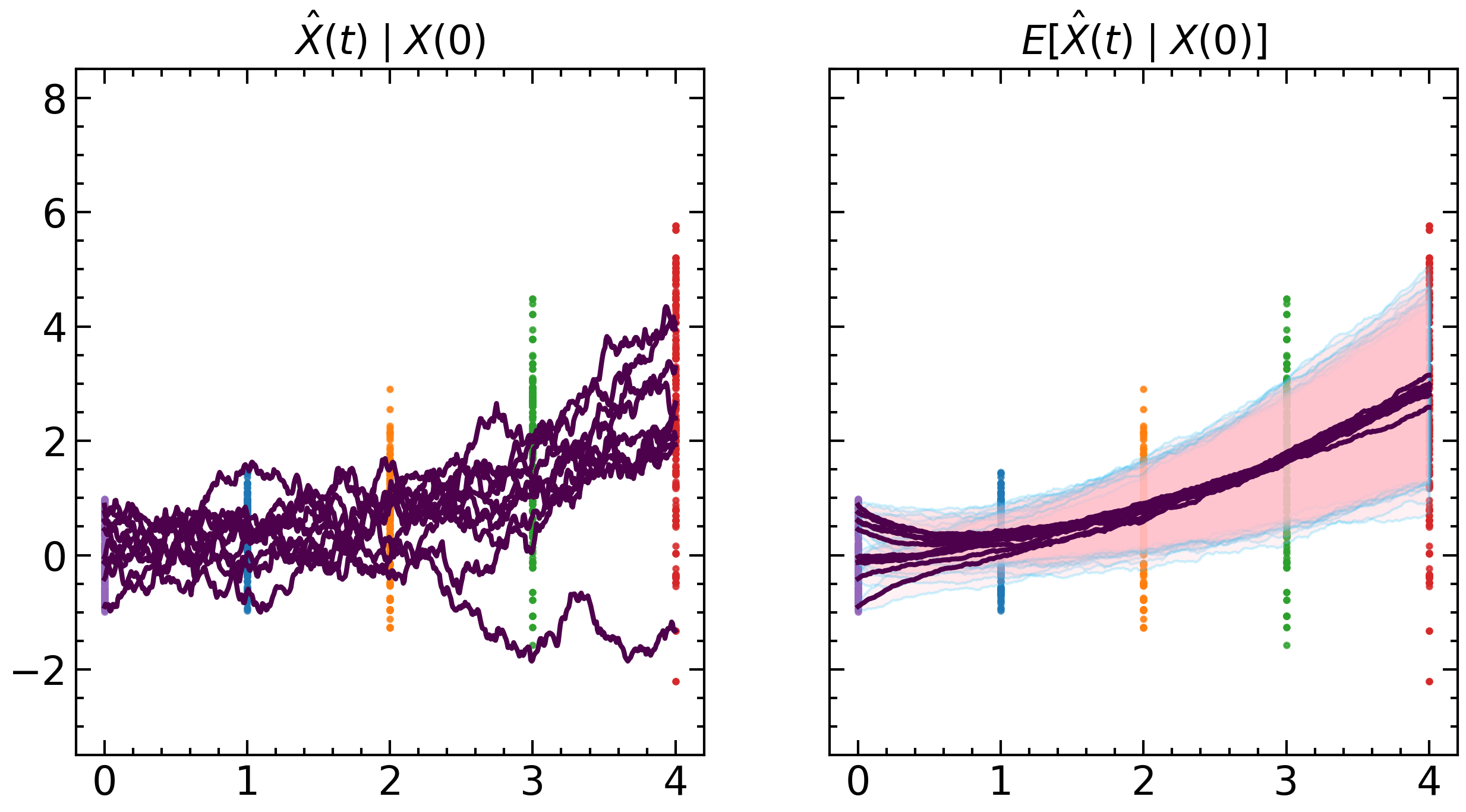}
  \caption{Neural SDE}
  \label{fig:ou-sde:neural-sde}
\end{subfigure}\hfil
\newline
\centering
\begin{subfigure}[t]{0.49\hsize}
  \includegraphics[width=\linewidth]{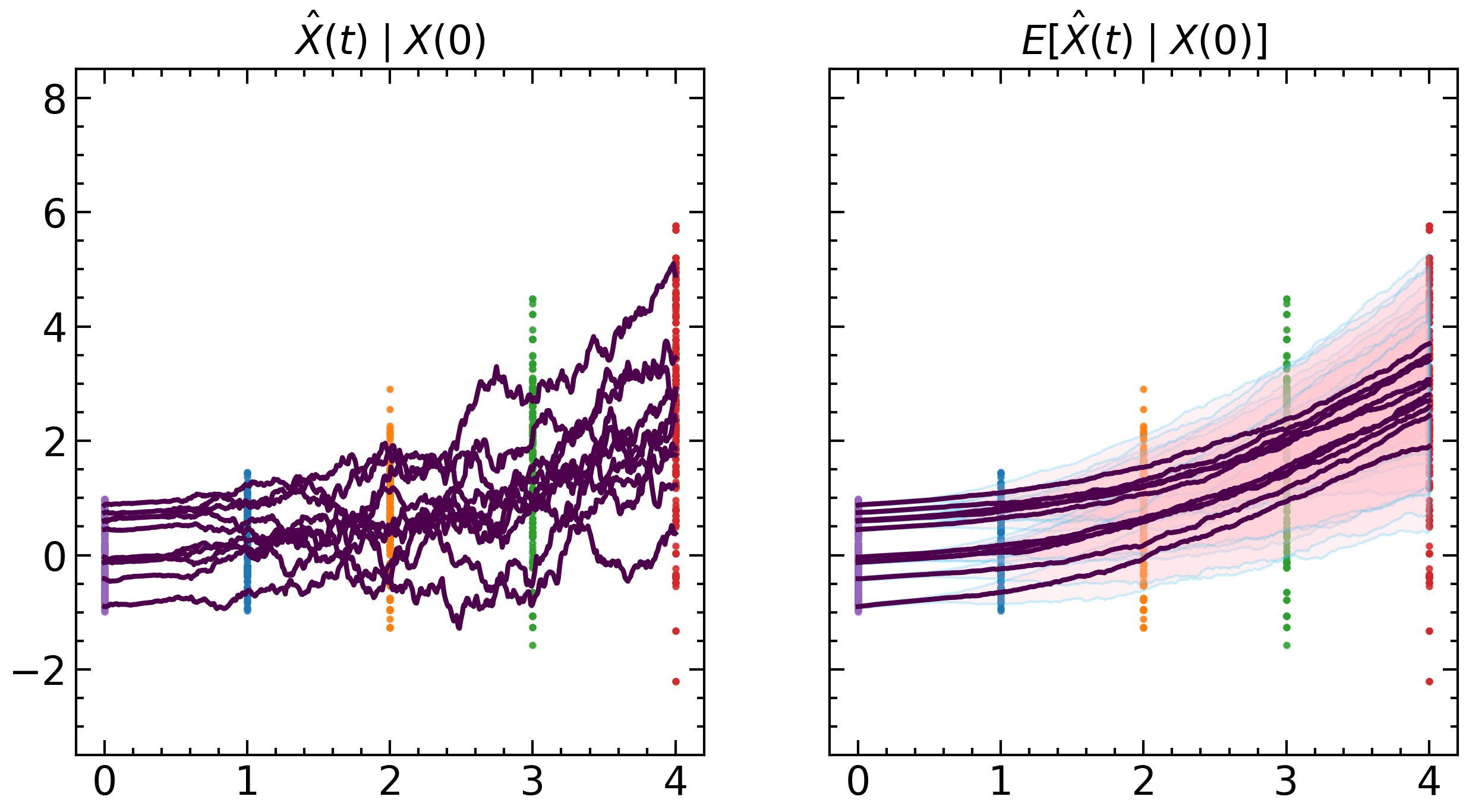}
  \caption{NLSB (Ours)}
  \label{fig:ou-sde:NLSB}
\end{subfigure}\hfil
\begin{subfigure}[t]{0.49\hsize}
  \includegraphics[width=\linewidth]{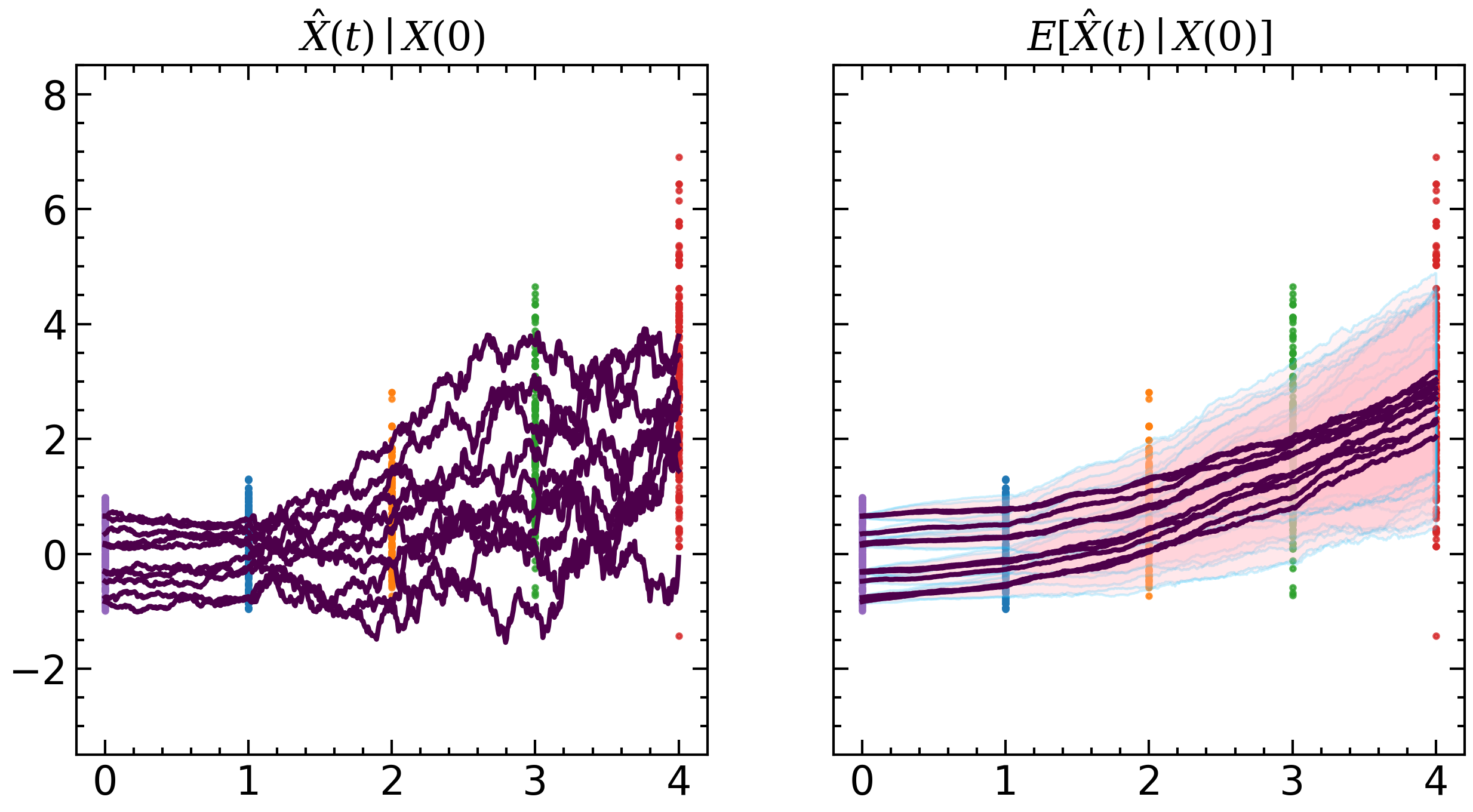}
  \caption{IPF (GP)}
  \label{fig:ou-sde:IPF(GP)}
\end{subfigure}\hfil
\newline
\begin{subfigure}[t]{0.49\hsize}
  \includegraphics[width=\linewidth]{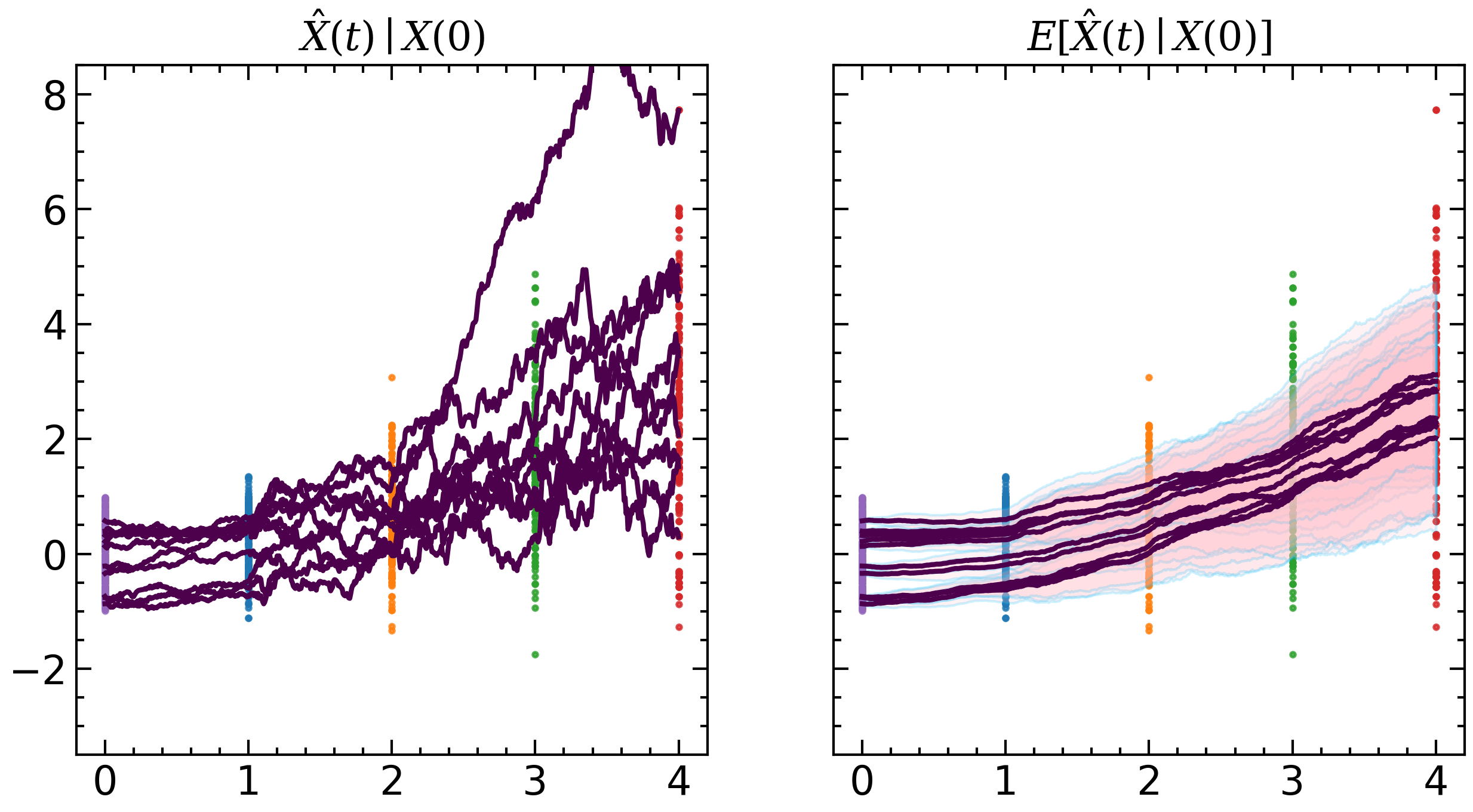}
  \caption{IPF (NN)}
  \label{fig:ou-sde:IPF(NN)}
\end{subfigure}\hfil
\begin{subfigure}[t]{0.25\hsize}
  \includegraphics[width=\linewidth]{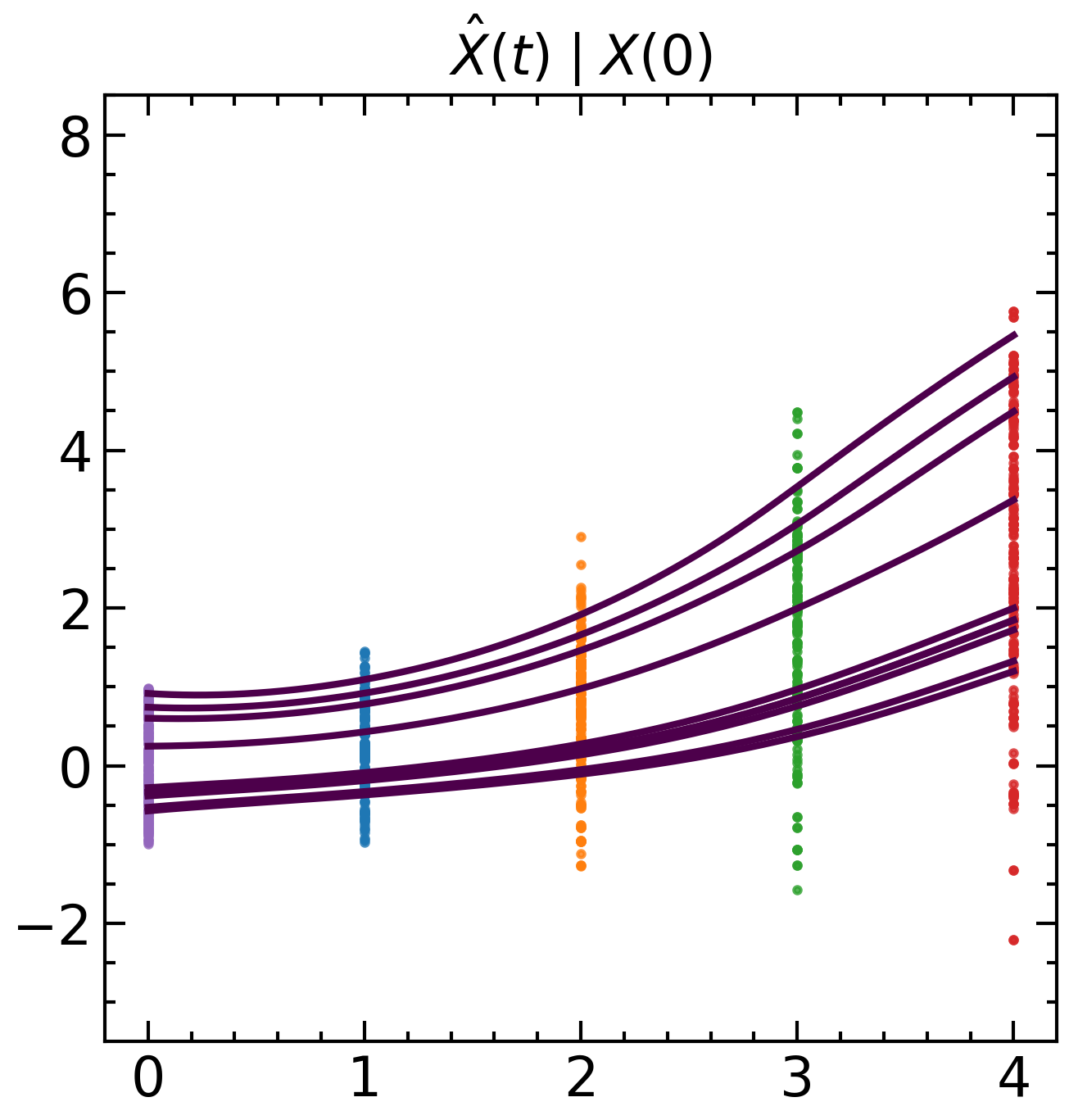}
  \caption{TrajectoryNet + OT}
  \label{fig:ou-sde:TrajectoryNet}
\end{subfigure}\hfil
\begin{subfigure}[t]{0.25\hsize}
  \includegraphics[width=\linewidth]{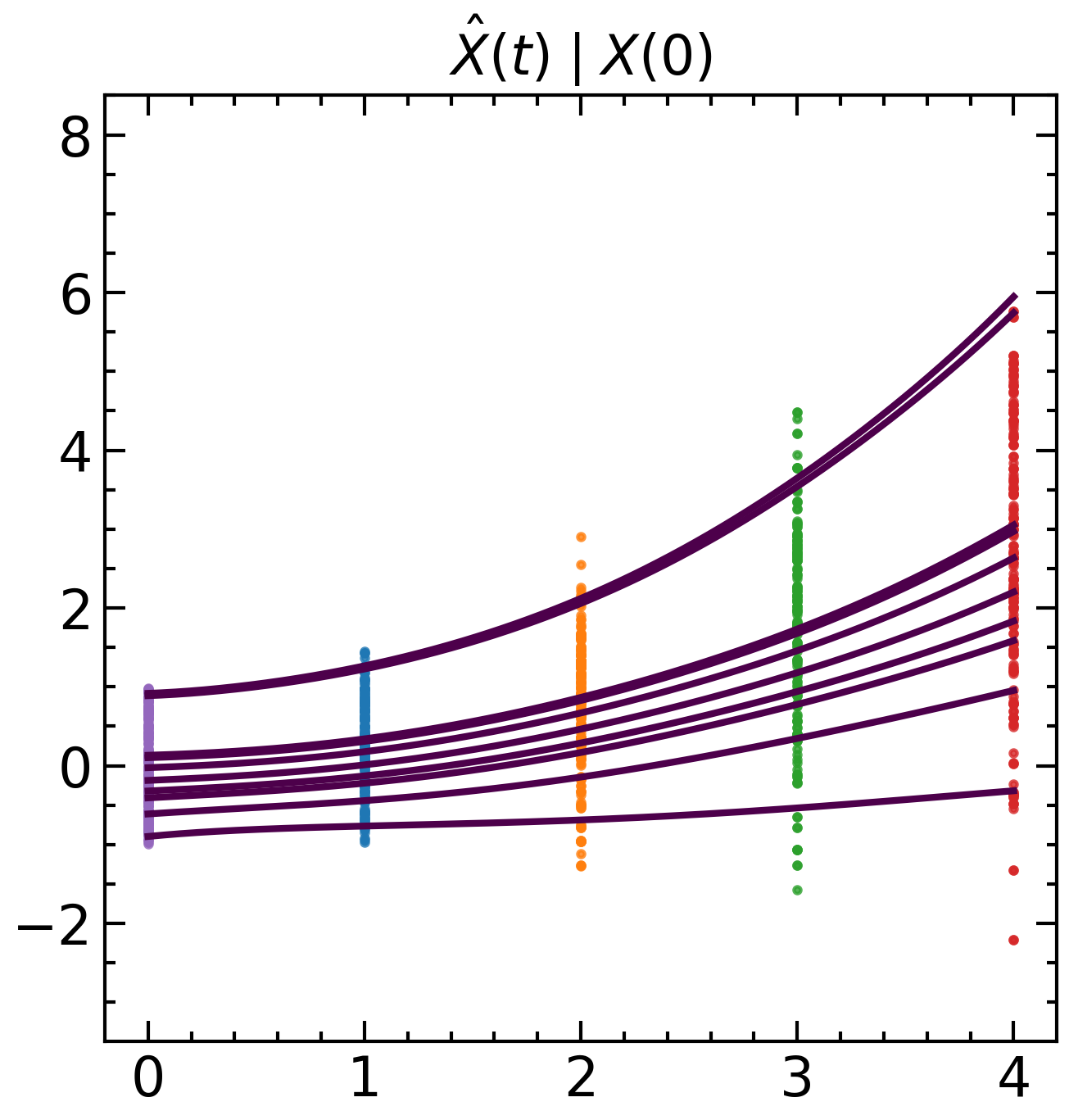}
  \caption{OT-Flow + OT}
  \label{fig:ou-sde:OT-Flow}
\end{subfigure}\hfil
\caption{1D OU process data and predictions.}
\label{fig:ou-sde:vis-full}
\end{figure}

\begin{figure}[tb]
\centering
\begin{subfigure}[t]{0.33\hsize}
  \includegraphics[width=\linewidth]{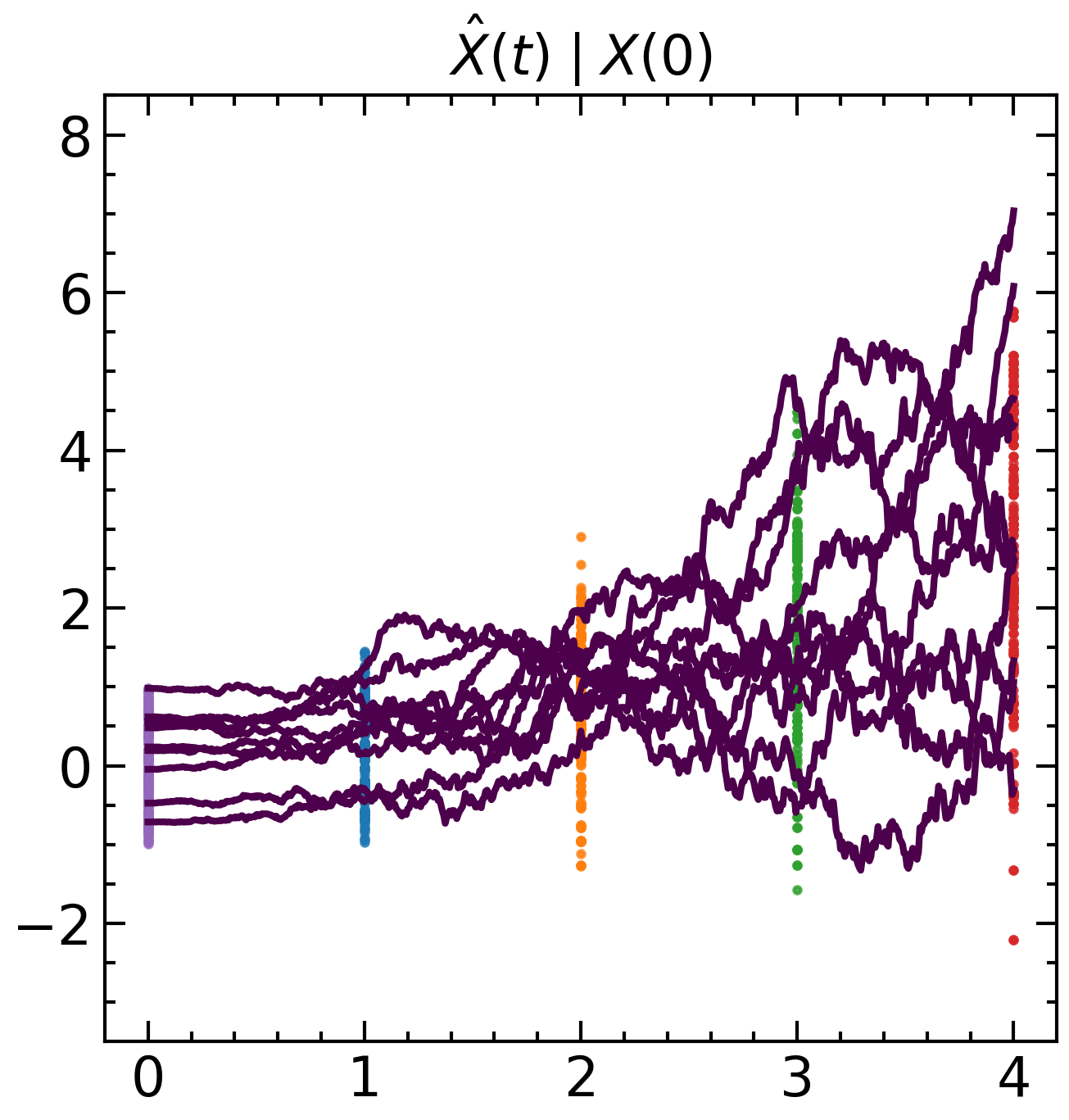}
  \caption{ground-truth forward SDE}
  \label{fig:ou-sde:forward-ground-truth}
\end{subfigure}\hfil
\begin{subfigure}[t]{0.33\hsize}
  \includegraphics[width=\linewidth]{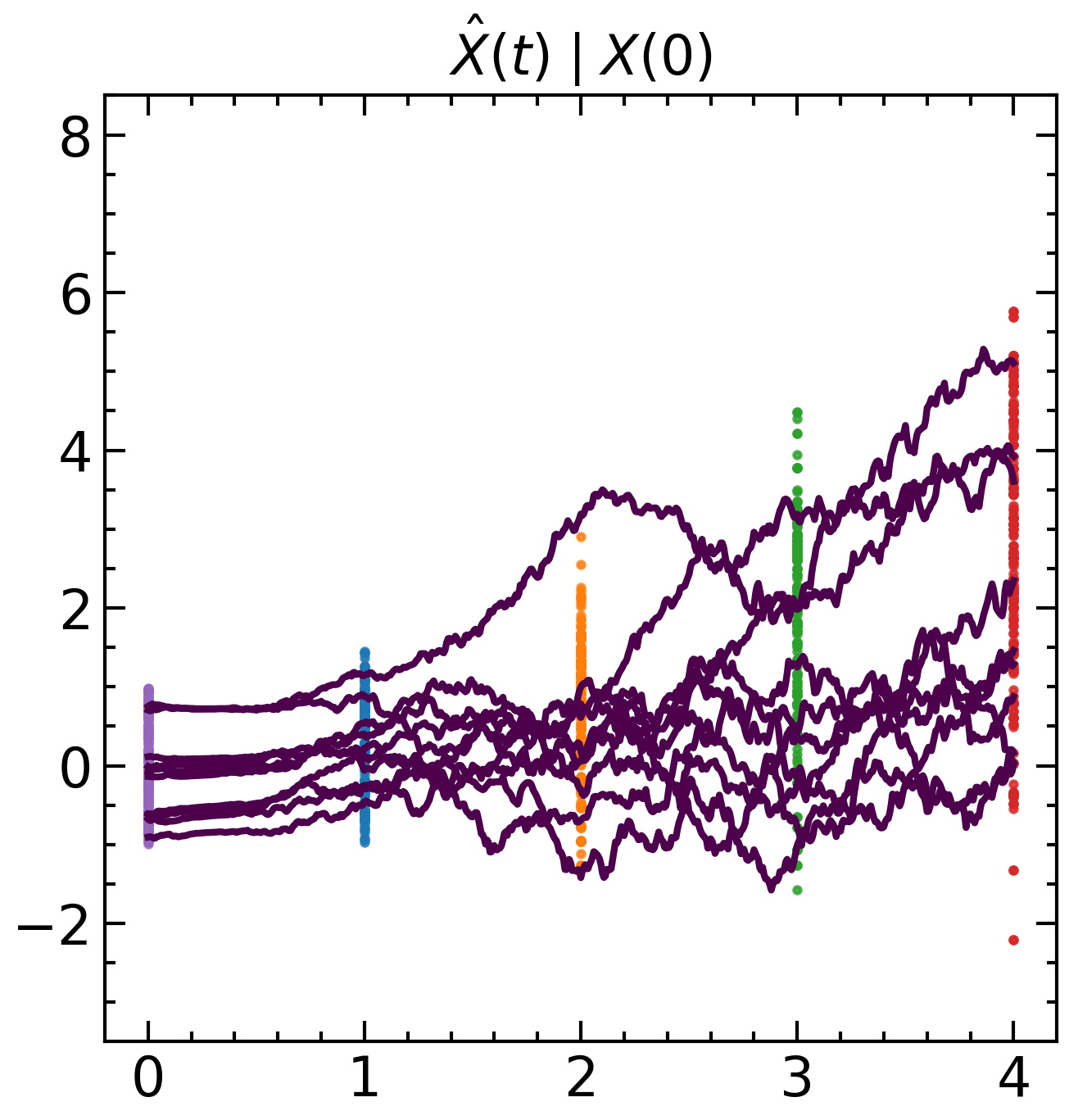}
  \caption{NLSB (Ours)}
  \label{fig:ou-sde:backward-NLSB}
\end{subfigure}\hfil
\begin{subfigure}[t]{0.33\hsize}
  \includegraphics[width=\linewidth]{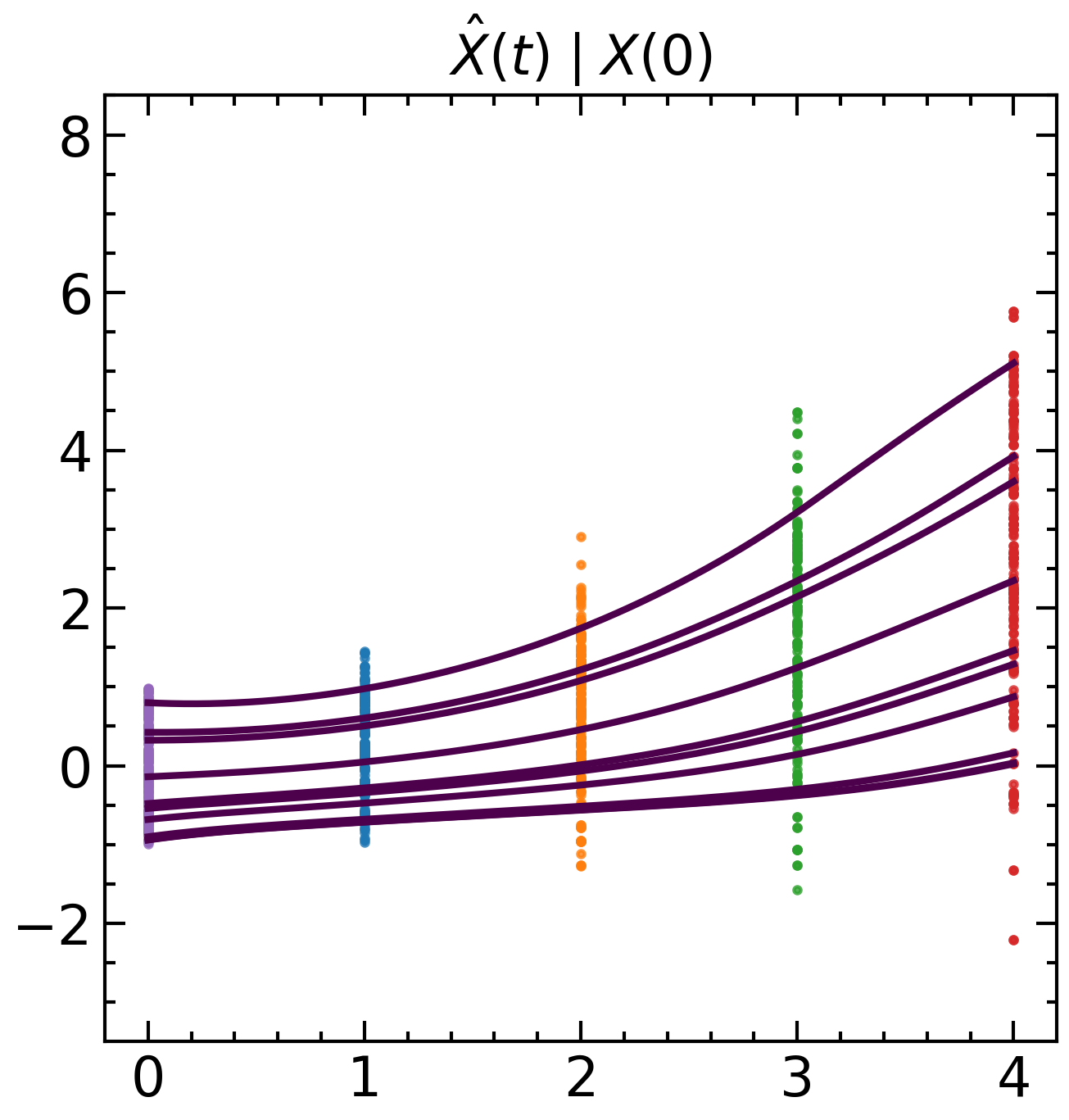}
  \caption{TrajectoryNet + OT}
  \label{fig:ou-sde:backward-TrajectoryNet}
\end{subfigure}\hfil\textbf{}
\caption{1D OU process data and predictions backwards through time.}
\label{fig:backward-ou-sde:vis}
\end{figure}

\begin{figure}[tb]
\centering
\includegraphics[width=0.5\hsize]{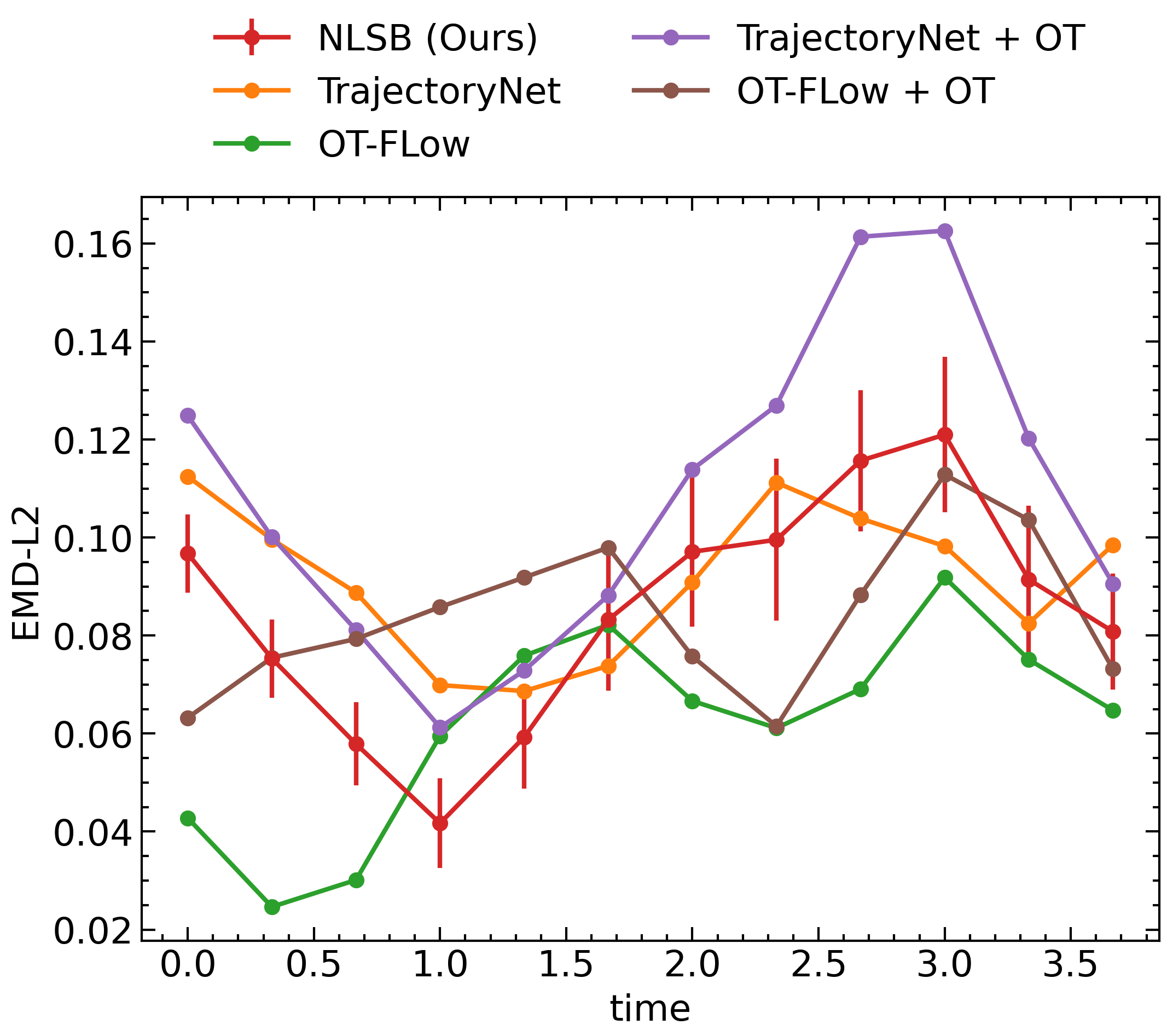}
\caption{Numerical evaluation for the reverse-time simulation on synthetic OU process data. All MDD values were computed between the ground-truth and the estimated samples within generated trajectories all-step behind from initial samples $\mathbf{x}(t_4)$.}
\label{fig:backward-ou-sde:quantitative-eval}
\end{figure}

Figure~\ref{fig:ou-sde:vis-full} shows the visualization of both the original trajectory and the averaged trajectory by the SDE-based methods (neural SDE, NLSB, and IPF) and the only original trajectory by the ODE-based methods (TrajectoryNet and OT-Flow). All trajectories were generated by all-step prediction from the initial samples at the time $t=0$. The five colored point clouds in the background are the ground-truth data given at each time point. The pink area and the light blue line are the one-sigma empirical confidence intervals and their boundaries for each trajectory, respectively. 

Figure~\ref{fig:ou-sde:quantitative-eval} indicates that NLSB and IPF outperform neural SDE and is comparable to other ODE-based methods in estimating populations with small variance. In contrast, the SDE-based methods outperform ODE-based methods when estimating populations with a large variance. That indicates that NLSB and IPF can estimate population-level dynamics even when the population variance is large or small. Furthermore, NLSB and IPF have a smaller CDD value than neural SDE. Figure~\ref{fig:ou-sde:neural-sde} shows that the average behavior of samples $\mathbb{E}[X(t) | X(0)]$ estimated by neural SDE is different from that of the ground-truth SDE (see \cref{fig:ou-sde:ground-truth}), especially in the interval $[0, 1]$. In contrast, the predictions by NLSB and IPF in~\cref{fig:ou-sde:NLSB,fig:ou-sde:IPF(GP),fig:ou-sde:IPF(NN)} are much closer to the ground-truth. These results show that the prior knowledge of the potential-free system helps to estimate the sample-level dynamics.

Note that the LSB problem solved by NLSB with the Lagrangian $L=\frac{1}{2}\|\mathbf{u}\|^2$ and the SB problem solved by IPF are almost mathematically equivalent (see~\cref{app:sot2fixed,app:lsb}). The result that NLSB shows comparable performance to IPF, even though NLSB is trained differently from IPF, indicates that NLSB is a unified framework and can deal with SB problems as a special case. 

The quantitative evaluation results of the reverse-time ODE/SDE using MDD are shown in~\cref{fig:backward-ou-sde:quantitative-eval}, and the visualization of trajectories is shown in~\cref{fig:backward-ou-sde:vis}. All trajectories from NLSB and TrajectoryNet were generated by all-step prediction from the initial samples at the time $t=4$ to $t=0$. Experimental results show that the proposed method described in~\cref{app:nlsb:training-reverse-sde} successfully recovers the population-level dynamics of the reverse-time SDE. The development of appropriate evaluation methods for sample-level dynamics of the reverse-time SDE is included in future work.

\subsection{Single-Cell Population Dynamics}
\label{app:rna}
We evaluated our method on the time-evolution of single-cell populations obtained from a developing human embryo system. In this experiment, we presented a new quantitative evaluation metric in a practical setting and validated the effectiveness of the NLSB on real data of single-cell population. 

\subsubsection{Details of Dataset}
\label{app:rna:dataset}
We evaluated our method on embryoid body scRNA-seq data~\citep{moon2019visualizing}, which is also used in~\citep{tong2020trajectorynet,vargas2021solving,bunne2021jkonet,bunne2022recovering}. This data shows the differentiation of human embryonic stem cells from embryoid bodies into diverse cell lineages, including mesoderm, endoderm, neuroectoderm, and neural crest, over $27$ days. During this period, cells were collected at five different snapshots ($t_0$: day $0$ to $3$, $t_1$: day $6$ to $9$, $t_2$: day $12$ to $15$, $t_3$: day $18$ to $21$, $t_4$: day $24$ to $27$). The collected cells were then measured by scRNAseq, filtered at the quality control stage, and mapped to a low-dimensional feature space using a principal component analysis (PCA). For details, see Appendix E.2 in~\citep{tong2020trajectorynet}. We reused the pre-processed data available in the released repository of TrajectoryNet~\cref{rep:trajectorynet} and split the data into $200$ samples ($\sim 8.5 \%$) of validation data, $350$ samples ($\sim 15\%$) of test data, and the rest as train data for each time point. The scRNA-seq data are licensed under Creative Commons Attribution 4.0 International license.

\subsubsection{Details of Experimental Setup}
\label{app:rna:impl}
We trained all models with a batch size of $1000$ for each time point and used the early stopping method, which monitors the EMD-L2 value on the validation data. The tuned weight coefficients are shown in~\cref{tab:rna:weight}. We used the same potential model $\Phi_{\theta}$ in OT-Flow, neural SDE, NLSB and IPF (NN) with the same hyperparameters described in~\cref{app:ou-sde:impl}. In the following, we describe the different settings from the experiment in~\cref{sec:ou-sde}.

\begin{table}[tb]
\caption{Weight coefficients for regularization terms in experiments on scRNA-seq data}
\centering
\resizebox{\columnwidth}{!}{
\begin{tabular}{lcccc}
    \toprule
    & $[t_0, t_1]$ & $[t_1, t_2]$ & $[t_2, t_3]$ & $[t_3, t_4]$ \\ \midrule
    $\lambda_e$, $\lambda_h$ for NLSB (E) & $0.1$, $0.01$ & $0.01$, $0.01$ & $0.001$, $0.0001$ & $0.01$, $0.001$  \\
    $\lambda_e$, $\lambda_h$ for NLSB (V) & $0.01$, $0.001$ & $0.01$, $0.001$ & $0.01$, $0.001$ & $0.01$, $0.001$  \\
    $\lambda_e$, $\lambda_h$ for NLSB (D) & $0.01$, $0.001$ & $0.01$, $0.001$ & $0.01$, $0.001$ & $0.01$, $0.001$  \\
    $\lambda_e$, $\lambda_h$ for NLSB (E+D+V) & $0.01$, $0.001$ & $0.001$, $0.001$ & $0.001$, $0.001$ & $0.001$, $0.001$  \\
    $\tilde{\lambda}_e$ for TrajectoryNet + OT& $0.01$ & $0.01$ & $0.1$ & $0.1$ \\
    $\tilde{\lambda}_e$, $\tilde{\lambda}_h$ for OT-Flow + OT & $0.01, 0.01$ & $0.01, 0.01$ & $0.001, 0.01$ & $0.001, 0.01$ \\
    \bottomrule
  \end{tabular}}
 \label{tab:rna:weight}
\end{table}

\subsubsection*{NLSB}
We used the Lagrangian for the cellular system and compared several combinations of the regularization terms. In~\cref{tab:rna:mdd,tab:rna:cdd}, ``E" is the energy term, ``D" is the density term, and ``V" is the velocity term. The density term $U(\mathbf{x}, t)$ is the log-likelihood function of the data estimated by GMM. 
For the calculation of the density regularization term, the time-dependent density function $U(\mathbf{x}, t)$ was defined by 
\begin{align*}
    U(\mathbf{x}, t) = c \log p(\mathbf{x}; \Theta_t), \quad \Theta_t = \{ \mu^{(t)}_m, \Sigma^{(t)}_m \}_{m=1}^{M_t},
\end{align*}
where $c$ is a hyperparameter to change the scale, and $\mu^{(t)}_m, \Sigma^{(t)}_m$ are mean and variance parameters of the mixed Gaussian distribution, respectively.
When $t \in [t_k, t_{k+1}]$, the parameters $\Theta_t$ were estimated with the data at $t_k$ and $t_{k+1}$ by GMM and the number of mixture components $M_t$ was determined by the value of Bayesian information criterion (BIC). The hyperparameter $c$ was searched among $\{0.1, 1.0, 10.0 \}$ and we set $c=10.0$ for NLSB (D) and $c=0.1$ for NLSB (E+D+V). For the calculation of the velocity regularization term, we used the same reference velocity as those used in TrajectoryNet~\citep{tong2020trajectorynet}. 

\subsubsection*{TrajectoryNet}
We used the velocity model with $64$ hidden dimensions.

\subsubsection*{IPF}
We set the diffusion coefficients to $g(t) = 0.5 \ \left(t \in [0.0, 1.0) \right),\ 1.0 \ \left(t \in [1.0, 4.0] \right)$. Other experimental settings have not been changed from~\citep{vargas2021solving}.

\subsubsection*{SB-FBSDE}
We set the diffusion coefficients to $g(t) = 0.5 \ \left(t \in [0.0, 3.0) \right),\ 0.1 \ \left(t \in [3.0, 4.0] \right)$. 

\subsubsection{Results}
\begin{figure}[tb]
\centering
  \includegraphics[width=\columnwidth]{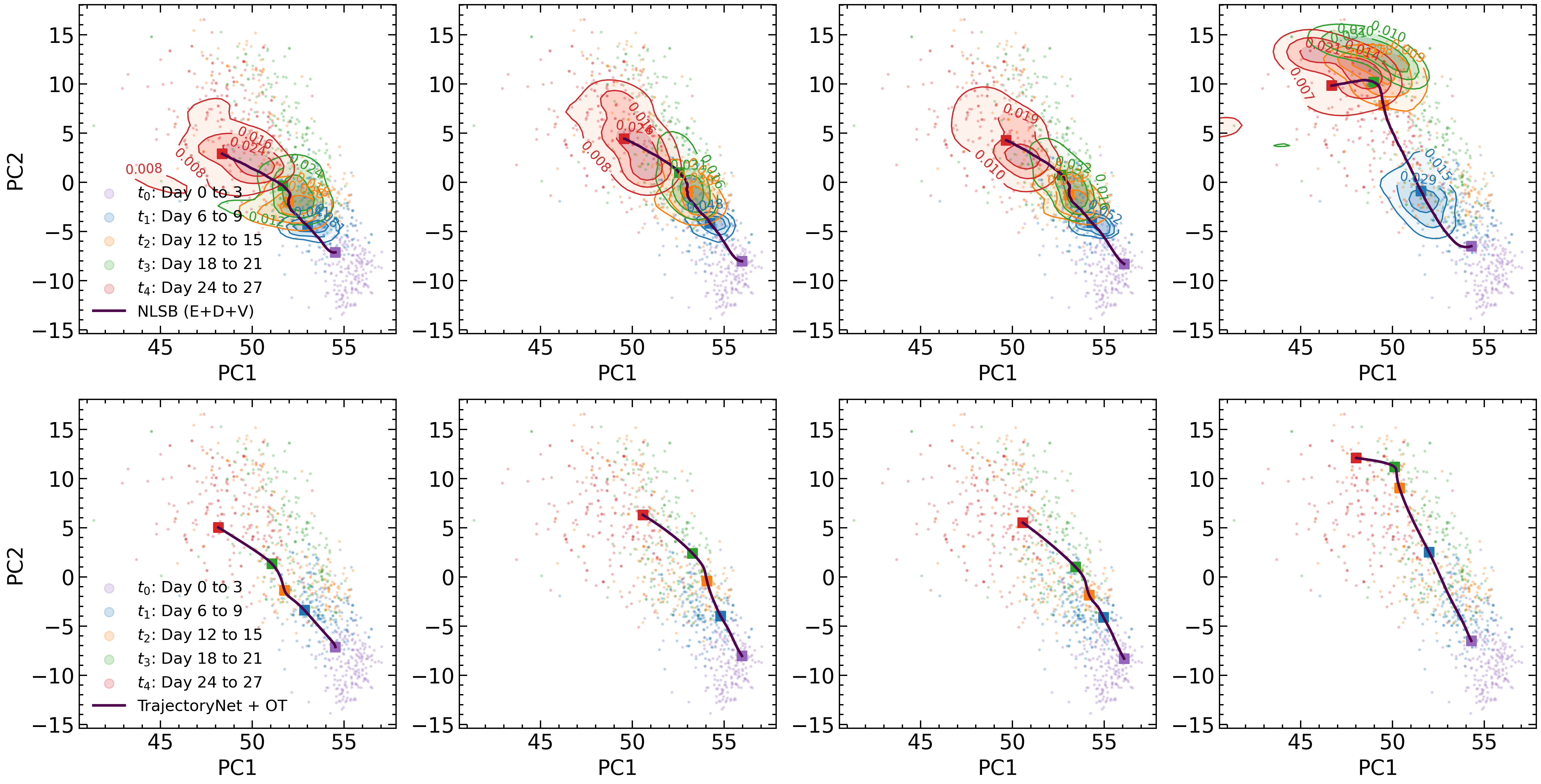}
\caption{Visualization of the time evolution for a single sample on scRNA-seq data. The upper and lower images are predictions from the same initial sample at $t_0$, with the upper row predicted by NLSB and the lower row predicted by TrajectoryNet + OT. The color gradients depict the magnitude of the probability density. The probability density function is estimated by GMM with five mixture components.}
\label{fig: visualization of the conditional distribution}
\end{figure}

\begin{figure}[tb]
\centering
\begin{subfigure}[H]{\hsize}
\centering
  \includegraphics[width=\columnwidth]{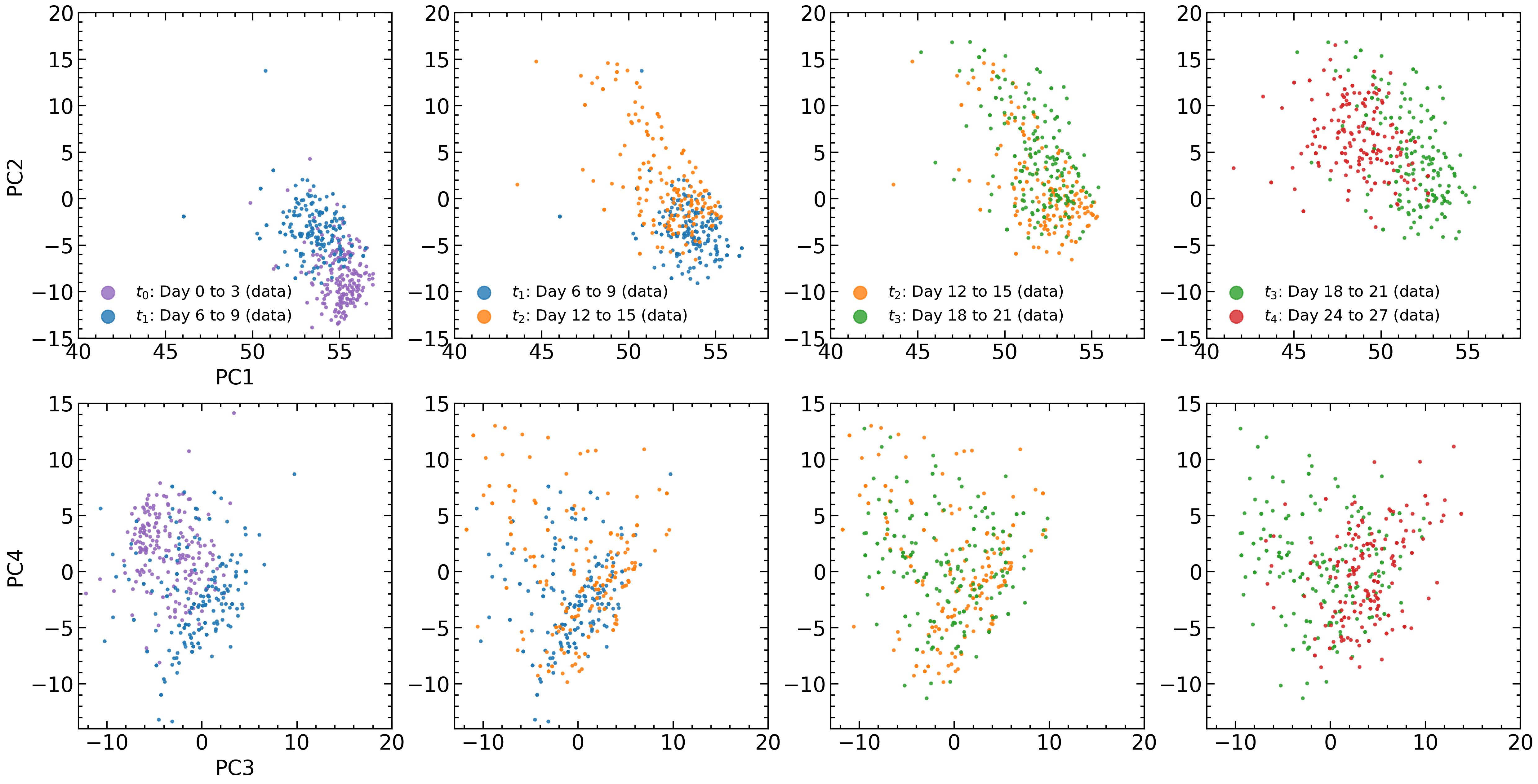}
  \caption{Ground-truth data}
\end{subfigure}\hfil
\end{figure}

\begin{figure}\ContinuedFloat
\begin{subfigure}[H]{\hsize}
\centering
  \includegraphics[width=\columnwidth]{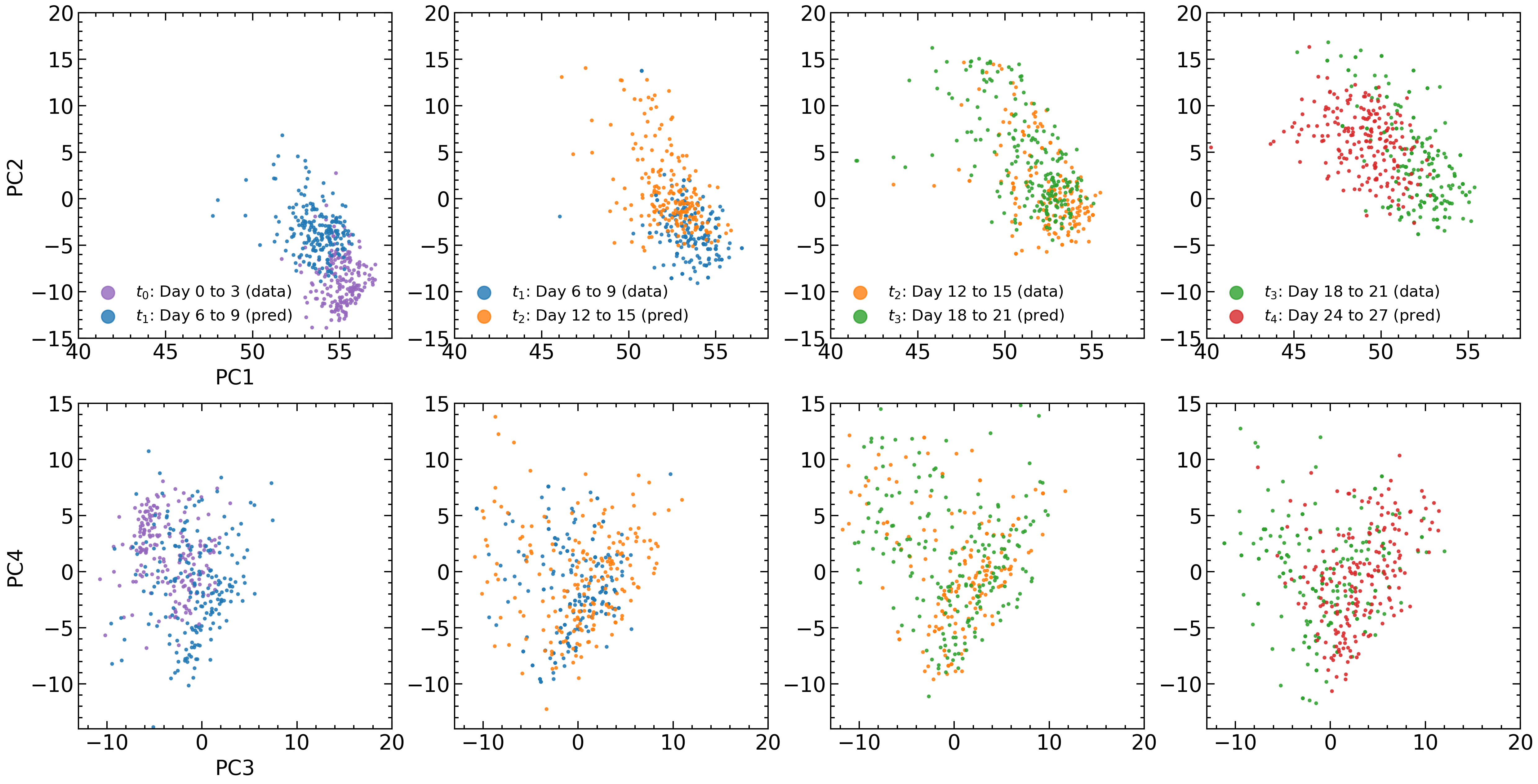}
  \caption{NLSB (E+D+V)}
\end{subfigure}\hfil
\medskip
\begin{subfigure}[H]{\hsize}
  \includegraphics[width=\columnwidth]{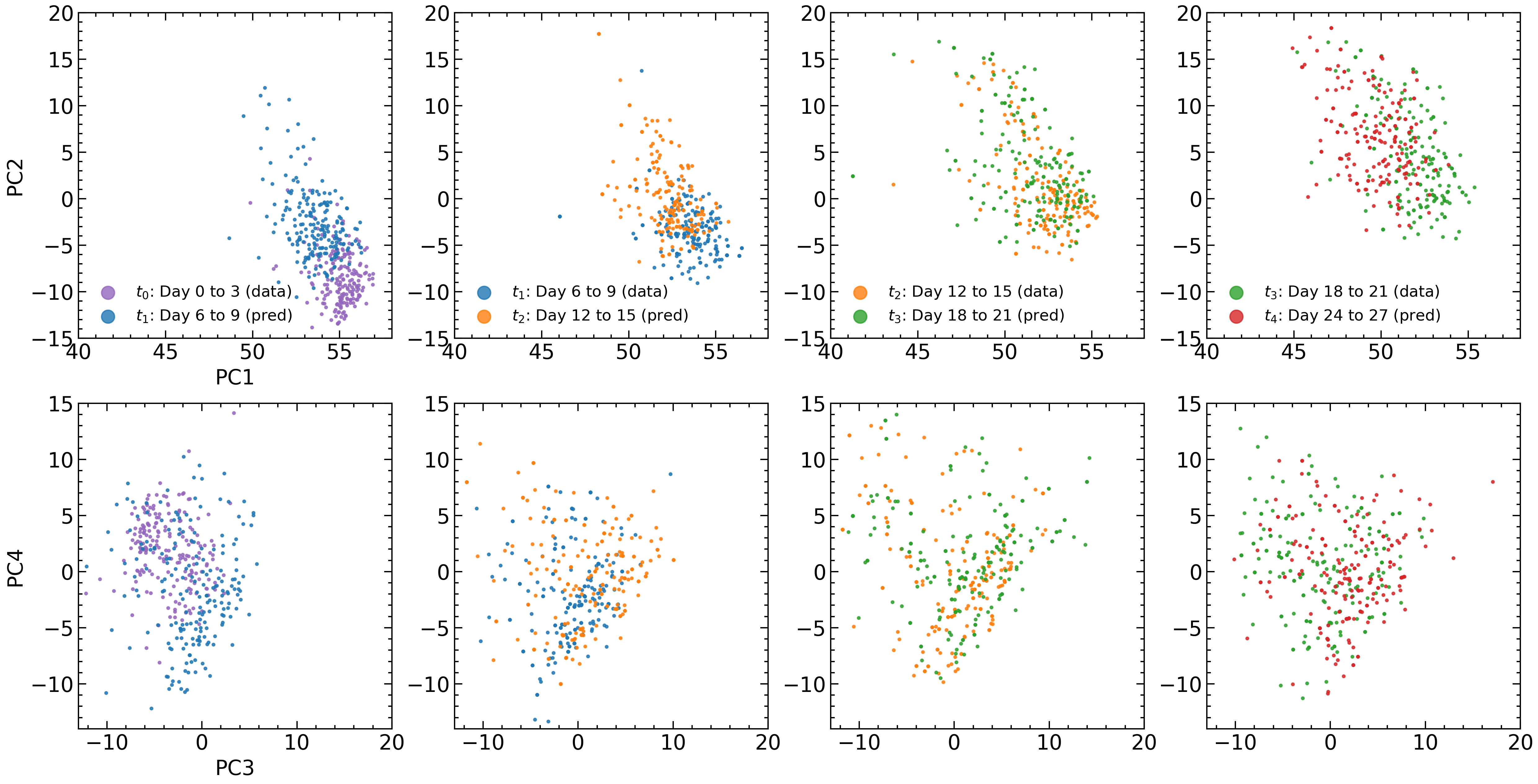}
  \caption{TrajectoryNet + OT}
\end{subfigure}\hfil
\caption{The ground-truth data and one-step ahead sample prediction on scRNA-seq data.}
\label{fig: visualization of the predicted samples}
\end{figure}

\begin{figure}[tb]
\centering
\begin{subfigure}[H]{\hsize}
\centering
  \includegraphics[width=0.7\columnwidth]{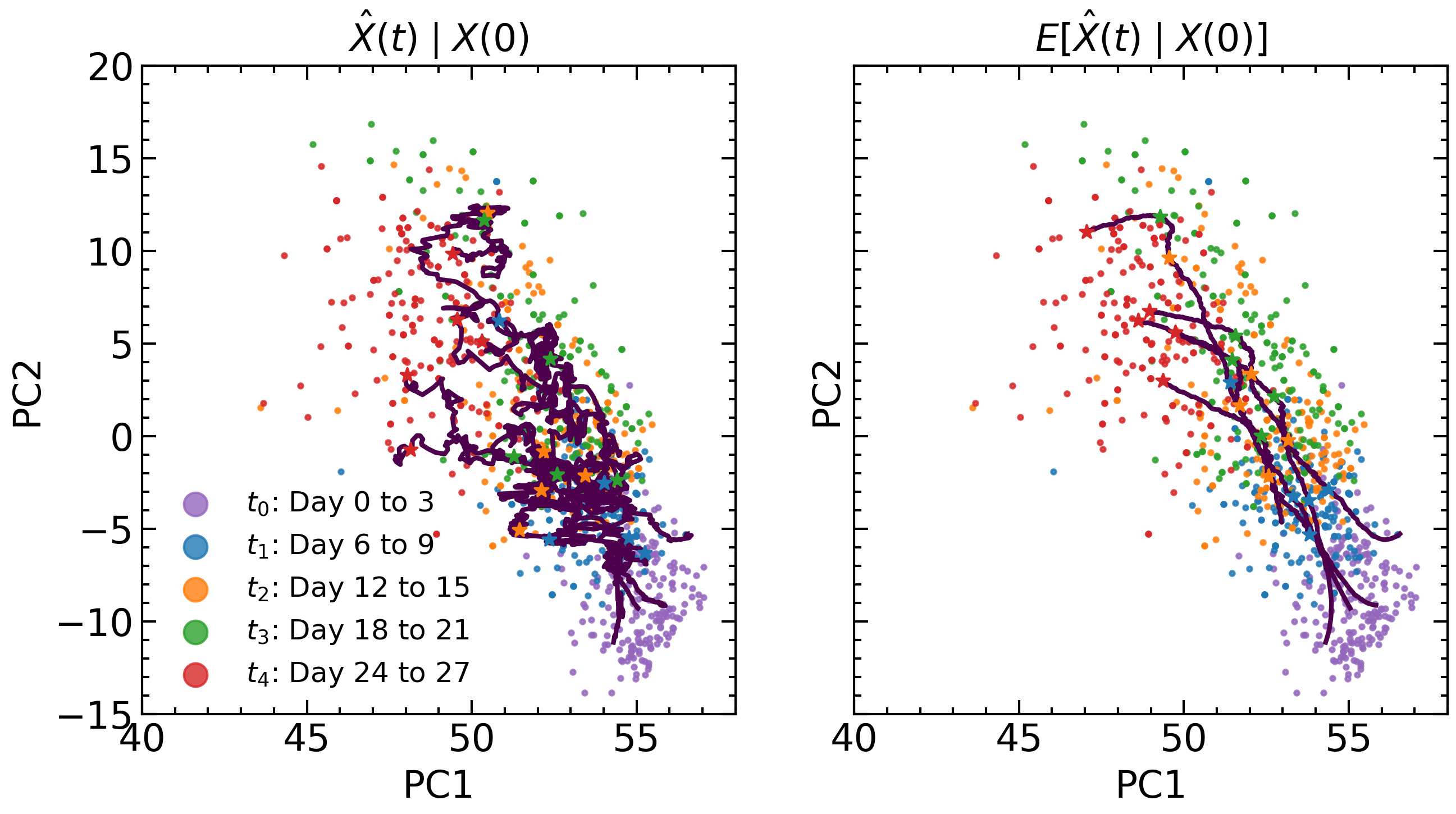}
  \caption{NLSB (E+D+V)}
\end{subfigure}\hfil
\newline
\centering
\begin{subfigure}[H]{0.5\hsize}
  \includegraphics[width=\columnwidth]{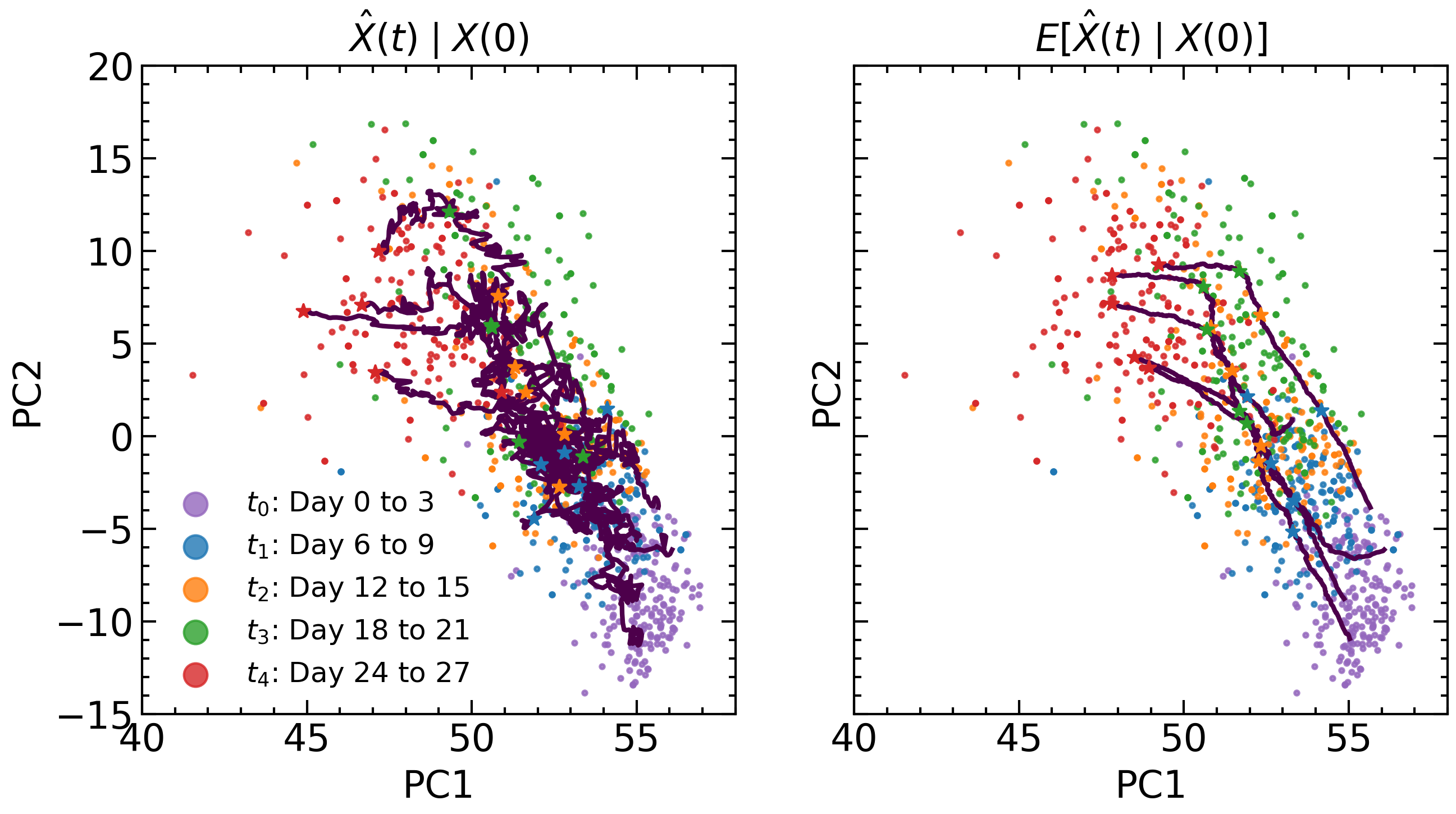}
  \caption{Neural SDE}
\end{subfigure}\hfil
\begin{subfigure}[H]{0.5\hsize}
  \includegraphics[width=\columnwidth]{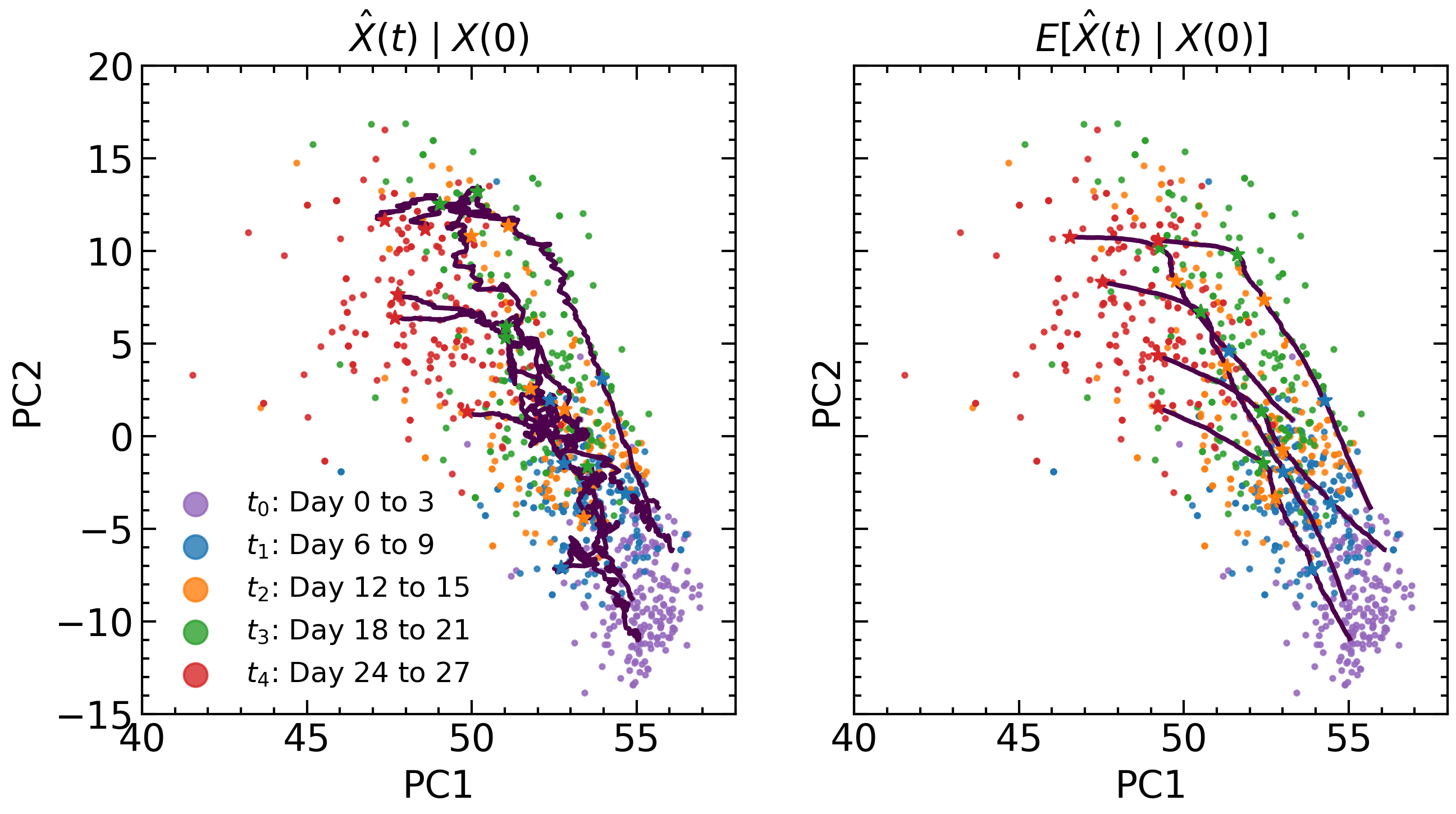}
  \caption{NLSB (E)}
\end{subfigure}\hfil
\newline
\centering
\begin{subfigure}[H]{0.5\hsize}
  \includegraphics[width=\columnwidth]{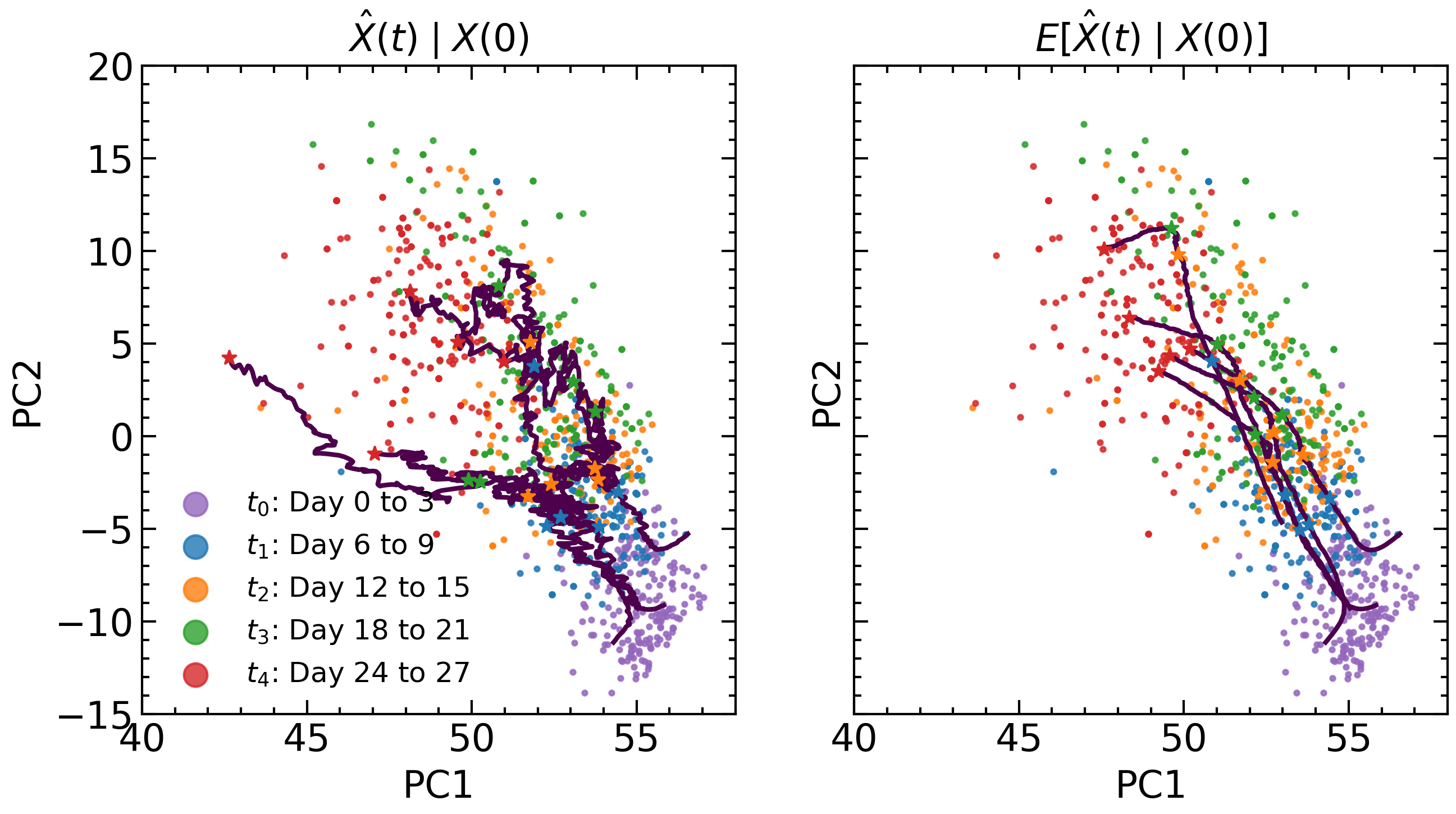}
  \caption{NLSB (D)}
\end{subfigure}\hfil
\begin{subfigure}[H]{0.5\hsize}
  \includegraphics[width=\columnwidth]{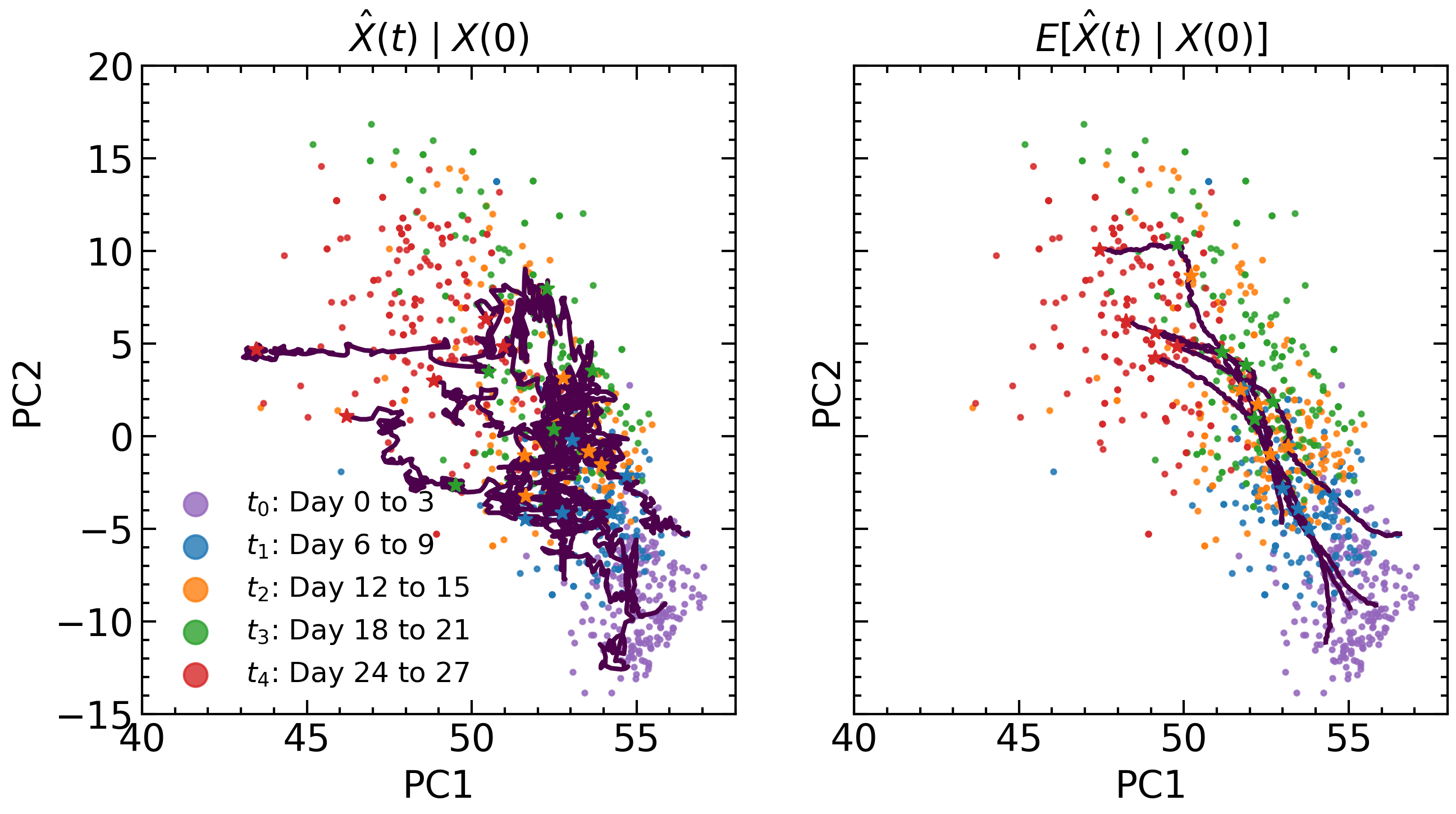}
  \caption{NLSB (V)}
\end{subfigure}\hfil
\newline
\centering
\begin{subfigure}[H]{0.25\hsize}
  \includegraphics[width=\columnwidth]{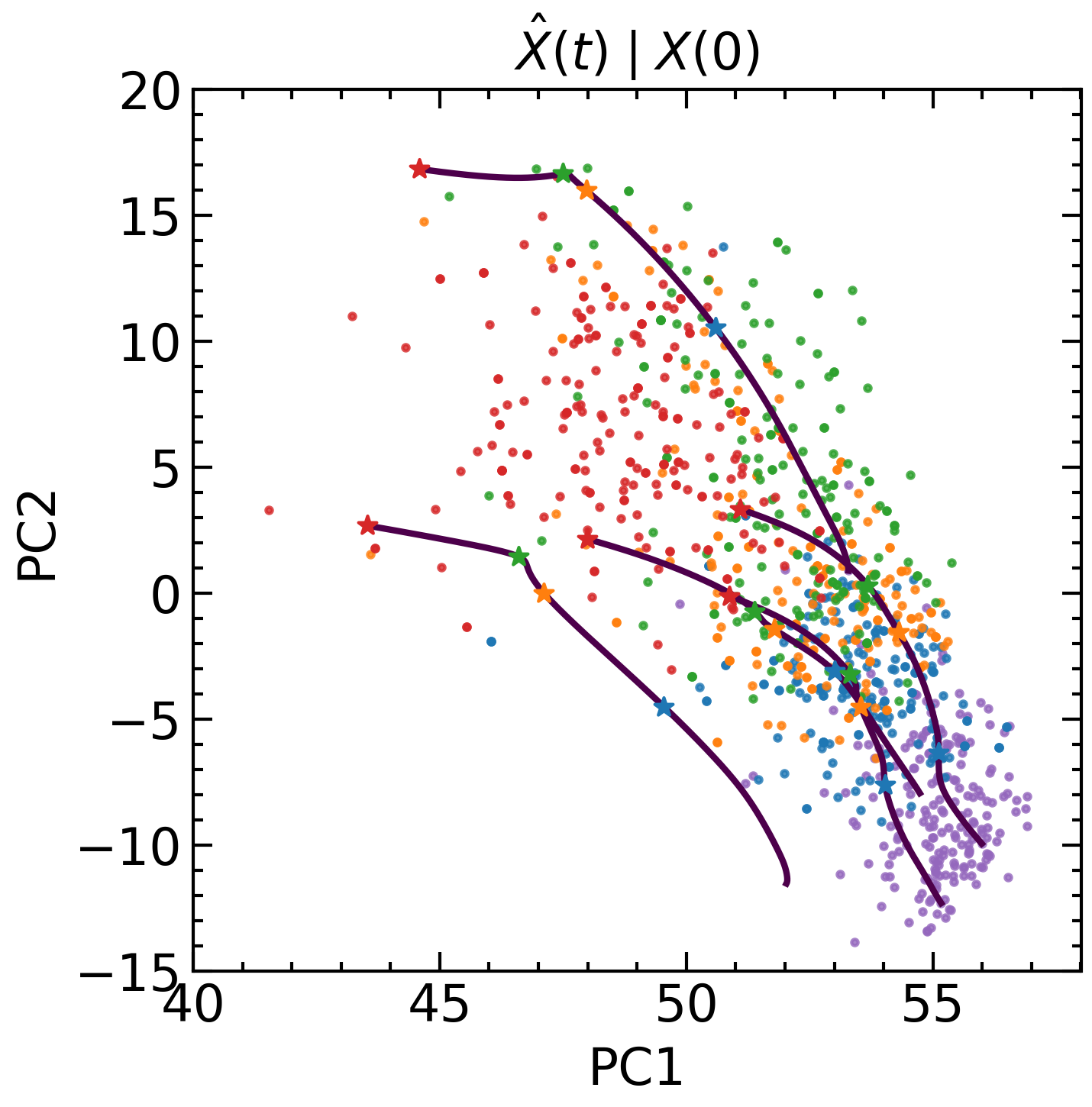}
  \caption{TrajectoryNet}
\end{subfigure}\hfil
\begin{subfigure}[H]{0.25\hsize}
  \includegraphics[width=\columnwidth]{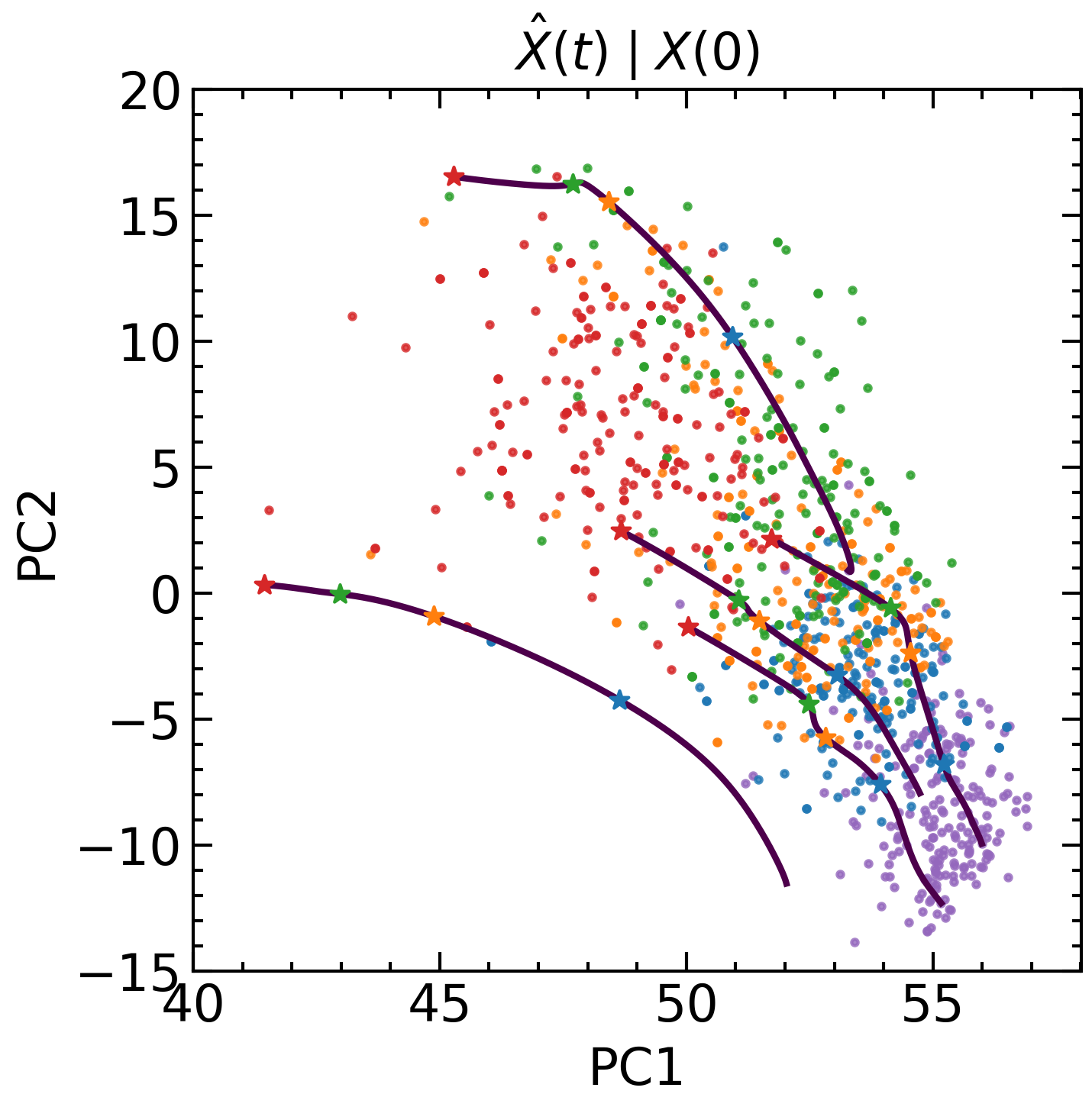}
  \caption{TrajectoryNet + OT}
\end{subfigure}\hfil
\begin{subfigure}[H]{0.25\hsize}
  \includegraphics[width=\columnwidth]{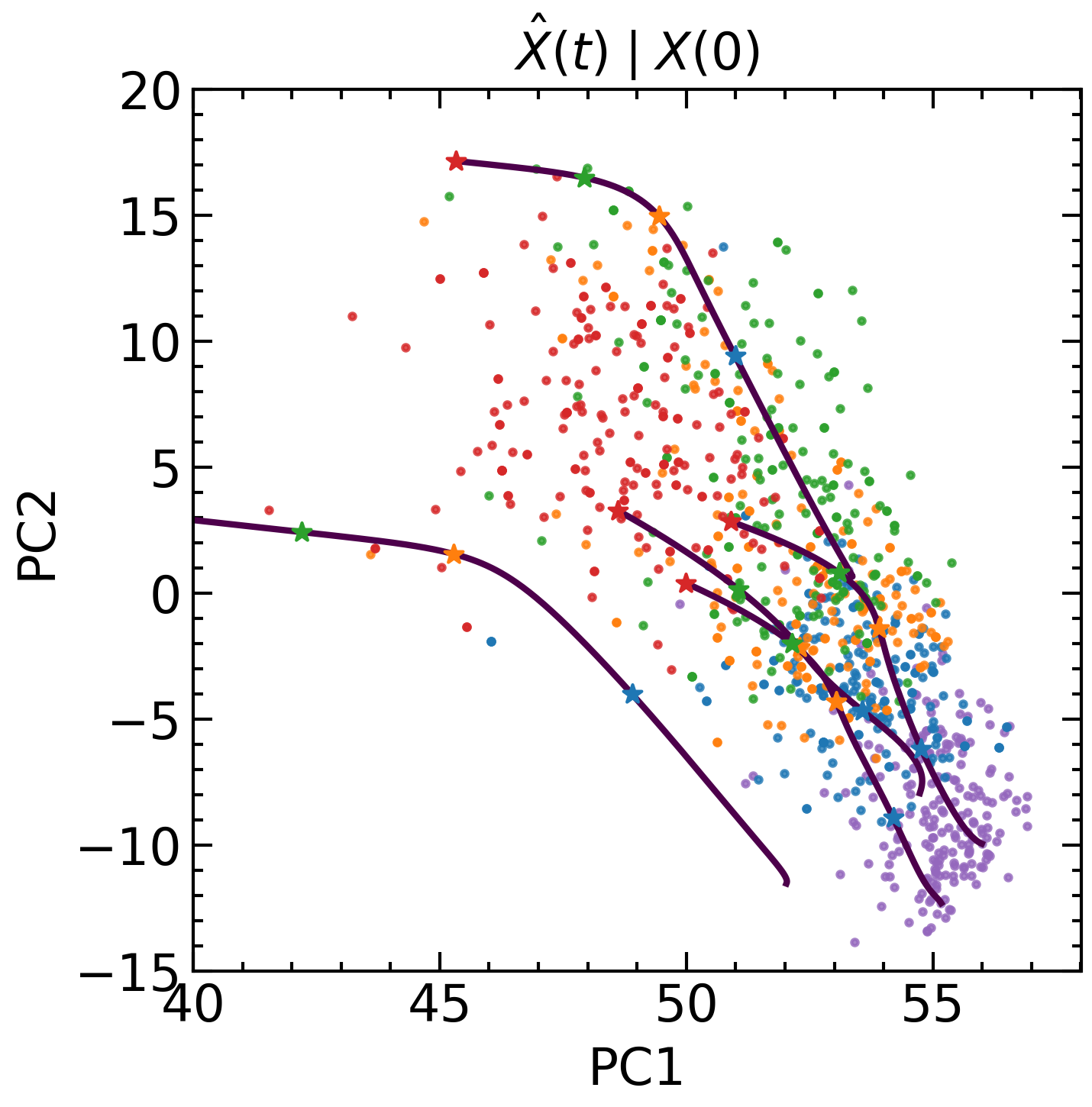}
  \caption{OT-Flow}
\end{subfigure}\hfil
\begin{subfigure}[H]{0.25\hsize}
  \includegraphics[width=\columnwidth]{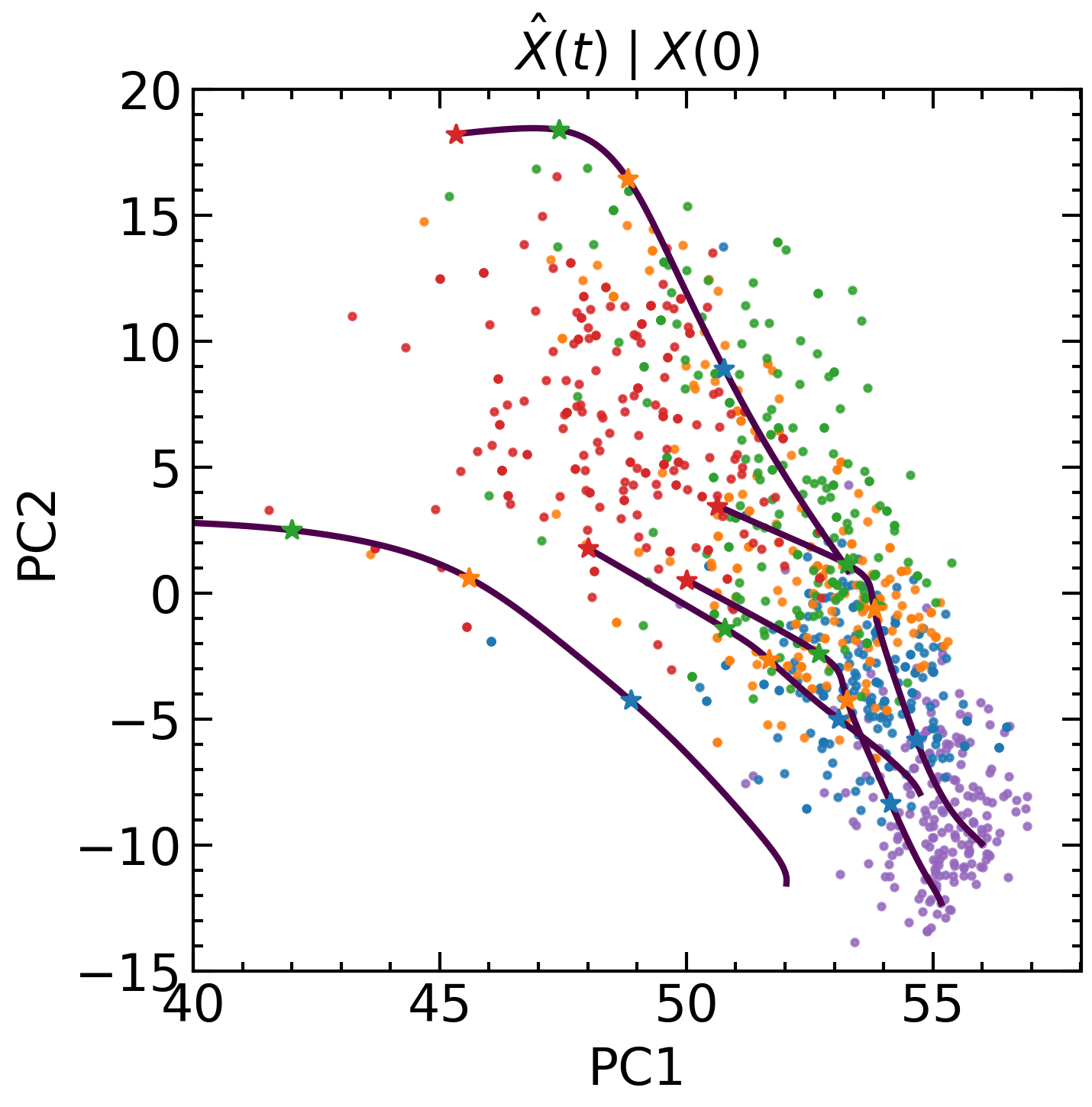}
  \caption{OT-Flow + OT}
\end{subfigure}\hfil
\caption{scRNA-seq data and predictions. The x- and y-axes denote the first and second principal components, respectively. The five colored point clouds in the background are the ground-truth data given at each time point. All five trajectories are generated by all-step prediction from the initial samples at $t_0$.}
\label{fig: visualization of the scRNA-seq data and predictions}
\end{figure}

\begin{table}[tb]
\caption{The MDD value (EMD-L1) for population-level dynamics on five-dimensional (5D) PCA space at time of observation for scRNA-seq data.}
\centering
\begin{tabular}{lcccc}
    \toprule
    MDD (EMD-L1) $\downarrow$  & $t_1$ & $t_2$ & $t_3$ & $t_4$ \\ \midrule
    NLSB (E) & $1.13 \pm  0.025$ & $1.36 \pm 0.035$ & $1.34 \pm 0.023$ & $1.30 \pm 0.018$ \\
    NLSB (D) & $1.08 \pm 0.021$ & $1.40 \pm 0.043$ & $1.38 \pm 0.030$ & $\bm{1.29} \pm 0.024$\\
    NLSB (V) & $1.13 \pm 0.030$ & $1.39 \pm 0.043$ & $1.34 \pm 0.029$ & $1.35 \pm 0.025$ \\
    NLSB (E+D+V) & $1.09 \pm 0.023$ & $1.34 \pm 0.037$ & $\bm{1.32} \pm 0.024$ & $1.30 \pm 0.025$ \\ \midrule
    Neural SDE & $1.11 \pm 0.028$ &  $1.41 \pm 0.041$ & $1.38 \pm 0.033$ & $1.34 \pm 0.025$ \\
    OT-Flow  & $1.31$ & $1.73$ & $1.68$ & $1.69$ \\
    OT-Flow + OT & $1.33$ & $1.65$ & $1.69$ & $1.56$ \\
    TrajectoryNet  & $1.15$ & $1.60$ & $1.42$ & $1.58$ \\
    TrajectoryNet + OT & $1.20$ & $1.60$ & $1.41$ & $1.72$ \\ 
    IPF (GP) & $1.14 \pm 0.024$ & $1.59 \pm 0.052$ & $1.49 \pm 0.037$ & $1.57 \pm 0.042$ \\
    IPF (NN) & $1.16 \pm 0.027$ & $1.42 \pm 0.037$ & $1.37 \pm 0.030$ & $1.37 \pm 0.027$ \\
    SB-FBSDE & $\bm{0.89} \pm 0.016$ & $\bm{1.32} \pm 0.025$ & $1.63 \pm 0.030$ & $1.57 \pm 0.015$\\
    \bottomrule
  \end{tabular}
 \label{tab: 5D-RNA (EMD-L1)}
\end{table}

The time evolution of the distribution for a single sample is visualized as a heat map in~\cref{fig: visualization of the conditional distribution}. The sample population in~\cref{fig: visualization of the predicted samples} and the trajectories are visualized in~\cref{fig: visualization of the scRNA-seq data and predictions}.
All figures are visualizations in the space of the first and second principal components. The x-axis represents the first principal component and the y-axis the second principal component. The experimental results using MDD using EMD with $L^1$ cost is shown in~\cref{tab: 5D-RNA (EMD-L1)}. 

Figure~\ref{fig: visualization of the conditional distribution} shows that the SDE-based methods, including NLSB, can handle the uncertainty of the trajectories in contrast to ODE-based methods.
Figure~\ref{fig: visualization of the predicted samples} shows that NLSB outperforms ODE-based methods in predicting the transitions with a high diffusion of samples from $t_1$ to $t_2$ and from $t_3$ to $t_4$, indicating that the explicit modeling of diffusion is effective. Figure~\ref{fig: visualization of the scRNA-seq data and predictions} shows that the drift estimated by NLSB (E) is linear, the trajectories by NLSB (D) pass on the data manifold, and the trajectories by NLSB (E+D+V) appear to reflect all other regularization effects. 

A GIF animation of the NLSB simulation in PCA space is also included in the supplemental materials. 

\subsection{Synthetic Population Dynamics: Trajectories Reflecting the Potential Function}
This section presents the experimental results of applying the NLSB to numerical simulations of synthetic population dynamics. Through this experiment, we show that the NLSB can model a wide range of phenomena by designing the Lagrangian based on prior knowledge. In particular, we emphasize the importance of designing potential functions and the flexibility of penalty design for drift functions. 

\subsubsection{Dataset}
We used two-dimensional uniform distributions $\mathcal{U}_0$ and $\mathcal{U}_1$ for the endpoints at time $t=0$ and $t=1$. 
\begin{gather*}
    (X_0, Y_0) \sim \mathcal{U}_0: -1.25 \leq X_0 \leq -1,\ -1 \leq Y_0 \leq 1, \\
    (X_1, Y_1) \sim \mathcal{U}_1: 1 \leq X_1 \leq 1.25,\ -1 \leq Y_1 \leq 1.
\end{gather*}
We generated $2048$ and $512$ samples from two endpoint distributions as training and validation data, respectively.

\subsubsection{Details of Experimental Setup}
We trained NLSB with a batch size of $512$ and used the early stopping method, which monitors the validation loss value. We adopt the Lagrangian for the random dynamical system. All weight coefficients of regularization terms were searched in $\{ 0.01, 0.001 \}$. 

We first conducted experiments applying the NLSB with the Lagrangian of the form $L(t, \mathbf{x}, \mathbf{u})=\frac{1}{2}||\mathbf{u}||^2 - U(\mathbf{x})$ defined by four different potential functions shown below. We implemented the box and slit-shaped obstacle potential functions as differentiable by using the sigmoid functions. 

\subsubsection*{Box-Shaped Obstacle}
\begin{align*}
    U(x, y) = \left\{
        \begin{array}{cc}
            -100 & -0.5 \leq x, y \leq 0.5  \\
            0 & \text{otherwise}
        \end{array}.
    \right.
\end{align*}

\subsubsection*{Slit-Shaped Obstacle}
\begin{align*}
    U(x, y) = \left\{
        \begin{array}{cc}
            -100 & (-0.1 \leq x \leq 0.1) \land (y \leq -0.25 \lor 0.25 \leq y)  \\
            0 & \text{otherwise}
        \end{array}.
    \right.
\end{align*}

\subsubsection*{Hill Potential}
\begin{align*}
    U(x, y) = -2.5 (x^2 + y^2).
\end{align*}

\subsubsection*{Well Potential}
\begin{align*}
    U(x, y) = -10 \exp(-(x^2 + y^2)).
\end{align*}

Next, we conducted an experiment using the Lagrangian $L(t, \mathbf{u}, \mathbf{x})=\frac{1}{2} \mathbf{u}^{\top} \mathbf{R} \mathbf{u}$. The matrix $\mathbf{R}$ can be used to penalize the magnitude of the drift differently for each dimension. 
We set $R$ to $\operatorname{diag}([10.0, 0.1])$ and $\operatorname{diag}([0.1, 10.0])$.

\subsubsection{Results}
The visualization results are shown in~\cref{fig:crowd:vis,fig:crowd:LQR}. Figure~\ref{fig:crowd:vis} shows that NLSB can estimate various trajectories that reflect information about obstacles or regions where samples cannot or are likely to exist represented by the potential function. Figure~\ref{fig:LQR} and~\ref{fig:LQR-rev} show that the drift of the trajectories generated by the NLSB with the Lagrangian $L=\frac{1}{2} \mathbf{u}^{\top} \mathbf{R} \mathbf{u}$ is larger on the axis with smaller penalties defined by $\mathbf{R}$ and vice versa. 

\begin{figure}[tb]
\centering
\begin{subfigure}[t]{0.48\hsize}
  \includegraphics[width=\columnwidth]{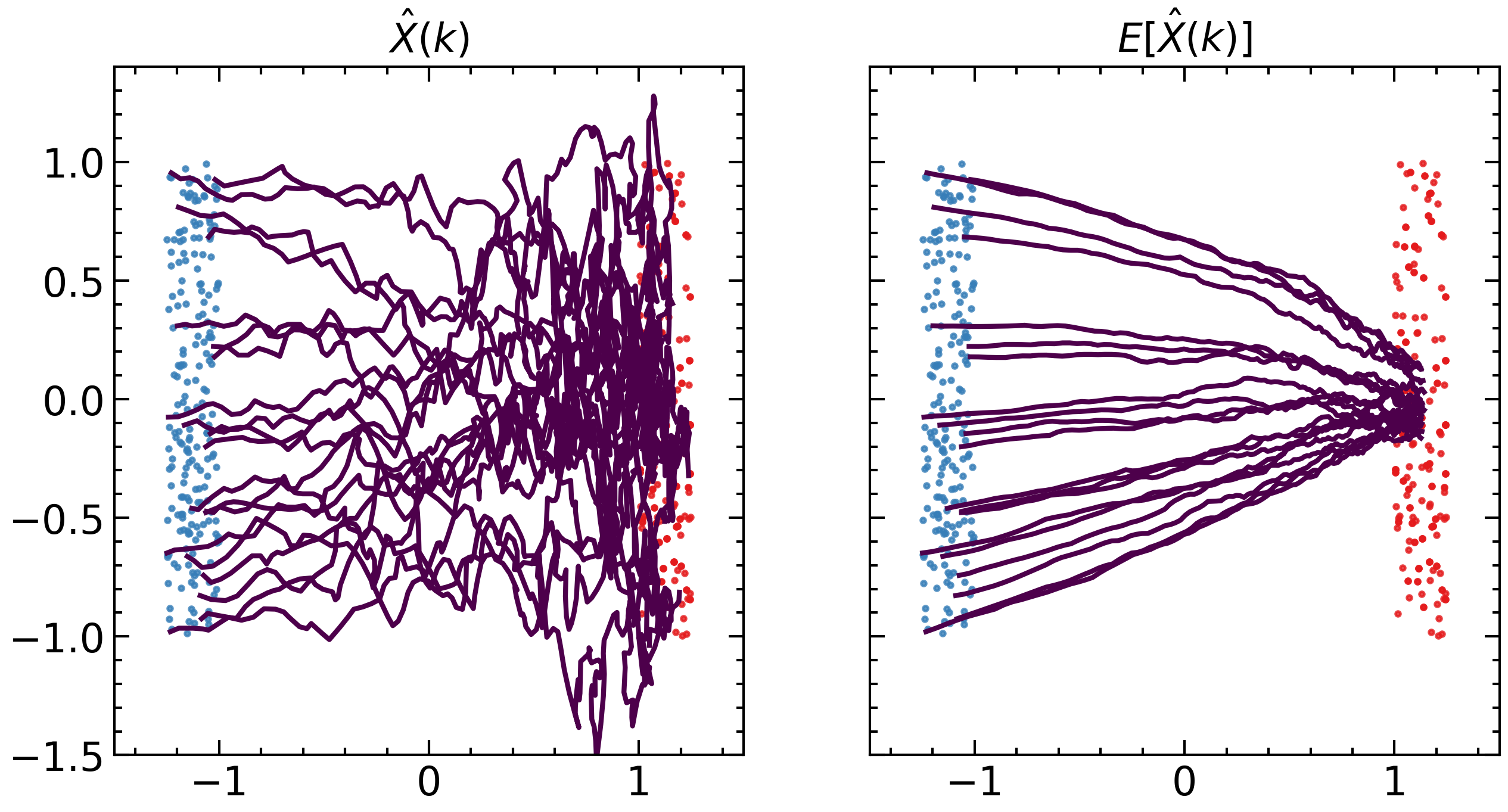}
  \caption{Neural SDE}
\end{subfigure}\hfil
\begin{subfigure}[t]{0.48\hsize}
  \includegraphics[width=\columnwidth]{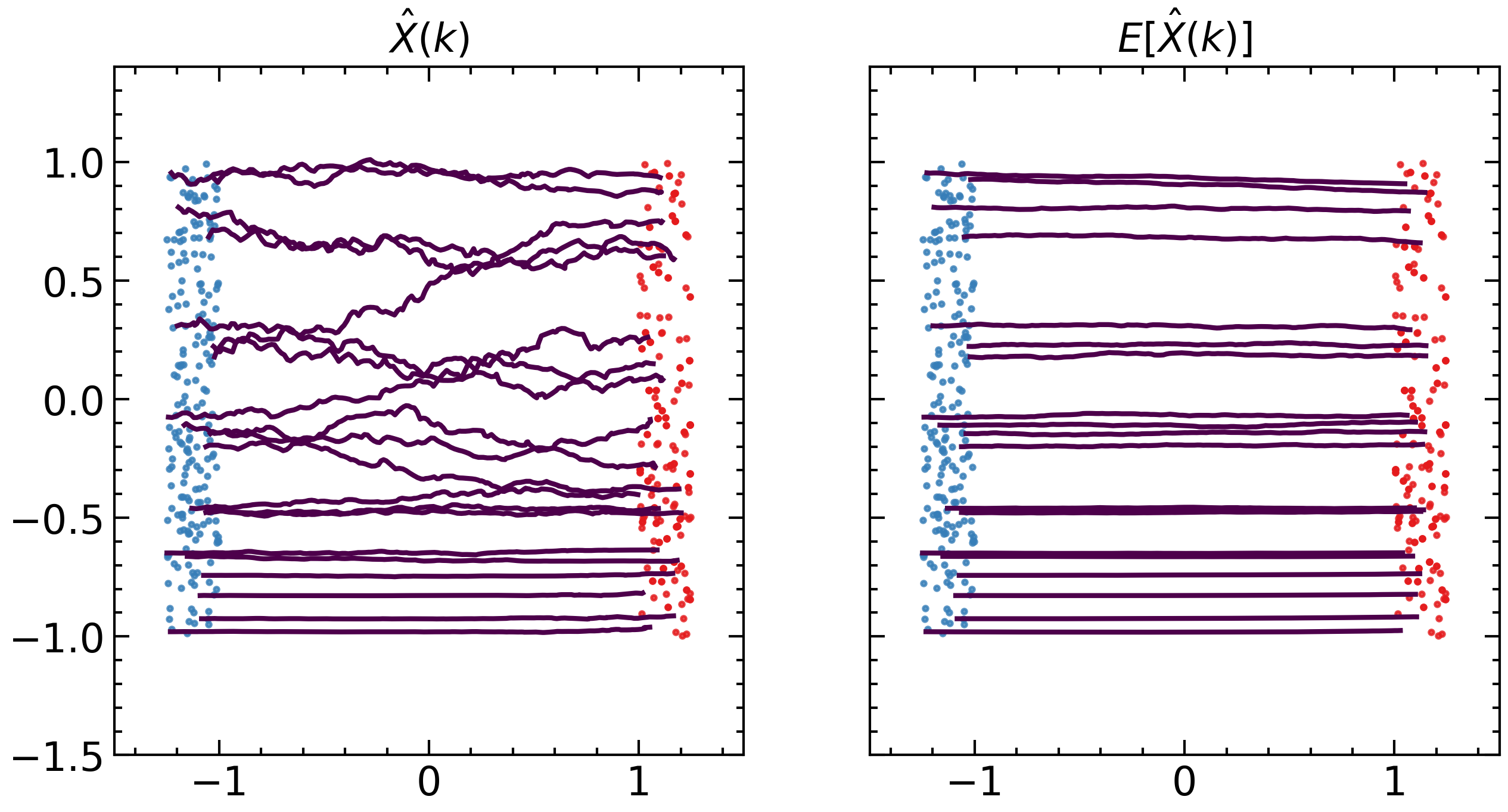}
  \caption{Potential-free ($U=0$)}
\end{subfigure}\hfil
\newline
\centering
\begin{subfigure}[t]{0.48\hsize}
  \includegraphics[width=\columnwidth]{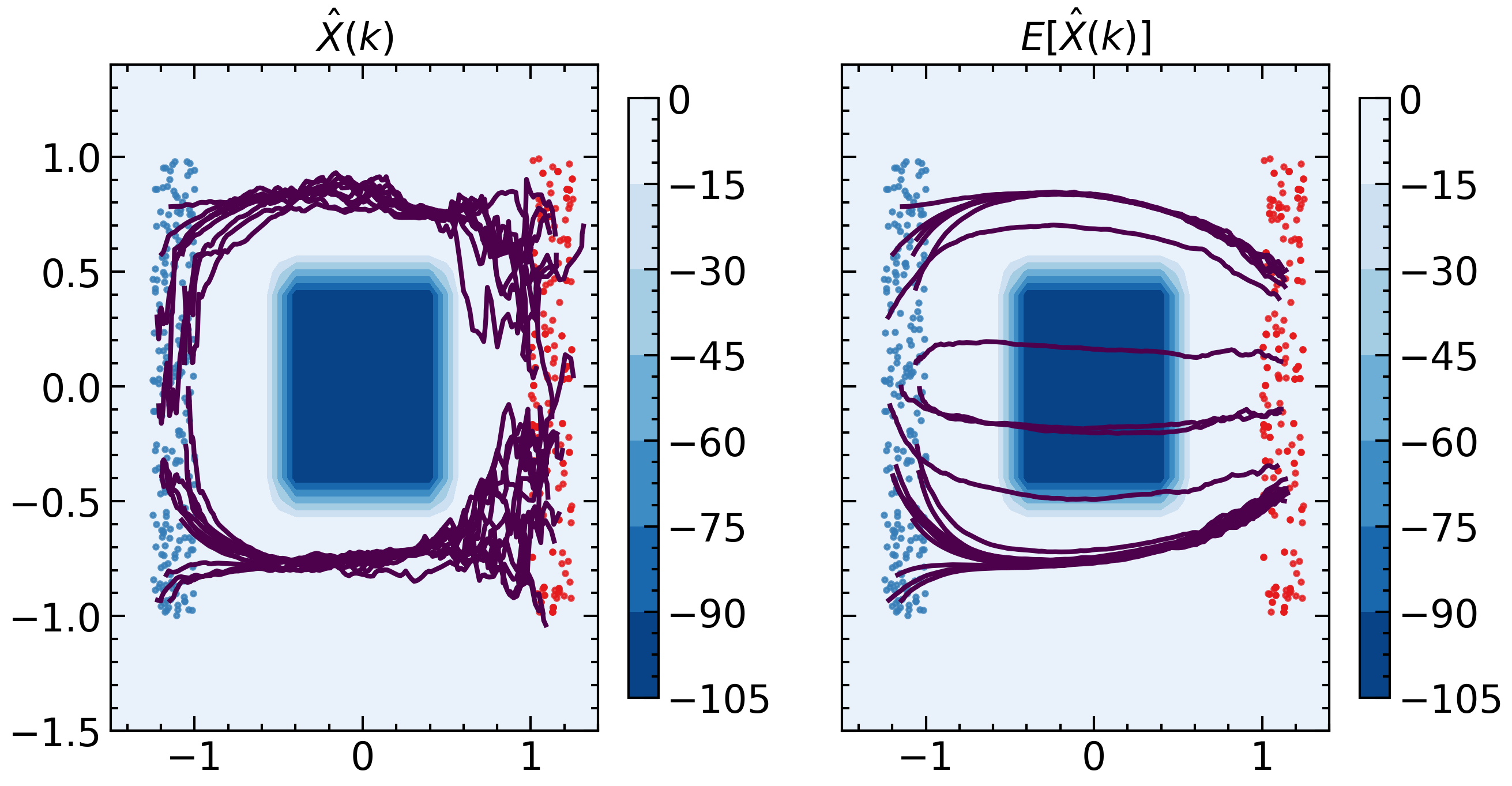}
  \caption{Box-shaped obstacle}
\end{subfigure}\hfil
\begin{subfigure}[t]{0.48\hsize}
  \includegraphics[width=\columnwidth]{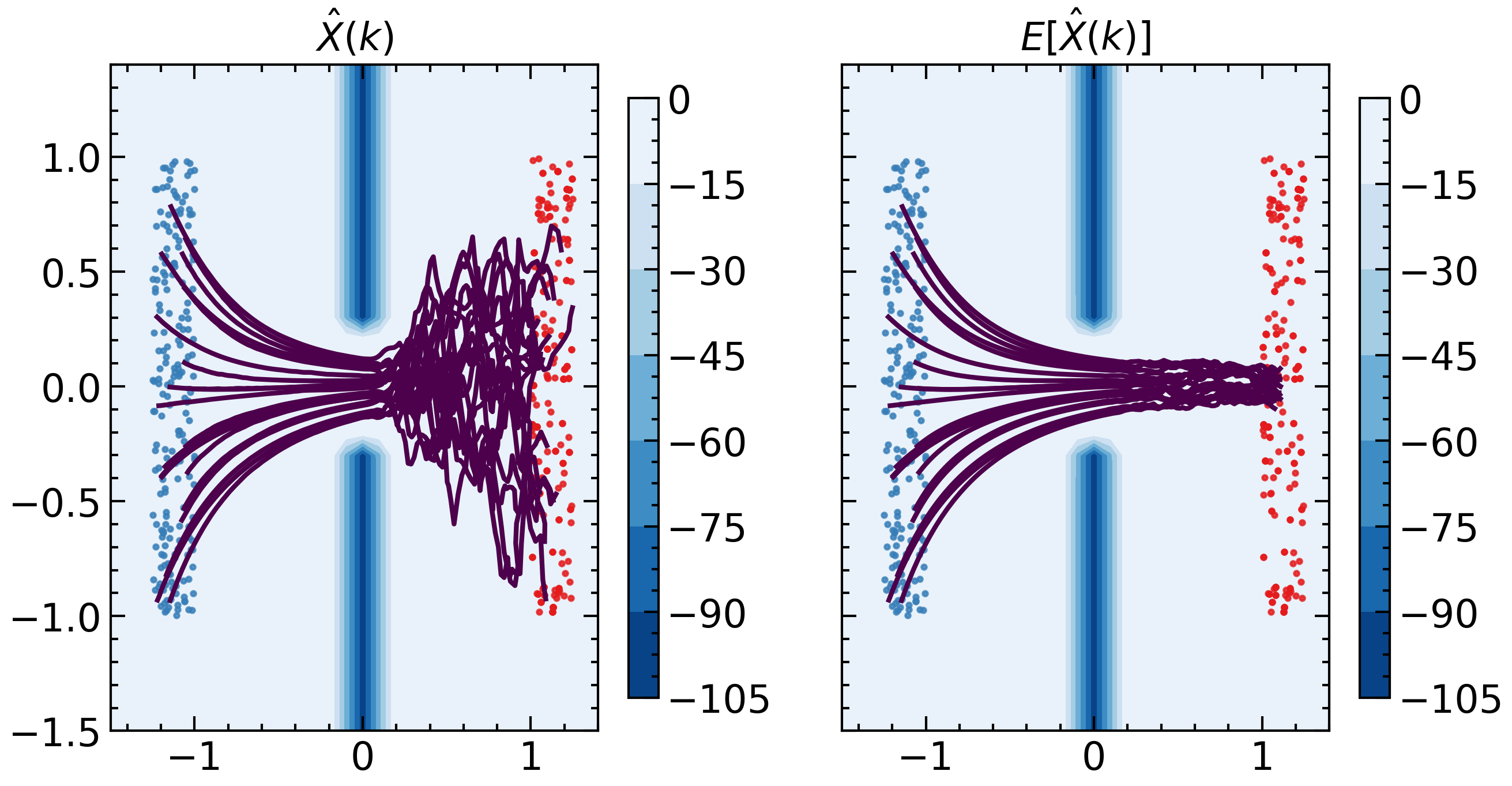}
  \caption{Slit-shaped obstacle}
\end{subfigure}\hfil
\newline
\centering
\begin{subfigure}[t]{0.48\hsize}
  \includegraphics[width=\columnwidth]{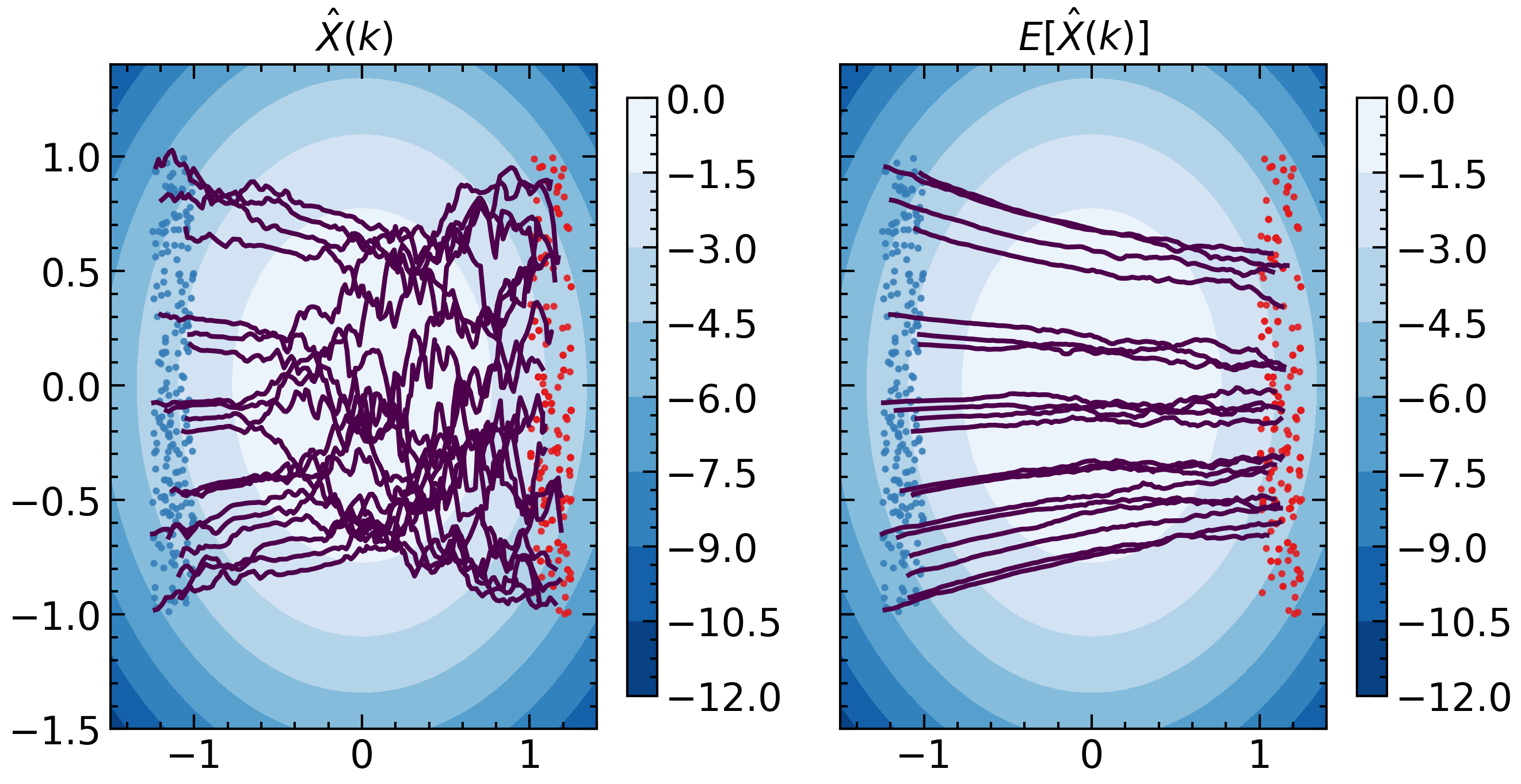}
  \caption{Well potential}
\end{subfigure}\hfil
\begin{subfigure}[t]{0.48\hsize}
  \includegraphics[width=\columnwidth]{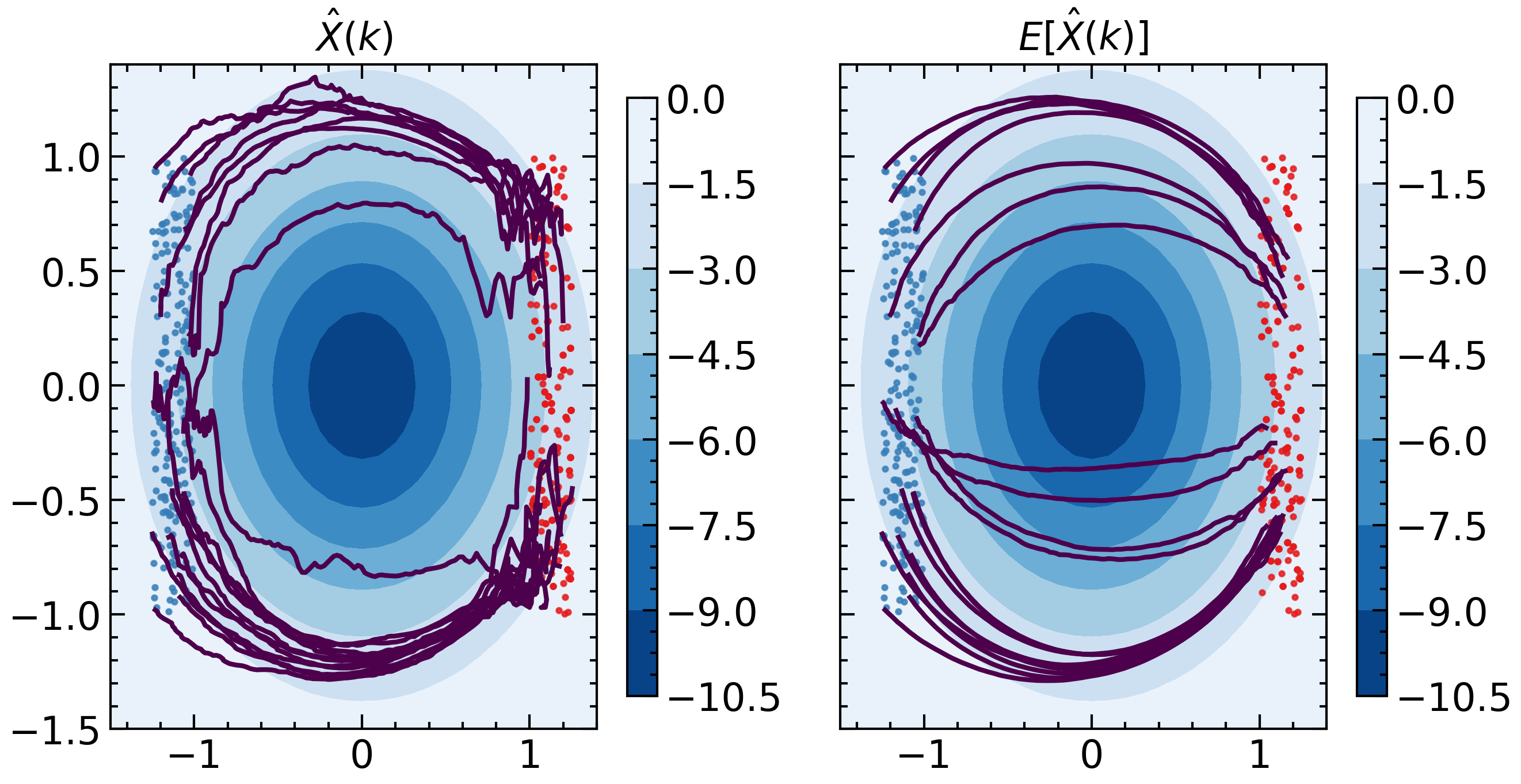}
  \caption{Hill potential}
\end{subfigure}\hfil
\caption{Visualization of trajectories reflecting the potential function. The color gradients depict the magnitude of the potential function. }
\label{fig:crowd:vis}
\end{figure}

\begin{figure}[tb]
\centering
\begin{subfigure}{0.48\hsize}
  \includegraphics[width=\columnwidth]{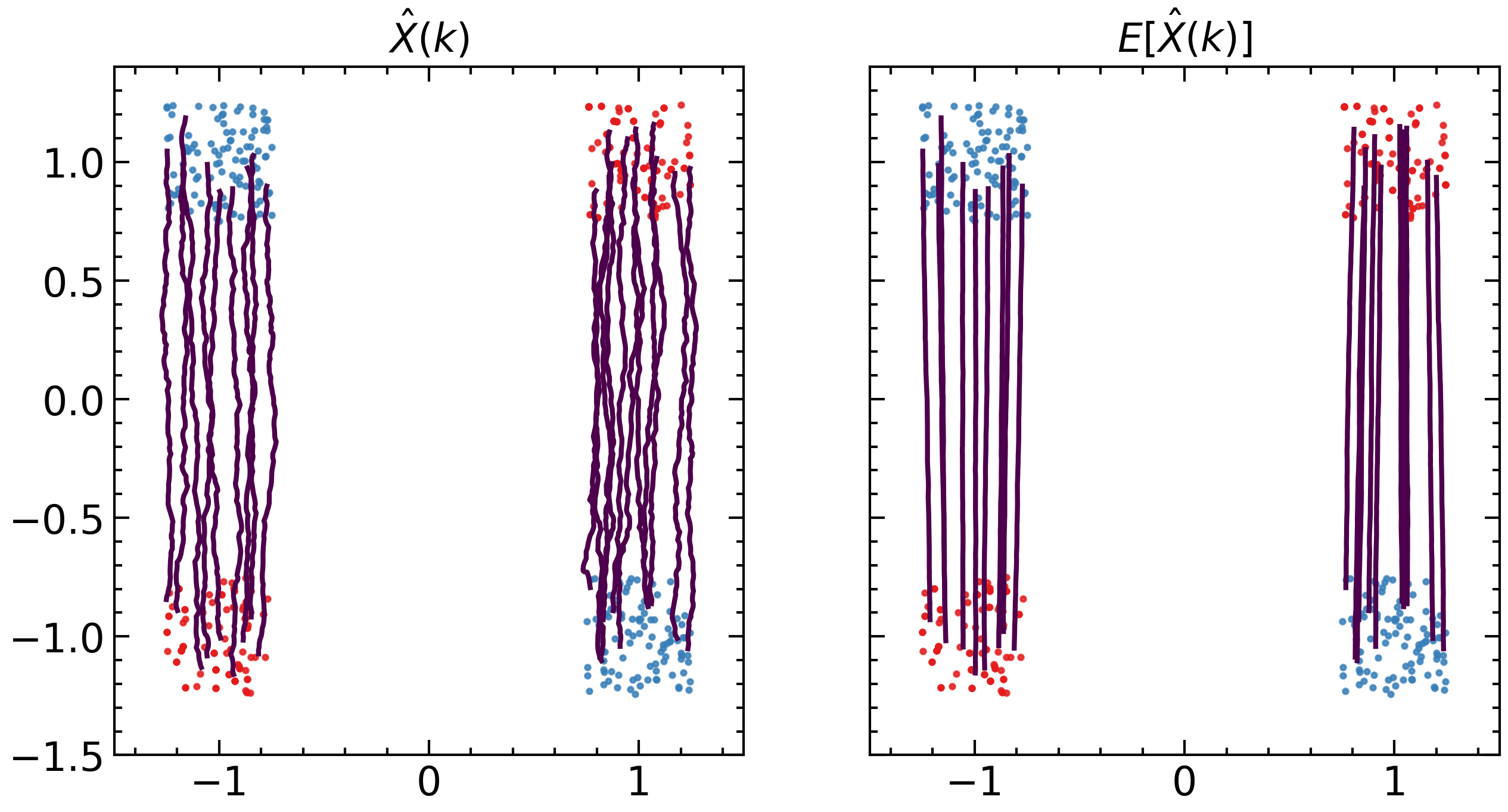}
  \caption{$\mathbf{R} = \operatorname{diag}([10.0, 0.1])$}
  \label{fig:LQR}
\end{subfigure}\hfil
\begin{subfigure}{0.48\hsize}
  \includegraphics[width=\columnwidth]{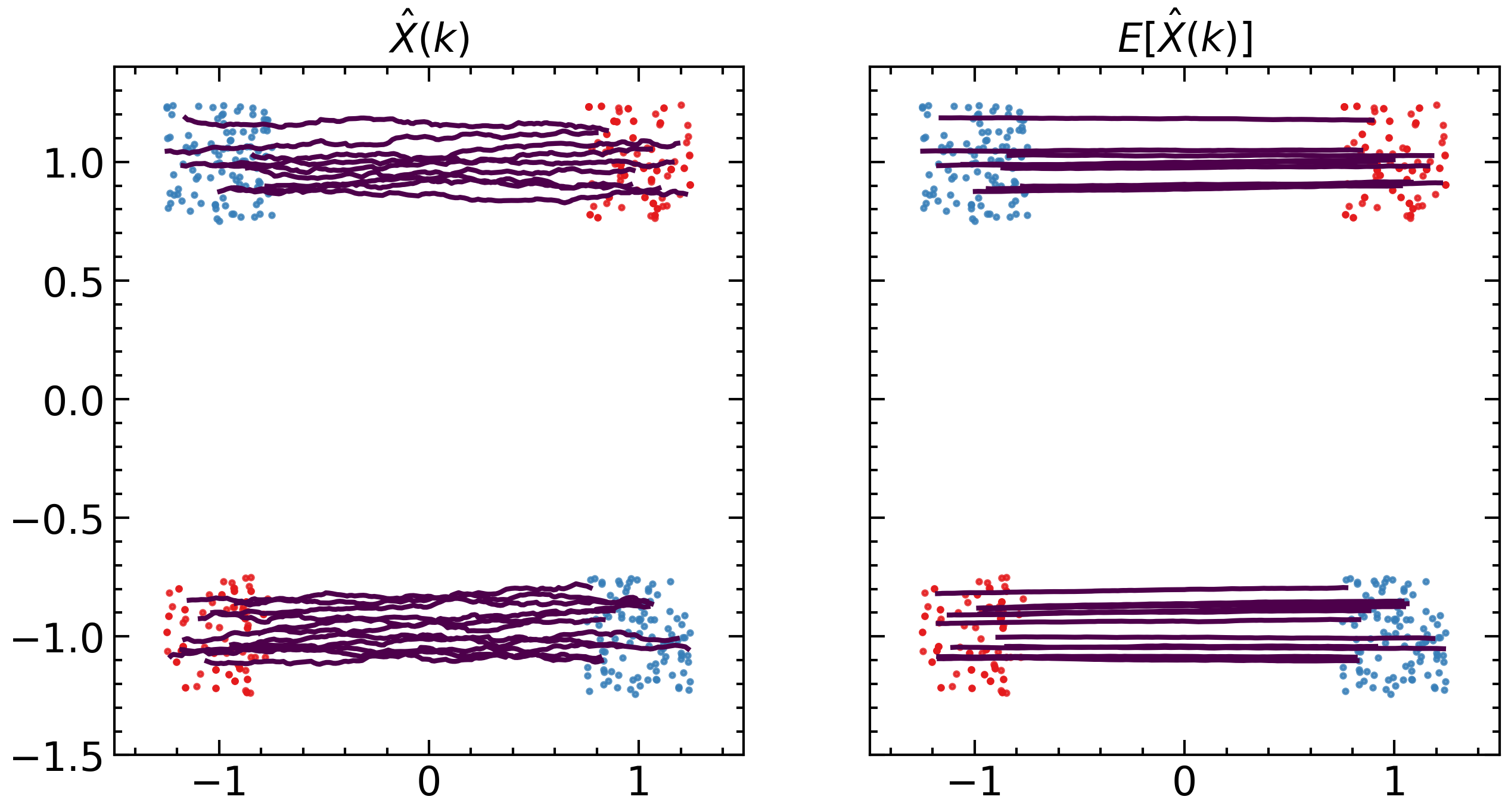}
  \caption{$\mathbf{R} = \operatorname{diag}([0.1, 10.0])$}
  \label{fig:LQR-rev}
\end{subfigure}\hfil
\caption{Visualization of trajectories by NLSB using the Lagrangian $L=\frac{1}{2} \mathbf{u}^{\top} \mathbf{R} \mathbf{u}$. The blue and red point clouds are the source and target distributions, respectively.}
\label{fig:crowd:LQR}
\end{figure}

\subsection{Opinion Dynamics}
\label{app:opinion}
We demonstrated the application of NLSB to optimal control on a party model of opinion dynamics~\citep{schweighofer2020agent,gaitonde2021polarization,liu2022deep}. 

\subsubsection{Dataset}
Opinion dynamics~\citep{schweighofer2020agent,gaitonde2021polarization} is the time evolution of each agent's opinions interacting with each other. MFGs theory provides a mathematical analytical framework for the opinion dynamics of large agent populations, which are very difficult to handle computationally.
The dynamics is modeled using SDEs defined on the opinion representation space of each agent embedded in Euclidean space. In recent years, the phenomenon of strong polarization~\citep{gaitonde2021polarization}, in which agents are divided into groups with opposite opinions, has attracted particular attention. We use the drift $\mathbf{\overline{f}}_{\mathrm{polarize}}$, which causes polarization defined in the party model~\citep{gaitonde2021polarization}, as prior information of the target system. 
\begin{gather*}
    \mathbf{\overline{f}}_{\mathrm{polarize}} := \mathbf{f}_{\mathrm{polarize}} / \|\mathbf{f}_{\mathrm{polarize}}\|^{\frac{1}{2}},\ \mathbf{\overline{y}} := \mathbf{y} / \|\mathbf{y}\|^{\frac{1}{2}},\\
    \mathbf{f}_{\mathrm{polarize}}(\mathbf{x}, \rho; \boldsymbol{\xi}) := \mathbb{E}_{\mathbf{y} \sim \rho} \left[ a(\mathbf{x}, \mathbf{y}; \boldsymbol{\xi})\mathbf{\overline{y}} \right], \\
    a(\mathbf{x}, \mathbf{y}; \boldsymbol{\xi}) := \begin{cases} 1 & \text{if } \operatorname{sign} \left(\langle \mathbf{x}, \boldsymbol{\xi} \rangle \right) = \operatorname{sign} \left(\langle \mathbf{y}, \boldsymbol{\xi} \rangle \right) \\ -1 & \text{otherwise} \end{cases},
\end{gather*}
where $\rho$ is the probability measure of the population, $\boldsymbol{\xi}$ is random information from some distribution independent of $\rho$, $a(\mathbf{x}, \mathbf{y}; \boldsymbol{\xi})$ is the agreement function, which represents whether the two opinions $\mathbf{x}$ and $\mathbf{y}$ agree on the information $\boldsymbol{\xi}$. 

We used two-dimensional Gaussian distributions $\mathcal{N}_0$ and $\mathcal{N}_1$ for the endpoints at time $t=0$ and $t=1$. 
\begin{gather*}
    (X_0, Y_0) \sim \mathcal{N}_0 =
    \mathcal{N}\left( \mathbf{0},
    \begin{bmatrix}
    0.5 & 0.0 \\
    0.0 & 0.25
    \end{bmatrix} \right), \\
    (X_1, Y_1) \sim \mathcal{N}_1 = \mathcal{N}\left( \mathbf{0},
    \begin{bmatrix}
    3.0 & 0.0 \\
    0.0 & 3.0
    \end{bmatrix} \right).
\end{gather*}
We generated $2048$ and $512$ samples from two endpoint distributions as training and validation data, respectively. 

\subsubsection{Details of Experimental Setup}
The Lagrangian for the opinion dynamics is defined by 
\begin{gather*}
L(t, \mathbf{x}, \mathbf{u})= \frac{1}{2}\|\mathbf{\overline{f}}_{\mathrm{polarize}}(\mathbf{x}, t) + \mathbf{u}(\mathbf{x}, t)\|^2 - U(\mathbf{x}, t).
\end{gather*}
The potential function $U(\mathbf{x}, t)$ represents the averaged interaction that each agent receives from the population. The entropy function $U(\mathbf{x}, t; c) = c\log p(\mathbf{x},t)$ with a constant coefficient $c$ is a candidate for a useful potential function and helps control changes in population diversity. The optimal drift function is given by $\mathbf{f}_{\theta} = - \nabla_{\mathbf{x}}\Phi_{\theta}(\mathbf{x}, t) - \mathbf{\overline{f}}_{\mathrm{polarize}}(\mathbf{x}, t)$. By setting the ideal opinion distribution as the terminal condition, NLSB can be used as a method to find the optimal drift converging to the ideal opinion distribution. 

We trained NLSB with a batch size of $512$ and used the early stopping method, which monitors the validation loss value. All weight coefficients of regularization terms were searched in $\{ 0.01, 0.001 \}$. We conducted experiments using the NLSB with the Lagrangian for the opinion dynamics.

\subsubsection{Results}
Visualization of polarized opinion dynamics driven by the drift $\mathbf{\overline{f}}_{\mathrm{polarize}}$ and the time variation of directional similarity are shown in~\cref{fig:polarized-dyn,fig:polarized-sim}. The directional similarity~\citep{schweighofer2020agent} is the distribution of cosine angles between paired opinions, with the red and blue color gradients representing the degree of disagreement and agreement, respectively. Figures~\ref{fig:NLSB-opinion-dyn} and \ref{fig:NLSB-sim} show the results after applying NLSB. These results show that NLSB can learn a drift function that prevents the polarization caused by $\mathbf{\overline{f}}_{\mathrm{polarize}}$, which also converges to the ideal terminal distribution $\mathcal{N}_1$.

\begin{figure}[tb]
\begin{subfigure}{\hsize}
\centering
\includegraphics[width=\columnwidth]{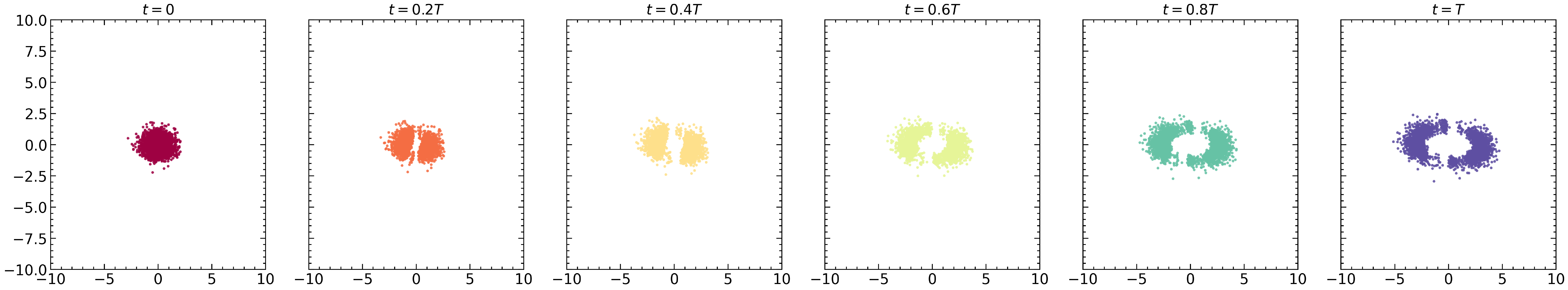}
\caption{Polarized dynamics}
\label{fig:polarized-dyn}
\end{subfigure}
\newline
\begin{subfigure}{\hsize}
\centering
\includegraphics[width=\columnwidth]{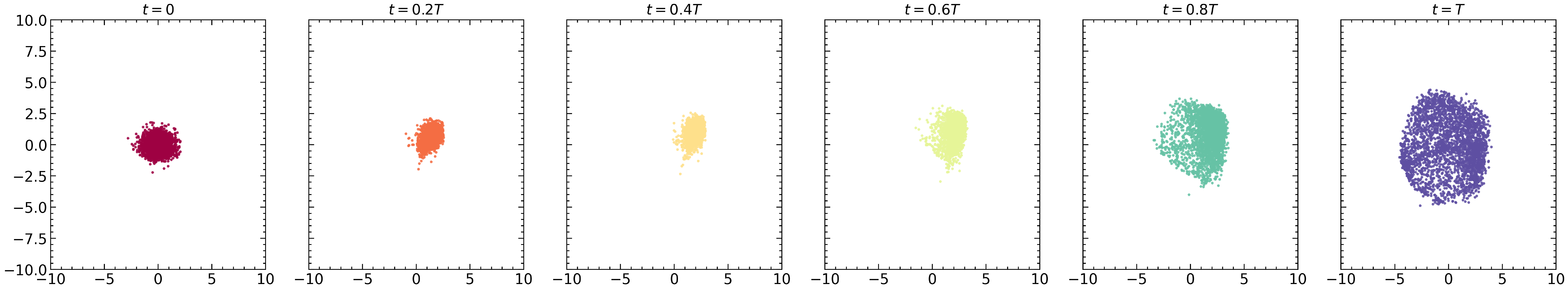}
\caption{Depolarized dynamics by NLSB}
\label{fig:NLSB-opinion-dyn}
\end{subfigure}
\end{figure}

\begin{figure}[tb]
\begin{subfigure}{\hsize}
\centering
\includegraphics[width=\columnwidth]{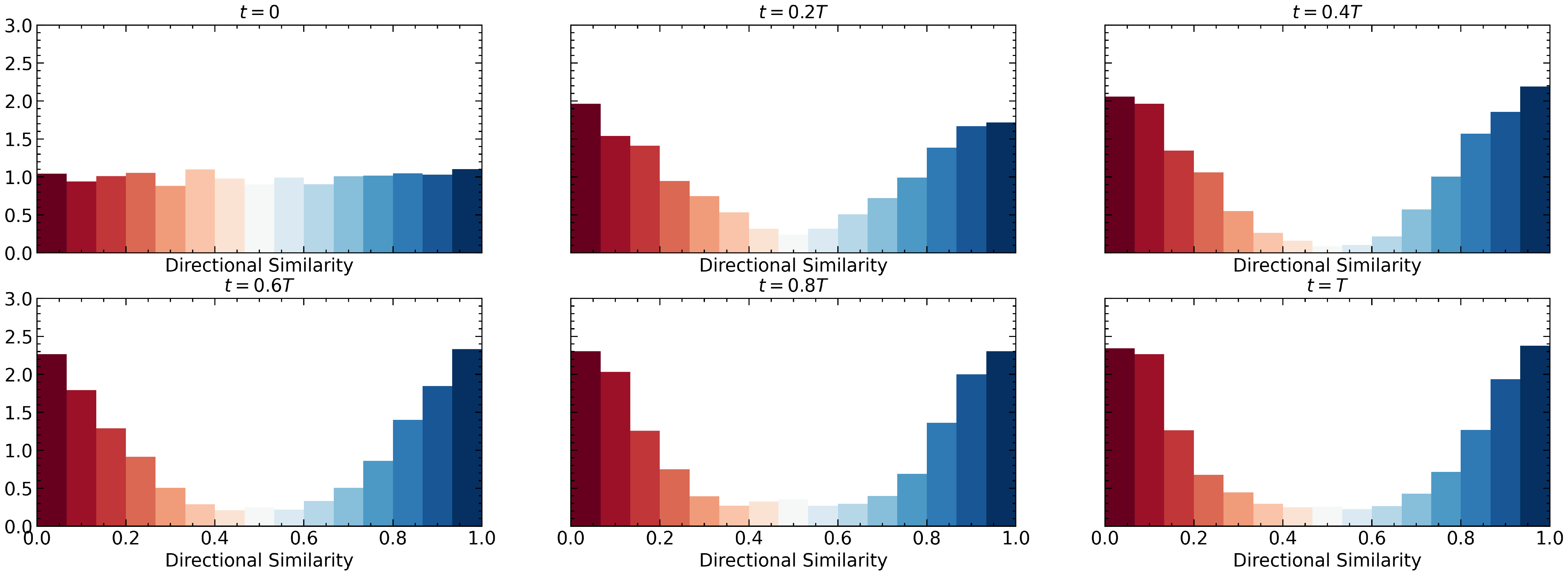}
\caption{Directional similarity of polarized dynamics}
\label{fig:polarized-sim}
\end{subfigure}
\newline
\begin{subfigure}{\hsize}
\centering
\includegraphics[width=\columnwidth]{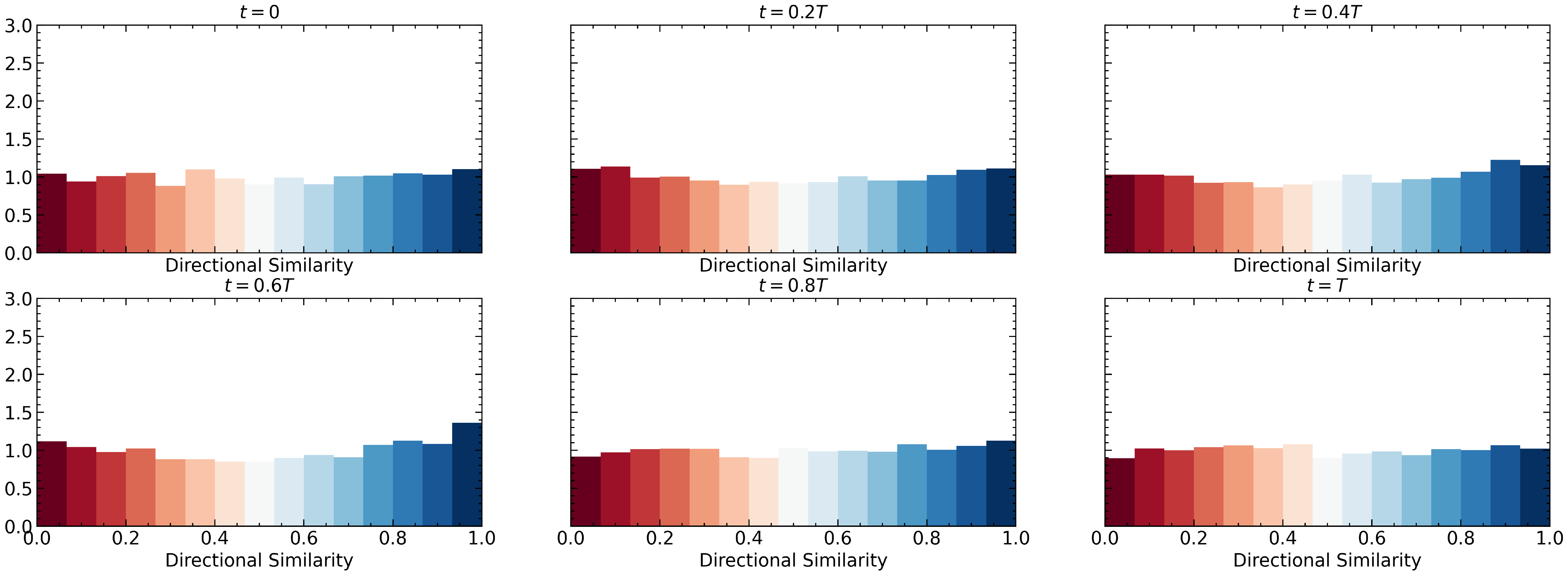}
\caption{Directional similarity of the depolarized dynamics by NLSB}
\label{fig:NLSB-sim}
\end{subfigure}
\end{figure}

\end{document}